\newcommand{\KUCB}{\textrm{KOVI}}
\newcommand{\NUCB}{\textrm{NOVI}}
\newcommand{\act}{\mathtt{act}}
\begin{document}

\title{\huge On Function Approximation in Reinforcement Learning: 
Optimism in the Face of Large State Spaces}

\author{Zhuoran Yang\thanks{Princeton University. Email:\texttt{\{zy6,chij,mendiw\}@princeton.edu}}\qquad\qquad\qquad Chi Jin$^*$\qquad \qquad\qquad Zhaoran Wang\thanks{Northwestern  University. Email: \texttt{zhaoranwang@gmail.com}}\\
 Mengdi Wang$^*$ \qquad\qquad\qquad\quad Michael I. Jordan\thanks{University of California, Berkeley. Email: \texttt{jordan@cs.berkeley.edu}}}

\maketitle

\begin{abstract}
The classical theory of reinforcement learning (RL) has focused on tabular
and linear representations of value functions.  Further progress hinges on
combining RL with modern function approximators such as kernel functions and 
deep neural networks, and indeed there have been many empirical successes 
that have exploited such combinations in large-scale applications.  There
are profound challenges, however, in developing a theory to support this
enterprise, most notably the need to take into consideration the exploration-exploitation 
tradeoff at the core of RL in conjunction with the computational and statistical
tradeoffs that arise in modern function-approximation-based learning systems.
We approach these challenges by studying an optimistic modification of the 
least-squares value iteration algorithm, in the context of the action-value function  
 represented by a kernel function or an overparameterized neural network. 
We establish both polynomial runtime complexity and polynomial sample complexity 
for this algorithm, without additional assumptions on the data-generating model.  
In particular, we prove that the algorithm incurs  an $\tilde{\mathcal{O}}(\delta_{\cF}  
H^2 \sqrt{T})$ regret, where $\delta_{\cF}$ characterizes the intrinsic complexity of the function class $\cF$, $H$ is the length of each episode, and  $T$ is the  total number of episodes.  Our regret bounds are independent of the number of states, 
a result which exhibits clearly the benefit of function approximation in RL.
 

\end{abstract}


\section{Introduction}

Reinforcement learning (RL) algorithms combined with modern function approximators 
such as kernel functions and deep neural networks have produced empirical successes 
in a variety of application problems \citep[e.g.,][]{duan2016benchmarking, silver2016mastering, silver2017mastering, wang2018deep,vinyals2019grandmaster}.
However, theory has lagged, and when these powerful function approximators are employed, 
there is little theoretical guidance regarding the design of RL algorithms that are
efficient computationally or statistically, or regarding whether they even converge.  
In particular, function approximation blends statistical estimation issues with dynamical 
optimization issues, resulting in the need to balance the bias-variance tradeoffs that arise 
in statistical estimation with the exploration-exploitation tradeoffs that are inherent 
in RL.  Accordingly, full theoretical treatments are mostly restricted to the tabular   
setting, where both the state and action spaces are discrete and the value function 
can be  represented as a table~\citep[see, e.g.,][]{jaksch2010near,osband2014generalization,
azar2017minimax, jin2018q, osband2018randomized, russo2019worst}, and there is a disconnect between theory and the most compelling applications.

Provably efficient exploration in the function approximation setting has been addressed
only recently, with most of the existing work considering (generalized) linear 
models~\citep{yang2019reinforcement, yang2019sample, jin2019provably, cai2019provably,
zanette2019frequentist, wang2019optimism}.  These algorithms and their analyses stem from  
classical upper confidence bound (UCB) or Thompson sampling methods for linear contextual 
bandits~\citep{bubeck2012regret, lattimore2018bandit}  and it seems difficult to extend 
them beyond the linear setting.  Unfortunately, the linear assumption is rather rigid 
and rarely satisfied in practice; moreover, when such a model is misspecified, sublinear
regret guarantees can vanish.  There has been some recent work that has presented 
sample-efficient  algorithms with general function approximation. However, these 
methods are either  computationally intractable~\citep{krishnamurthy2016pac, 
jiang2017contextual, dann2018oracle,dong2019sqrt} or hinge on strong assumptions 
on the transition model \citep{wen2017efficient, du2019provably}.  Thus, the 
following question remains open:
 
 \begin{tcolorbox}[colback=blue!5!white,colframe=white]
 \begin{center}
Can we design  RL algorithms that incorporate powerful nonlinear function approximators 
such as neural networks or kernel functions and provably achieve both computational and 
statistical efficiency?
 \end{center} 
\end{tcolorbox}
  
In this work, we provide an affirmative answer to this question.  Focusing on the 
setting of an episodic Markov decision process (MDP) where the value function
is represented by either a kernel function or an overparameterized  neural network,
we propose an RL algorithm with polynomial runtime complexity and  sample complexity, 
without imposing any additional assumptions on the data-generating model.  Our 
algorithm is relatively simple---it is an optimistic modification of the   
least-squares value iteration algorithm (LSVI) \citep{bradtke1996linear}---a 
classical batch RL algorithm---to which we add a UCB bonus term to each iterate.
Specifically, when using a kernel function, each LSVI update becomes a kernel 
ridge regression, and the bonus term is derived from that proposed for kernelized 
contextual bandits~\citep{srinivas2009gaussian, valko2013finite, chowdhury2017kernelized}. 
For the   neural network setting, motivated by the NeuralUCB algorithm  for contextual 
bandits~\citep{zhou2019neural}, we construct a  UCB bonus from the tangent features  
of the neural network and we perform the LSVI updates via projected gradient descent.
In both of these settings, the usage of the UCB bonus ensures that the value functions 
constructed by the algorithm are always optimistic in the sense that they serve as  
uniform upper bounds of the optimal value function.  Furthermore, for both the kernel 
and neural settings, we prove that  the proposed algorithm incurs an 
$\tilde{\mathcal{O}}(\delta_{\cF} H^2 \sqrt{ T})$ regret, where $H$ is the length 
of each episode,  $T$ is the total number of episodes, and  $\delta_{\cF}$ quantifies  
the intrinsic complexity of the function class $\cF$. Specifically, as we will 
show in \S\ref{sec:results}, $\delta_{\cF}$ is determined  by the interplay between 
the $\ell_\infty$-covering number of the function class used to represent the value
function and the effective dimension of function class $\cF$. (See Table \ref{table:summary} 
for a summary.) 

A key feature of our regret bounds  is that they depend on the complexity of the 
state space only through $\delta_{\cF}$ and thus allow the number of states to be 
very large or even divergent.  This clearly exhibits the  benefit of  function 
approximation by tying it directly to sample efficiency.  To the best of our 
knowledge, this is the first provably efficient framework for reinforcement learning 
with kernel and neural network function approximations.

 \begin{table}[htpb]

      \centering
  \begin{tabular}{c c} 
   \hline
   {\color{blue} function class $\cF$} & {\color{blue} regret bound}  \\
      general RKHS $\cH$ & $H^2\cdot  \sqrt{ d_{\textrm{eff}}  \cdot [d_{\textrm{eff}}  + \log N_{\infty} (\epsilon^* )} ] \cdot  \sqrt{T} $ \\
      $\gamma$-finite spectrum & $H^2 \cdot \sqrt{\gamma^3 T} \cdot  \log(\gamma TH) $ \\
      $\gamma$-exponential decay &  $H^2 \cdot \sqrt{ (\log T)^{3/\gamma} \cdot T} \cdot \log(TH)$ \\
     $\gamma$-polynomial decay  & $H^2 \cdot T^{\kappa^* + \xi^* + 1/2} \cdot [ \log(TH) ] ^{3/2}$  \\ 
     overparameterized neural network & $H^2\cdot  \sqrt{ d_{\textrm{eff}}   \cdot [d_{\textrm{eff}} + \log N_{\infty} (\epsilon^* )} ] \cdot  \sqrt{T}  + \textrm{poly}(T, H) \cdot m^{-1/12} $ \\
   \hline
  \end{tabular}
 
   \caption{Summary of the main results. Here $H$ is the length of each episode, $T$ is the number of episodes in total, and $2m$ is the number of neurons of the overparameterized networks in  the neural setting. 
   For an  RKHS $\cH$ in general, $d_{\textrm{eff}}$ denotes the effective dimension of $\cH$
    and $  N_{\infty} (\epsilon^*)$ is the $\ell_\infty$-covering number of the value function class, 
    where $\epsilon ^* = H / T$.
Note that to obtain concrete bounds, we apply the general result to RKHS's with various 
eigenvalue decay conditions. Here $\gamma$ is a positive integer in the case of 
a $\gamma$-finite spectrum and is a positive number in  the subsequent two cases. 
In addition, $\kappa^*$ and $\xi^*$, which are defined in \eqref{eq:define_poly_kappa}, 
are constants that depend on $d$, the input dimension, and the parameter $\gamma$. 
Finally, in the last case we present the regret bound for the neural setting in 
general, where $d_{\textrm{eff}}$ is the effective dimension of the neural tangent 
kernel (NTK) induced by the overparameterized neural network with $2m$ neurons and 
$\textrm{poly}(T,H)$ is a polynomial in $T$ and $H$. Such a general regret bound
can be expressed concretely as a function of the spectrum of the NTK.}
  \label{table:summary}
  \end{table}

 \vskip5pt
\noindent{\bf Related Work.}
There is a vast literature on establishing provably efficient RL methods in the
absence of a generative model or an explorative behavioral policy.  Much of this 
literature has focused on the tabular setting; see \citet{jaksch2010near, 
osband2014generalization, azar2017minimax, dann2017unifying, strehl2006pac, 
jin2018q, russo2019worst} and the references therein.  In particular, 
\cite{azar2017minimax,jin2018q} prove that an RL algorithm necessarily incurs 
a $\Omega(\sqrt{SAT})$ regret under the tabular setting, where $S$ and $A$ 
are the cardinalities of the state and action spaces, respectively. Thus, 
algorithms designed for the tabular setting cannot be directly applied to 
the function approximation setting, where the number of effective states is 
large.  A recent literature has accordingly focused on the function approximation 
setting, specifically the (generalized) linear setting  where the value function 
(or the  transition model) can be represented using a linear transform of a known 
feature mapping~\citep{yang2019sample, yang2019reinforcement, jin2019provably, 
cai2019provably,zanette2019frequentist, wang2019optimism,ayoub2020model, 
zhou2020provably, kakade2020information}.  Among these papers, our work is 
most closely related to \cite{jin2019provably}.  In particular, in our kernel 
setting when the kernel function has a finite rank, both our LSVI algorithm 
and the corresponding regret bound reduce to those established in \cite{jin2019provably}.
However, the sample complexity and regret bounds in \cite{jin2019provably}
diverge when the dimension of the feature mapping goes to infinity and thus 
  cannot be directly applied to the kernel setting. 

Also closely related to our work is  \citet{wang2020provably}, which 
studies a similar optimistic LSVI algorithm for general function approximation. 
This work focuses on value function classes with bounded eluder 
dimension~\citep{russo2013eluder, osband2014model}.  It is unclear 
whether whether this formulation can be extended to the kernel or neural 
network settings.  \citet{yang2019reinforcement} studies a kernelized 
MDP model where the transition model can be directly estimated.  Under 
a slightly more general model, \cite{ayoub2020model} recently propose  
an optimistic model-based algorithm via value-targeted regression, where 
the model class is the set of functions with bounded eluder dimension. 
In other recent work, \cite{kakade2020information} studies a nonlinear 
control formulation in which the transition dynamics belongs to a known 
RKHS and can be directly estimated from the data.  Our work differs from
this work in that we impose an explicit assumption on the transition model 
and our proposed algorithm is model-free. 

Other authors who have presented regret bounds and sample complexities beyond   
the linear setting include \citet{krishnamurthy2016pac, jiang2017contextual, 
dann2018oracle,dong2019sqrt}.  These algorithms generally involve either
high computational costs or require possibly restrictive assumptions on the 
transition model~\citep{wen2013efficient, wen2017efficient, du2019provably}.

Our work is also related to the literature on contextual bandits with either
kernel function classes~\citep{srinivas2009gaussian, krause2011contextual,
srinivas2012information, valko2013finite,chowdhury2017kernelized, durand2018streaming} 
or neural network function classes \citep{zhou2019neural}.  Our construction 
of a bonus function for the RL setting has been adopted from this previous work.
However, while contextual bandits can be viewed formally as special cases of 
our episodic MDP formulation with the episode length equal to one, the   
temporal dependence in the MDP setting raises significant challenges.
In particular, the covering number $N_{\infty} (\epsilon^*)$ in 
Table \ref{table:summary} arises as a consequence of the fundamental 
challenge of performing temporally extended exploration in RL. 
 
Finally, our analysis of the optimistic LSVI algorithm is related to recent   
work on optimization and generalization in overparameterized neural networks 
within the framework of the neural tangent kernel \citep{jacot2018neural}. 
See also \citet{daniely2017sgd, jacot2018neural, wu2018sgd,  du2018gradient1, 
du2018gradient, allen2018convergence, allen2018learning, zou2018stochastic, 
chizat2018note, li2018learning, arora2019fine, cao2019bounds,  cao2019generalization,  
lee2019wide}.  This literature focuses principally on supervised learning, however;
in the RL setting we need an additional bonus term in the least-squares problem 
and thus require a novel analysis.

 
\section{Background}

In this section, we provide essential background on reinforcement learning, 
reproducing kernel Hilbert space (RKHS), and  overparameterized neural networks. 

\subsection{Episodic Markov Decision Processes} 

We focus on episodic MDPs, denoted $\mathrm{MDP}(\cS, \cA, H, \PP, r)$,
where $\cS$ and $\cA$ are the state and action spaces, respectively, the
integer $H>0$ is the length of each episode, $\PP = \{ \PP_h \}_{h \in [H] }$ 
and $r = \{ r_h \}_{h\in [H]} $  are the  Markov transition kernel  and the reward 
functions,  respectively, where we let $[n]$ denote the set $\{ 1, \ldots, n\}$ 
for integers $n\geq 1$.  We assume that $\cS$ is a   measurable space   
of possibly infinite cardinality while $\cA $ is a finite set.  Finally,
for each $h \in [ H]$, $\PP_h ( \cdot \given x, a) $ denotes the probability 
transition kernel when action $a$ is taken at  state $x \in \cS$ in timestep $h\in [H]$, 
and $r_h \colon \cS \times \cA \to [0,1]$ is the reward function at step $h$ 
which is assumed to be deterministic for simplicity. 

A \emph{policy} $\pi$ of an agent is a set of $H$ functions $\pi  =\{ \pi_h \}_{h \in [H]}$  
such that each $\pi_h(\cdot \given x )$ is a probability distribution over $\cA$.  
Here   $\pi_h (a \given x)$ is the probability of the agent taking action $a$ at 
state $x$ at the $h$-th step in the episode.  

The agent interacts with the environment as follows.  For any $t\geq 1$, at the beginning 
of the $t$-th episode, the agent determines a policy $\pi^t = \{\pi^t_h\}_{h \in [H]}$ 
while an initial state $x_1^t$ is   picked arbitrarily by the environment. 
Then, at each step $h \in [H]$, the agent observes the state $x_h^t \in \cS$, picks 
an action $a_h^t \sim \pi_h^t(\cdot \given x_h^t)$, and receives a reward $r_h(x_h^t, a_h^t)$. 
The environment then transitions into a new state  $x_{h+1}^t$  that is drawn 
from the probability measure $\PP_h(\cdot \given x_h^t, a_h^t)$. The episode terminates 
when the $ H $-th step is reached and $r_H (x_H^t, a_H^t)$ is thus the final reward 
that the agent receives.

%

The performance of the agent is captured by the \emph{value function}. 
For any policy $\pi$, and  $h\in [H]$, we define the value function 
$V_h^{\pi}\colon \cS \to \mathbb{R} $ as 
\begin{equation*}
 V_h^\pi(x) =  \EE_{\pi} \left[\sum_{h' = h}^H r_{h'}(x_{h'},  a_{h'} )  
\bigggiven  x_h = x\right], \qquad \forall x\in \cS, h \in [H],
\end{equation*}
where $\EE_{\pi}[\cdot]$ denotes the expectation with respect to the 
randomness of the trajectory $\{(x_h,a_h)\}_{h=1}^H$ obtained by following 
the  policy $\pi $.  We also define the action-value function  
$Q_h^\pi:\cS \times \cA \to \mathbb{R}$ as follows:
  \begin{equation*}
  Q^{\pi}_h(x,a) =\EE_{\pi} \biggl[ \sum_{h'=h}^H r_{h'} (x_{h'},a_{h'} ) 
\,\Big|\, x_h=x,\,a_h=a  \biggr].
\end{equation*}
Moreover, let $\pi^\star$ denote the optimal policy which by definition
yields the optimal value function, $V^{\star}_{h}(x) = \sup_{\pi} V_h^\pi(x)$,
for all $x\in \cS$ and $h\in [H]$.
To simplify the notation,  we write  
$$
 (\PP_h V  ) (x, a) \defeq \EE_{x' \sim \PP_h(\cdot \given x, a)} [V (x')],
$$
for any measurable function $V \colon \cS \rightarrow [ 0, H] $.
Using this notation, the   Bellman equation associated with a policy $\pi$ becomes 
\begin{align} \label{eq:bellman} 
    \sind{Q}{\pi}{h}(x, a) = (r_h + \PP_h \sind{V}{\pi}{h+1})(x, a) , \qquad   V_{h}^\pi (x) = \la Q_h^{\pi} (x, \cdot ), \pi_{h}(\cdot \given x)  \ra_{\cA} ,     
    \qquad \sind{V}{\pi}{H+1}(x) = 0.   
\end{align}
Here we let  $\la\cdot,\cdot\ra_{\cA}$ denote  the inner product over $\cA$. 
Similarly, the Bellman optimality equation is given by 
\begin{align}\label{eq:opt_bellman}
  \sind{Q}{\star}{h}(x, a) = (r_h + \PP_h \sind{V}{\star}{h+1})(x, a) , \qquad 
 \sind{V}{\star}{h}(x) = \max_{a\in\cA}\sind{Q}{\star}{h}(x, a), \qquad \sind{V}{\star}{H+1}(x) = 0. 
\end{align}
Thus,  the optimal policy $\pi^\star$ is the greedy policy with respect to      $\{ Q^\star _h \}_{h \in [H]}$. 
Moreover, we define the 
Bellman optimality operator $\TT_h^\star$ by letting 
$$( \TT_h^{\star} Q ) (x,a) = r(x,a) + (\PP_h V  )(x,a) 
\qquad \textrm{for all}~~Q \colon \cS\times \cA \rightarrow \RR, 
$$   where $V  (x) = \max_{a \in \cA} Q(x,a)$. 
By definition, the Bellman equation in \eqref{eq:opt_bellman} is equivalent to  $Q_h ^\star = \TT_h^\star Q_{h+1} ^\star$,  $\forall h \in [H]$.  
 The goal of the agent is to 
 learn the optimal policy $\pi^\star$. For any policy  $\pi$, the difference between $V_1^\pi$ and $V_1^\star$ quantifies the sub-optimality of $\pi$. 
 Thus, for a fixed integer $T > 0$,  after playing for $T$ episodes, 
 the total (expected) regret \citep{bubeck2012regret} of the agent is defined as 
\begin{equation*}
\mathrm{Regret}(T) = \sum_{t=1}^T \bigl [\sind{V}{\star}{1} (x^t_1) - \sind{V}{\pi^t}{1} (x^t_1)\bigr],
\end{equation*}
where $\pi^t$ is the policy executed in the $t$-th episode and   $x_1^t$ is  the initial state.

\subsection{Reproducing Kernel Hilbert Space} \label{sec:rkhs}

We will be considering the use of reproducing kernel Hilbert space (RKHS) 
as function spaces to represent optimal value functions $Q_h^\star$.
To this end, hereafter, to simplify the notation, we let $z = (x,a)$ 
denote a state-action pair and denote $\cZ = \cS \times \cA$.  Without 
loss of generality, we regard $\cZ$ as a compact subset of $\RR^d$, 
where the dimension $d$ is assumed fixed.  This can be achieved if there 
exists an embedding $\psi_{\textrm{embed}} \colon \cZ \rightarrow \RR^d$ 
that serves as a preprocessing of the input $(x,a)$.  Let $\cH$ be an RKHS 
defined on $\cZ$ with kernel function, $K \colon \cZ \times \cZ \rightarrow \RR$, 
which contains a family of functions defined on $\cZ$.   
Let $\la \cdot,\cdot \ra_{\mathcal{H}} \colon \cH \times \cH \rightarrow  \RR $ 
and $\| \cdot \|_{\cH} \colon \cH \rightarrow \RR$  denote the inner product and 
RKHS norm on $\cH$, respectively.  Since $\cH$ is an RKHS, there exists 
a \emph{feature mapping}, $\phi \colon \cZ \rightarrow \cH$, such that 
$f(z)=\la f(\cdot),\phi(z) \ra_{\mathcal{H}}$
for all $f\in \cH$ and all $z \in \cZ$.
Moreover,  for any $x, y \in \cZ$, we have 
$K(x,y) = \langle \phi(x) , \phi(y) \rangle_{\cH}$. 
 In this work, we assume that the kernel function $K$ is uniformly bounded 
in the sense that $\sup_{z \in \cZ} K(z,z) < \infty$. 
 Without loss of generality, we assume that 
 $\sup_{z \in \cZ}  K(z,z) \leq 1$, which implies that 
 $
\|  \phi(z) \|_{\cH} \leq 1 
 $
 for all $z\in \cZ$. 
 
Let $\cL^2 (\cZ)$ be the space of square-integrable functions on $\cZ$ with respect to 
Lebesgue measure and let $\la \cdot , \cdot \ra_{\cL^2 }$ be the inner product on $\cL^2 (\cZ)$. 
The kernel function $K$ induces a integral operator $T_K \colon \cL^2 (\cZ) \rightarrow \cL^2 (\cZ)$ defined~as 
\#\label{eq:integral_oper}
T_K f (z) =  \int_{\cZ } K(z, z') \cdot f (z')~\ud z',  \qquad \forall f\in \cL^2 (\cZ). 
\#
By Mercer's Theorem \citep{steinwart2008support}, the integral operator $T_K $ has countable and positive eigenvalues $\{\sigma_i\}_{i\ge 1}$ and the corresponding eigenfunctions $\{ \psi_i\}_{i\ge 1}$ 
form  an orthonormal basis of $\cL^2(\cZ)$. 
Moreover, the kernel function admits a spectral  expansion 
\#\label{eq:kernel_expansion}
K(z, z') = \sum_{i=1}^\infty \sigma_i \cdot \psi_i (z) \cdot  \psi_j (z'). 
\#

The RKHS $\cH$ can thus be written as a subset of $\cL^2 (\cZ)$:
\$
\cH = \biggl \{ f \in \cL^2 (\cZ) \colon \sum_{i=1}^\infty    \frac{ \la f ,\psi_i \ra _{ \cL^2   } ^2 }{\sigma_i } < \infty  \biggr \},
\$
and the inner product of $\cH$ can be written as 
\$
\la f , g\ra _{\cH } = \sum_{i=1}^{\infty} 1/  \sigma_i \cdot \la f ,\psi_i \ra _{\cL^2 } \cdot \la g ,\psi_i \ra _{\cL^2 }, \qquad \text{for all}\quad f, g \in \cH.  
\$
By such a construction, 
the scaled eigenfunctions $\{\sqrt{\sigma_i}\psi_i \}_{i\ge 1} $  form  an orthogonal basis  of RKHS $ \mathcal{H}$ and the feature mapping $\phi(z)  \in \cH$ can be written as 
$
\phi(z)  = \sum_{i=1}^\infty \sigma_i   \psi_i (z) \cdot \psi_i   
$
for any $z\in \cZ$.

\subsection{Overparameterized  Neural Networks} \label{sec:dnn}

We also study a formulation in which value functions are approximated by 
neural networks.  In this section we define the class of overparameterized 
neural networks that will be used in our representation.
 
Recall that we denote $\cZ = \cS \times \cA$ and view it as a subset of $\RR^d$.  
For neural networks, we  further regard $\cZ$ as a subset of the unit sphere in 
$\RR^d$. That is, $\|z \|_2 = 1$ for all $z = (x,a) \in \cZ$.  A two-layer neural 
network  $f (\cdot ; b, W ) \colon \cZ \rightarrow \RR$  with  $2 m$ neurons and  
weights $  (b, W)$  is defined as  
\#\label{eq:two_layer_nn}
f  ( z ;  b, W  ) = \frac{1}{\sqrt{ 2m}}   \sum_{j =1}^{2m}  b_j  \cdot \act(W_{j}^\top z  ) , \qquad \forall z  \in \cZ .
 \#
Here $ \act \colon \RR\rightarrow \RR$ is the \emph{activation function}, 
$b_j \in \RR$ and $ W_j \in \RR^{d  }$ for all $j \in [2m]$, and  
$ b = (b_1, \ldots, b_{2m})^\top \in \RR^{2m}$ and $W = (W_1, \ldots, W_{2m}) \in \RR^{2 dm} $. 
During training, we initialize $(b, W)$ via a symmetric initialization 
scheme  \citep{gao2019convergence, bai2019beyond} as follows. 
For any $ j \in [m]$, we set  $b_j \overset{\text{i.i.d.}}{\sim}  \textrm{Unif}(\{ -1, 1\}) $ and $W_{j } \overset{\text{i.i.d.}}{\sim}  N(0, I_d / d)$, where $I_d
$ is the identity matrix in $\RR^d$.
For any $j \in \{ m+1, \ldots, 2m\}$, we set $b_j = - b_{j-m}$ and $W_j = W_{j-m}$. 
We note that such an initialization implies that  the initial neural network 
is a zero function, which is used only to simply the theoretical analysis.
Moreover, for ease of presentation, during training we  fix    $ b $   
at its initial value  and only optimize over $W$.  Finally, we denote 
$f(z; b, W)$ by $f(z; W)$ to simplify the notation. 
 
We assume that the neural network in is overparameterized in the sense that 
the width $2m$ is much larger than the number of episodes $T$. 
Overparameterization has been shown to be pivotal for neural training 
in both theory and practice \citep{neyshabur2019towards,allen2018learning, arora2019fine}.  
Under such a regime, the dynamics of the training process can be captured by the 
framework of neural tangent kernel (NTK)  \citep{jacot2018neural}. Specifically, 
let $\varphi(\cdot; W) \colon \cZ \rightarrow \RR^{2md}$ be the gradient of 
$f(; W)$ with respect to $W$, which is given by 
\#\label{eq:gradient_feature}
\varphi(z; W) = \nabla _{W} f( z;  W) = \bigl (\nabla _{W_1} f(z; W), \ldots , \nabla _{W_{2m}} f(z; W) ), \qquad \forall z \in \cZ. 
\#
Let $W^{{(0)}}$ be the initial value of $W$. 
Conditioning on the realization of $W^{(0)}$, define a  kernel matrix 
$K_{m} \colon \cZ \rightarrow \cZ $ as 
\#\label{eq:emp_kernel}
K_{m} (z, z') = \bigl \la \varphi(z; W^{(0)}) , \varphi(z'; W^{(0)}) \bigr \ra, \qquad \forall (z, z') \in \cZ \times \cZ. 
\#
When $m$ is sufficiently large, for all $W$ in a neighborhood of $W^{(0)}$, 
it can be shown that $f(\cdot, W)$ is close to its linearization at $W^{(0)}$,
\#\label{eq:linearlization}
f(\cdot; W)\approx \hat f(\cdot; W) = f(\cdot, W^{(0)}) + \bigl \la \phi(\cdot; W^{(0)} ) , W - W^{(0)} \big\ra =\bigl \la \phi(\cdot; W^{(0)}), W - W^{(0)} \big\ra.  
\#
The linearized function $\hat f(\cdot; W)$ belongs to an RKHS with kernel $K_m$. 
Moreover, as $m$ goes to infinity, due to the random initialization, 
$K_m$ converges to a kernel $K_{\mathrm{ntk}} \colon \cZ \times \cZ$, 
referred to as a neural tangent kernel (NTK), which is given by 
\#\label{eq:define_ntk}
K_{\mathrm{ntk}} (z,z') = \EE \bigl [ \act'(w^\top z) \cdot \act'(w^\top z') \cdot z^\top z'\bigr ], \qquad (z,z') \in \cZ\times \cZ,
\#
where $\act'$ is the derivative of the activation function, and the expectation  
in \eqref{eq:define_ntk} is taken with respect to $w \sim N(0, I_d / d)$.

\section{Optimistic Least-Squares Value Iteration Algorithms}

In this section, we introduce the \emph{optimistic  least-squares 
value iteration algorithm} based on the representations of action-value 
functions discussed in the previous section.  The value iteration 
algorithm \citep{puterman2014markov,sutton2018reinforcement},
which computes $\{ Q_h^{\star} \} _{h \in [H] }$ by applying the 
Bellman equation in \eqref{eq:opt_bellman} recursively, is one of 
the classical methods in reinforcement learning.  In more detail, the 
algorithm forms a sequence of action-value functions, $\{ Q_h \}_{h \in [H] }$, 
via the following recursion:
\#\label{eq:value_iter}
Q _{h}(x, a)  \leftarrow  (\TT_h^{\star} Q_{h+1} ) & =   (r_h + \PP_h V_{h+1})(x, a), \\
V_{h+1} (x)   \leftarrow \max_{a '\in\cA} Q _{h+1}(x, a'),&\qquad \forall(x, a) \in \cS\times \cA, \forall h \in [H],   \notag 
 \# 
where $Q_{H+1}$ is set to be the zero function.  To turn this generic 
formulation into a practical solution to real-world RL problems, it
is necessary to address two main problems: (i) the transition kernel 
$\PP_h$ is unknown, and (ii) we can neither iterate over all state-action 
pairs nor store a table of size $|\cS \times \cA|$ when the number of  
states is large.  To address these challenges, the least-squares value 
iteration \citep{bradtke1996linear,osband2014generalization} algorithm  
implements an approximate update to \eqref{eq:value_iter} by solving 
a least-squares regression problem based on historical data consisting 
of  the trajectories generated by the learner in previous episodes.  
Specifically, at the onset of the $t$-th episode, we have observed  
$t-1$ transition tuples, $\{ (x_h^{\tau} , a_h^{\tau},   
x_{h+1}^{\tau}) \}_{\tau\in [t-1]}$, and  LSVI proposes to
estimate $Q_h^\star$ by replacing \eqref{eq:value_iter} with the
following least-squares regression problem:
\#\label{eq:least_squares_vi}
\hat Q_{h } ^t \leftarrow  \minimize_{f \in \cF} \bigg\{  \sum_{\tau =1}^{t-1} \bigl [   r_h ( x_h^\tau , a_h^\tau) +  V_{h+1}^t ( x_{h+1} ^\tau ) - f  ( x_h^{\tau },  a_h^\tau )    \bigr ] ^2  + \mathrm{pen}(f) \biggr \} ,
\#
where $\cF$ is a function class, and $\mathrm{pen} (f)$ is a regularization term. 

Although classical RL methods did not focus on formal methods for exploration,
in our setting it is critical to incorporate such methods.  Following the 
principle of \emph{optimism in the face of uncertainty}~\citep{lattimore2018bandit},
we define a bonus function $b_h^t \colon \cZ \rightarrow \RR$ and write
\#\label{eq:optimistic_VI}
Q_h^t (\cdot, \cdot) = \min  \big \{ \hat Q_h^t(\cdot, \cdot) + \beta \cdot b_h^t 
(\cdot, \cdot ) , H- h +1  \bigr  \}^+, \qquad V_h^t (\cdot) = \max_{a\in \cA} Q_h^t(\cdot, a), 
\# 
where $\beta > 0$ is a parameter  and $\min \{ \cdot ,  H-h+1 \}^{+}  $ 
denotes a truncation to the  interval $[0, H-h-1]$.  Here, we truncate 
the value function $Q_h^t$ to $[0, H-h+1]$ as each reward function is bounded in $[0, 1]$.  
In the $t$-the episode, we then let $\pi^t$ be the greedy policy with 
respect to $\{ Q_h^t \}_{h \in [H]}$ and execute $\pi^t$.  Combining 
\eqref{eq:least_squares_vi} and \eqref{eq:optimistic_VI} yields our 
optimistic least-squares value iteration algorithm, whose details are 
given in Algorithm \ref{algo:lsvi}. 

\begin{algorithm}[ht]
  \caption{Optimistic Least-Squares Value Iteration with Function Approximation}\label{algo:lsvi}
  \begin{algorithmic}[1]
  \STATE{\textbf{Input:} Function class $\cF$, penalty function $\textrm{pen} (\cdot)$, and parameter $\beta$.}
  \FOR{episode $t = 1, \ldots, T$}
  \STATE{Receive the initial state $x^t_1$.}
  \STATE{Set $V_{H+1}^t$ as the zero function.}
  \FOR{step $h = H, \ldots, 1$}
  \STATE {Obtain $Q_h^t$ and $V_h^t$ according to \eqref{eq:least_squares_vi} and \eqref{eq:optimistic_VI}.} 
  \ENDFOR
  \FOR{step $h = 1, \ldots, H$}
  \STATE Take action $a^t_h \gets  \argmax_{a \in \cA } Q_h^t (x^t_h, a)$. 
  \STATE Observe the  reward $r_h(x_h^t, a_h^t)$ and the next state $x^t_{h+1}$. 
  \ENDFOR
  \ENDFOR
  \end{algorithmic}
  \end{algorithm}

 
We note that both the  bonus function $b_{h}^t$ in \eqref{eq:optimistic_VI} 
and  the penalty function in \eqref{eq:least_squares_vi} depend on the 
choice of function class $\cF$, and the optimistic LSVI in Algorithm 
\ref{algo:lsvi} is only implementable when $\cF$ is specified.
For instance, when $\cF$ consists of linear functions, $\theta^\top \phi(z)$, 
where $\phi \colon \cZ \rightarrow \RR^d$ is a known feature mapping and 
$\theta\in \RR^d$ is the parameter,  we choose the ridge penalty 
$\| \theta\|_2^2$ in \eqref{eq:least_squares_vi} and define $b_h^t(z)$ 
as   $ [  \phi (z) ^\top A_h^t \phi(z)] ^{1/2}$ for some invertible matrix $ A_h^t  $.
In this case, Algorithm \ref{algo:lsvi} recovers the LSVI-UCB algorithm 
studied in \cite{jin2019provably}, which further reduces to the tabular 
UCBVI algorithm \citep{azar2017minimax} when $\phi$ is the canonical basis. 

In the rest of this section, we instantiate the optimistic LSVI framework 
in two ways, by setting $\cF$ to be an RKHS and the class of overparameterized 
neural networks.

\subsection{The Kernel Setting} \label{sec:kernel_lsvi} 

In the following, we consider the case where function class $\cF$ is 
an RKHS $\cH$ with kernel $K$.  In this case, by setting $\textrm{pen}(f)$ 
to be the ridge penalty, \eqref{eq:least_squares_vi} reduces to a kernel 
ridge regression problem.  Moreover, we define $b_h^t$ in 
\eqref{eq:optimistic_VI} as the UCB bonus function that also appears in 
kernelized contextual bandit algorithms~\citep{srinivas2009gaussian, 
valko2013finite, chowdhury2017kernelized,durand2018streaming, 
yang2019reinforcement, sessa2019no, calandriello2019gaussian}. 
With these two modifications, we define the \emph{Kernel Optimistic 
Least-Squares Value Iteration} (KOVI) algorithm, which is summarized 
in  Algorithm~\ref{algo:kucb}. 

Specifically, for each $t\in [T]$,  at the beginning of  the $t$-th episode, 
we first obtain  value functions $\{ Q_h^t\}_{h \in [H]}$ by solving a sequence of    kernel ridge regressions with the data obtained from the previous $t-1$ episodes. 
In particular, letting $Q_{H+1}^t $ be a zero function, for any $h \in [H]$, 
we replace \eqref{eq:least_squares_vi} by a kernel ridge regression given by 
\# \label{eq:krr_vi}
\hat Q_{h} ^t\leftarrow  \minimize_{f \in \cH}  \sum_{\tau =1}^{t-1}  \bigl [   r_h ( x_h^\tau , a_h^\tau) + V_{h+1}  ^t( x_{h+1} ^\tau ) - f  ( x_h^{\tau },  a_h^\tau )    \bigr ] ^2 + \lambda \cdot \| f \|_{\cH}^2 , 
\#
where $\lambda > 0 $ is the regularization parameter. 
We then obtain $Q_h^t$ and $V_h^t$ as in \eqref{eq:optimistic_VI}, 
where the bonus function $b_h^t$ will be specified later.  We have:
\#\label{eq:update_Q_func}
Q_h^t (s,a) = \min \bigl  \{\hat Q_h^t (s,a) + \beta \cdot b_h^t (s,a), 
H - h+1   \bigr \}^+ , \qquad V_{h}^t (s) = \max_{a} Q_h^t(s,a),
\#
where $\beta > 0$ is a parameter. 
 
The solution to \eqref{eq:krr_vi} can be written in closed form as follows. 
We define the response vector $y_h^t \in \RR^{t-1}$ by letting its $\tau$-th entry be  
 \#\label{eq:krr_response}
 [ y_h^t ]_{\tau} = r_h (x_h^\tau , a_h^\tau ) + V_{h+1} ^t(x_{h+1}^\tau ) , \qquad \forall \tau \in [t-1]. 
 \#   
 Recall that we denote $z = (x,a)$ and $\cZ = \cS \times \cA$.  
Based on the kernel function $K$ of the RKHS,  we define the Gram 
matrix $K_h^t \in \RR^{(t-1) \times (t-1)}$ and function 
$k_h^t  \colon \cZ \rightarrow \RR^{t-1}$ respectively as 
\#\label{eq:define_gram}
 K_h^t  = [ K(z_h^\tau , z_h^{\tau  '}) ] _{\tau , \tau ' \in [t-1] } \in \RR^{(t-1) \times (t-1)} , \qquad k_h^t (z) =  \bigl  [  K(z_h^1, z) , \ldots K(z_h^{t-1} , z)  \bigr ] ^\top \in \RR^{t-1}.
 \#
 Then $\hat Q_h^t$ in \eqref{eq:krr_vi} can be written as $\hat Q_h^t(z) =  k_h^t(z)    ^\top \alpha_h^t$, where we define $\alpha_h^t =  (K_h^t + \lambda \cdot  I  )^{-1}  y_h^t.$

Using $K_h^t$ and $k_h^t$ defined in \eqref{eq:define_gram}, the bonus 
function is defined as 
\#\label{eq:ucb_bonus}
b_h^t (x,a) = \lambda^{-1/2}\cdot  \bigl [  K(z,z) - k _h^t(z) ^\top (K_h^t + \lambda I  )^{-1} k_h^t(z) \bigr ]  ^{1/2} ,
\#
which can be interpreted as the posterior variance of a Gaussian 
process regression~\citep{rasmussen2003gaussian}.  This bonus term   
reduces to the UCB bonus used for linear  bandits~\citep{bubeck2012regret, 
lattimore2018bandit} when the feature mapping $\phi$ of the RKHS is 
finite-dimensional.  Moreover, in this case, KOVI reduces to the 
LSVI-UCB algorithm proposed in \cite{jin2019provably} for linear 
value functions.
 Furthermore, since both $\alpha_h^t$ and $b_h^t$ admit close-form expressions, the runtime complexity of \KUCB~is a polynomial of $H$ and $T$. 
 
We refer to the bonus defined in \eqref{eq:ucb_bonus} as a UCB bonus 
because, when combined with $Q_h^t$ as defined in \eqref{eq:update_Q_func},
it serves as an upper bound of $Q_h^\star$ for all state-action pairs.  
In more detail, intuitively the target function of the kernel ridge 
regression in \eqref{eq:krr_vi}  is $\TT_{h }^\star Q_{h+1}^t$. 
Due, however, to having limited data, the solution $\hat Q_h^t$ has 
some estimation error, as quantified by $b_h^t$.  When $\beta$ is 
properly chosen, the bonus term captures the uncertainty of estimation, 
yielding $Q_h^t \geq \TT_{h }^\star Q_{h+1}^t$ elementwisely.  Notice 
that $Q_{H+1}^t = Q_{H+1}^\star  = 0$.  The Bellman equation,  
$Q_h ^\star = \TT_h^\star Q_{h+1} ^\star$,  directly implies that 
$Q_h^t $ is an elementwise upper bound of $Q_{h}^\star$ for all $h \in [H]$. 
Our algorithm is thus called ``optimistic value iteration'',  as the 
policy $\pi^t$ is greedy with respect to $\{ Q^t_h \}_{h \in [H]}$, 
which is an upper bound on the optimal value function.  In other words, 
compared with a standard value iteration algorithm, we always 
over-estimate the value function. Such an optimistic approach is 
pivotal for the RL agent to perform efficient, temporally-extended 
exploration.

\begin{algorithm}[ht]
\caption{Kernelized  Optimistic Least-Squares Value Iteration (\KUCB)}\label{algo:kucb}
\begin{algorithmic}[1]
 \STATE{\textbf{Input:} Parameters $\lambda$ and $\beta$.}
  \FOR{episode $t = 1, \ldots, T$}
\STATE{Receive the initial state $x^t_1$.}
\STATE{Set $V_{H+1}^t$ as the zero function.}
\FOR{step $h = H, \ldots, 1$}
\STATE Compute the response $y_h^t \in \RR^{t-1} $,  the Gram matrix $K_h^t \in \RR^{(t-1) \times (t-1) }$, and function $k_h^t$ as in \eqref{eq:krr_response} and \eqref{eq:define_gram}, respectively.
\STATE Compute 
\STATE \qquad $\alpha_h^t  = (K_h^t + \lambda \cdot  I  )^{-1}  y_h^t,$ 
\STATE  \qquad $b_h^t (\cdot, \cdot ) = \lambda^{-1/2}\cdot  \bigl [  K( \cdot, \cdot; \cdot, \cdot ) - k _h^t(\cdot, \cdot) ^\top (K_h^t + \lambda I  )^{-1}  k_h^t(\cdot, \cdot) \bigr ]  ^{1/2} $. \label{line:bonus}
\STATE{Obtain value functions $$Q_h^t(\cdot, \cdot) \leftarrow \min\{ k_h^t (\cdot, \cdot)^\top \alpha_h^t + \beta   \cdot b_h^t(\cdot, \cdot ) , H - h +1 \}^+ , \qquad V_h^t(\cdot) = \max_{a} Q_h^t(\cdot, a).$$} \label{line:ucb}
\ENDFOR
\FOR{step $h = 1, \ldots, H$}
\STATE Take action $a^t_h \gets  \argmax_{a \in \cA } Q_h^t (x^t_h, a)$. 
\STATE Observe the  reward $r_h(x_h^t, a_h^t)$ and the next state $x^t_{h+1}$.  
\ENDFOR
\ENDFOR
\end{algorithmic}
\end{algorithm}

\subsection{The Neural Setting} \label{sec:NN_VI}
 
An alternative is to estimate the value functions for an RL agent using 
overparameterized neural networks.  In particular, we estimate each 
$Q_h^\star $ using a neural network as given in \eqref{eq:two_layer_nn}, 
again using a symmetric initialization scheme \citep{gao2019convergence, bai2019beyond}.
We assume, for simplicity, that all the neural networks share the same 
initial weights, denoted by $(b^{(0)}, W^{(0)})$.  In addition, we fix 
$b = b^{(0)}$ in \eqref{eq:two_layer_nn} and only update the value of 
$W \in \RR^{2md}$.  

In the neural network setting, we replace the least-squares regression 
in \eqref{eq:least_squares_vi} by a nonlinear ridge regression. 
That is, for any $(t, h) \in [T] \times [H]$, we define the   
loss function $L_h^t   \colon \RR^{2md} \rightarrow \RR$ as 
 \#\label{eq:nn_loss} 
  L_{h}^t (W    ) =  \sum_{\tau =1}^{t-1}  \bigl [   r_h ( x_h^\tau , a_h^\tau) + V _{h+1}^t( x_{h+1} ^\tau  ) -  f  ( x_h^{\tau },  a_h^\tau ; W  )    \bigr ] ^2 + \lambda \cdot    \big \|  W - W^{(0)}  \big \|_2  ^2 , 
 \# 
where   $\lambda >0$ is the regularization parameter. 
We then define $\hat Q_h^t$ as 
\#\label{eq:dnn_vi}
\hat Q_h^t (\cdot, \cdot  ) = f\bigl (\cdot, \cdot ; \hat W_h^t \bigr ), \qquad \text{where} \quad \hat W_h^t = \argmin_{W \in \RR^{2md}} L_h^t (W).
\#
Here we assume that there is an optimization oracle that returns the  global minimizer 
of the loss function $L_h^t$. 
It has been shown in a large body of literature that, when $m$ is sufficiently large, with random initialization, simple optimization methods  such as 
 gradient descent provably find  the global minimizer of the empirical loss function at a linear rate  of convergence \citep{du2018gradient, du2018gradient1, arora2019fine}. 
Thus, such an optimization oracle can be realized by gradient descent with sufficiently large number of iterations and the computational cost of realizing such  an oracle  is  a  polynomial in $m$, $T$, and $H$.

It remains to construct the bonus function $b_h^t$. 
Recall that we define $\varphi(\cdot; W) = \nabla_{W} f(\cdot; W)$ in \eqref{eq:gradient_feature}.
We define a matrix 
$\Lambda_h^t \in \RR^{2md\times 2md }$  as 
 \#\label{eq:lambda_mat}
 \Lambda_h^t = \lambda \cdot I_{2md } + \sum_{\tau = 1}^{t-1} 
 \varphi\bigl  ( x_h^\tau, a_h^\tau ;  \hat W_h^{\tau+1} \bigr )   \varphi \big ( x_h^\tau, a_h^\tau ;\hat W_h^{\tau+1} \bigr )  ^\top  ,
 \#
 which can be recursively computed by letting 
 $$
 \Lambda _h^1 = \lambda \cdot I_{2md}, \qquad \Lambda _h^t = \Lambda _h^{t-1} +    \varphi\bigl  ( x_h^{t-1}, a_h^{t-1} ;  \hat W_h^{t} \bigr )   \varphi \big ( x_h^{t-1}, a_h^{t-1} ;\hat W_h^{t} \bigr )  ^\top , \qquad \forall t \geq 2.
 $$
 Then the bonus function  $b_{h}^t$ is defined as follows:
 \#\label{eq:neural_bonus}
 b_h^t (x,a ) = \bigl [ \varphi \big ( x,a ; \hat W_h^t \bigr ) ^\top ( \Lambda_h^t ) ^{-1} \varphi \big ( x, a; \hat W_h^t \big ) \bigr ] ^{1/2}, \qquad \forall (x,a) \in \cS\times \cA. 
 \#
Finally, we obtain the value functions $Q_h^t$ and $V_h^t$ via 
 \eqref{eq:update_Q_func},
 with $\hat Q_h^t$ and $b_h^t$ defined in \eqref{eq:dnn_vi} and \eqref{eq:neural_bonus}, respectively.
Letting $\pi^t$ be the greedy policy with respect to $\{ Q_h^t\}_{h\in [H]}$, 
we have defined our \emph{Neural Optimistic Least-Squares Value Iteration}
(\NUCB) algorithm, the pseudocode for which is presented in
  Algorithm \ref{algo:neural} in  \S\ref{sec:nn_algo}. 
  Since  $b_h^t$ in \eqref{eq:neural_bonus} admits a closed-form expression  and $\hat W_h^t$ defined in  \eqref{eq:dnn_vi} can be obtained in polynomial time, the runtime complexity of \NUCB~is a polynomial of $m$, $T$, and $H$.

An intuition for the bonus term in \eqref{eq:neural_bonus} can be obtained 
via the connection between overparameterized neural networks and neural
tangent kernels.  Specifically, when $m$ is sufficiently large, it can be shown 
that each $\hat W_h^t$ is not far from the initial value $W^{(0)}$. When this is 
the case, suppose we replace the neural tangent features 
$\{ \varphi(\cdot; \hat W_h^\tau) \} _{\tau \in [T]}$ in 
\eqref{eq:lambda_mat} and \eqref{eq:neural_bonus} by $\varphi (\cdot; W^{(0)} )$,
then  $ b_h^t $ recovers the UCB bonus in linear contextual bandits and  linear 
MDPs with the feature mapping $\varphi(\cdot ; W^{(0)})$~\citep{abbasi2011improved, 
jin2019provably,wang2019optimism}. Moreover, when $m$ goes to infinity,  
we obtain the UCB bonus defined in \eqref{eq:ucb_bonus} for the RKHS setting  
with  the kernel being   $K_{\textrm{ntk}}$.  Accordingly, when working with
overparameterized neural networks, the value functions $\{ Q_h^t \}_{h \in [H]}$ 
are approximately elementwise upper bounds for the optimal value functions, 
and we again obtain optimism-based exploration at least approximately.


\section{Main Results} \label{sec:results}

In this section, we prove that both  \KUCB~and \NUCB~achieve  $ \tilde \cO( \delta_{\cF} 
H^2 \sqrt{T} )$ regret, where $\delta_{\cF}$ characterizes the intrinsic complexity 
of the function  class $\cF$  used to approximate $\{ Q_h^\star\}_{h \in [H]}$. 
We first consider the kernel setting and then turn to the neural network setting.
 
\subsection{Regret of \KUCB}

Before presenting the theory, we first lay out a  structural  assumption  for the 
kernel setting, which  postulates that the Bellman operator maps any bounded 
value function to a bounded RKHS-norm ball.  

\begin{assumption}\label{assume:opt_closure}
	Let $R_{Q}> 0$ be a fixed constant. We define $\cQ^\star  =  \{ f\in \cH \colon \| f\|_{\cH} \leq R_{Q} H \}$. 
	We assume that for any $h \in [H]$ and  any $Q \colon \cS \times \cA \rightarrow [0,H] $, we have $\TT_h^\star Q  \in \cQ^\star$. 
	\end{assumption}

Since $Q_h^{\star}$ is bounded by in $[0,H]$ for each all $h \in [H]$, 	
Assumption \ref{assume:opt_closure} ensures that  the optimal value functions 
are contained in the RKHS-norm ball $\cQ^\star$.  Thus, there is no approximation 
bias when using functions in $\cH$ to approximate $\{Q_h^{\star} \}_{h\in [H]}$. 
A sufficient condition for Assumption \ref{assume:opt_closure} to hold  is that 
\#\label{eq:suff_cond_closure}
 r_h (\cdot, \cdot ),~ \PP_h(x' \given \cdot, \cdot)   \in \{f\in \cH \colon \| f \|_{\cH } \leq 1 \}, \qquad   \forall h \in [H],~\forall x' \in \cS.  
\#   
That is, both the reward function and the Markov transition kernel can be represented 
by functions in the unit ball of $\cH$.  When \eqref{eq:suff_cond_closure} holds, 
for any $V \colon \cS \rightarrow [0, H]$, it holds that $r_h + \PP_h V \in \cH$ 
with its RKHS norm bounded by $H+1$. Hence, Assumption \ref{assume:opt_closure} 
holds with $R_{Q} = 2$.  Similar assumptions are made in 
\cite {yang2019sample, yang2019reinforcement, jin2019provably, 
zanette2019frequentist, zanette2020learning,  wang2019optimism} 
for (generalized) linear functions.  See also \cite{du2019good,van2019comments, 
lattimore2019learning} for a discussion of the necessity of such an assumption. 

It is shown in \cite{du2019good} that only assuming $\{Q_h^{\star} \}_{h\in [H]} 
\subseteq \cQ^\star$ is not sufficient for achieving a regret that is polynomial 
in $H$.  Thus, we further assume that $\cQ^\star $ contains the image of the Bellman 
operator.  Given this assumption, the complexity of $\cH$ plays an important 
role in the performance of \KUCB.  To characterize  the intrinsic complexity 
of $\cF$, we consider the maximal information gain \citep{srinivas2009gaussian}:
\#\label{eq:maintext_infogain}
\Gamma_K(T, \lambda) = \sup _{\cD \subseteq \cZ  }\bigl \{   1/2 \cdot  \logdet  ( I + K_\cD / \lambda    )  \bigr \} , 
\#
where the supremum is taken over all $\cD\subseteq \cZ$ with $| \cD | \leq T$.  
Here $K_{\cD}$ is the Gram matrix defined as in \eqref{eq:define_gram}; i.e.,
it is based on $\cD$, $\lambda > 0$ is a parameter, and the subscript $K$ 
in $\Gamma_K$ indicates the kernel $K$.  The  magnitude of $\Gamma_K(T, \lambda)$ 
relies on how fast the the eigenvalues   $\cH$ decay to zero and can be 
viewed as a proxy of the  dimension  of $\cH$ when $\cH$ is infinite-dimensional.  
In the special case where $\cH$ is  finite rank, we have that 
$\Gamma_K(T, \lambda) = \cO( \gamma \cdot \log T )$ where $\gamma$ is 
the rank of    $\cH$. 

Furthermore, for any $h \in [H]$, note that each $Q_h^t$ constructed 
by \KUCB~takes the following form:
 \#\label{eq:value_func_ucb}
 Q(z  )=   \min \bigl \{ Q_0( z ) + \beta \cdot \lambda^{-1/2}   \bigl [  K(z,z) - k_{\cD}(z) ^\top (K_{\cD} + \lambda I  )^{-1} k_{\cD} (z) \bigr ]  ^{1/2}, H - h  + 1 \bigr \}^+,
 \#
where $Q_0 \in \cH$, an analog of $\hat Q_h^t$ in \eqref{eq:krr_vi},  
is the solution to a kernel ridge regression problem and 
$\cD \subseteq \cZ$ is a discrete subset of $\cZ$ containing no more 
than $T$ state-action pairs. Moreover, $K_{\cD}$ and $k_{\cD}$ are 
defined analogously to \eqref{eq:define_gram} based on data in $\cD$. 
Then, for any $R, B > 0$,  we define a function class $\cQ_{\UCB}( h, R, B)$  as 
 \#\label{eq:main_text_func_class}
 \cQ_{\UCB} (h, R, B)= \bigl\{    Q    \colon Q\textrm{~takes the form of \eqref{eq:value_func_ucb} with~} \| Q_0 \|_{\cH } \leq R, \beta \in [0, B],  | \cD| \leq T\bigr \}.
 \#
 As we  will show in Lemma \ref{lemma:thetahat_estimate}, 
 we have $\| \hat Q_h^t \|_{\cH} \leq R_T$ for all $(t, h) \in [T] \times [H]$, where  $R_T = 2 H \sqrt{  \Gamma_K(T, \lambda)}$.
Thus,  when $B$  exceeds the value  $\beta$ in \eqref{eq:update_Q_func}, 
each $Q_h^t$  is contained in $\cQ_{\UCB}(h, R_T, B)$. 
 
Moreover, since $r_h + \PP_h V_{h+1}^t = \TT_h^\star Q_{h+1}^t $ is the 
population ground truth  of the ridge regression  in  \eqref{eq:krr_vi}, 
the complexity  of $\cQ_{\UCB} ( h+1 , R_T, B)$ naturally appears when 
quantifying the uncertainty of $\hat Q_h^t$.  To this end, for any 
$\epsilon > 0$, let $N_{\infty} ( \epsilon; h, B)$ be the $\epsilon$-covering 
number of $\cQ_{\UCB} ( h, R_T, B )$ with respect  to  the 
$\ell_{\infty}$-norm on $\cZ$, which characterizes the complexity 
of the value functions constructed by KOVI and which is determined 
by the spectral structure of $\cH$.
    
We are now ready to present a regret bound for \KUCB. 
\begin{theorem} \label{thm:main}
  	Assume that there exists $B_T > 0$ satisfying 
  	\#\label{eq:equation_B_T}
  	 8  \cdot \Gamma_K(T, 1 + 1/T)     + 8    \cdot    \log N_{\infty }(  \epsilon^*; h , B_T)    + 16   \cdot \log (   2 T H  )  + 22  + 2 R_Q^2     \leq (B_T / H)^2  
  	\#
  	for all $h \in [H]$, where $\epsilon^* = H/ T$.
  	We set $\lambda = 1 + 1/ T$ and $\beta = B_T $
  	  in Algorithm \ref{algo:kucb}.
  	  Then, under Assumption \ref{assume:opt_closure}, 
  	  with probability at least $1 - (T^2H^2)^{-1}$, we have 
  	\#\label{eq:kucb_regret}
  	\mathrm{Regret}(T) \leq  5 \beta H \cdot \sqrt{T\cdot \Gamma_K(T, \lambda)} .
  	\#
  \end{theorem}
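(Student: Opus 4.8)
The plan is to follow the optimistic value-iteration template of \cite{jin2019provably}, replacing its finite-dimensional linear-algebra steps by kernel analogues. I would work in the feature space of $\cH$: write $\phi_h^\tau = \phi(z_h^\tau)$ and let $\Lambda_h^t = \lambda\cdot I + \sum_{\tau=1}^{t-1}\phi_h^\tau(\phi_h^\tau)^\top$ be the regularized covariance operator on $\cH$, so that the kernelized bonus in \eqref{eq:ucb_bonus} satisfies $\lambda\cdot(b_h^t(z))^2 = \la\phi(z),(\Lambda_h^t)^{-1}\phi(z)\ra_{\cH} = K(z,z) - k_h^t(z)^\top(K_h^t+\lambda I)^{-1}k_h^t(z)$; since $\lambda = 1+1/T\ge 1$ and $\sup_z K(z,z)\le 1$, this also gives $b_h^t(z)\le 1$ everywhere. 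The four ingredients I would assemble are: (a) a uniform self-normalized concentration bound for the kernel ridge regression; (b) the optimism property $Q_h^t\ge Q_h^\star$; (c) a telescoping regret decomposition in terms of the bonuses; and (d) an elliptical-potential / information-gain bound on the accumulated bonuses.

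For (a), fix $(t,h)$. By Assumption \ref{assume:opt_closure} the population target of the regression \eqref{eq:krr_vi}, namely $\TT_h^\star Q_{h+1}^t = r_h + \PP_h V_{h+1}^t$, lies in $\cQ^\star$. Expanding the closed form of $\hat Q_h^t$, I would show $|\hat Q_h^t(z) - (\TT_h^\star Q_{h+1}^t)(z)| \le \lambda^{1/2}\cdot b_h^t(z)\cdot\bigl(\|\sum_{\tau}\phi_h^\tau\varepsilon_h^\tau\|_{(\Lambda_h^t)^{-1}} + \lambda^{1/2}\|\TT_h^\star Q_{h+1}^t\|_{\cH}\bigr)$, where $\varepsilon_h^\tau = r_h(z_h^\tau) + V_{h+1}^t(x_{h+1}^\tau) - (\TT_h^\star Q_{h+1}^t)(z_h^\tau)$ is bounded by $H$ and conditionally mean-zero. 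A self-normalized bound for $\cH$-valued martingales (in the spirit of \cite{abbasi2011improved,chowdhury2017kernelized}) controls the martingale term by $\cO(H\sqrt{\log\det(I+K_h^t/\lambda)+\log(1/p)}) = \cO(H\sqrt{\Gamma_K(T,\lambda)+\log(1/p)})$; because $V_{h+1}^t$ is data-dependent, I would make this uniform over all $V = \max_{a}Q(\cdot,a)$ with $Q\in\cQ_{\UCB}(h+1,R_T,B_T)$ by covering that class by an $\epsilon^*$-net in the $\ell_\infty$-metric and union-bounding, which costs $\log N_\infty(\epsilon^*;h+1,B_T)$ and incurs only a negligible net-approximation error (this is the role of the tiny scale $\epsilon^* = H/T$); Lemma \ref{lemma:thetahat_estimate} is what certifies that $\hat Q_{h+1}^t$, hence $Q_{h+1}^t$, indeed lies in $\cQ_{\UCB}(h+1,R_T,B_T)$. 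Plugging in $\|\TT_h^\star Q_{h+1}^t\|_{\cH}\le R_Q H$ and collecting constants, the precise role of hypothesis \eqref{eq:equation_B_T} is to force the parenthesized factor above to be at most $B_T$; hence, with probability at least $1-(T^2H^2)^{-1}$, $|\hat Q_h^t(z)-(\TT_h^\star Q_{h+1}^t)(z)|\le\beta\cdot b_h^t(z)$ for all $z,h,t$, with $\beta = B_T$.

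Given (a), step (b) is a backward induction on $h$: $Q_{H+1}^t = 0 = Q_{H+1}^\star$, and if $Q_{h+1}^t\ge Q_{h+1}^\star$ pointwise then monotonicity of $\TT_h^\star$ together with (a), $0\le Q_h^\star\le H-h+1$, and the truncation in \eqref{eq:update_Q_func} give $Q_h^t\ge\min\{\TT_h^\star Q_{h+1}^t, H-h+1\}^+\ge Q_h^\star$, so $\mathrm{Regret}(T)\le\sum_{t}[V_1^t(x_1^t)-V_1^{\pi^t}(x_1^t)]$. For (c), with $\zeta_h^t = V_h^t(x_h^t)-V_h^{\pi^t}(x_h^t)$, and using that $a_h^t$ is greedy for $Q_h^t$, that $Q_h^t\le\hat Q_h^t+\beta b_h^t\le\TT_h^\star Q_{h+1}^t+2\beta b_h^t$ by (a), and that $Q_h^{\pi^t}=r_h+\PP_h V_{h+1}^{\pi^t}$, one obtains $\zeta_h^t\le\zeta_{h+1}^t+\xi_{h+1}^t+2\beta\,b_h^t(z_h^t)$, where $\xi_{h+1}^t = (\PP_h(V_{h+1}^t - V_{h+1}^{\pi^t}))(z_h^t) - (V_{h+1}^t - V_{h+1}^{\pi^t})(x_{h+1}^t)$ is a martingale difference with $|\xi_{h+1}^t|\le 2H$; unrolling over $h$ and summing over $t$ gives $\mathrm{Regret}(T)\le 2\beta\sum_{t,h}b_h^t(z_h^t)+\sum_{t,h}\xi_{h+1}^t$. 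For (d), Cauchy--Schwarz gives $\sum_{t}b_h^t(z_h^t)\le\sqrt{T\sum_{t}(b_h^t(z_h^t))^2}$, and since $(b_h^t(z_h^t))^2 = \lambda^{-1}\la\phi_h^t,(\Lambda_h^t)^{-1}\phi_h^t\ra_{\cH}$, the elliptical-potential inequality $\sum_{t=1}^T\min\{1,\la\phi_h^t,(\Lambda_h^t)^{-1}\phi_h^t\ra_{\cH}\}\le 2\log\det(I+K_h^{T+1}/\lambda)\le 4\,\Gamma_K(T,\lambda)$ yields $\sum_{t,h}b_h^t(z_h^t)\le 2H\sqrt{T\,\Gamma_K(T,\lambda)}$, while Azuma--Hoeffding bounds the martingale term by $\cO(H\sqrt{TH\log(TH)})$, which is dominated since \eqref{eq:equation_B_T} forces $\beta\ge H\sqrt{8\,\Gamma_K(T,\lambda)}$. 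Combining these and absorbing constants gives $\mathrm{Regret}(T)\le 5\beta H\sqrt{T\,\Gamma_K(T,\lambda)}$.

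The main obstacle will be ingredient (a): upgrading the self-normalized bound so that it holds \emph{uniformly} over the data-dependent value-function class while paying only the additive cost $\log N_\infty(\epsilon^*;h,B_T)$. This requires an infinite-dimensional self-normalized martingale inequality whose variance proxy is $\log\det(I+K_h^t/\lambda)$ (so that no ambient dimension appears), an explicit $\ell_\infty$-covering-number bound for $\cQ_{\UCB}(h,R_T,B_T)$ --- delicate because of the truncation $\min\{\cdot,H-h+1\}^+$ and the square-root bonus term, and requiring the self-consistency check of Lemma \ref{lemma:thetahat_estimate} that the iterates actually belong to this class --- and a verification that $\epsilon^* = H/T$ is fine enough that the discretization error is negligible. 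The downstream steps (optimism, the telescoping recursion, Azuma--Hoeffding, and the potential bound) are then essentially standard, with \eqref{eq:equation_B_T} engineered precisely so that $\beta=B_T$ dominates every error term.
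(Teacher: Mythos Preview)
Your proposal is correct and matches the paper's proof in all essential respects: the feature-space rewriting of the bonus as $b_h^t(z)=\|\phi(z)\|_{(\Lambda_h^t)^{-1}}$, the self-normalized concentration made uniform over $\cQ_{\UCB}(h{+}1,R_T,B_T)$ via an $\ell_\infty$-cover (the paper's Lemma~\ref{lemma:ucb_kernel1}, built on Lemmas~\ref{lemma:thetahat_estimate} and~\ref{lem:self_norm_covering}), the elliptical-potential bound on $\sum_{t} b_h^t(z_h^t)$ (Lemma~\ref{lemma:telescope}), and Azuma--Hoeffding for the martingale remainder (Lemma~\ref{lemma:bound_mtg}). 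The only cosmetic difference is that the paper packages the regret decomposition via the TD-error identity of Lemma~\ref{lemma:regret_decomp} (following \cite{cai2019provably}) rather than your direct optimism-plus-telescoping recursion, and your bookkeeping of the $\lambda$ factor in $(b_h^t)^2$ is slightly off (the correct identity is $(b_h^t(z))^2=\phi(z)^\top(\Lambda_h^t)^{-1}\phi(z)$, with no extra $\lambda^{-1}$), but since $\lambda=1+1/T$ this is inconsequential and the two routes yield the same bound.
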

 
As shown in \eqref{eq:kucb_regret}, the regret can be written as 
 $\cO( H^2 \cdot  \delta_{\cH}\cdot    \sqrt{ T} ) $, where 
$\delta_{\cH} = B_T/ H  \cdot \sqrt{\Gamma_K(T, \lambda)}$ 
reflects   the  complexity  of $\cH$ and $B_T$ satisfies \eqref{eq:equation_B_T}. 
Specifically, $\delta_\cH$ involves (i) the $\ell_{\infty}$-covering number 
$  N_\infty(\epsilon^*, h,B_T)$ of $\cQ_{\UCB} (h, R_T, B_T)$, and 
(ii) the effective dimension $\Gamma_K(T, \lambda)$.
Moreover, when neglecting the constant and logarithmic 
terms in \eqref{eq:equation_B_T}, it suffices to choose $B_T$ satisfying 
$$
B_T / H \asymp \sqrt{ \Gamma_K (T, \lambda)} + \max_{h\in [H]}  
\sqrt{ \log N_{\infty} (\epsilon^*, h, B_T)},
$$
which reduces the  regret bound in \eqref{eq:kucb_regret}  to the following:
   \begin{tcolorbox}[width=\textwidth, colback=blue!5!white,colframe=white]
  \#\label{eq:kernel_regret_simple}
  \mathrm{Regret}(T) = \tilde \cO \Bigl ( H^2 \cdot \underbrace{\Bigl [\Gamma_{K} (T, \lambda)  + \max_{h\in [H]} \sqrt{\Gamma_{K} (T, \lambda ) \cdot \log N_{\infty} (\epsilon^*, h, B_T) } \Bigr ] }_{\dr \delta_\cH\colon\textrm{complexity of function class}~\cH}\cdot \sqrt{T} \Bigr  ).
  \#
    \end{tcolorbox}

To obtain some intuition for  \eqref{eq:kernel_regret_simple}, 
let us  consider the tabular case where $\cQ^\star $ consists of 
all measurable functions   defined on $\cS\times \cA$ with range $[0, H]$. 
In this case, the value function class $ \cQ_{\UCB} (h, R_T, B_T)$ 
can be set to $\cQ^\star$, whose $\ell_{\infty}$-covering number 
$N_{\infty}(\epsilon^*, h, B_T) \leq |\cS \times \cA| \cdot  \log T$. 
It can be shown that the effective dimension is also $  \cO(| \cS \times 
\cA|\cdot \log T)$.  Thus, ignoring the logarithmic terms, Theorem 
\ref{thm:main} implies that  by choosing $\beta \asymp  H \cdot | \cS \times \cA|$, 
optimistic least-squares value iteration achieves an  $\tilde \cO( H^2 \cdot | \cS \times \cA| \cdot \sqrt{T} )$ regret. 
    
Furthermore, we  remark that the  regret bound in \eqref{eq:kucb_regret} 
holds in general for any RKHS.  The result hinges on (i) Assumption 
\ref{assume:opt_closure}, which postulates that the RKHS-norm ball 
$\{ f\in \cH \colon \| f\|_{\cH} \leq R_Q H \}$ contains the image of 
the Bellman operator, and (ii) the inequality in \eqref{eq:equation_B_T} 
admits a solution $B_T$, which is set to be $\beta $ in Algorithm \ref{algo:kucb}. 
Here we set  $\beta $  to be sufficiently large  so as to dominate the 
uncertainty of $\hat Q_h^t$,  whereas to quantify such uncertainty, 
we utilize the  uniform concentration over the  value function class  
$ \cQ_{\UCB} (h+1, R_T, \beta)$ whose complexity metric, the  
$\ell_\infty$-covering number,   in turn depends on $\beta$.  
Such an intricate desideratum  leads to \eqref{eq:equation_B_T} 
which determines $\beta$ implicitly.
 
 It is worth noting that the uniform concentration is unnecessary when $H = 1$. In this case, it suffices to choose $\beta =  \tilde \cO (  \sqrt{ \Gamma_K(T, \lambda)} )$  and 
\KUCB~incurs a $\tilde \cO(  \Gamma_K(T, \lambda) \cdot \sqrt{T})$ regret, which matches 
 the regret bounds of UCB algorithms for  kernelized  contextual bandits in 
 \cite{srinivas2009gaussian} and \cite{chowdhury2017kernelized}. 
Here $\tilde O(\cdot )$ omits logarithmic terms. 
Thus, the covering number in \eqref{eq:kernel_regret_simple} is specific 
for MDPs and  arises due to the  temporal dependence within an episode.
 
To obtain a concrete regret bound from \eqref{eq:kucb_regret}, it remains 
to further characterize $\Gamma_K(T, \lambda)$ and $\log N_{\infty} 
(\epsilon^*, h, B_T)$ using characteristics of $\cH$.  To this end, 
we specify the eigenvalue decay property of $\cH$.
    
\begin{assumption} [Eigenvalue Decay of $\cH$] \label{assume:decay}
	Recall that the integral operator $T_K$ defined in \eqref{eq:integral_oper} has eigenvalues $\{ \sigma_j \}_{j \geq 1}$ and eigenfunctions $\{ \psi_j \}_{j\geq 1}$. 
	We assume that  $\{ \sigma_j \}_{j \geq 1}$ satisfies 
	one of the following three eigenvalue decay conditions for some constant $\gamma>0$:  
\begin{itemize}
	\item [(i)] $\gamma$-finite spectrum: we have $\sigma_j = 0$ for all $j > \gamma$, where $\gamma$ is a positive integer. 
	\item [(ii)] $\gamma$-exponential decay: there exist  absolute constants $C_1$ and $C_2$ such that $\sigma_j \leq C_1 \cdot \exp( - C_2 \cdot j^\gamma)$ for all $j \geq 1$.  
	\item [(iii)] $\gamma$-polynomial decay: there exists a constant $C _1$ such that $\sigma_j \geq C_1 \cdot j^{-\gamma}$ for all $j \geq 1$, where $\gamma > 1$. 
\end{itemize}
For cases (ii) and (iii), we further assume that there exist constants 
$\tau \in [ 0,1/2)$  $C_{\psi} >0 $ such that $ \sup_{z \in \cZ } \sigma_j^\tau   \cdot | \psi_j  (z) | \leq C_{\psi}$ for all $j  \geq 1$.

\end{assumption}
  
Case (i) implies that $\cH$  is a $\gamma$-dimensional RKHS. 
When this is the case, under 
Assumption \ref{assume:opt_closure},
  there exists a feature mapping $\phi \colon \cZ \rightarrow \RR^\gamma$ such that, for any $V \colon \cS\rightarrow [0, H] $, $r_h + \PP_h V$ is a linear function of $\phi$. 
Such a property is satisfied by the linear MDP model studied in 
\cite{yang2019sample}, \cite{yang2019reinforcement}, \cite{jin2019provably}, 
and \cite{zanette2019frequentist}.  Moreover, when $\cH$ satisfies 
case (i), \KUCB~reduces to the LSVI-UCB algorithm studied in \cite{jin2019provably}. 
In addition, cases (ii) and (iii) postulate that the eigenvalues of 
$T_K$ decay exponentially and polynomially  fast, respectively, 
where $\gamma$ is a constant that might depend on the input dimension $d$, 
which is assumed fixed throughout this paper. For example,   the squared
exponential kernel belongs to case (ii) with  $\gamma = 1/d$, whereas 
the Mat\'ern  kernel with parameter $\nu > 2$ belongs to case 
(iii) with $\gamma = 1+ 2\nu /d$ \citep{srinivas2009gaussian}. 
Moreover,  we assume that there exists $\tau \in [0,1/2) $ such that 
$\sigma_j^\tau \cdot  \| \psi_j \|_{\infty} $ is universally bounded. 
Since $K(z,z) \leq 1$, this condition is naturally satisfied for $\tau = 1/2$. 
However, here we assume that $\tau \in (0,1/2)$, which is satisfied 
when the magnitudes of the eigenvectors do not grow too fast compared 
to the decay of the eigenvalues.  Such a condition is significantly 
weaker than assuming $\| \psi_j \|_{\infty}$ is universally bounded, 
which is also commonly made in the   literature on nonparametric 
statistics \citep{lafferty2005diffusion,shang2013local,  zhang2015divide, 
lu2016nonparametric,yang2017frequentist}.  It can be shown that the 
squared exponential kernel on the unit sphere in $\RR^d$ satisfies 
this condition for any $\tau >0$ --- see Appendix~\S\ref{sec:ntk_examples} --- and
the Mat\'ern kernel on $[0,1]$ satisfy this property with $\tau = 0$ 
\citep{yang2017frequentist}.  We refer to \cite{mendelson2010regularization} 
for a more detailed discussion.  

We now present regret bounds for the three eigenvalue decay conditions.
 
\begin{corollary} \label{cor:main}
	Under Assumptions \ref{assume:opt_closure}  and \ref{assume:decay},
   we set $\lambda = 1 + 1/T$ and $\beta = B_T$ in Algorithm \ref{algo:kucb}, where $B_T$ is defined as 
	\#\label{eq:set_BT}
	B_T = \begin{cases} 
		C_b \cdot \gamma H \cdot \sqrt{ \log (\gamma \cdot  TH)} &\qquad \textrm{$\gamma$-finite spectrum}, \\
			C_b  \cdot H \sqrt{ \log (TH)}
			\cdot (\log T)^{1/\gamma}&\qquad \textrm{$\gamma$-exponential decay}, \\
				C_b \cdot H  \log (TH)   \cdot  T^{\kappa^*}    &\qquad \textrm{$\gamma$-polynomial decay}.
				\end{cases}
	\#
	Here   $C_b$ is an absolute constant that does not depend on $T$ or $H$, and $\kappa^*$ is  given by 
	\#\label{eq:define_poly_kappa}
	\kappa^* = \max \bigg\{ \xi^*, \frac{2d+\gamma +1 }{(d+\gamma)\cdot [\gamma (1-2\tau )-1]}, \frac{2}{\gamma (1- 2\tau ) -3 }  \biggr \}, \qquad \xi^* = \frac{d+1}{2(\gamma+d)}. 
	\#
 For  the third case, we further assume that $\gamma $ is sufficiently large such that $\kappa^* + \xi^* < 1/2$. 
 Then, there exists an absolute constant $C_r$ such that, with probability at least $1 - (T^2H^2)^{-1}$, we have 
 \#\label{eq:kucb_regret}
 \mathrm{Regret}(T) \leq  \begin{cases}
	C_r \cdot H^2 \cdot \sqrt{ \gamma^3 T} \cdot \log(\gamma TH) & \textrm{$\gamma$-finite spectrum},    \\
	C_r \cdot H^2 \cdot \sqrt{ (\log T)^{3/\gamma} \cdot  T} \cdot    \log( TH)  & \textrm{$\gamma$-exponential decay},
	\\
	C_r \cdot H^2 \cdot T^{\kappa^* +\xi^* + 1/ 2 }\cdot \bigl[ \log(TH) \bigr ]^{3/2}  & \textrm{$\gamma$-polynomial decay} .
 \end{cases} 
 \#
 \end{corollary}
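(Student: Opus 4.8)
The plan is to derive Corollary \ref{cor:main} from Theorem \ref{thm:main} by translating the two abstract complexity quantities appearing in \eqref{eq:equation_B_T} and \eqref{eq:kucb_regret} --- the maximal information gain $\Gamma_K(T,\lambda)$ and the $\ell_\infty$-covering number $N_{\infty}(\epsilon^*; h, B_T)$ of the optimistic value-function class $\cQ_{\UCB}(h,R_T,B_T)$ --- into explicit functions of $T$, $H$, $d$ and $\gamma$ under each of the three regimes of Assumption \ref{assume:decay}, then verifying that the stated choice of $B_T$ in \eqref{eq:set_BT} solves the implicit inequality \eqref{eq:equation_B_T}, and finally substituting $\beta=B_T$ into the bound $\mathrm{Regret}(T)\le 5\beta H\sqrt{T\,\Gamma_K(T,\lambda)}$ from Theorem \ref{thm:main}.

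First I would bound $\Gamma_K(T,\lambda)$ with $\lambda=1+1/T$. Starting from the definition \eqref{eq:maintext_infogain} and Mercer's expansion \eqref{eq:kernel_expansion}, splitting the log-determinant at a spectral truncation level $D$ gives a contribution $\cO(D\log T)$ from the leading modes and $\cO\bigl(\lambda^{-1}T\sum_{j>D}\sigma_j\bigr)$ from the tail; optimizing over $D$ yields $\Gamma_K(T,\lambda)=\cO(\gamma\log T)$ in the $\gamma$-finite-spectrum case, $\Gamma_K(T,\lambda)=\cO\bigl((\log T)^{1+1/\gamma}\bigr)$ in the $\gamma$-exponential-decay case (taking $D\asymp(\log T)^{1/\gamma}$), and $\Gamma_K(T,\lambda)=\tilde\cO\bigl(T^{2\xi^*}\bigr)$ in the $\gamma$-polynomial-decay case, with $\xi^*$ as in \eqref{eq:define_poly_kappa}. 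These are exactly the quantities entering the square root in \eqref{eq:kucb_regret}.

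The crux is bounding $\log N_{\infty}(\epsilon^*; h, B_T)$. Every $Q\in\cQ_{\UCB}(h,R_T,B_T)$ is a truncation of $Q_0+\beta\,b_\cD$ with $\|Q_0\|_{\cH}\le R_T=2H\sqrt{\Gamma_K(T,\lambda)}$ (from Lemma \ref{lemma:thetahat_estimate}), $\beta\in[0,B_T]$, and $\cD$ a set of at most $T$ points; using the push-through identity one may rewrite the bonus as $b_\cD(z)^2=\phi(z)^\top(\Sigma_\cD+\lambda I)^{-1}\phi(z)$ with $\Sigma_\cD=\sum_{z'\in\cD}\phi(z')\phi(z')^\top$, which exposes the familiar linear-bandit structure. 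I would then build a joint $\epsilon^*$-net by covering three ingredients separately: (i) the RKHS ball $\{Q_0:\|Q_0\|_{\cH}\le R_T\}$ in $\ell_\infty$, truncating at a spectral level $D$ and controlling the tail uniformly over $\cZ$ by $C_\psi R_T\bigl(\sum_{j>D}\sigma_j^{1-2\tau}\bigr)^{1/2}$ --- this is exactly where the eigenfunction-growth hypothesis $\sup_{z}\sigma_j^\tau|\psi_j(z)|\le C_\psi$ with $\tau<1/2$ is indispensable, and the $\ell_\infty$ (rather than $\cL^2$) metric is what makes the estimate delicate; (ii) the scalar $\beta\in[0,B_T]$ with a one-dimensional net of size $\cO(B_T T/H)$; and (iii) the bonus functions, by discretizing the regularized covariance operator $\Sigma_\cD+\lambda I$ over an operator-norm net whose log-cardinality is controlled by $\Gamma_K(T,\lambda)$ and $D$. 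The result has the shape $\log N_{\infty}(\epsilon^*; h, B_T)\le\mathrm{poly}\bigl(\Gamma_K(T,\lambda),D\bigr)\cdot\log(B_T TH)+(\text{spectral-tail terms})$, in which $B_T$ appears only logarithmically; in the polynomial-decay regime the tail terms are powers of $T$, and optimizing $D$ against them produces the three competing exponents whose maximum is $\kappa^*$ in \eqref{eq:define_poly_kappa}.

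Finally I would close the loop: since the left-hand side of \eqref{eq:equation_B_T} depends on $B_T$ only through a logarithm while the right-hand side is $(B_T/H)^2$, plugging in the ansatz \eqref{eq:set_BT} together with the bounds above verifies \eqref{eq:equation_B_T} for a suitable absolute constant $C_b$ --- in the polynomial case one invokes $\kappa^*+\xi^*<1/2$, and since $\kappa^*\ge\xi^*$ the factor $T^{\kappa^*}$ in $B_T$ dominates both $\sqrt{\Gamma_K(T,\lambda)}$ and $\sqrt{\log N_{\infty}(\epsilon^*; h, B_T)}$. Hence $\beta=B_T$ is a legal choice in Theorem \ref{thm:main}, and substituting $\beta=B_T$ and the information-gain bounds of the first step into $\mathrm{Regret}(T)\le 5\beta H\sqrt{T\,\Gamma_K(T,\lambda)}$ gives the three displayed regret bounds after absorbing logarithmic factors (for instance $\sqrt{\gamma^3 T}\cdot\sqrt{\log(\gamma TH)\log T}\le\sqrt{\gamma^3 T}\,\log(\gamma TH)$ in the finite-spectrum case). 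The main obstacle is the covering-number step: controlling the $\ell_\infty$-covering number of the optimistic value-function class, both the uniform spectral-tail estimate that forces the condition $\tau<1/2$ and the careful truncation bookkeeping in the polynomial-decay regime that yields the delicate exponent $\kappa^*$.
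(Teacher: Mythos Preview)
Your overall strategy matches the paper's: bound $\Gamma_K(T,\lambda)$ and $\log N_\infty(\epsilon^*;h,B_T)$ explicitly under each eigenvalue-decay regime (the paper packages these as Lemmas \ref{lemma:effective_dim} and \ref{lemma:covering_number_V}), verify that the stated $B_T$ solves \eqref{eq:equation_B_T}, and then substitute into Theorem \ref{thm:main}. Your three-way decomposition of the covering problem into the RKHS ball, the scalar $\beta$, and the bonus class is exactly what the paper does.

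There is, however, one inaccuracy that becomes a genuine gap in the polynomial-decay case. You assert that in the covering bound ``$B_T$ appears only logarithmically''; this is true for the finite-spectrum and exponential-decay regimes, but \emph{false} under $\gamma$-polynomial decay. To approximate the bonus $b_{\cD}(\cdot)$ in $\ell_\infty$ to accuracy $\epsilon^*/B_T$ (the scaling forced by the prefactor $\beta\le B_T$), one must truncate the spectrum at a level $m^*\asymp (B_T/\epsilon^*)^{2/[\gamma(1-2\tau)-1]}$ and then cover an $m^*\times m^*$ matrix, contributing $(m^*)^2\asymp (B_T/\epsilon^*)^{4/[\gamma(1-2\tau)-1]}$ to $\log N_\infty$. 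This is a \emph{polynomial} dependence on $B_T$, and feeding it back into \eqref{eq:equation_B_T} yields the self-referential constraint $(B_T/H)^2\gtrsim (B_T T/H)^{4/[\gamma(1-2\tau)-1]}$, whose resolution is exactly the third exponent $2/[\gamma(1-2\tau)-3]$ in \eqref{eq:define_poly_kappa}. Your sketch, which attributes all three exponents in $\kappa^*$ to ``optimizing $D$ against tail terms'', misses this self-referential mechanism; without it you cannot explain the ``$-3$'' in the denominator of the last term of $\kappa^*$, and the verification of \eqref{eq:equation_B_T} in the polynomial case would not close. Once you correct this, the rest of your argument goes through as in the paper.
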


Corollary  \ref{cor:main} asserts that when  $\beta$ is chosen  properly according to the eigenvalue decay property of $\cH$, \KUCB~incurs  a sublinear regret
under all of the three cases specified in Assumption \ref{assume:decay}.
Note that 
the   linear MDP \citep{jin2019provably}  satisfies the $\gamma$-finite spectrum condition and \KUCB~recovers the LSVI-UCB algorithm studied in \cite{jin2019provably} when restricted to this setting. 
Moreover, 
our $\tilde \cO(H^2 \cdot \sqrt{\gamma^3 T})$ also matches the regret bound in \cite{jin2019provably}. 
In addition, under the  $\gamma$-exponential eigenvalue decay condition, 
as we will show in Appendix~\S\ref{sec:aux_lem}, the log-covering number and the effective dimension are bounded by 
 $(\log T)^{1+2/\gamma}$ and $(\log T )^{1+ 1/\gamma}$, respectively. 
Plugging these facts into \eqref{eq:kernel_regret_simple}, we obtain the sublinear regret in \eqref{eq:kucb_regret}. 
As a concrete example, for the squared exponential kernel, we obtain an $\tilde\cO( H^2 \cdot (\log T)^{1+ 1.5 d} \cdot \sqrt{T})$ regret, where $d$ is the input dimension.
This such a regret   is $(\log T)^{d/2}$ worse than that in \cite{srinivas2009gaussian} for kernel contextual bandits, which is due to bounding the  log-covering number. 
Furthermore, 
for the case of  $\gamma$-polynomial decay, 
since the eigenvalues decay to zero rather slowly, we fail to obtain a $\sqrt{T}$-regret  and only obtain a   sublinear  regret in \eqref{eq:kucb_regret}. 
As we will show in the proof, the log-covering number and the  effective dimension are $\tilde \cO(T^{2\kappa^*})$ and  $\tilde \cO( T^{2\xi^*})$, respectively, which,  combined with \eqref{eq:kernel_regret_simple}, yield   the regret bound in \eqref{eq:kucb_regret}. 
 As a concrete example, consider  the Mat\'ern kernel with parameter $\nu > 0$ where we have $\gamma = 1 + 2 \nu  / d$ and $\tau = 0$.  
 In this case, when $\nu$ is sufficiently large such that $T^{2\xi^* -1/2} = o(1) $, we have 
 $$
 \xi^* = \frac{d(d+1) }{2[ 2\nu + d(d+1) ] }, \qquad \kappa^* =\max \Bigl \{ \xi^*,  \frac{3}{d-1}, \frac{2}{d-1} \Bigr \},
 $$
 which implies that \KUCB~achieves an  $\tilde \cO( H^2 \cdot T^{2\xi^* + 1/2} )$ regret when $d$ is large.
 Such a regret bound matches that in \cite{srinivas2009gaussian} for the bandit setting. 
See Appendix~\S\ref{sec:proof_corollary} for details.
 
 Furthermore, similarly to the discussion in Section 3.1 of \cite{jin2018q}, the regret bound in  \eqref{eq:kucb_regret} directly translates to an upper bound on the sample complexity   as follows. 
 When the initial state is fixed for all episodes, 
 for any fixed $\epsilon > 0$,
 with at least a constant probability, 
  \KUCB~returns a policy $\pi$ satisfying $V_1^\star (x_1) - V_1^\pi (x_1) \leq \epsilon $ using $\cO( H^4 B_T^2 \cdot \Gamma_K(T, \lambda)/ \epsilon^2)$ samples. 
  Specifically, for the three cases considered in Assumption \ref{assume:decay}, such a sample complexity guarantee reduces to 
  \$
   \tilde \cO\bigl ( H^4 \cdot \gamma^3   / \epsilon^2 \bigr ), 
   \qquad \tilde \cO \big ( H^4\cdot (\log T)^{2+3/\gamma } / \epsilon ^2 \big  ), \qquad \tilde \cO \bigl ( H^4 \cdot T^{2(\kappa^* + \xi^* )} / \epsilon^2 \bigr ), 
  \$
  respectively. 
Moreover, similar to \cite{jin2019provably}, our analysis can also be extended to the 
misspecified setting where $\inf_{f \in \cQ^\star} \| f - \cT_h^\star Q \|_{ \infty}  
\leq \texttt{err}_{\mathrm{mis} } $ for all $Q \colon \cZ \rightarrow [0, H]$. 
Here $\texttt{err}_{\mathrm{mis} } $ is the model misspecification error.
Under this setting, \KUCB~will suffer from  an extra  $\texttt{err}_{\mathrm{mis} } 
\cdot T H $ regret.  The analysis for the misspecified setting is similar to that 
for the neural network setting that will be  presented in  \S\ref{sec:nn_ntk_thoery}. 

\subsection{Regret of \NUCB} \label{sec:nn_ntk_thoery}

In this section, we establish the  regret of \NUCB.  Throughout this subsection, 
we  let $\cH$ denote the RKHS whose kernel function is $K_{\textrm{ntk}}$ defined 
in \eqref{eq:define_ntk}.  Recall that we regard $\cZ = \cS \times \cA$ as a 
subset of the unit sphere $\SSS^{d-1} =  \{ z \in \RR^d \colon \| z \|_2 =  1\}$.

Let $(b^{(0)}, W^{(0)})$ be the initial value of the network weights obtained 
via the symmetric initialization scheme introduced in \S\ref{sec:dnn}. 
Conditioning on the randomness of the initialization, we define a finite-rank 
kernel, $K_m \colon \cZ \times \cZ \rightarrow \RR$, by letting 
$K_{m}(z, z') = \la \nabla_{W} f(z; b^{(0)}, W^{(0)}) , \nabla_{W} 
f(z' ; b^{(0)}, W^{(0)})  \ra$.  Notice that the rank of $K_m$ is $md$, 
where $m$ is much larger than $T$ and $H$ and is allowed to increase to infinity. 
Moreover, with a slight abuse of notation, we  define 
\#  \label{eq:random_feature_class} 
\cQ^\star  = \biggl \{  f_{\alpha} (z) =     \int _{\RR^d}  \act' ( w^\top z)\cdot z ^\top \alpha (w)  ~ \ud p_0(w)    \colon  \alpha\colon  \RR^d \rightarrow \RR^d,    \| \alpha  \|_{2, \infty}   \leq R_{Q} H  /\sqrt{d}    \biggr \},
\#   
where $R_{Q} $ is a positive number,  $p_0$ is the density of $N(0, I_d /d)$, and  we define $\| \alpha\|_{2, \infty} = \sup_{w } \| \alpha(w) \|_2$. 
That is, 
$\cQ^\star$ consists of functions that can be expressed as infinite number of random features. 
As shown in Lemma C.1 of \cite{gao2019convergence}, $\cQ^\star $ is a dense  subset of the RKHS $\cH$. 
Thus, when $R_{Q} $ is sufficiently large, $\cQ^\star$ in \eqref{eq:random_feature_class} is an expressive function class.  We impose the following condition on~$\cQ^\star$.

\begin{assumption} \label{assume:opt_closure2}
	We assume that for any $h \in [H ]$ and  any $Q \colon \cS \times \cA \rightarrow [0,H] $, we have $\TT_h^\star Q  \in \cQ^\star$. 
	 \end{assumption}
Assumption \ref{assume:opt_closure2} is in the same vein as 
Assumption \ref{assume:opt_closure}. Here we focus on $\cQ^*$ instead 
of an RKHS norm ball of NTK solely for technical convenience.  Since  
functions of the form in \eqref{eq:random_feature_class} are dense in 
$\cH$, Assumptions  \ref{assume:opt_closure2} and \ref{assume:opt_closure} 
are very similar. 

To characterize the value function class associated with \NUCB,  
for any discrete set $\cD \subseteq \cZ $, in a manner akin to  
\eqref{eq:lambda_mat}, we define 
\$ 
\overline \Lambda_{\cD} = \lambda \cdot I_{2md} + \sum_{z \in \cD} \varphi(z; W^{(0) }) \varphi(z; W^{(0)}) ^\top, 
\$
where $\varphi (\cdot; W^{(0)} )$ is the neural tangent feature 
defined in \eqref{eq:gradient_feature}.  With a slight abuse of 
notation, for any $R, B> 0$, we let $\cQ_{\UCB} (h, R, B )$ denote 
that class of functions that takes the following form:
 \#\label{eq:define_ntk_q_func}
 Q (z)  = &\min \bigl \{ \bigl \la \varphi(z; W^{(0) }) , W  \ra  
  + \beta \cdot \bigl   [   \varphi(z; W^{(0)}) ^\top (\overline \Lambda_{\cD} )^{-1}  \varphi(z; W^{(0)})  \bigr ]^{1/2}, H - h + 1 \bigr \}^+,
 \#
where $W \in \RR^{2md}$ satisfies $\| W   \|_2 \leq R$, $\beta \in [0, B]$, 
and $\cD$ has cardinality no more than $T$.  Intuitively, when both 
$R$ and $B$ are   sufficiently large,  $\cQ_{\UCB}(h, R, B)$ contains 
the counterpart of the neural-network-based value function $Q_h^t$ 
that is based on neural tangent features.  Moreover, when $m$ is 
sufficiently large, it is expected that $Q_h^t$ is well approximated 
by functions in  $\cQ_{\UCB}(h, R, B)$ where the approximation error 
decays with $m$.  It is worth noting the class of linear functions of 
$\varphi (\cdot ; W^{(0)})$ forms an RKHS with kernel $K_m$ in \eqref{eq:emp_kernel}. 
Any function $g$ in this class can be written as $g (\cdot ) =
\la \varphi (\cdot ; W^{(0)}), W_g \ra $ for some $W_g \in \RR^{2md}$. 
Moreover, the RKHS norm of $f$ is given by  $\| W_g\|_2$.  Thus, 
  $\cQ_{\UCB} (h, R, B)$ defined above coincides with   the counterpart  defined in \eqref{eq:main_text_func_class} with  the kernel function being  $K_m$. 
  We set $R_T = H  \sqrt{2 T / \lambda} $ and let $N_{\infty} (\epsilon; h, B)$ denote the $\epsilon$-covering number of $\cQ_{\UCB} (h, R_T, B)$ with respect to the $\ell_{\infty}$-norm on $\cZ$.

We can now present a general regret bound for  \NUCB. 
\begin{theorem}
	\label{thm:neural}
		Under Assumptions \ref{assume:opt_closure2}, 
		We also assume that $m$ is sufficiently large such that 
		$
		m = \Omega ( T^{13 } H^{14} \cdot (\log m)^3).
		$
In   Algorithm \ref{algo:neural}, we let  $\lambda  $ be a sufficiently large constant and let $\beta = B_T$  which satisfies  inequality 
\#\label{eq:equation_B_T_nn}
16  \cdot \Gamma _{K_m } (T,  \lambda ) + 16   \cdot \log N_{\infty} 
(\epsilon^*, h+1 , B_T) + 32  \cdot \log (2TH) 
+  4   R_{Q} ^2   \cdot  (1 +  \lambda / d)    \leq (B_T / H)^2,
\#
for all $h \in [H]$.  Here $\epsilon^* = H/ T$  and $\Gamma_{K_m}(T, \lambda)$ 
is the maximal information gain defined for kernel $K_m$.  In addition, 
for the neural network in   \eqref{eq:two_layer_nn},  we assume that the 
activation function $\act $ is  $C_{\act}$-smooth; i.e., its derivative 
$\act'$  is $C_{\act}$-Lipschitz, and $m$ is sufficiently large such that 
\#\label{eq:m_is_large}
m =  \Omega \big ( \beta ^{12} \cdot T^{13} \cdot H^{14} \cdot (\log m)^3 \bigr ) .
\#
Then, with probability at least $1 - (T^2H^2)^{-1} $, we have 
\#\label{eq:regret_neural_final}
\textrm{Regret}(T) =  5 \beta H \cdot \sqrt{T\cdot \Gamma_{K_m}(T, \lambda)} 
+ 10  \beta T H \cdot \iota,
\#
where we define $\logt =T^{7/12} \cdot H^{1/6} \cdot      
m^{-1/12} \cdot (\log m)^{1/4}.$
\end{theorem}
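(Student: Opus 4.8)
The plan is to mirror the analysis behind Theorem~\ref{thm:main}, viewing the neural network as a perturbation of its linearization at the initialization $W^{(0)}$: one reduces the neural setting to the kernel setting with the finite-rank kernel $K_m$, pays an $\iota$-sized price for each such reduction, and then runs the kernel-style argument essentially verbatim. \emph{Step 1 (perturbation estimates in the overparameterized regime).} Using the $C_{\act}$-smoothness of $\act$ and $m=\Omega(\beta^{12}T^{13}H^{14}(\log m)^3)$, I would first show that the global minimizer $\hat W_h^t$ of the regularized loss $L_h^t$ in \eqref{eq:nn_loss} stays in a ball of radius $R_T=H\sqrt{2T/\lambda}$ around $W^{(0)}$; this uses Assumption~\ref{assume:opt_closure2} together with the density of $\cQ^\star$ in the NTK-RKHS (Lemma~C.1 of \citet{gao2019convergence}) to guarantee that the regression target $\TT_h^\star Q_{h+1}^t$ is nearly realized by a linear function of $\varphi(\cdot;W^{(0)})$ of norm $\cO(R_{Q}H)$. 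Within such a ball one has $\|f(\cdot;W)-\la\varphi(\cdot;W^{(0)}),W-W^{(0)}\ra\|_\infty$, $\sup_{z\in\cZ}\|\varphi(z;W)-\varphi(z;W^{(0)})\|_2$, and the gap between the Gram matrix built from $\{\varphi(\cdot;\hat W_h^\tau)\}$ and that built from $\varphi(\cdot;W^{(0)})$, all of order $\textrm{poly}(T,H)\cdot m^{-1/6}$ up to $\log m$ factors. Since the $\lambda I_{2md}$ ridge term keeps every inverse well-conditioned, these bounds propagate without amplification to $\Lambda_h^t$, to $\hat Q_h^t=f(\cdot;\hat W_h^t)$, and to the bonus $b_h^t$ of \eqref{eq:neural_bonus}: each agrees, up to an error contributing at most $\cO(\beta\iota)$ downstream, with its fixed-feature counterpart, which is exactly the object appearing in the kernel analysis with kernel $K_m$.

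\emph{Step 2 (approximate optimism).} Next I would show that for $\beta=B_T$ satisfying \eqref{eq:equation_B_T_nn} one has $Q_h^t\ge \TT_h^\star Q_{h+1}^t-\cO(\beta\iota)$ elementwise on $\cZ$, hence $Q_h^t\ge Q_h^\star-\cO((H-h)\beta\iota)$ by backward induction in $h$. This is where the covering-number condition enters: to bound the deviation of the neural-ridge solution from its population target $\TT_h^\star Q_{h+1}^t=r_h+\PP_h V_{h+1}^t$, I would apply a self-normalized concentration inequality uniformly over the data-dependent value-function class $\cQ_{\UCB}(h+1,R_T,B_T)$ attached to $K_m$; a union bound over an $\epsilon^*$-net of this class, combined with $\logdet(I+K_{\cD}/\lambda)\le 2\Gamma_{K_m}(T,\lambda)$, yields precisely the right-hand side of \eqref{eq:equation_B_T_nn}, so the choice $\beta=B_T$ makes the bonus dominate the estimation uncertainty up to the $\iota$ slack inherited from Step~1.

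\emph{Step 3 (regret decomposition and summation).} With approximate optimism in hand, the regret decomposition is the standard one used in the proof of Theorem~\ref{thm:main} and in \citet{jin2019provably}: $V_1^\star(x_1^t)-V_1^{\pi^t}(x_1^t)\le V_1^t(x_1^t)-V_1^{\pi^t}(x_1^t)$ telescopes over $h$ into $\sum_h \beta\,b_h^t(x_h^t,a_h^t)$, a martingale-difference sequence controlled by Azuma--Hoeffding (a lower-order term absorbed into the constants), and an accumulated approximation error of order $\beta T H\iota$. Summing the bonuses over $t$ via Cauchy--Schwarz and the elliptical-potential/information-gain argument — first replacing $\varphi(\cdot;\hat W_h^\tau)$ by $\varphi(\cdot;W^{(0)})$ at the cost of another $\iota$ term, then bounding $\sum_t [b_h^t(x_h^t,a_h^t)]^2\lesssim \Gamma_{K_m}(T,\lambda)$ — produces the main term $5\beta H\sqrt{T\,\Gamma_{K_m}(T,\lambda)}$ and the residual $10\beta T H\iota$ in \eqref{eq:regret_neural_final}.

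The step I expect to be the main obstacle is Step~1: one must simultaneously control the \emph{global} minimizer $\hat W_h^t$ (not merely a gradient-descent iterate), which forces a comparison of the nonconvex objective with its quadratic NTK surrogate together with the near-realizability supplied by Assumption~\ref{assume:opt_closure2}, and then push the feature drift $\sup_{z\in\cZ}\|\varphi(z;\hat W_h^\tau)-\varphi(z;W^{(0)})\|_2$ through the matrix inverse in the bonus and through the ridge solution without blow-up. Carefully tracking the $m$-exponents so that the cumulative error lands exactly at $10\beta T H\iota$ with $\iota=T^{7/12}H^{1/6}m^{-1/12}(\log m)^{1/4}$ is the delicate bookkeeping that separates the neural argument from the clean kernel one; in particular the two conditions $m=\Omega(T^{13}H^{14}(\log m)^3)$ and $m=\Omega(\beta^{12}T^{13}H^{14}(\log m)^3)$ in the statement are dictated by forcing these perturbation terms below the leading regret.
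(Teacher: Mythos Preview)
Your proposal is correct and matches the paper's approach closely: the paper packages your Steps~1--2 into a single optimism lemma (Lemma~\ref{lemma:ucb_nn1}), which establishes $-5\beta\iota-2\beta\,\overline b_h^t\le\delta_h^t\le 5\beta\iota$ and $\|b_h^t-\overline b_h^t\|_\infty\le 2\iota$, and then combines this with the regret decomposition of Lemma~\ref{lemma:regret_decomp}, Azuma--Hoeffding (Lemma~\ref{lemma:bound_mtg}), and the elliptical-potential bound (Lemma~\ref{lemma:telescope}) exactly as you outline. One small correction to Step~1: the containment $\|\hat W_h^t-W^{(0)}\|_2\le H\sqrt{T/\lambda}$ does \emph{not} rely on Assumption~\ref{assume:opt_closure2} or realizability of the target---it follows trivially from $\lambda\|\hat W_h^t-W^{(0)}\|_2^2\le L_h^t(\hat W_h^t)\le L_h^t(W^{(0)})\le TH^2$, since symmetric initialization makes $f(\cdot;W^{(0)})\equiv 0$; the assumption is needed only where you also invoke it, namely to realize $\TT_h^\star Q_{h+1}^t$ approximately as $\la\varphi(\cdot;W^{(0)}),\tilde W_h^t-W^{(0)}\ra$ with $\|\tilde W_h^t-W^{(0)}\|_2\le R_QH/\sqrt{d}$.
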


This theorem shows that, when $m$ is sufficiently large, \NUCB~enjoys a 
similar regret bound as \KUCB.  Specifically, the choice of $\beta$ in 
\eqref{eq:equation_B_T_nn} is similar to that in  \eqref{eq:equation_B_T} 
for the kernel $K_m$.  Here we set $\lambda$ to be  an absolute  constant 
as  $\sup_{z} K_m(z, z) \leq 1 $ no longer holds.  In addition, here we 
assume that $\act' $ is $C_{\textrm{act}}$-Lipschitz on $\RR$, which can 
be relaxed to only assuming $\act'$ is Lipschitz continous on  a bounded 
interval of $\RR$ that contains $w^\top z$ with high probability, where 
$w$ is drawn from the initial distribution of $W_j$, $j\in [m]$. 

Comparing \eqref{eq:kucb_regret} and \eqref{eq:regret_neural_final}, 
we observe that, when $m$ is sufficiently large,  \NUCB~can be viewed 
as a misspecified version of \KUCB~for the RKHS with kernel $K_m$, 
where the model misspecification error is   $\texttt{err}_{\mathrm{mis}} 
= 10\beta \cdot \logt$.  Specifically, the first term in 
\eqref{eq:regret_neural_final} is the same as that in \eqref{eq:kucb_regret}, 
where the choice of $\beta $ and $\Gamma_{K_m}(T, \lambda)$ 
reflects  the intrinsic complexity of $K_{m}$. 
The second term is equal to $\texttt{err}_{\mathrm{mis}} \cdot TH$, which 
arises from the approximation of neural-network-based  value functions by 
functions in $\cQ_{\UCB} (h, R_T, B_T)$, which are constructed using  the 
neural tangent feature  $\varphi(\cdot; W^{(0)})$.
Moreover, when $\beta$ is bounded by a polynomial of $TH$, to make 
$\texttt{err}_{\mathrm{mis}} \cdot T H $ be negligible, it suffices 
to let $m$ be a polynomial of $TH$. That is, when the network width 
is a polynomial of the total number of steps, \NUCB~achieves the 
same performance as \KUCB. 

Neglecting the constants and logarithmic terms in \eqref{eq:equation_B_T_nn}, 
we can simplify the regret bound in \eqref{eq:regret_neural_final} as follows: 
\$
\mathrm{Regret}(T) =\tilde  \cO \Bigl ( H^2 \cdot  \Bigl [\Gamma_{K_m} 
(T, \lambda)  + \max_{h\in [H]} \sqrt{\Gamma_{K_m} (T, \lambda ) \cdot 
\log N_{\infty} (\epsilon^*, h, B_T) } \Bigr ]  \cdot \sqrt{T} + 
\texttt{err}_{\mathrm{mis}} \cdot T \Bigr ),
\$
which  depends on the intrinsic complexity of $K_m$ through both the 
effective dimension $\Gamma_{K_m} (T, \lambda)$ and the log-covering 
number $\log N_{\infty} (\epsilon^*, h, B_T) $.  To obtain a more 
concrete regret bound, in the following, we impose an assumption on 
the spectral structure of $K_m$.
 
\begin{assumption} [Eigenvalue Decay of  the Empirical NTK]
\label{assume:neural_decay} 
Conditioning on the randomness of $(b^{(0)}, W^{(0)})$, 
let $K_m$ be the kernel induced by the neural tangent feature  
$\varphi (\cdot; W^{(0)} )$ .
Let $T_{K_m}$ be the integral operator induced by $K_m$ and 
Lebesgue measure on $\cZ$ and   let  $\{ \sigma_j \}_{j \geq 1}$ and   $\{ \psi_j \}_{j\geq 1}$ be  its  eigenvalues and eigenvectors, respectively. 
	We assume that  $\{ \sigma_j \}_{j \geq 1}$ and   $\{ \psi_j \}_{j\geq 1}$ satisfy either one of the three  decay conditions specified in Assumption \ref{assume:decay}.
	Here we assume the constants 
 $C_{1}, C_2, C_{\psi}$, $\gamma$, and $\tau$ do  not depend on $m$.
\end{assumption} 
Here we essentially assume  that $K_m$ satisfies  Assumption \ref{assume:decay}. 
Since $K_m$ depends on the initial network weights, which are random, 
this assumption should be better understood in a limiting sense. 
Specifically, as $m$ goes to infinity, $K_m$ converges to $K_{\textrm{ntk}}$, which is determined by both the activation function and the distribution of the initial network weights. 
Thus, if the RKHS with kernel $K_{\textrm{ntk}}$ satisfies 
Assumption \ref{assume:decay}, when $m$ is sufficiently large, 
it is reasonable to expect that such a condition also holds for $K_m$. 
Due to space limitations, we defer concrete examples of 
$K_{\textrm{ntk}}$ satisfying Assumption \ref{assume:decay} to 
Appendix~\S\ref{sec:ntk_examples}.

We now characterize the performance of \NUCB~for each  case separately. 

\begin{corollary}
 \label{col:neural}
	Under Assumptions \ref{assume:opt_closure2} and \ref{assume:neural_decay}, we assume the activation function is $C_{\act}$-smooth and the number of neurons of the neural network satisfies \eqref{eq:m_is_large}. 
	Besides, in Algorithm \ref{algo:neural} we  let  $\lambda  $ be a sufficiently large constant and set  $\beta = B_T$ 
	 as in \eqref{eq:set_BT}.
	 Then 
 exists an absolute constant $C_r$ such that, with probability at least $1 - (T^2H^2)^{-1}$, we have 
	\#\label{eq:kucb_regret}
	\mathrm{Regret}(T) \leq  \begin{cases}
		C_r \cdot H^2 \cdot \sqrt{ \gamma^3 T} \cdot \log(\gamma TH) + 10 \beta TH \cdot \logt    & \textrm{$\gamma$-finite spectrum},    \\
		C_r \cdot H^2 \cdot \sqrt{ (\log T)^{3/\gamma} \cdot  T} \cdot    \log( TH) + 10 \beta T H \cdot \logt  & \textrm{$\gamma$-exponential decay},
		\\
		C_r \cdot H^2 \cdot T^{\kappa^* +\xi^* + 1/ 2 }\cdot \bigl[ \log(TH) \bigr ]^{3/2}  + 10 \beta T H \cdot \logt  & \textrm{$\gamma$-polynomial decay} ,
		\end{cases}
	\#
	where we define $  \logt =T^{7/12} \cdot H^{1/6} \cdot      m^{-1/12} \cdot (\log m)^{1/4}.$
\end{corollary}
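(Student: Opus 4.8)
The plan is to obtain Corollary \ref{col:neural} as a direct specialization of Theorem \ref{thm:neural}, exactly mirroring how Corollary \ref{cor:main} follows from Theorem \ref{thm:main} in the kernel setting. Concretely, for each of the three eigenvalue-decay regimes of Assumption \ref{assume:neural_decay} I would plug the prescribed $\beta = B_T$ from \eqref{eq:set_BT} into the conclusion \eqref{eq:regret_neural_final}, after first checking that this $B_T$ solves the defining inequality \eqref{eq:equation_B_T_nn} and that the overparameterization requirement \eqref{eq:m_is_large} is a legitimate polynomial-in-$(T,H)$ condition on $m$.

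The first substantive step is to control the two complexity quantities that enter \eqref{eq:equation_B_T_nn}: the maximal information gain $\Gamma_{K_m}(T,\lambda)$ and the $\ell_\infty$-log-covering number $\log N_\infty(\epsilon^*, h+1, B_T)$ of $\cQ_{\UCB}(h+1, R_T, B_T)$ with $R_T = H\sqrt{2T/\lambda}$. Because Assumption \ref{assume:neural_decay} postulates for $K_m$ precisely the decay conditions of Assumption \ref{assume:decay}, with the constants $C_1, C_2, C_\psi, \gamma, \tau$ taken independently of $m$, the estimates established for $\Gamma_K$ and its covering number in the proof of Corollary \ref{cor:main} transfer verbatim to $\Gamma_{K_m}$: one gets $\Gamma_{K_m}(T,\lambda) = \tilde\cO(\gamma)$ in the finite-spectrum case, $\tilde\cO((\log T)^{1+1/\gamma})$ in the exponential-decay case, and $\tilde\cO(T^{2\xi^*})$ in the polynomial-decay case, together with $\log N_\infty(\epsilon^*, h+1, B_T) = \tilde\cO(\gamma^2)$, $\tilde\cO((\log T)^{1+2/\gamma})$, and $\tilde\cO(T^{2\kappa^*})$, respectively (the only new feature, the larger radius $R_T$, enters the covering-number bound only through a $\log(R_T T/\epsilon^*)$ factor and is absorbed into the polylogarithmic terms). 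Since the left-hand side of \eqref{eq:equation_B_T_nn} is, up to absolute constants, $\Gamma_{K_m}(T,\lambda) + \log N_\infty(\epsilon^*, h+1, B_T) + \log(TH) + R_Q^2$, comparing it with $(B_T/H)^2$ for the three choices in \eqref{eq:set_BT} — which, up to logarithms, equal $\gamma^2$, $(\log T)^{2/\gamma}$, and $T^{2\kappa^*}$ — shows that \eqref{eq:equation_B_T_nn} holds once the absolute constant $C_b$ is taken large enough.

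With \eqref{eq:equation_B_T_nn} verified, I would invoke Theorem \ref{thm:neural}, which yields $\mathrm{Regret}(T) \le 5\beta H\sqrt{T\,\Gamma_{K_m}(T,\lambda)} + 10\beta T H\cdot\logt$ with probability at least $1 - (T^2H^2)^{-1}$. The first term is bounded exactly as in the proof of Corollary \ref{cor:main}: inserting the chosen $B_T$ and the information-gain estimates above produces $C_r H^2\sqrt{\gamma^3 T}\log(\gamma TH)$, $C_r H^2\sqrt{(\log T)^{3/\gamma} T}\log(TH)$, and $C_r H^2 T^{\kappa^*+\xi^*+1/2}[\log(TH)]^{3/2}$, respectively. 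The second term $10\beta T H\cdot\logt$ is carried over unchanged, producing the additive $m^{-1/12}$-type correction displayed in \eqref{eq:kucb_regret}. It remains only to note that each $B_T$ in \eqref{eq:set_BT} is at most polynomial in $T$ and $H$, so the hypothesis \eqref{eq:m_is_large}, $m = \Omega(\beta^{12}T^{13}H^{14}(\log m)^3)$, is a genuine polynomial-in-$(T,H)$ (and polylogarithmic-in-$m$) requirement consistent with the corollary's assumption. The only genuinely delicate point, as opposed to routine arithmetic, is the reliance on the $m$-independence of the constants in Assumption \ref{assume:neural_decay}: this is what makes the decay estimates for the random, rank-$md$ kernel $K_m$ uniform in the network width, and hence what allows the same closed-form $B_T$ and the same leading-order regret as in the kernel case.
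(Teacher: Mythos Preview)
Your approach matches the paper's: invoke Theorem \ref{thm:neural}, verify via Lemmas \ref{lemma:effective_dim} and \ref{lemma:covering_number_V} (applied to $K_m$ through Assumption \ref{assume:neural_decay}) that $B_T$ from \eqref{eq:set_BT} satisfies \eqref{eq:equation_B_T_nn}, and then substitute into \eqref{eq:regret_neural_final} to get the three displayed bounds plus the $10\beta TH\cdot\logt$ correction. One small inaccuracy worth flagging: in the $\gamma$-polynomial-decay case the larger radius $R_T=H\sqrt{2T/\lambda}$ does \emph{not} enter the covering-number bound only logarithmically---Lemma \ref{lemma:covering_number_V}(iii) carries the polynomial factor $(R_T/\epsilon^*)^{2/[\gamma(1-2\tau)-1]}\asymp T^{3/[\gamma(1-2\tau)-1]}$---but this is harmless since $3/[\gamma(1-2\tau)-1]\le 4/[\gamma(1-2\tau)-3]=2\kappa_3\le 2\kappa^*$, so the term is still $\tilde\cO(T^{2\kappa^*})$ and the verification of \eqref{eq:equation_B_T_nn} goes through unchanged.
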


Corollary \ref{col:neural} is parallel to Corollary \ref{cor:main}, with an additional misspecification error $10\beta TH \cdot \logt$. 
It remains to see whether there exist concrete neural networks that induce NTKs satisfying each eigenvalue decay condition. 
As we will show in Appendix~\S\ref{sec:ntk_examples}, a neural network with  
a quadratic activation function induces an NTK with a finite spectrum,  
while the sine activation function and the polynomials of ReLU activation
functions induce NTKs that satisfy the exponential and polynomial eigenvalue 
decay conditions, respectively.  Corollary \ref{col:neural} can be directly 
applied to  these concrete examples to obtain sublinear regret bounds.

\section{Proofs of the Main Results} 
 
In this section, we provide the proofs of Theorems \ref{thm:main} and \ref{thm:neural}. 
The proofs of the supporting lemmas and auxiliary results are deferred to the appendix. 

\subsection{Proof of Theorem \ref{thm:main}} 
\begin{proof}
For simplicity of presentation, we define the temporal-difference (TD) error as 
\#\label{eq:td_error}
 \delta^t_h(x,a) =r_h (x,a) + (\mathbb{P}_{h}V^{t}_{h+1})(x,a)  - Q^{t}_h (x,a) ,\qquad \forall (x,a) \in \cS \times \cA. 
\#
Here $\delta_h^t$ is a function on $\cS\times \cA$ for all $h \in [H]$ and $t \in [T]$.
Note that $V_{h}^t(\cdot) = \max_{a \in \cA} Q_h^t(\cdot, a)$. 
Intuitively, $\{\delta_h^t \}_{h\in [H]}$ quantifies the how far  
the $\{ Q_h^t\}_{h\in [H]}$ are from satisfying the Bellman optimality 
equation in \eqref{eq:opt_bellman}.  Next, recall that $\pi^t$ is the 
policy executed in the $t$-th episode, which generates a trajectory 
$\{(x_h^t, a_h^t) \}_{h \in [H]}$.  For any $h \in [H]$ and $t \in [T]$, 
we  further define $\zeta_{t,h}^1$,  $\zeta_{t,h}^2 \in \RR$~as 
\#
\zeta_{t,h}^1 & = \bigl [ V_h^t(x_h^t) - V_{h}^{\pi^t} (x_h^t) \bigr ]  - \bigl [ Q_h^t(x_h^t , a_h^t) - Q_{h}^{\pi^t} (x_h^t, a_h^t) \bigr ], \label{eq:define_mtg1} \\
\zeta_{t,h}^2 & = \bigl [ ( \PP_h V_{h+1}^t ) (x_h^t , a_h^t) - (\PP_h V_{h+1}^{\pi^t} ) (x_h^t, a_h^t) \bigr ] - \bigl [ V_{h+1}^t(x_{h+1}^t) - V_{h+1}^{\pi^t} (x_{h+1}^t) \bigr ] \label{eq:define_mtg2}. 
\#
By definition, $\zeta_{t,h}^1$ and $\zeta_{t,h}^2$ capture two sources 
of randomness---the randomness of choosing an action $a_h^t \sim 
\pi_h^t(\cdot \given x_h^t)$ and that of drawing the next state 
$x_{h+1}^t$ from $\PP_h(\cdot \given x_h^t, a_h^t)$, respectively.
As we will see in Appendix~\S\ref{proof:lemma_bound_mtg}, 
$\{\zeta_{t,h}^1, \zeta_{t,h}^2 \}$ form a bounded martingale 
difference sequence with respect to a properly chosen filtration, 
which enables us to bound their total sum via the Azuma-Hoeffding 
inequality \citep{azuma1967weighted}. 

To establish an upper bound on the regret, the following lemma first 
decomposes the regret into three parts using the notation defined above. 
Similar regret decomposition results also appear in \cite{cai2019provably, 
efroni2020optimistic}.   

\begin{lemma}[Regret Decomposition]\label{lemma:regret_decomp}
The temporal-difference error is the mapping $\delta_{h}^t \colon \cS 
\times \cA \rightarrow$ defined in \eqref{eq:td_error} for all 
$(t,h) \in [T]\times [H]$.  We can thus write the regret as 
\# \label{eq:regret_decomp}
\text{Regret}(T) &=  \underbrace{ \sum_{t=1}^T \sum_{h=1}^H\bigl 
[ \EE_{\pi^{\star}}[ \delta^t_h(x_h,a_h)\,|\, x_1 = x^t_1] 
-  \delta^t_h(x^t_h,a^t_h)\bigr]}_{\dr (i)} + \underbrace{  
\sum_{t=1}^T \sum_{h=1}^H ( \zeta_{t,h} ^1 + \zeta_{t, h }^2 )}_{\dr (ii)}   \notag \\
& \qquad 	\underbrace{\sum_{t=1}^T\sum_{h=1}^H \EE_{\pi^{\star}} \bigl[  \bigl \la Q^{t}_h(x_h,\cdot), \pi^{\star}_h(\cdot\,|\,x_h) - \pi^t_h(\cdot\,|\,x_h)  \bigr \ra _{\cA} \,\big|\, x_1 = x^t_1 \bigr]}_{\dr (iii)},
\#
where   $\zeta_{t,h}^1$ and $\zeta_{t,h}^2$ are defined in \eqref{eq:define_mtg1} 
and \eqref{eq:define_mtg2}, respectively. 
\end{lemma}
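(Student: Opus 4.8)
The plan is to prove two exact value-difference identities and then add them and sum over episodes; the whole statement is a purely algebraic consequence of the Bellman equations together with the definitions \eqref{eq:td_error}, \eqref{eq:define_mtg1}, and \eqref{eq:define_mtg2}, with no probabilistic input (probability will enter only later, when the three terms are bounded). First I would establish a generic value-difference identity: for an arbitrary policy $\pi$ and any episode $t$,
\[
V_1^t(x_1^t) - V_1^\pi(x_1^t) = \sum_{h=1}^H \EE_\pi\bigl[\la Q_h^t(x_h,\cdot), \pi_h^t(\cdot\given x_h) - \pi_h(\cdot\given x_h)\ra_\cA - \delta_h^t(x_h,a_h)\,\big|\,x_1 = x_1^t\bigr].
\]
To see this, write $V_h^t(x) - V_h^\pi(x) = \la Q_h^t(x,\cdot),\pi_h^t(\cdot\given x)\ra_\cA - \la Q_h^\pi(x,\cdot),\pi_h(\cdot\given x)\ra_\cA$, using that $V_h^t = \max_a Q_h^t$ equals $\la Q_h^t,\pi_h^t\ra_\cA$ because $\pi^t$ is greedy with respect to $\{Q_h^t\}$ and that $V_h^\pi = \la Q_h^\pi,\pi_h\ra_\cA$ by \eqref{eq:bellman}; add and subtract $\la Q_h^t(x,\cdot),\pi_h(\cdot\given x)\ra_\cA$, and substitute $Q_h^t - Q_h^\pi = \PP_h(V_{h+1}^t - V_{h+1}^\pi) - \delta_h^t$, which follows from \eqref{eq:td_error} and the Bellman equation $Q_h^\pi = r_h + \PP_h V_{h+1}^\pi$. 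This yields a one-step recursion in $h$ whose non-telescoping part is the summand above, and telescoping with $V_{H+1}^t = V_{H+1}^\pi = 0$ gives the identity. Applying it with $\pi = \pi^\star$, using $\la\delta_h^t(x_h,\cdot),\pi_h^\star(\cdot\given x_h)\ra_\cA = \EE_{a_h\sim\pi_h^\star(\cdot\given x_h)}[\delta_h^t(x_h,a_h)]$, and rearranging gives
\[
V_1^\star(x_1^t) - V_1^t(x_1^t) = \sum_{h=1}^H \EE_{\pi^\star}\bigl[\delta_h^t(x_h,a_h) + \la Q_h^t(x_h,\cdot),\pi_h^\star(\cdot\given x_h) - \pi_h^t(\cdot\given x_h)\ra_\cA \,\big|\,x_1 = x_1^t\bigr],
\]
which, after summing over $t$, supplies the $\pi^\star$-expectation part of term $(i)$ and all of term $(iii)$.

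Next, rather than taking expectations I would unroll the same recursion along the realized trajectory $\{(x_h^t,a_h^t)\}_{h\in[H]}$ generated by $\pi^t$. At step $h$, the definition \eqref{eq:define_mtg1} of $\zeta_{t,h}^1$ gives $V_h^t(x_h^t) - V_h^{\pi^t}(x_h^t) = [Q_h^t(x_h^t,a_h^t) - Q_h^{\pi^t}(x_h^t,a_h^t)] + \zeta_{t,h}^1$; substituting $Q_h^t(x_h^t,a_h^t) - Q_h^{\pi^t}(x_h^t,a_h^t) = (\PP_h(V_{h+1}^t - V_{h+1}^{\pi^t}))(x_h^t,a_h^t) - \delta_h^t(x_h^t,a_h^t)$ and then using \eqref{eq:define_mtg2} to replace $(\PP_h(V_{h+1}^t - V_{h+1}^{\pi^t}))(x_h^t,a_h^t)$ by $[V_{h+1}^t(x_{h+1}^t) - V_{h+1}^{\pi^t}(x_{h+1}^t)] + \zeta_{t,h}^2$, I obtain the recursion
\[
V_h^t(x_h^t) - V_h^{\pi^t}(x_h^t) = \bigl[V_{h+1}^t(x_{h+1}^t) - V_{h+1}^{\pi^t}(x_{h+1}^t)\bigr] + \zeta_{t,h}^1 + \zeta_{t,h}^2 - \delta_h^t(x_h^t,a_h^t).
\]
Telescoping in $h$, again with $V_{H+1}^t = V_{H+1}^{\pi^t} = 0$, gives $V_1^t(x_1^t) - V_1^{\pi^t}(x_1^t) = \sum_{h=1}^H(\zeta_{t,h}^1 + \zeta_{t,h}^2 - \delta_h^t(x_h^t,a_h^t))$.

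Finally, I would write $V_1^\star(x_1^t) - V_1^{\pi^t}(x_1^t) = [V_1^\star(x_1^t) - V_1^t(x_1^t)] + [V_1^t(x_1^t) - V_1^{\pi^t}(x_1^t)]$, add the two displays above, sum over $t\in[T]$, and group the $\delta_h^t$ terms into $(i)$, the $\zeta$ terms into $(ii)$, and the policy-mismatch inner products into $(iii)$, which reproduces \eqref{eq:regret_decomp} exactly. I do not expect a genuine obstacle in the lemma itself—it is a bookkeeping identity—so the only care needed is matching signs with the definitions \eqref{eq:define_mtg1}--\eqref{eq:define_mtg2} and invoking that $\pi^t$ is greedy for $\{Q_h^t\}$ so that the $\pi^t$-versus-$\pi^t$ inner product in the second identity vanishes. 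The substantive work is deferred: later one shows that $(iii)\le 0$ because $\pi_h^t$ is greedy for $Q_h^t$, that $(ii)$ is a sum of bounded martingale differences controlled by Azuma--Hoeffding, and that $(i)$ is controlled via the optimism of the UCB bonus.
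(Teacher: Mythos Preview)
Your proposal is correct and follows essentially the same route as the paper's proof: split $V_1^\star(x_1^t)-V_1^{\pi^t}(x_1^t)$ through $V_1^t(x_1^t)$, handle the first piece by a Bellman-based recursion taken in expectation under $\pi^\star$ (the paper writes this via the operators $\mathbb{J}_h^\star$ and $\mathbb{J}_{t,h}$, but the algebra is identical to yours), handle the second piece by the same recursion evaluated along the realized trajectory using the definitions of $\zeta_{t,h}^1,\zeta_{t,h}^2$, and then sum over $t$. Your slight generalization to an identity for arbitrary $\pi$ before specializing to $\pi^\star$ is a cosmetic difference, not a different method.
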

\begin{proof}
	See Appendix~\S\ref{sec:proof_regret_decomp} for a detailed proof.
\end{proof}

Returning to the main proof, notice that $\pi_h ^t $ 
is the greedy policy with respect to $Q_h^t$ for all $(t, h) \in [T]\times [H]$.
We have 
\$
\bigl \la Q_h^t (x_h, \cdot ), \pi_h^\star (\cdot \given x_h) - \pi_h ^t  (\cdot \given x_h) \bigr \ra _{\cA} =  \bigl \la Q_h^t (x_h, \cdot ), \pi_h^\star (\cdot \given x_h)  \bigr \ra_{\cA} - \max_{a \in \cA} Q_h^t(x_h, a) \leq 0,
\$
for all $x_h \in \cS$.  Thus, $\textrm{Term~(iii)}$ in \eqref{eq:regret_decomp}  
is non-positive.  Then, by   Lemma \ref{lemma:regret_decomp}, we can upper bound 
the regret by
\# \label{eq:main11}
	\mathrm{Regret}(T) &\leq \underbrace{ \biggl \{  \sum_{t=1}^T \sum_{h=1}^H\bigl [  \EE_{\pi^{\star}}[ \delta^t_h(x_h,a_h)\,|\, x_1 = x^t_1] -  \delta^t_h(x^t_h,a^t_h)\bigr ]   \bigg\}}_{\dr\textrm{  (i)}} +   \underbrace{ \biggl[  \sum_{t=1}^T \sum_{h=1}^H ( \zeta_{t,h} ^1 + \zeta_{t, h }^2 ) \bigg] }_{ \dr (ii)}.
 \#
For Term (i), since we do not observe trajectories from $\pi^*$, which is 
unknown, it appears that  $\EE_{\pi^*} [ \delta_h^t(x_h, a_h) \given x_1 = x_1^t]$ 
cannot be estimated.  Fortunately, however, by adding the bonus term in 
Algorithm~\ref{algo:kucb}, we ensure that the temporal-difference error 
$\delta_h^t$ is a non-positive function, as shown in the following lemma.  
  
 \begin{lemma}[Optimism]  \label{lemma:ucb_kernel1}
Let $\lambda=1 + 1/ T$ and $\beta =B_T$   in   Algorithm \ref{algo:kucb}, 
where $B_T $  satisfies \eqref{eq:equation_B_T}.
Under Assumptions \ref{assume:opt_closure},  with probability 
at least $1-   ( 2 T^2 H^2)^{-1} $, we have that the following
holds for all $(t,h)\in[T]\times[H]$ and $(x,a)\in\cS\times\cA$:
 	\$
 	-2\beta \cdot  b_h^t(x, a)   \le \delta^{t}_h(x,a) \le 0. 
 	\$
 	\end{lemma}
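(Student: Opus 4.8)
The plan is to reduce both inequalities to the single pointwise estimate
\[
\bigl| \hat Q_h^t(z) - (\TT_h^\star Q_{h+1}^t)(z) \bigr| \le \beta \cdot b_h^t(z), \qquad \forall\, (t,h)\in[T]\times[H],\ z\in\cZ,
\]
holding on one event of probability at least $1-(2T^2H^2)^{-1}$, and then to read off the conclusion from the definition \eqref{eq:update_Q_func} of $Q_h^t$. It is convenient to work in the feature space of $\cH$: write $\phi(z)$ for the feature map, set $\Lambda_h^t = \lambda I + \sum_{\tau=1}^{t-1}\phi(z_h^\tau)\phi(z_h^\tau)^\top$, so that the Woodbury identity gives $b_h^t(z) = \| \phi(z) \|_{(\Lambda_h^t)^{-1}}$ and the solution of \eqref{eq:krr_vi} is $\hat Q_h^t(\cdot) = \la \phi(\cdot), \hat\theta_h^t\ra_\cH$ with $\hat\theta_h^t = (\Lambda_h^t)^{-1}\sum_{\tau=1}^{t-1}\phi(z_h^\tau)\,[y_h^t]_\tau$. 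By Assumption \ref{assume:opt_closure}, the regression target $f^\star \defeq \TT_h^\star Q_{h+1}^t$ lies in $\cH$ with $\|f^\star\|_\cH \le R_Q H$; writing $f^\star = \la\phi(\cdot),\theta^\star\ra_\cH$ and decomposing $[y_h^t]_\tau = f^\star(z_h^\tau) + \eta_h^\tau$ with $\eta_h^\tau = V_{h+1}^t(x_{h+1}^\tau) - (\PP_h V_{h+1}^t)(x_h^\tau,a_h^\tau)$, routine ridge-regression algebra yields $\hat\theta_h^t - \theta^\star = -\lambda (\Lambda_h^t)^{-1}\theta^\star + (\Lambda_h^t)^{-1}\sum_{\tau=1}^{t-1}\phi(z_h^\tau)\eta_h^\tau$. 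Cauchy--Schwarz in the $(\Lambda_h^t)^{-1}$ inner product together with $\Lambda_h^t\succeq\lambda I$ then gives
\[
\bigl| \hat Q_h^t(z) - f^\star(z)\bigr| \le b_h^t(z)\cdot\Bigl( \sqrt{\lambda}\,R_Q H + \bigl\| \textstyle\sum_{\tau=1}^{t-1}\phi(z_h^\tau)\eta_h^\tau \bigr\|_{(\Lambda_h^t)^{-1}} \Bigr).
\]

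The heart of the argument, and the main obstacle, is bounding the self-normalized noise term $\bigl\|\sum_{\tau<t}\phi(z_h^\tau)\eta_h^\tau\bigr\|_{(\Lambda_h^t)^{-1}}$: the summands $\eta_h^\tau$ are \emph{not} a martingale difference sequence, since $V_{h+1}^t$ is built from the data of all episodes $1,\dots,t-1$ and hence correlates with $x_{h+1}^\tau$. I would circumvent this by a uniform concentration argument over the value-function class. By Lemma \ref{lemma:thetahat_estimate}, $\|\hat Q_{h+1}^t\|_\cH \le R_T = 2H\sqrt{\Gamma_K(T,\lambda)}$, so $Q_{h+1}^t \in \cQ_{\UCB}(h+1,R_T,B_T)$, and it suffices to control $\bigl\|\sum_{\tau<t}\phi(z_h^\tau)\bigl(V(x_{h+1}^\tau) - (\PP_h V)(x_h^\tau,a_h^\tau)\bigr)\bigr\|_{(\Lambda_h^t)^{-1}}$ uniformly over $V = \max_a Q(\cdot,a)$ with $Q\in\cQ_{\UCB}(h+1,R_T,B_T)$. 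For each fixed $V$ the summands now do form a martingale difference sequence bounded by $H$, so the self-normalized concentration inequality for vector-valued martingales \citep{abbasi2011improved, chowdhury2017kernelized}, combined with $\log\det(I + K_h^t/\lambda) \le 2\Gamma_K(T,\lambda)$, yields a bound of order $H^2\bigl(\Gamma_K(T,\lambda) + \log(1/\delta)\bigr)$. A union bound over an $\epsilon^*$-net of $\cQ_{\UCB}(h+1,R_T,B_T)$ in the $\ell_\infty$-norm with $\epsilon^* = H/T$ replaces $\log(1/\delta)$ by $\log N_\infty(\epsilon^*;h+1,B_T) + \log(TH)$, and the discretization error is negligible because the $\lambda^{-1/2}$ scaling in $b_h^t$ and the polynomial prefactors in $T$ make an $\epsilon^*$-perturbation of $V$ contribute $O(1)$. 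Squaring the resulting bound on $\sqrt{\lambda}R_Q H + \bigl\|\sum_{\tau<t}\phi(z_h^\tau)\eta_h^\tau\bigr\|_{(\Lambda_h^t)^{-1}}$ reproduces precisely the left-hand side of \eqref{eq:equation_B_T}, so with $\beta = B_T$ satisfying \eqref{eq:equation_B_T} we get $\sqrt{\lambda}R_Q H + \bigl\|\sum_{\tau<t}\phi(z_h^\tau)\eta_h^\tau\bigr\|_{(\Lambda_h^t)^{-1}}\le\beta$, hence the displayed pointwise estimate.

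Finally I would convert this into the two claimed inequalities using the construction of $Q_h^t$. Since every $Q_{h+1}^t$ is truncated to $[0,H-h]$ we have $V_{h+1}^t\in[0,H-h]$, so $\TT_h^\star Q_{h+1}^t = r_h + \PP_h V_{h+1}^t$ takes values in $[0,H-h+1]$. On the good event, $\hat Q_h^t(z) + \beta b_h^t(z) \ge (\TT_h^\star Q_{h+1}^t)(z)$; as the right-hand side already lies in $[0,H-h+1]$, applying $\min\{\cdot,H-h+1\}^+$ only preserves this inequality, so $Q_h^t(z) \ge (\TT_h^\star Q_{h+1}^t)(z)$, i.e., $\delta_h^t(z) \le 0$. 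For the other direction, if $\hat Q_h^t(z) + \beta b_h^t(z)\ge 0$ then $Q_h^t(z)\le \hat Q_h^t(z)+\beta b_h^t(z)\le (\TT_h^\star Q_{h+1}^t)(z) + 2\beta b_h^t(z)$, while otherwise $Q_h^t(z)=0\le(\TT_h^\star Q_{h+1}^t)(z)+2\beta b_h^t(z)$; in either case $\delta_h^t(z)\ge -2\beta b_h^t(z)$. A subtlety worth flagging is that \eqref{eq:equation_B_T} is an implicit, fixed-point condition on $B_T$: the $\ell_\infty$-covering number $N_\infty(\epsilon^*;h+1,B_T)$ of the value-function class $\cQ_{\UCB}$ grows with $B_T = \beta$, so $\beta$ must be taken large enough to dominate an uncertainty estimate that itself depends on $\beta$, and carefully tracking the $\log N_\infty$ term through the net argument is where the care lies.
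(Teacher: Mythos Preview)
Your proposal is correct and follows essentially the same route as the paper: feature-space ridge-regression decomposition into a bias term bounded by $\sqrt{\lambda}\,R_Q H\cdot b_h^t(z)$ and a self-normalized noise term, the latter handled by uniform concentration over the value-function class $\cQ_{\UCB}(h+1,R_T,B_T)$ via an $\epsilon^*$-net (this is the paper's Lemma~\ref{lem:self_norm_covering}), with Lemma~\ref{lemma:thetahat_estimate} used to place $Q_{h+1}^t$ in that class. Your final case analysis for the lower bound on $\delta_h^t$ is in fact slightly more careful than the paper's presentation, which tacitly uses $Q_h^t\le \hat Q_h^t+\beta b_h^t$ without splitting on the sign of the right-hand side.
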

\begin{proof}
See Appendix~\S\ref{sec:proof_ucb_kernel1} for a detailed proof. 
\end{proof}

Applying    Lemma \ref{lemma:ucb_kernel1} to Term (i) in \eqref{eq:main11}, 
we obtain that 
\#\label{eq:main13}
\textrm{Term~(i)} \leq \biggl [   \sum_{t=1}^T \sum_{h=1}^H  -  \delta^t_h(x^t_h,a^t_h)  \bigg] \leq 2 \beta \cdot \biggl [   \sum_{t=1}^T \sum_{h=1}^H     b^t_h(x^t_h,a^t_h)  \bigg] 
\#
holds with probability at least $1 -  ( 2 T^2 H^2)^{-1}$,
where $\beta$ is equal to $B_T$ as specified in \eqref{eq:equation_B_T}. 

Finally, it remains to bound 
the sum of bonus terms in \eqref{eq:main13}. 
As we show in \eqref{eq:new_bonus}, using the feature representation  
of $\cH$,  we can write each $b^t_h(x^t_h,a^t_h) $ as 
\$
b^t_h(x^t_h,a^t_h)   = \bigl [ \phi (x^t_h,a^t_h ) ^\top (  \Lambda _h^t )^{-1}   \phi (x^t_h,a^t_h ) \bigr ]^{1/2}, 
\$
where $\Lambda_h^t = \lambda \cdot I_{\cH} + \sum_{\tau =1}^{t-1}  
\phi( x_h^t, a_h^t) \phi(x_h^t, a_h^t)^\top  $ is a self-adjoint 
and positive-definite operator on $\cH$   and $\cI_{\cH}$ is the 
identity mapping on $\cH$.  Thus, combining the Cauchy-Schwarz 
inequality and Lemma \ref{lemma:telescope}, we have, for any 
$h \in [H]$, with probability at least $1 -  ( 2 T^2 H^2)^{-1}$
the following:
\#\label{eq:main15}
\textrm{Term~(i)} &    \leq 2 \beta \cdot \sqrt{T} \cdot \sum_{h=1}^H  \bigg[   \sum_{t=1}^T \phi (x^t_h,a^t_h ) ^\top (  \Lambda _h^t )^{-1}   \phi (x^t_h,a^t_h ) \biggr ] ^{1/2 } \notag\\
& \leq  2 \beta \cdot  \sum_{h=1}^H \bigl [ 2 T   \cdot \logdet ( I + K_h^T / \lambda  )   \bigr ]^{1/2 }  =  4 \beta H \cdot  \sqrt{ T  \cdot \Gamma_K(T, \lambda )},
\#
where $\Gamma_K(T, \lambda)$ is the maximal information gain  defined in \eqref{eq:maintext_infogain} with parameter $\lambda$.

It remains to bound Term (ii) in \eqref{eq:main11}, which is 
the purpose of the following lemma.  
\begin{lemma}\label{lemma:bound_mtg}
	For $ \zeta_{t,h}^1$ and $\zeta_{t,h}^2$  defined respectively in \eqref{eq:define_mtg1} and 
	 \eqref{eq:define_mtg2} and for any $\zeta  \in (0,1)$, with probability at least  $1- \zeta  $, we have 
		\$
		\sum_{t =1}^T \sum_{h=1}^H ( \zeta_{t,h}^1 +\zeta_{t,h}^2)  \leq \sqrt{ 16T H^3  \cdot \log (2/ \zeta )} .
		\$
		\end{lemma}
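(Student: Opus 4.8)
The plan is to recognize the collection $\{\zeta_{t,h}^1,\zeta_{t,h}^2\}_{(t,h)\in[T]\times[H]}$ as a bounded martingale difference sequence with respect to an appropriate filtration, and then to apply the Azuma--Hoeffding inequality \citep{azuma1967weighted}.

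First I would lay out the filtration. For each $(t,h)$ I place $\zeta_{t,h}^1$ before $\zeta_{t,h}^2$, and order the pairs $(t,h)$ lexicographically, obtaining a sequence of $2TH$ random variables. Let $\cF_{t,h,1}$ be the $\sigma$-algebra generated by all transitions observed in episodes $1,\dots,t-1$ together with the partial trajectory $\{(x_{h'}^t,a_{h'}^t)\}_{h'<h}$ and the current state $x_h^t$, and let $\cF_{t,h,2}=\sigma(\cF_{t,h,1}\cup\{a_h^t\})$; these are nested consistently with the chosen ordering. The crucial point, and the one requiring the most care, is that both $Q_h^t$ and $Q_h^{\pi^t}$ are $\cF_{t,h,1}$-measurable while $V_{h+1}^t$ and $V_{h+1}^{\pi^t}$ are $\cF_{t,h,2}$-measurable --- this is precisely why Algorithm \ref{algo:kucb} computes the entire family $\{Q_{h'}^t\}_{h'\in[H]}$ (and hence $\pi^t$) at the start of episode $t$, before any action in that episode is taken.

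With the filtration in hand I would verify the martingale property. For $\zeta_{t,h}^1$: since $\pi^t$ is greedy with respect to $\{Q_{h'}^t\}$ and $a_h^t\sim\pi_h^t(\cdot\given x_h^t)$, the Bellman equation \eqref{eq:bellman} for the policy $\pi^t$ yields
\[
\EE\bigl[Q_h^t(x_h^t,a_h^t)\,\big|\,\cF_{t,h,1}\bigr]=\bigl\la Q_h^t(x_h^t,\cdot),\pi_h^t(\cdot\given x_h^t)\bigr\ra_{\cA}=V_h^t(x_h^t),
\]
and likewise $\EE[Q_h^{\pi^t}(x_h^t,a_h^t)\mid\cF_{t,h,1}]=V_h^{\pi^t}(x_h^t)$, so $\EE[\zeta_{t,h}^1\mid\cF_{t,h,1}]=0$. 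For $\zeta_{t,h}^2$: since $x_{h+1}^t\sim\PP_h(\cdot\given x_h^t,a_h^t)$ and $V_{h+1}^t,V_{h+1}^{\pi^t}$ are $\cF_{t,h,2}$-measurable, the definition of $\PP_h$ gives $\EE[V_{h+1}^t(x_{h+1}^t)\mid\cF_{t,h,2}]=(\PP_hV_{h+1}^t)(x_h^t,a_h^t)$ and similarly for $V_{h+1}^{\pi^t}$, so $\EE[\zeta_{t,h}^2\mid\cF_{t,h,2}]=0$. Hence the sequence is a martingale difference sequence.

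Finally I would bound the increments and conclude. Every function among $V_h^t,Q_h^t,V_h^{\pi^t},Q_h^{\pi^t}$ takes values in $[0,H]$, and $\PP_h$ maps $[0,H]$-valued functions to $[0,H]$-valued functions, so $|\zeta_{t,h}^1|\le 2H$ and $|\zeta_{t,h}^2|\le 2H$. Applying the Azuma--Hoeffding inequality to the $2TH$ increments, each of magnitude at most $2H$, gives, with probability at least $1-\zeta$,
\[
\sum_{t=1}^T\sum_{h=1}^H(\zeta_{t,h}^1+\zeta_{t,h}^2)\le\sqrt{2\cdot(2TH)\cdot(2H)^2\cdot\log(2/\zeta)}=\sqrt{16TH^3\log(2/\zeta)},
\]
which is the claimed bound. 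The only genuine obstacle is the measurability bookkeeping described above; the boundedness and the concentration step are routine.
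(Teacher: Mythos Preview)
Your proposal is correct and follows essentially the same approach as the paper: identify $\{\zeta_{t,h}^1,\zeta_{t,h}^2\}$ as a bounded martingale difference sequence with respect to a nested filtration indexed by $(t,h,m)$, verify the conditional mean-zero property using $a_h^t\sim\pi_h^t(\cdot\given x_h^t)$ and $x_{h+1}^t\sim\PP_h(\cdot\given x_h^t,a_h^t)$, bound each increment by $2H$, and apply Azuma--Hoeffding. The only cosmetic difference is that your $\sigma$-algebras $\cF_{t,h,1},\cF_{t,h,2}$ are shifted by one step relative to the paper's (your $\cF_{t,h,2}$ is the paper's $\cF_{t,h,1}$), but the conditioning events and the resulting bound are identical.
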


\begin{proof}
See Appendix~\S\ref{proof:lemma_bound_mtg} for a detailed proof. 
\end{proof}

Setting $\zeta = (2 T^2 H^2)^{-1}$ in Lemma \ref{lemma:bound_mtg} we obtain that 
\#\label{eq:main14}
 \textrm{Term (ii)} = \sum_{t=1}^T \sum_{h=1}^H ( \zeta_{t,h} ^1 + \zeta_{t, h }^2 )   \leq   \sqrt{16 TH^3\cdot\log( 4T^2 H^2 )} =    \sqrt{ 32    TH^3 \cdot \log( 2TH) }  
\# 
holds with probability at least $1 - (2TH)^{-1}$. 

Therefore, combining \eqref{eq:equation_B_T},
    \eqref{eq:main11},
    and  \eqref{eq:main14},    we conclude that, with probability at least $1-(T^2H^2)^{-1}$, the regret is bounded by  
\$ 
\mathrm{Regret}(T)  & \leq 4 \beta H \cdot  \sqrt{ T  \cdot \Gamma_K(T, \lambda )} +    \sqrt{ 32  TH^3  \cdot \log( 2TH) }  \leq 5 \beta H \cdot  \sqrt{ T  \cdot \Gamma_K(T, \lambda )},
\$
where the last inequality follows from the choice of $\beta = B_T$, which implies that 
\$
\beta \geq H \cdot \sqrt{ 16 \log ( TH)} \geq \sqrt{32  H\cdot \log (2TH)}.
\$
This concludes the proof  of Theorem \ref{thm:main}.
\end{proof}

\subsection{Proof of Theorem \ref{thm:neural}}

\begin{proof}
The proof of Theorem \ref{thm:neural} is similar to that of Theorem \ref{thm:main}. 
Recall that we let $\cZ$ denote $\cS\times \cA$ for simplicity. 
Recall also that for all $(t,h) \in [T]\times [H]$,
we define 
the temporal-difference (TD) error 
$\delta_h^t \colon \cZ \rightarrow \RR$ 
in \eqref{eq:td_error}  and define random variables $\zeta_{t,h}^1$ and $\zeta_{t,h}^2$ in \eqref{eq:define_mtg1} and \eqref{eq:define_mtg2}, respectively. 

Then, combining Lemma \ref{lemma:regret_decomp} and the fact that $\pi^t$  is the  greedy 
policy with respect to $\{ Q_{h}^t\}_{h\in [H]}$,
we bound the regret by 
\# \label{eq:neural11}
  \mathrm{Regret}(T) &\leq \underbrace{ \biggl \{  \sum_{t=1}^T \sum_{h=1}^H\bigl [  \EE_{\pi^{\star}}[ \delta^t_h(x_h,a_h)\,|\, x_1 = x^t_1] -  \delta^t_h(x^t_h,a^t_h)\bigr ]   \bigg\}}_{\dr\textrm{(i)}} +   \underbrace{ \biggl[  \sum_{t=1}^T \sum_{h=1}^H ( \zeta_{t,h} ^1 + \zeta_{t, h }^2 ) \bigg] }_{\dr\textrm{(ii)}}.
\#
Here, Term (ii) is a sum of a martingale difference sequence. 
By setting  
$\zeta  = (4T^2H^2)^{-1}$ in Lemma \ref{lemma:bound_mtg}, 
with probability at least 
$1 - (4T^2H^2)^{-1}$, we have  
\#\label{eq:neural12}
\textrm{Term (ii)} = \sum_{t=1}^T \sum_{h=1}^H ( \zeta_{t,h} ^1 + \zeta_{t, h }^2 )   \leq   \sqrt{16 TH^3\cdot\log( 8T^2H^2)} \leq H\cdot   \sqrt{ 32  T H\log( 2TH) }  .
\#

It remains to bound  Term (i) in \eqref{eq:neural11}.
To this end, we aim to establish a counterpart of Lemma \ref{lemma:ucb_kernel1} for neural  value functions, which shows that, by adding a bonus term $\beta \cdot b_h^t$, the TD error $\delta_h^t$ is always a non-positive function approximately. 
This implies that bounding Term (i) in \eqref{eq:neural11} reduces to
controlling $\sum_{t=1}^T \sum_{h = 1}^H  b_h^t(x_h^t, a_h^t)$.

Note that the bonus functions $b_h^t$ are constructed based on the 
neural tangent features $\varphi(\cdot; \hat W_h^t)$ and the matrix $\Lambda_h^t$.  
In order to relate $\sum_{t=1}^T \sum_{h = 1}^H  b_h^t(x_h^t, a_h^t)$ 
to the maximal information gain of the empirical NTK $K_m$, 
we define $\overline \Lambda_h^t$ and $\overline b_h^t$,
by analogy with $ \Lambda_h^t$ and $b_h^t$, as follows:
\$
\overline \Lambda_h^t = \lambda \cdot I_{2md} + \sum_{\tau = 1}^{t-1}  \varphi   ( x_h^\tau , a_h^\tau ; W^{(0)}  ) \varphi   ( x_h^\tau , a_h^\tau ; W^{(0)}  ) ^\top, \qquad \overline b_h^t(z) = \bigl [\varphi(z; W^{(0)} ) ^\top (\overline \Lambda_h^t ) ^{-1} \varphi(z; W^{(0)} )\bigr ] ^{1/2}  .
\$
 
 In the following lemma, we bound the TD error $\delta_h^t$ using $\overline b_h^t$ and show that $b_h^t$ and $\overline b_h^t$ are close in the   $\ell_\infty$-norm on $\cZ$ when $m$ is sufficiently large.

\begin{lemma}[Optimism]  \label{lemma:ucb_nn1}
  Let $\lambda $ be an absolute constant and let $\beta =  B_T$   in   
Algorithm \ref{algo:neural}, where $B_T $  satisfies \eqref{eq:equation_B_T_nn}. 
Under the  assumptions made    in Theorem \ref{thm:neural},   with probability at least $1-  (2T^2H^2)^{-1} - m^2 $, it holds for all $(t,h)\in[T]\times[H]$ and $(x,a)\in\cS\times\cA$ that  
 \#\label{eq:nn_optimism}  
 & - 5\beta \cdot \logt -2\beta \cdot \overline  b_h^t(x, a)   \le \delta^{t}_h(x,a) 
  \le  5 \beta \cdot   \logt, \qquad  
    \sup_{(x,a)\in \cZ} \bigl | b_h^t(x,a) - \overline b_h^t(x,a) \bigr | \leq 2  \logt,  
 \#
 where we define $\logt = T^{7/12} \cdot H^{1/12} \cdot m^{-1/12} \cdot (\log m)^{1/4}$. 
 \end{lemma}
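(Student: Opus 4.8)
The plan is to reduce the neural setting to the kernel setting for the empirical NTK $K_m$ and then control the errors introduced by the NTK linearization. First I would show that every iterate stays near its initialization: since the symmetric initialization makes $f(\cdot; W^{(0)}) \equiv 0$, optimality of $\hat W_h^t$ for the regularized loss in \eqref{eq:nn_loss} yields $\lambda \| \hat W_h^t - W^{(0)} \|_2^2 \le L_h^t(\hat W_h^t) \le L_h^t(W^{(0)}) = \sum_{\tau=1}^{t-1} ([y_h^t]_\tau)^2 \le TH^2$, hence $\| \hat W_h^t - W^{(0)} \|_2 \le R_T \defeq H\sqrt{2T/\lambda}$ (the neural analogue of Lemma~\ref{lemma:thetahat_estimate}). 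Next, using the overparameterization $m = \Omega(\beta^{12} T^{13} H^{14}(\log m)^3)$ together with the $C_{\act}$-smoothness of $\act$, I would invoke the standard neural-tangent perturbation bounds (in the spirit of \cite{cai2019provably, gao2019convergence}): uniformly over $z \in \cZ$ and over $W$ with $\|W - W^{(0)}\|_2 \le R_T$, one has $\|\varphi(z;W) - \varphi(z;W^{(0)})\|_2 = \tilde\cO(R_T^{1/3} m^{-1/6})$ and $|f(z;W) - \langle \varphi(z;W^{(0)}), W - W^{(0)}\rangle| = \tilde\cO(R_T^{4/3} m^{-1/6})$. Substituting $R_T \asymp H\sqrt{T}$ and keeping track of constants, both quantities are of the order of $\logt$ up to polynomial-in-$(T,H)$ slack absorbed into the lower bound on $m$.

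Given these two facts, the closeness of the two bonus functions is a matrix-perturbation argument. Both $\Lambda_h^t$ and $\overline\Lambda_h^t$ dominate $\lambda I_{2md}$, so on the relevant region the map $(\varphi, A)\mapsto \varphi^\top A^{-1}\varphi$ is Lipschitz in $\varphi$ with constant $\cO(\lambda^{-1}\|\varphi\|_2)$ and in $A$ with constant $\cO(\lambda^{-2}\|\varphi\|_2^2)$. Since every feature $\varphi(\cdot;\hat W_h^{\tau+1})$ and $\varphi(\cdot;\hat W_h^t)$ lies within the perturbation radius of $\varphi(\cdot;W^{(0)})$ established above and there are at most $T$ rank-one summands, a telescoping estimate gives $\sup_{z\in\cZ}|(b_h^t(z))^2 - (\overline b_h^t(z))^2| = \tilde\cO(T\cdot R_T^{4/3} m^{-1/6})$; taking square roots and absorbing polynomial factors into the requirement on $m$ yields $\sup_{z}|b_h^t(z) - \overline b_h^t(z)| \le 2\logt$, which is the second claim of the lemma.

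For the two-sided bound on $\delta_h^t$ I would mimic the proof of Lemma~\ref{lemma:ucb_kernel1}, now for the kernel $K_m$. By the linearization bound, $\hat Q_h^t(z) = f(z;\hat W_h^t)$ agrees, up to an additive $\cO(\logt)$, with the kernel ridge regression predictor for $K_m$ with responses $[y_h^t]_\tau = r_h(x_h^\tau,a_h^\tau) + V_{h+1}^t(x_{h+1}^\tau)$, whose population target is $\TT_h^\star Q_{h+1}^t$; by Assumption~\ref{assume:opt_closure2} this target lies in $\cQ^\star$ and therefore has $K_m$-RKHS norm at most $\cO(R_Q H)$ up to an error vanishing with $m$ (using that the random-feature class in \eqref{eq:random_feature_class} approximates the $K_m$-RKHS ball). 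One then replaces the genuine network $\hat Q_{h+1}^t$ and the features $\varphi(\cdot;\hat W_{h+1}^\tau)$ by their counterparts at $W^{(0)}$ — at an extra cost of order $\logt$ — so that $Q_{h+1}^t$ is within $\ell_\infty$-distance $\epsilon^* = H/T$ of an element of $\cQ_{\UCB}(h+1, R_T, B_T)$, applies the self-normalized concentration inequality for the vector-valued martingale in the $K_m$-RKHS over the corresponding $\epsilon^*$-net, and uses that $\beta = B_T$ satisfies \eqref{eq:equation_B_T_nn} to conclude $|\hat Q_h^t(z) - \TT_h^\star Q_{h+1}^t(z)| \le \beta\cdot\overline b_h^t(z) + \cO(\logt)$ uniformly over $z$. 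Combining this with $Q_h^t = \min\{\hat Q_h^t + \beta b_h^t, H-h+1\}^+$, the inclusion $\TT_h^\star Q_{h+1}^t \in [0, H-h+1]$, the bonus-closeness of the previous paragraph, and the identity $\delta_h^t = \TT_h^\star Q_{h+1}^t - Q_h^t$ — and carefully handling the truncation and the $(\cdot)^+$ in all three cases — gives $-5\beta\logt - 2\beta\overline b_h^t(x,a) \le \delta_h^t(x,a) \le 5\beta\logt$.

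The main obstacle is the quantitative bookkeeping rather than the logical structure, which parallels the kernel proof. One must verify that every $\tilde\cO(\cdot)$ above — with $R_T \asymp H\sqrt{T}$, with $T$ martingale increments, and with the covering/concentration step carried out over the $m$-dependent class $\cQ_{\UCB}$ — collapses to exactly the advertised rate $\logt = T^{7/12} H^{1/12} m^{-1/12}(\log m)^{1/4}$, which is precisely what forces the exponents in $m = \Omega(\beta^{12}T^{13}H^{14}(\log m)^3)$. A second subtlety is that $Q_{h+1}^t$ produced by $\NUCB$ is not literally a member of $\cQ_{\UCB}(h+1, R_T, B_T)$: it is built from the true network and from $\varphi(\cdot;\hat W_{h+1}^\tau)$ rather than from the linearization at $W^{(0)}$, so one must first pass to a surrogate in $\cQ_{\UCB}$ and charge the $\cO(\logt)$ discrepancy before the covering and concentration machinery from the kernel case becomes applicable.
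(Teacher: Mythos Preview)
Your proposal is correct and follows essentially the same three-part structure as the paper's proof: Step~I approximates $\TT_h^\star Q_{h+1}^t$ by a linear function of $\varphi(\cdot;W^{(0)})$ via the random-feature representation in $\cQ^\star$, Step~II bounds $\|Q_h^t-\overline Q_h^t\|_\infty$ and $\|b_h^t-\overline b_h^t\|_\infty$ through the NTK perturbation lemma and the matrix-perturbation argument you describe, and Step~III runs the self-normalized concentration over $\cQ_{\UCB}(h+1,R_T,B_T)$ after replacing $V_{h+1}^t$ by the surrogate $\overline V_{h+1}^t$. One small imprecision: the surrogate $\overline Q_{h+1}^t$ lies \emph{exactly} in $\cQ_{\UCB}(h+1,R_T,B_T)$ and is within $\cO(\beta\iota)$ (not $\epsilon^*$) of $Q_{h+1}^t$; the $\epsilon^*=H/T$ enters only as the covering scale for the class, exactly as in the kernel proof.
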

\begin{proof}
See Appendix~\S\ref{sec:proof_ucb_nn1} for a detailed proof. 
\end{proof}

Applying    Lemma \ref{lemma:ucb_kernel1} to Term (i) in \eqref{eq:main11}, 
we obtain that 
\#\label{eq:neural13}
\textrm{Term~(i)} \leq \biggl [   \sum_{t=1}^T \sum_{h=1}^H  -  \delta^t_h(x^t_h,a^t_h)  \bigg]  +  5TH \cdot \logt \leq  2 \beta   \cdot  \biggl [   \sum_{t=1}^T \sum_{h=1}^H  \overline b^t_h(x^t_h,a^t_h)  \bigg] + 10 \beta  TH \cdot     \logt  
\#
holds with probability at least $1 - ( 2T^2H^2)^{-1} - m^{-2}$,
where $\beta = B_T$. 
Moreover, combining  the Cauchy-Schwarz inequality and Lemma \ref{lemma:telescope},  
we have 
\#\label{eq:neural14}
 \sum_{t=1}^T \sum_{h=1}^H  \overline b^t_h(x^t_h,a^t_h)  
&  \leq  \sqrt{T} \cdot \sum_{h=1}^H  \bigg[   \sum_{t=1}^T \varphi (x^t_h,a^t_h; W^{(0)}  ) ^\top (  \overline \Lambda _h^t )^{-1}   \varphi (x^t_h,a^t_h; W^{(0)}) \biggr ] ^{1/2 } \notag \\
& \leq    2  H \cdot  \sqrt{ T  \cdot \Gamma_{K_m} (T, \lambda )},
\#
where $\Gamma_K(T, \lambda)$ is the maximal information gain  defined in \eqref{eq:maintext_infogain}  for kernel $K_m$.

Notice that $  (2T^2H^2) ^{-1} +  m^{-2} + (4T^2H^2) ^{-1} \leq (T^2H^2) ^{-1} $.
Thus, combining \eqref{eq:neural11}, \eqref{eq:neural12},   \eqref{eq:neural13}, and \eqref{eq:neural14},  we obtain that 
\$
\mathrm{Regret}(T) & \leq 4 \beta H \cdot \sqrt{T \cdot \Gamma_{K_m} (T, \lambda )} +10 \beta  TH \cdot \logt+ H \cdot \sqrt{32 TH \log(2TH) } \notag \\
& \leq 5 \beta H \cdot \sqrt{T \cdot \Gamma_{K_m} (T, \lambda )} + 10 \beta  TH \cdot \logt
\$
holds with probability at least $1 - (2T^2H^2)^{-1}$.
Here the last inequality follows from the fact that 
\$
\beta \geq  H \cdot \sqrt{32 \log (TH) } \geq \sqrt{32 H \log (2TH)}.
\$
This concludes the proof of Theorem \ref{thm:neural}. 
 \end{proof}


\section{Conclusions} \label{sec:discussion}

In this paper, we have presented an algorithmic framework for reinforcement 
learning with general function approximation.  Such a framework is based on
an optimistic least-squares value iteration algorithm that incorporates
an additional bonus term in the solution to a least-squares value estimation 
problem.  The bonus term promotes exploration.  When deploying this 
framework in the settings of kernel function  and overparameterized 
neural networks, respectively, we obtain two algorithms \KUCB~and \NUCB.
Both algorithms are provably efficient, both computationally and in terms 
of the number of samples.  Specifically, under the kernel and neural network 
settings respectively, \KUCB~ and \NUCB~ both achieve sublinear  regret,
$\tilde O(\delta_{\cF} H^2 \sqrt{T})$,  where $\delta _{\cF}$ is a  quantity that characterizes the intrinsic complexity of the function 
class $\cF$.  To the best of our knowledge, this is the first provably 
efficient reinforcement learning algorithm in the general settings of 
kernel and neural function approximations.

\section*{Acknowledgements}

We would like to thank the Simons Institute for the Theory of Computing 
in Berkeley, where this project was initiated.  Zhuoran Yang would like to 
thank Jianqing Fan, Csaba Szepsv\'ari, Tuo Zhao, Simon Shaolei Du, Ruosong 
Wang, and Yiping Lu for valuable discussions.  Mengdi Wang gratefully 
acknowledges funding from the U.S. National Science Foundation (NSF) 
grant CMMI1653435, Air Force Office of Scientific Research (AFOSR) grant 
FA9550-19-1-020, and C3.ai DTI.  Michael Jordan gratefully acknowledges 
funding from the Mathematical Data Science program of the Office of Naval 
Research under grant number N00014-18-1-2764.
 
\bibliographystyle{ims}
\bibliography{rl_ref}

\appendix{}


\section{Neural Optimistic Least-Squares Value Iteration} \label{sec:nn_algo}

In this section, we provide the pseudocode for \NUCB, which was omitted 
in the main text for brevity.  We remark that the loss function $L_h^t$ 
in Line \ref{line:nn_opt} is given in \eqref{eq:nn_loss} and its global 
minimizer $\hat W_h^t$ can be efficiently obtained by first-order optimization 
methods. 

\begin{algorithm}[ht]
	\caption{Neural   Optimistic Least-Squares Value Iteration (\NUCB)}\label{algo:neural}
	\begin{algorithmic}[1]
		\STATE{\textbf{Input:} Parameters $\lambda$ and $\beta$.}
		\STATE{Initialize the network weights $(b^{(0)}, W^{(0)})$ via the symmetric initialization scheme.}
		\FOR{episode $t = 1, \ldots, T$}
		\STATE{Receive the initial state $x^t_1$.}
		\STATE{Set $V_{H+1}^t$ as the zero function.}
		\FOR{step $h = H, \ldots, 1$}
		\STATE Solve the neural network optimization problem $\hat W_h^t = \argmin_{W} L_h^t(W)$. \label{line:nn_opt} 
		\STATE Update $\Lambda_h^t = \Lambda_h^{t-1} + \varphi   ( x_h^{t-1}, a_h^{t-1} ;\hat W_h^{t}   )\varphi  ( x_h^{t-1}, a_h^{t-1} ;\hat W_h^{t}   )  ^\top  $.
		\STATE Obtain the bonus function $b_h^t$ defined in \eqref{eq:neural_bonus}.
		\STATE{Obtain value functions $$Q_h^t(\cdot, \cdot) \leftarrow \min \bigl \{ f\bigl (\cdot, \cdot; \hat W_h^t \bigr ) + \beta   \cdot b_h^t(\cdot, \cdot ) , H - h +1 \bigr \}^+ , \qquad V_h^t(\cdot) = \max_{a} Q_h^t(\cdot, a).$$} \label{line:ucb}
		\ENDFOR
		\FOR{step $h = 1, \ldots, H$}
		\STATE Take action $a^t_h \gets  \argmax_{a \in \cA } Q_h^t (x^t_h, a)$. 
		\STATE Observe the  reward $r_h(x_h^t, a_h^t)$ and the next state $x^t_{h+1}$.  
		\ENDFOR
		\ENDFOR
	\end{algorithmic}
\end{algorithm}

\section{Proofs of the Corollaries} \label{sec:proof_corollaries}

In this section, we prove Corollaries \ref{cor:main} and \ref{col:neural}, 
which establish the regret for \KUCB~and \NUCB~under each specific eigenvalue 
decay condition.  in Appendix~\S\ref{sec:ntk_examples}  we provide 
concrete examples of neural tangent kernels that satisfy  
Assumption \ref{assume:decay} and show how to apply 
Corollaries \ref{cor:main} and \ref{col:neural} to these examples.

\subsection{Proof of Corollary \ref{cor:main} } \label{sec:proof_corollary}
\begin{proof}
	To prove this corollary, it suffices to verify that for each eigenvalue decay condition specified in Assumption \ref{assume:decay}, $B_T$ defined in \eqref{eq:set_BT} satisfies the condition in \eqref{eq:equation_B_T}. 
	Recall that we set $\lambda = 1 + 1/ T$ in Algorithm \ref{algo:kucb} and denote $R_T = 2 H  \sqrt{   \Gamma_K(T, \lambda)}$, $\epsilon^* = H / T$. 
	Also recall that we let $N_{\infty}(\epsilon, h , B)$ denote the $\epsilon$-covering number of $\cQ_{\UCB} (h, R_T, B)$ with respect to the $\ell_{\infty}$-norm.
In the sequel, we  consider the three cases separately. 
	
	\vspace{5pt} 
{\noindent \bf Case (i): $\gamma$-Finite Spectrum.} 
When $\cH$   has at most  $\gamma$ nonzero eigenvalues, 
by Lemma \ref{lemma:effective_dim}, we have $\Gamma_K(T, \lambda) \leq C_K  \cdot  \gamma\log T$,
where $C_K$ is an absolute constant. 
Moreover, by Lemma \ref{lemma:covering_number_V}, for  any $h \in [H]$, we have 
\#\label{eq:bound_BT_linear1}
\log N_{\infty} (\epsilon^*, h, B_T) & \leq C _N \cdot \gamma \cdot \bigl \{ 1 + \log \bigl[   2 \sqrt{\Gamma(T, \lambda)} \cdot T \bigr ]  \bigr \} + C _N \cdot \gamma ^2 \cdot \bigl [ 1+ \log ( B_T\cdot  T / H )\big ] \notag \\
& \leq 2 C _N \cdot \gamma^2 + C' \cdot \gamma \cdot \log ( \gamma T) + C_N \cdot \gamma ^2 \cdot \log (B_T \cdot T / H), 
\# 
where $C _N > 0$ is the  absolute constant given in Lemma \ref{lemma:covering_number_V} and $C'$ is an absolute constant that depends on $C_N $ and $C_K$.  
Thus, setting $B_T = C_b \cdot \gamma H \cdot \sqrt{ \log (d TH)}$ in  \eqref{eq:bound_BT_linear1}, the left-hand side (LHS) of \eqref{eq:equation_B_T} is bounded by 
\#\label{eq:bound_BT_linear2}
\mathrm{LHS~of~\eqref{eq:equation_B_T}} & \leq 
 8 C_K \cdot \gamma \log T +       16  C_N  \cdot \gamma ^2 +  8C' \cdot  \gamma \cdot \log (\gamma T) + \notag \\
 & \qquad  8 C_N \cdot \gamma^2 \cdot \log ( C_b \cdot \gamma T \cdot \sqrt{ \log (dTH )})  +  16   \cdot \log (    T H  )  + 22  + 2 R_Q ^2 \notag \\
 &\leq \gamma^2 \cdot \bigl [ \overline C_1  \cdot \log (\gamma TH) +  8 C _N \cdot \log (C_b  ) \big ], 
\# 
where $\overline  C_1$ is an absolute constant that depends on $C'$, $C_N$, $C_K$, and $R_{Q}$. 
Thus, setting $C_b$ as a sufficiently large constant, 
by \eqref{eq:bound_BT_linear2}, 
we have 
$$
\mathrm{LHS~of~\eqref{eq:equation_B_T}}  \leq C_b ^2 \cdot \gamma^2 \cdot \log (d TH) = ( B_T / H)^2   ,  
$$
which establishes \eqref{eq:equation_B_T} for the first case. 
Thus, applying Theorem \ref{thm:main} we obtain that 
\$
\mathrm{Regret} (T) \leq 8 B_T \cdot H \cdot \sqrt{T \cdot \Gamma _K(T, \lambda) } \leq C_{r,1} \cdot  H^2 \cdot \sqrt{ \gamma^3 T} \cdot \log (\gamma TH) = \tilde \cO(H^2 \sqrt{\gamma^3 T}) 
\$
holds with probability at least $1 - (T^2 H^2 )^{-1}$, 
where $C_{r,1}$ is an absolute constant and $\tilde \cO (\cdot )$ omits the logarithmic factor. 
Therefore, we conclude the first case.

\vspace{5pt}
	{\noindent \bf Case (ii): $\gamma$-Exponential Decay.} 
	For the second case, by Lemma \ref{lemma:effective_dim}  we have 
	\#\label{eq:bound_BT_exp1}
	\Gamma_K (T, \lambda ) \leq C_K \cdot ( \log T)^{1 + 1/ \gamma},
	\# where $C_K$ is an absolute constant. 
Thus, by the choice of $B_T$ in \eqref{eq:set_BT}, when $C_b $ is sufficiently large, it holds that 
$
R_T = 2 H   \sqrt{ \Gamma_K(T, \lambda) } \leq B_T. 
$
Then by Lemma \ref{lemma:covering_number_V} we have 
 \$
	\log N_{\infty} ( h, \epsilon^*, B_T) &  \leq C _N \cdot   \bigl [ 1 +  \log (  R_T/\epsilon^* )    \bigr ]^{1+ 1/ \gamma} + C _N   \cdot \bigl [  1+ \log ( B_T / \epsilon^* ) \bigr ] ^{1 + 2/ \gamma}   \\
	& \leq 2 C_N  \cdot  \bigl [1 +  \log ( B_T / \epsilon^* ) \bigr  ] ^{1 + 2/ \gamma}  = 2 C _N \cdot \big \{   1 +  \log \bigl [  C_b  T \cdot \sqrt{\log (TH)} \cdot (\log T)^{1/ \gamma }  \big] \bigr \}^{1 + 2/ \gamma } , \notag
	\$ 
	where the absolute constant $C_N $ is given by Lemma \ref{lemma:covering_number_V}. 
	By direct computation, there exists an absolute constant $\overline C_2$ such that 
	\# \label{eq:bound_BT_exp2} 
	\log N_{\infty} ( h, \epsilon^*, B_T)  \leq 2 C _N \cdot  \bigl  [ 1 + \log (C_b) + \overline C_2 \cdot \log T + 1/ 2 \cdot \log\log H \bigr ]^{1 + 2/ \gamma}.
	\#
	Thus, combining \eqref{eq:bound_BT_exp1} and \eqref{eq:bound_BT_exp2}, the left-hand side of \eqref{eq:equation_B_T} is bounded by 
	\#\label{eq:bound_BT_exp3}
	\mathrm{LHS~of~\eqref{eq:equation_B_T}} & \leq 8 C_K \cdot (\log T)^{1 + 1/ \gamma } + 16 C \cdot  \bigl  [ 1 + \log (C_b) + \overline C_2 \cdot \log T + 1/ 2 \cdot \log\log H \bigr ]^{1 + 2/ \gamma} \notag \\
	& \qquad + 16 \cdot \log (TH) + 22 + 2R_{Q} ^2 \notag \\
	& \leq \overline C_3 \cdot \bigl [  (\log T)^{1+2/\gamma } + (\log\log H)^{1+2/\gamma} + \log (C_b) \bigr ] ,
	\#
where $\overline C_3$ is an absolute constant that does not depend on $C_b$.
Thus, when $C_b$ is sufficiently large, 
\eqref{eq:bound_BT_exp3} implies that 
\$
\mathrm{LHS~of~\eqref{eq:equation_B_T}} & \leq \overline C_3 \cdot \bigl [  (\log T)^{1+2/\gamma } + (\log\log H)^{1+2/\gamma} + \log (C_b) \bigr ] \leq C_b^2 \cdot (\log T)^{2/ \gamma} \cdot \log(TH) = (B_T / H )^2.
\$
Thus, for the case of $\gamma$-exponential eigenvalue decay,  \eqref{eq:equation_B_T} holds true for $B_T$ defined in \eqref{eq:set_BT}. 

Finally, applying Theorem \ref{thm:main} and combining \eqref{eq:set_BT} and \eqref{eq:bound_BT_exp1}, we obtain that 
\$
\mathrm{Regret}(T) \leq C_{r,2} \cdot H^2 \cdot \log(TH) \cdot \sqrt{ (\log T)^{3/\gamma} \cdot T},
\$
where $C_{r,2}$ is an absolute constant.
Thus we conclude the second case.

\vspace{5pt}
{\noindent \bf Case (iii): $\gamma$-Polynomial Decay.} 
Finally, it remains to consider the last case where the eigenvalues satisfy the $\gamma$-polynomial decay condition. 
By Lemma \ref{lemma:effective_dim}, we have 
\#\label{eq:bound_BT_poly1}
\Gamma_K ( T, \lambda ) \leq  C_K \cdot T^{(d+1) / (\gamma + d)} \cdot \log T,
\#
where $C_K$ is an absolute constant. 
By direct computation, we have that 
\#\label{eq:bound_BT_poly2}
R_T / \epsilon^*  = 2 T \sqrt{ \Gamma (T, \lambda )} \leq 2 \sqrt{C_K} \cdot T ^{ (2d + \gamma + 1)/ (\gamma + d)} \cdot \log T.
\#
To simplify the notation, in the sequel, we let $\tilde \gamma $ denote $\gamma (1 -2 \tau)$. 
Moreover, we write  $\kappa^*$ defined in \eqref{eq:define_poly_kappa} equivalently as   $\kappa^* = \max \{ \kappa_1, \kappa_2, \kappa_3\}$ for notational simplicity, where  
\# \label{eq:bound_BT_kappa}
\kappa_1 = \frac{d+1}{2 (\gamma+d)} = \xi^*, \qquad \kappa_2 =  \frac{2d+ \gamma + 1}{(d+\gamma)\cdot(\tilde \gamma -1)},  \qquad \kappa_3 = \frac{2}{\tilde \gamma - 3}.
\#
With $B_T = C_b \cdot H   \log (TH ) \cdot T^{\kappa^*} $, it holds that 
\#\label{eq:bound_BT_poly4}
B_T / \epsilon^* = C_b \cdot  \log (TH)  \cdot T^{1+ \kappa^*}. 
\#
 Meanwhile, Lemma \ref{lemma:covering_number_V} implies that  
\#\label{eq:bound_BT_poly3}
&\log N_{\infty} ( h, \epsilon^*, B_T) \\
&\qquad   \leq C _N \cdot  ( 2 \sqrt{C_K}  ) ^{ 2 / (\tilde \gamma -1 )} \cdot \Bigl (T^{(2d + \gamma + 1)/ (\gamma + d) }\cdot \log T \Bigr )^{2/ (\tilde \gamma -1 )} \cdot \bigl [ 1 + \log ( R_T / \epsilon^*)] \notag \\
& \qquad\qquad  +   C _N   \cdot C_b ^{4/ (\tilde \gamma -1 )} \cdot T^{4(1+\kappa^*)/ (\tilde \gamma -1 )}\cdot \bigl [\log (TH)\bigr ]^{4/(\tilde \gamma -1 )}\cdot \bigl [  1+ \log ( B_T / \epsilon^* ) \bigr ] \notag \\
&\qquad  \leq \overline  C_4 \cdot T^{2 \kappa_2 } \cdot  (\log T )^{ 1 + 2 / (\tilde \gamma -1)} + \overline  C_4 \cdot C_b ^{4/ (\tilde \gamma -1 )}\cdot T^{4(1+\kappa^*) / (\tilde \gamma -1 )} \cdot \log (C_b\cdot TH) \cdot   \bigl [ \log (TH)\bigl ]^{ 4 / (\tilde \gamma - 1)  } , \notag 
\#
where $\overline C_4$ is an absolute constant that only depends on $C_N$ and $C_K$.
Here in the first inequality of \eqref{eq:bound_BT_poly3} we plug in \eqref{eq:bound_BT_poly2} and \eqref{eq:bound_BT_poly4}  while in the second  inequality  we utilize the definition of $\kappa_2$ in \eqref{eq:bound_BT_kappa} and  
\$
\log ( R_T / \epsilon^*) \asymp \log T, \qquad \log (B_T / \epsilon ^*) \asymp \log (C_b) + \log T + \log\log H \leq \log (C_b \cdot TH).  
\$
Moreover, since $\kappa < 1/2$, by the last equality in \eqref{eq:bound_BT_kappa},  it holds that $2/ (\tilde \gamma -1) <1/3$.
Since $\kappa^* \geq \kappa_3$, we have $4 (1 + \kappa^*) / (\tilde \gamma - 1) \leq 2 \kappa^*$. 
Moreover, when $C_b$ is sufficiently large,  we have that   $\log (C_b) < C_b^{1/3}$,
which implies that 
$
C_b^{4/ (\tilde \gamma - 1)} \cdot \log (C_b ) \leq C_b. 
$
Thus, \eqref{eq:bound_BT_poly3} can be further simplified into 
\#\label{eq:bound_BT_poly5}  
\log N_{\infty} ( h, \epsilon^*, B_T)  \leq \overline  C_4 \cdot T^{2 \kappa_2 } \cdot  (\log T )^{ 2} + \overline  C_4 \cdot C_b  \cdot T^{2\kappa^* } \cdot  \bigl [ \log (TH)\bigl ]^{ 2 } , 
\#
Finally, combining \eqref{eq:bound_BT_poly1} and \eqref{eq:bound_BT_poly5}, the left-hand side of \eqref{eq:equation_B_T} is bounded by 
\$
\mathrm{LHS~of~\eqref{eq:equation_B_T}} & \leq \overline C_5 \cdot T^{2\kappa_1} \cdot \log T +  \overline C_5 \cdot T^{2\kappa_2 }  \cdot  (\log T )^{ 2}  + \overline C_5 \cdot C_b \cdot T^{ 2\kappa^* } \cdot  \bigl [ \log (TH)\bigl ]^{ 2 }  \\
& \leq  C_b^2 \cdot \big [ \log (TH)\bigr ] ^2 \cdot T^{2\kappa^*} = (B_T/ H)^2 ,
\$
where $\overline C_5$ is an absolute constant and the last inequality holds when $C_b$ is sufficiently large. 
Thus, we establish \eqref{eq:equation_B_T}.
Combining \eqref{eq:kucb_regret} in Theorem \ref{thm:main}, \eqref{eq:set_BT},  and \eqref{eq:bound_BT_poly1}, the regret of \KUCB~under this case is bounded by 
\$
\mathrm{Regret} (T) \leq C_{r, 3} \cdot H^2 \cdot T^{\kappa ^* + \xi^* + 1/2} \cdot \big [ \log (TH)\bigr ]^{3/2} = \tilde \cO \bigl ( H^2 \cdot T^{\kappa ^* + \xi^* + 1/2} \bigr ),
\$  
where $C_{r,3}$ is an absolute constant and $\tilde \cO(\cdot)$ omits the logarithmic factor. Thus, we establish the last inequality in \eqref{eq:kucb_regret}. 
Therefore, we conclude the proof of Corollary \ref{cor:main}. 
\end{proof}

\subsection{Proof of Corollary \ref{col:neural}}

\begin{proof}
By Theorem \ref{thm:neural}, we have 
\#\label{eq:apply_thm_neural}
\mathrm{Regret}(T) = 5 \beta H \cdot \sqrt{T\cdot \Gamma_{K_m} (T, \lambda) } 
+ 10 \beta TH \cdot \logt,
\#
where $\beta = B_T$ satisfies \eqref{eq:equation_B_T_nn} and  $\logt =T^{7/12} \cdot H^{1/6} \cdot      m^{-1/12} \cdot (\log m)^{1/4}.$
When Assumption \ref{assume:neural_decay} holds, 
thanks to the similarity between \eqref{eq:equation_B_T} and \eqref{eq:equation_B_T_nn}, 
it can be similarly shown that 
$B_T$ defined in \eqref{eq:set_BT}  satisfies the inequality in  \eqref{eq:equation_B_T_nn} when $C_b$ is sufficiently large. 
Moreover, Lemma \ref{lemma:effective_dim} provides upper bounds on $\Gamma_{K_m} (T, \lambda)$ for all the three eigenvalue decay conditions. 
Finally, combining \eqref{eq:set_BT}, \eqref{eq:apply_thm_neural}, and Lemma \ref{lemma:effective_dim}, we conclude the proof of Corollary \ref{col:neural}.
\end{proof}

\subsection{Examples of Kernels Satisfying Assumption \ref{assume:decay}} \label{sec:ntk_examples} 

In the following, we introduce concrete  kernels  and neural tangent kernels  that satisfy Assumption \ref{assume:decay}. 
We consider each eigenvalue decay condition separately. 

\vspace{5pt} 
{\noindent \bf Case (i): $\gamma$-Finite Spectrum.} 
Consider the polynomial kernel $K(z,z') = (1 + \la z, z' \ra  )^n$ defined on the unit ball $\{ z\in \RR^d \colon \| z \|_2 \leq 1\}$, where $n$ is a fixed number. 
By direct computation, the kernel function can be written as 
\$
K(z,z') = \sum_{\alpha \colon  \| \alpha\|_1 \leq n} z^\alpha\cdot {z'}^\alpha,		
\$
where $\alpha = (\alpha_1, \ldots, \alpha_d) \in \NN^d$ is a multi-index and  $z^\alpha$ is a monomial with degree $\alpha$. 
It can be shown that all monomials in $\RR^d$ with degree no more than $n$ are linearly independent. 
Thus, the  dimension of such an RKHS is  ${n+d\choose d}$; i.e., it satisfies 
the $\gamma$-finite spectrum condition with $\gamma = {n+d \choose d}$.

Furthermore, for a finite-dimensional  NTK, we consider the quadratic 
activation function $\act(u) = u^2$. 
Note that we assume $\cZ= \SSS^{d-1}$ for the neural  network setting.
Moreover, in \eqref{eq:two_layer_nn}, instead of  sampling $W_j \sim N(0, I_d/d)$ for all $j \in [d]$, we draw $W_j$ uniformly over the unit sphere $\SSS^{d-1}$. 
Then it holds that $|W_j^\top z| \leq 1$ for all $j \in [2m] $ and $z\in \SSS^{d-1}$. 
Here we let the distribution be  $ \textrm{Unif}(\SSS^{d-1})$ in order to  ensure that the $\act'$ is Lipschitz continuous  
on $\{ W_j^\top z \colon z \in \SSS^{d-1}\} \subseteq [-1, 1]$ for any $W_j$ sampled from the initial distribution, which is required when utilizing  Proposition C.1 in \cite{gao2019convergence} in the proof of Lemma \ref{lemma:ucb_nn1}. 
Note that the covariance of $W_j$ is still $I_d/ d$.  Then  by \eqref{eq:define_ntk}, the NTK is given by 
\#\label{eq:ntk_sphere_poly}
K_{\textrm{ntk} } (z, z' ) = \EE_{w \sim \textrm{Unif}(\SSS^{d-1})
} [ 2 ( w ^\top z) \cdot 2 ( w^\top z' ) \cdot (z^\top z') ] = 4/d \cdot (z^\top z') ^2 , \qquad \forall  z,z' \in \SSS^{d-1}.
 \#
 Thus, $K_{\textrm{ntk}} (z, z')$ can be written as a univariate function of the inner product $\la z, z'\ra$. 
To characterize the spectral property $K_{\textrm{ntk}} $, we first introduce  
some background on spherical harmonic functions on $\SSS^{d-1}$,
which are closely related to inner product kernels on $\SSS^{d-1}\times \SSS^{d-1}$.

Let $\mu$ be the uniform  measure on $\SSS^{d-1}$. 
For any $j \geq 0$, let $\cY_j(d)$ be the    set of all homogeneous harmonics of degree $j$  on $\SSS^{d-1}$, which is a 
finite-dimensional subspace of $\cL^2_\mu(\SSS^{d-1})$, the space of square-integrable functions on   $\SSS^{d-1}$ with
respect to $\mu$. 
 It can be shown that the dimensionality  of  $\cY_j(d) $ is given by  $N(d, j)$, 
 which is defined as 
 \#\label{eq:define_ndj_number}
 N(d, j ) = \frac{(2j+ d-2) (d+j -3) !}{j! (d-2)! }.
 \#
 In addition, let $\{ Y_{j,\ell}\}_{ \ell \in [ N(d, j)]} $ 
be an orthonormal basis of $\cY_j (d)$, then $\{ Y_{j, \ell} \}_{ \ell \in [N(d,j)],  j \in \NN}$ form an orthonormal basis of $\cL^2_{\mu} ( \SSS^{d-1})$.
In the next lemma, we present 
 the  Funk-Hecke formula \citep[page 30]{muller2012analysis}, which relates spherical harmonics to inner product kernels.

\begin{lemma} [Funk-Hecke formula]
	\label{lemma:funk} 
	 Let $k\colon [-1, 1]\rightarrow \RR$ be a continuous function, which 
	gives rise to an inner product kernel $K(z,z') = k( \la z, z'\ra)$ on 
	$\SSS^{d-1} \times \SSS^{d-1}$.
	For any $\ell \geq 2$, let $|\SSS^{\ell -1}|$ be the Lebesgue measure of $\SSS^{\ell -1}$, which is given by $| \SSS^{\ell -1}| = 2 \pi^{\ell /2} / \Gamma(\ell /2)$, where $\Gamma (\cdot)$ is the Gamma function. 
	Moreover, for any $j \geq 0$, 
 let $Y_j \colon  \SSS^{d-1} \rightarrow \RR$ be any function in $\cY_j (d) $. Then for any $z \in \SSS ^{d-1}$, we have 
\#
\int_{\SSS^{d-1}} K(z, z') Y_j (z') ~\ud \mu( z') =    \bigg[  \frac{| \SSS^{d-2} | }{| \SSS^{d-1} | } \cdot \int_{ -1} ^1 k( u ) \cdot P_j (u; d)  \cdot (1-u^2) ^{(d-3)/2}~\ud u \biggr ]\cdot Y_j (z)  , 
\#
where $P_j (\cdot; d)$ is the $j$-th Legendre polynomial in dimension $d$, which is given by 
$$
P_j(u ; d) = \frac{ (-1/2)^{j} \cdot \Gamma(\frac{ d-1}{2} ) } { \Gamma (\frac{2j+ d-1}{2}  )} \cdot (1- u^2)^{(3-d) / 2}  \cdot \biggl ( \frac{\ud }{\ud u} \bigg)^{j} \bigl [  ( 1-u^2) ^{j+ (d-3)/2 }\big]. 
$$
	\end{lemma}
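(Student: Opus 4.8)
The plan is to exploit the invariance of the integral operator attached to $K$ under the rotation group $SO(d)$, reduce the claim to the statement that this operator acts by a scalar on each space $\cY_j(d)$, and then pin down the scalar using the normalization $P_j(1;d)=1$ together with a one-dimensional slicing of $\SSS^{d-1}$. Concretely, I would first introduce $T_K \colon \cL^2_\mu(\SSS^{d-1}) \to \cL^2_\mu(\SSS^{d-1})$, $T_K f(z) = \int_{\SSS^{d-1}} K(z,z') f(z')\,\ud\mu(z')$, which is Hilbert--Schmidt since $k$ is continuous and $\SSS^{d-1}$ is compact. For $R \in SO(d)$ and $(\rho_R f)(z) = f(R^\top z)$, the identities $k(\la Rz, Rz'\ra) = k(\la z, z'\ra)$ and the rotation invariance of $\mu$ give $\rho_R T_K = T_K \rho_R$ for every $R$.

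Next I would use the orthogonal decomposition $\cL^2_\mu(\SSS^{d-1}) = \bigoplus_{j \ge 0} \cY_j(d)$ together with the classical facts that each $\cY_j(d)$ is an irreducible $SO(d)$-module and that distinct $\cY_j(d)$ are mutually non-isomorphic, so that each $\cY_j(d)$ is a full isotypic component. An $SO(d)$-equivariant bounded operator preserves isotypic components, hence $T_K(\cY_j(d)) \subseteq \cY_j(d)$, and Schur's lemma then forces $T_K\big|_{\cY_j(d)} = \lambda_j(d)\cdot\mathrm{Id}$ for some scalar $\lambda_j(d)$. In particular $\int_{\SSS^{d-1}} K(z,z')\, Y_j(z')\,\ud\mu(z') = \lambda_j(d)\, Y_j(z)$ for every $Y_j \in \cY_j(d)$, which is precisely the asserted eigen-relation; it remains only to identify $\lambda_j(d)$ with the displayed constant.

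To compute $\lambda_j(d)$, I would fix $z \in \SSS^{d-1}$ and apply the eigen-relation to the particular harmonic $Y_j(z') = P_j(\la z, z'\ra; d)$, using the standard fact that $z' \mapsto P_j(\la z, z'\ra; d)$ belongs to $\cY_j(d)$ (it is the zonal harmonic of degree $j$ with pole $z$). Evaluating both sides at $z' = z$ and invoking $P_j(1;d) = 1$ --- which follows from the Rodrigues-type formula in the statement by Leibniz's rule at $u=1$ --- gives $\lambda_j(d) = \int_{\SSS^{d-1}} k(\la z, z'\ra)\, P_j(\la z, z'\ra; d)\,\ud\mu(z')$. Finally I would slice $\SSS^{d-1}$ about the axis $z$: writing $z' = u z + \sqrt{1-u^2}\,\xi$ with $u \in [-1,1]$ and $\xi$ on the unit sphere of $z^\perp$, one has $\la z, z'\ra = u$ and the disintegration $\int_{\SSS^{d-1}} g\,\ud\mu = \frac{|\SSS^{d-2}|}{|\SSS^{d-1}|}\int_{-1}^1 \bigl(\int_{\SSS^{d-2}} g(uz + \sqrt{1-u^2}\,\xi)\,\ud\bar\mu(\xi)\bigr)(1-u^2)^{(d-3)/2}\,\ud u$ for $\mu$ normalized to unit mass (the convention under which the lemma holds, as seen by taking $g \equiv 1$ and using $\int_{-1}^1 (1-u^2)^{(d-3)/2}\,\ud u = |\SSS^{d-1}|/|\SSS^{d-2}|$, the Beta integral $B(\tfrac12,\tfrac{d-1}{2})$ rewritten via $|\SSS^{\ell-1}| = 2\pi^{\ell/2}/\Gamma(\ell/2)$). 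Since $k(u) P_j(u;d)$ depends on $z'$ only through $u$, the inner $\SSS^{d-2}$-integral is trivial and $\lambda_j(d) = \frac{|\SSS^{d-2}|}{|\SSS^{d-1}|}\int_{-1}^1 k(u)\, P_j(u;d)\,(1-u^2)^{(d-3)/2}\,\ud u$, exactly the claimed constant.

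The only substantive inputs are the representation-theoretic facts about the $\cY_j(d)$ (irreducibility and multiplicity one) and the identity $P_j(1;d)=1$; both are classical, see \citet{muller2012analysis}. The step I expect to be the most error-prone is the elementary but fiddly bookkeeping with the normalizing constants --- matching $|\SSS^{d-2}|/|\SSS^{d-1}|$ against the Beta and Gamma factors --- rather than anything conceptual. If one wishes to avoid representation theory altogether, an alternative is to note that a continuous rotation-invariant kernel on $\SSS^{d-1}\times\SSS^{d-1}$ is a uniform limit of polynomials in $\la z, z'\ra$, reduce to the case $k(u) = u^n$, and conclude from the orthogonality of the family $\{P_j(\cdot;d)\}_j$ against the weight $(1-u^2)^{(d-3)/2}$ on $[-1,1]$.
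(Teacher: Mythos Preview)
Your proposal is correct and follows the standard representation-theoretic route to Funk--Hecke: rotation invariance of $T_K$, irreducibility and multiplicity-one of the $\cY_j(d)$ to invoke Schur's lemma, and then identification of the eigenvalue via the zonal harmonic and the slicing formula. The paper, however, does not prove this lemma at all; it merely states it with an attribution to \citet[page~30]{muller2012analysis} and uses it as a black box in the NTK eigenvalue computations. So there is nothing to compare against --- your write-up supplies strictly more than the paper does.

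One small wording slip worth fixing: when you say ``evaluating both sides at $z'=z$,'' the variable $z'$ has already been integrated out on the left. What you actually do is take the zonal harmonic with pole $z$, i.e.\ $Y_j(\cdot)=P_j(\la z,\cdot\ra;d)$, and then evaluate the resulting identity $(T_K Y_j)(w)=\lambda_j(d)\,Y_j(w)$ at $w=z$, so that the right-hand side becomes $\lambda_j(d)\,P_j(1;d)=\lambda_j(d)$. Your check that the displayed constant is consistent only with $\mu$ being the \emph{normalized} uniform measure is also a useful observation, since the paper leaves this implicit.
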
 
 
Thus, by the Funk-Hecke formula, for any inner product kernel $K$, its integral operator $T_{K}\colon \cL^2_\mu(\SSS^{d-1} ) \rightarrow \cL^2_{\mu} (\SSS^{d-1} )$ has eigenvalues  
\#\label{eq:eigenvalue_inner_prod_ker} 
\varrho_j = \frac{| \SSS^{d-2} | }{| \SSS^{d-1} | } \cdot \int_{ -1} ^1 k(u) \cdot P_j (u; d)  \cdot (1-u^2) ^{(d-3)/2}~\ud u, \qquad \forall j \geq 1, 
\#
each with multiplicity $N(d, j)$. Moreover, for each eigenvalue $\varrho_j$, the corresponding eigenfunctions are spherical harmonics $\{ Y_{j,\ell}\}_{\ell \in [N(d, j)]}$. Furthermore, to compute the eigenvalues in \eqref{eq:eigenvalue_inner_prod_ker}, 
we can use Rodrigues' rule \citep[page 23]{muller2012analysis}, as follows. 

\begin{lemma}[Rodrigues' Rule] \label{lemma:rodrigues} 
	For any $j \geq 0$, 
	let $f \colon [-1, 1]\rightarrow \RR$ be any  $j$-th  continuously differentiable function. 
	Then we have 
	\$
	\int_{-1}^1 f(t) \cdot P_j (u; d) \cdot (1 - u^2 )^{(d-3) /2 } ~\ud u = R_j(d) \cdot \int_{-1}^1 f^{(j)} (u) \cdot (1 - u^2 ) ^{(2j + d- 3) / 2 } ~\ud t,
	\$
	where $f^{(j)}$ is the $j$-th order derivative of $f$ and $R_j(d) = 2^{-j} \cdot \Gamma( (d-1) / 2)  \cdot [  \Gamma( ( 2j + d -1) / 2) ] ^{-1}$ is the $j$-th Rodrigues constant. 
	\end{lemma}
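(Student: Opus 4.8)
The plan is to derive Rodrigues' rule directly from the explicit Rodrigues-type representation of $P_j(\cdot;d)$ that is already recorded in the statement of Lemma~\ref{lemma:funk}, followed by $j$ successive integrations by parts. First I would substitute
\[
P_j(u;d) = \frac{(-1/2)^j\,\Gamma\big(\tfrac{d-1}{2}\big)}{\Gamma\big(\tfrac{2j+d-1}{2}\big)}\,(1-u^2)^{(3-d)/2}\Big(\tfrac{\ud}{\ud u}\Big)^{j}\big[(1-u^2)^{j+(d-3)/2}\big]
\]
into the left-hand side of the claimed identity. The weight factor $(1-u^2)^{(d-3)/2}$ appearing in the integrand cancels exactly against the $(1-u^2)^{(3-d)/2}$ coming from $P_j$, reducing the left-hand side to
\[
\frac{(-1/2)^j\,\Gamma\big(\tfrac{d-1}{2}\big)}{\Gamma\big(\tfrac{2j+d-1}{2}\big)}\int_{-1}^{1} f(u)\,\Big(\tfrac{\ud}{\ud u}\Big)^{j}\big[(1-u^2)^{j+(d-3)/2}\big]\,\ud u .
\]

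Next I would move all $j$ derivatives off of $g(u):=(1-u^2)^{j+(d-3)/2}$ and onto $f$ by integrating by parts $j$ times; since $f\in C^j([-1,1])$ by hypothesis, each step is legitimate once the boundary terms are shown to vanish. The key point is that for $k=0,1,\dots,j-1$ the function $g^{(k)}$ is a polynomial times $(1-u^2)^{j+(d-3)/2-k}$ with exponent $j+(d-3)/2-k\ge 1+(d-3)/2=(d-1)/2>0$ for $d\ge 2$, so $g^{(k)}(\pm1)=0$ and no boundary contribution survives (one should phrase the integration by parts as a limit of integrals over $[-1+\epsilon,1-\epsilon]$ to handle the non-integer exponents cleanly, checking that all arising integrands are integrable near $\pm1$). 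This yields
\[
\int_{-1}^{1} f(u)\,g^{(j)}(u)\,\ud u=(-1)^{j}\int_{-1}^{1} f^{(j)}(u)\,(1-u^2)^{j+(d-3)/2}\,\ud u .
\]

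Finally I would collect constants: $(-1/2)^j\cdot(-1)^j=2^{-j}$ and $j+(d-3)/2=(2j+d-3)/2$, which identifies the overall prefactor as $R_j(d)=2^{-j}\,\Gamma\big(\tfrac{d-1}{2}\big)\big/\Gamma\big(\tfrac{2j+d-1}{2}\big)$ and the remaining integral as $\int_{-1}^1 f^{(j)}(u)(1-u^2)^{(2j+d-3)/2}\,\ud u$, which is precisely the right-hand side of the lemma. The only genuine subtlety — and the step I expect to require the most care — is the vanishing of the boundary terms, which requires $d\ge 2$; this holds throughout, since we work on $\SSS^{d-1}$ with $d\ge 2$. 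Everything else is bookkeeping with Gamma functions and iterated integration by parts, so I would keep that part terse. (An alternative induction on $j$ using a single integration by parts plus the recursion for $P_j$ is possible, but the direct substitution above is cleaner here because the Rodrigues representation of $P_j$ is already in hand.)
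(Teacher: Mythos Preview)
Your argument is correct: substituting the Rodrigues representation of $P_j(\cdot;d)$ from Lemma~\ref{lemma:funk}, cancelling the weight, and integrating by parts $j$ times with the boundary terms vanishing for $d\ge 2$ is exactly the standard derivation, and your bookkeeping of the constants and exponents checks out. Note, however, that the paper does not actually prove Lemma~\ref{lemma:rodrigues}; it merely states it and attributes it to \cite[page 23]{muller2012analysis}, so there is no in-paper proof to compare your proposal against.
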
 

Now we consider the NTK given in \eqref{eq:ntk_sphere_poly}, which is the  inner product kernel induced by the univariate function $k_1(u) = 4/d \cdot u^2$. 
Note that $k_1^{(3)} $ is a zero function. 
Combining Lemma \ref{lemma:rodrigues} and \eqref{eq:eigenvalue_inner_prod_ker}, we observe that 
$\varrho_j = 0$ for all $j \geq 3$.  
In addition, by direct computation, we have that 
\$
\varrho_1 = R_1 (d) \cdot (8 /d ) \cdot \int_{-1}^1 u 
\cdot (1 - u^2 ) ^{( d -1 ) / 2 } ~\ud u = 0,  
\$
and $\varrho_0 , \varrho_2 > 0$. 
Thus, $K_{\textrm{ntk}}$ given in \eqref{eq:ntk_sphere_poly} has $N(d, 0) + N(d, 2) = d( d+1) / 2$ nonzero eigenvalues, each with value $\varrho_2$. 
This implies that the NTK induced by   neural networks with quadratic activation 
satisfies the $\gamma$-finite spectrum condition with 
$
\gamma = d( d+1) / 2.
$
For such a class of neural networks, Corollary \ref{col:neural} asserts that 
the regret of 
\NUCB~is 
$ \tilde \cO ( H^2  d^3 \cdot \sqrt{T}    +  \beta TH \cdot \logt   )    .
$

\vspace{5pt} 

{\noindent \bf Case (ii): $\gamma$-exponential Decay.} 
Now we consider the squared exponential kernel 
\# \label{eq:rbf_kernel}
K(z,z') = \exp( - \| z - z' \|_2^2 \cdot \sigma^{-2})   = k_2(\la z, z' \ra ),  \qquad \forall z, z' \in \SSS^{d-1}, 
\#
where $\sigma> 0$ is an absolute constant and we define $k_2 (u) = \exp[ - 2 \sigma^{-2} \cdot (1 - u)]$.
Note  that $d$ is regarded as a fixed number. 
Applying Lemmas \ref{lemma:funk} and \ref{lemma:rodrigues}, we obtain the following lemma that bounds the eigenvalues of $T_K$.

\begin{lemma} [Theorem 2 in \cite{minh2006mercer}]\label{lemma:rbf_eigen} 
	For the squared quadratic kernel in \eqref{eq:rbf_kernel}, 
	the corresponding integral operator has eigenvalues $\{ \rho_j\}_{j  \geq 0}$, where each $\rho_j$ is defined in \eqref{eq:eigenvalue_inner_prod_ker} with $k$ replaced by  $k_2$. 
	Moreover, each $\varrho_j$ has multiplicity $N(d, j)$ and the corresponding eigenfunctions are $\{ Y_{j, \ell}\}_{\ell \in [ N(d, j)]}$. 
	Finally,  when  $\sigma$ in \eqref{eq:rbf_kernel} satisfy 
 $\sigma^2 \geq  2 /d $, $\{ \varrho_{j }\}_{ j \geq 0}$ form a decreasing sequence that satisfy 
 \#\label{eq:rbf_eigen_bound}
A_1 \cdot  (   2e / \sigma^2  )^{j} \cdot  (2j + d-2)^{- (2j +d-1) / 2} <  \varrho_ j < A_2 \cdot  (   2e / \sigma^2  )^{j} \cdot  (2j + d-2)^{- (2j +d-1) / 2} 
  \#
  for all $j \geq 0$, 
  where $A_1, A_2$ are absolute constants that only depend on $d$ and $\sigma$. 
	\end{lemma}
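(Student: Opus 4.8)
The plan is to prove \eqref{eq:rbf_eigen_bound} from scratch using only Lemmas \ref{lemma:funk} and \ref{lemma:rodrigues}, essentially reconstructing the argument of \cite{minh2006mercer}. Writing $u = \la z, z'\ra$ and using $\|z-z'\|_2^2 = 2 - 2u$ for $z,z'\in\SSS^{d-1}$, the kernel in \eqref{eq:rbf_kernel} is the inner product kernel generated by $k_2(u) = \exp[-2\sigma^{-2}(1-u)]$, so by \eqref{eq:eigenvalue_inner_prod_ker} its $j$-th eigenvalue is
\[
\varrho_j = \frac{|\SSS^{d-2}|}{|\SSS^{d-1}|}\int_{-1}^1 k_2(u)\, P_j(u;d)\,(1-u^2)^{(d-3)/2}\,\ud u .
\]
The first step is to apply Rodrigues' rule (Lemma \ref{lemma:rodrigues}) with $f = k_2$. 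Since $k_2$ is an exponential, $k_2^{(j)}(u) = (2\sigma^{-2})^j k_2(u)$, hence
\[
\varrho_j = \frac{|\SSS^{d-2}|}{|\SSS^{d-1}|}\cdot R_j(d)\cdot(2\sigma^{-2})^j\int_{-1}^1 \exp[-2\sigma^{-2}(1-u)]\,(1-u^2)^{(2j+d-3)/2}\,\ud u ,
\]
with $R_j(d) = 2^{-j}\Gamma(\tfrac{d-1}{2})/\Gamma(\tfrac{2j+d-1}{2})$.

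The second step is to control the residual integral. On $[-1,1]$ the factor $\exp[-2\sigma^{-2}(1-u)]$ lies in $[e^{-4/\sigma^2}, 1]$, so up to these $\sigma$-dependent constants the integral equals $\int_{-1}^1 (1-u^2)^{(2j+d-3)/2}\,\ud u = B(\tfrac12, \tfrac{2j+d-1}{2}) = \sqrt{\pi}\,\Gamma(\tfrac{2j+d-1}{2})/\Gamma(\tfrac{2j+d}{2})$. Substituting and cancelling the $\Gamma(\tfrac{2j+d-1}{2})$ factor against the one inside $R_j(d)$, and using $2^{-j}(2\sigma^{-2})^j = \sigma^{-2j}$, gives
\[
\varrho_j = \Theta_{d,\sigma}(1)\cdot \sigma^{-2j}\big/\Gamma\big(\tfrac{2j+d}{2}\big),
\]
where the two-sided implied constants depend only on $d,\sigma$ (through $|\SSS^{d-2}|/|\SSS^{d-1}|$, $\Gamma(\tfrac{d-1}{2})$, $\sqrt\pi$ and $e^{-4/\sigma^2}$). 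Monotonicity of $\{\varrho_j\}$ follows at this stage from the integral representation: for $j\ge 1$ the ratio $\varrho_{j+1}/\varrho_j$ equals $\sigma^{-2}(j+\tfrac{d-1}{2})^{-1}$ times the ratio of two residual integrals, which is at most $1$ because the integrand only shrinks; under $\sigma^2 \ge 2/d$ this product is at most $d/(d+1) < 1$ (a short direct check handles $j=0$).

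The last step is the Stirling bookkeeping. Applying the standard two-sided bound $\sqrt{2\pi}\,x^{x-1/2}e^{-x}\le\Gamma(x)\le C\,x^{x-1/2}e^{-x}$ (with $C$ absolute, for $x\ge 1$) to $x = j + d/2$ yields $1/\Gamma(\tfrac{2j+d}{2}) = \Theta_d(1)\cdot e^{j}(j+\tfrac d2)^{-(2j+d-1)/2}$, using $(2j+d-1)/2 = j + \tfrac d2 - \tfrac12$. Finally, since $2j+d-2 = 2(j+\tfrac d2 - 1)$ and $\big(1 - (j+\tfrac d2)^{-1}\big)^{(2j+d-1)/2}$ is bounded above and below by constants depending only on $d$, one gets $(j+\tfrac d2)^{-(2j+d-1)/2} = \Theta_d(1)\cdot 2^{j}(2j+d-2)^{-(2j+d-1)/2}$. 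Combining the three displays, $\varrho_j = \Theta_{d,\sigma}(1)\cdot (2e/\sigma^2)^j (2j+d-2)^{-(2j+d-1)/2}$, which is \eqref{eq:rbf_eigen_bound} with $A_1, A_2$ absorbing all $d,\sigma$-dependent constants. I expect the main obstacle to be precisely this last step: matching the base $2j+d-2$ and the exponent $(2j+d-1)/2$ exactly as written in \eqref{eq:rbf_eigen_bound}, and verifying that each quantity discarded along the way — the exponential factor, the Stirling error term, and the base-shift ratio — is sandwiched between positive constants depending only on $d$ and $\sigma$, uniformly over all $j \ge 0$.
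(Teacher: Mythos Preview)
The paper does not actually prove this lemma: it is stated as Theorem~2 of \cite{minh2006mercer} and used as a black box, with no argument given in the paper itself. Your proposal therefore goes beyond what the paper does by reconstructing the proof from the Funk--Hecke formula and Rodrigues' rule, and the reconstruction is correct in all essentials. The chain ``Rodrigues $\Rightarrow$ closed-form $\varrho_j$ up to a bounded residual integral $\Rightarrow$ Beta-function evaluation $\Rightarrow$ Stirling on $\Gamma(j+d/2)$ $\Rightarrow$ base/exponent matching to $(2j+d-2)^{-(2j+d-1)/2}$'' is exactly the standard route (and, unsurprisingly, the one in \cite{minh2006mercer}); each ratio you discard is indeed sandwiched by positive constants depending only on $d,\sigma$, uniformly in $j$. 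Two small caveats worth noting explicitly: (i) the monotonicity argument at $j=0$ does not follow from your displayed ratio bound alone, since $\sigma^{-2}/((d-1)/2)\le d/(d-1)>1$ under $\sigma^2\ge 2/d$; you genuinely need the strict inequality $I_1/I_0<1$ quantitatively, so the ``short direct check'' should be made explicit; (ii) the target expression $(2j+d-2)^{-(2j+d-1)/2}$ degenerates at $j=0$ when $d=2$, so the statement (and hence your argument) tacitly requires $d\ge 3$, which is consistent with how the paper uses the lemma.
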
 

The $\ell_\infty$-norm of each  eigenfunction $ Y_{j, \ell}$ is given by the following lemma. 

\begin{lemma} [Lemma 3 in \cite{minh2006mercer}] \label{lemma:spherical_harm_sup}
	For any $d \geq 2$, $j \geq 0$, and any $\ell \in [ N(d, j)]$, 
	we have 
	\$
	\| Y_{j, \ell} \|_{\infty} = \sup_{z \in \SSS^{d-1} } | Y_{j, \ell} (z) | \leq \sqrt{ N(d, j) / | \SSS^{d-1} | } .
	\$  
	\end{lemma}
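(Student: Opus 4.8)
The plan is to derive the bound from the classical \emph{addition theorem} for spherical harmonics, reformulated in terms of the reproducing kernel of the finite‑dimensional space $\cY_j(d)$ sitting inside $\cL^2_\mu(\SSS^{d-1})$. The key observation is that the quantity one wants to control, $\sum_{\ell} Y_{j,\ell}(z)^2$, is exactly the diagonal of this reproducing kernel, and the diagonal of such a kernel both controls pointwise evaluation and, by rotational symmetry, is a constant that is easy to compute.

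Concretely, I would proceed in three short steps. First, set $Z_j(z,z') = \sum_{\ell=1}^{N(d,j)} Y_{j,\ell}(z)\,Y_{j,\ell}(z')$ for $z,z' \in \SSS^{d-1}$; this is the reproducing kernel of $\cY_j(d)$ as a subspace of $\cL^2_\mu(\SSS^{d-1})$ and is therefore independent of the particular orthonormal basis $\{Y_{j,\ell}\}$. Second, observe that $O(d)$ acts transitively on $\SSS^{d-1}$ and leaves both $\mu$ and the space $\cY_j(d)$ invariant (composing a degree‑$j$ harmonic with a rotation is again a degree‑$j$ harmonic), hence $Z_j(Rz,Rz')=Z_j(z,z')$ for all $R\in O(d)$; in particular the diagonal $z\mapsto Z_j(z,z)$ is a constant $c_j$. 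Integrating the diagonal against $\mu$ and using orthonormality gives $c_j\cdot|\SSS^{d-1}| = \int_{\SSS^{d-1}} Z_j(z,z)\,\ud\mu(z) = \sum_{\ell}\|Y_{j,\ell}\|_{\cL^2_\mu}^2 = N(d,j)$, so $c_j = N(d,j)/|\SSS^{d-1}|$. (Equivalently, this is the addition theorem $Z_j(z,z') = \frac{N(d,j)}{|\SSS^{d-1}|}P_j(\la z,z'\ra;d)$ — the dual of the Funk–Hecke statement in Lemma~\ref{lemma:funk} — evaluated on the diagonal, using $P_j(1;d)=1$.) Third, conclude: for every $z\in\SSS^{d-1}$,
\[
\sum_{\ell=1}^{N(d,j)} Y_{j,\ell}(z)^2 \;=\; Z_j(z,z)\;=\;\frac{N(d,j)}{|\SSS^{d-1}|},
\]
and since every summand on the left is nonnegative, each term individually satisfies $Y_{j,\ell}(z)^2 \le N(d,j)/|\SSS^{d-1}|$; taking the supremum over $z$ and then the square root yields $\|Y_{j,\ell}\|_\infty \le \sqrt{N(d,j)/|\SSS^{d-1}|}$.

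There is no serious obstacle here once the statement is cast in the reproducing‑kernel language: the whole argument rests on two mild facts — that $\sum_\ell Y_{j,\ell}(z)Y_{j,\ell}(z')$ is basis‑independent (a generic RKHS fact) and that its diagonal is constant in $z$ (transitivity of the rotation group on the sphere together with rotation‑invariance of $\cY_j(d)$). The only point that deserves care is making the basis‑independence precise, since the lemma is stated for "any" $\ell$ relative to the fixed orthonormal basis. If one prefers not to invoke the full addition theorem, the self‑contained route above using only $O(d)$‑transitivity suffices; alternatively, since the paper is already quoting the companion Funk–Hecke and Rodrigues results from the same source, one may simply cite the addition theorem (e.g., Müller, \emph{Analysis of Spherical Symmetries}) with its explicit constant and give only the two‑line deduction in the displayed equation above.
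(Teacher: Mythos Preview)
Your argument is correct and is in fact the standard proof of this bound via the addition theorem for spherical harmonics: the diagonal of the zonal kernel $Z_j(z,z')=\sum_\ell Y_{j,\ell}(z)Y_{j,\ell}(z')$ is constant by transitivity of $O(d)$ on $\SSS^{d-1}$, integrating fixes that constant as $N(d,j)/|\SSS^{d-1}|$, and dropping all but one nonnegative summand gives the pointwise bound.

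As for the comparison: the paper does not supply its own proof of this lemma. It is stated purely as a citation (Lemma~3 in \cite{minh2006mercer}) and used as a black box in the subsequent eigenvalue--eigenfunction estimates. So your write-up is strictly more informative than what appears in the paper; it also dovetails nicely with the Funk--Hecke formula (Lemma~\ref{lemma:funk}) already quoted there, since the addition theorem is its dual statement. If you want to match the paper's level of detail you could simply cite the result, but there is no harm---and some expository value---in keeping your three-line derivation.
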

Now, let $\tau  > 0$ be a sufficiently small constant. 
Combining Lemmas \ref{lemma:rbf_eigen} and \ref{lemma:spherical_harm_sup}, we have 
\#\label{eq:eigen_supnorm_rbg}
\varrho_j ^\tau \cdot 	\| Y_{j, \ell} \|_{\infty}  \leq C \cdot \Bigl ( \frac{2e }{\sigma^2\cdot  ( 2j + d-2)} \Bigr ) ^{-j \cdot \tau } \cdot \sqrt {N(d, j) \cdot (2j+d - 2) ^{-(d-1)\cdot \tau} }, 
\#
where $C$ is a constant depending on $d$ and $\sigma$. 
By the definition of $N(d, j) $ in \eqref{eq:define_ndj_number}, 
when $j$ is sufficiently large, it holds that 
\#\label{eq:asymp_ndj}
  N(d, j)  \asymp \frac{ (2j+d-2) \cdot \sqrt{ d+j-3} \cdot [ (d+j -3) / e]^{d + j -3} }{ \sqrt{j } \cdot ( j / e) ^j }   \asymp j^{d-2},
\#
where we utilize the Stirling's formula and neglect constants involving $d$.
Then, combining \eqref{eq:eigen_supnorm_rbg} and \eqref{eq:asymp_ndj}, we have 
\#\label{eq:rbf_supnorm_eigen}
\sup_{j \geq 0} \sup_{ \ell \in [N(d, j) ]} \varrho_j ^\tau \cdot 	\| Y_{j, \ell} \|_{\infty}  \leq  C _{\varrho},
\#
for some absolute constant $C_\varrho > 0$.
Renaming the eigenvalues and eigenvectors as $\{ \sigma_j, \psi_j \}_{j \geq 1}$ in the descending order of the eigenvalues,  \eqref{eq:rbf_supnorm_eigen} equivalently states that $\sup_{j\geq 1} \sigma^{\tau}_j \cdot \| \psi_j \|_{\infty} \leq C_\varrho. $

Furthermore, to show that the squared exponential kernel satisfy the $\gamma$-exponential decay condition, 
we notice that 
\#\label{eq:order_eigenvalues_rbf}
\sigma_j = \varrho_{t } \qquad \textrm{for}~~ \sum_{ i = 1}^{t-1} N(d, i ) \leq j < \sum _{ i=1}^t N(d, i).
\#
Then  by \eqref{eq:asymp_ndj}, this implies that $\sigma_j \asymp \rho_t $ for $ (t-1) ^{d-1} \leq j \leq t^{d-1}$
when $j$ is sufficiently large. 
Thus, by Lemma \ref{lemma:rbf_eigen} we further obtain that 
\$
\sigma_j  & \asymp ( 2e / \sigma^2 ) ^{j ^{ \frac{1}{d-1} }} \cdot ( 2 j ^{ \frac{1}{d-1} }  + d - 2) ^{- j^{ \frac{1}{d-1} } - (d-1)/2}   \asymp 
 \exp \bigl ( c_1  \cdot j  ^{ \frac{1}{d-1} }
\bigr ) \cdot \exp \bigl ( c_2 -  j^{ \frac{1}{ d-1}  }  \cdot \log j  \big )  \leq \exp( - c \cdot j^{ 1/ d}) ,
\$
where $c$, $c_1$, and $ c_2$ are constants depending on $d$. Therefore, we have shown that the squared exponential kernel satisfies the $\gamma$-exponential decay condition with $\gamma = 1/d$. Combining this with \eqref{eq:rbf_supnorm_eigen}, we conclude that it satisfies Assumption \ref{assume:decay}.

In the sequel, we construct an NTK that satisfies Assumption \ref{assume:decay}.  Specifically, 
we adopt the sine activation function and  slightly modify the neural network in \eqref{eq:two_layer_nn} by employing an intercept for each neuron. That is, 
\$
f(z; b, W, \theta  ) = \frac{1}{\sqrt{ m}} \sum_{j = 1}^{ m} b_j \cdot  \sin ( W_j^\top z + \theta_ j ).
\$
To initialize the network weights $(b, W, \theta )$, we set $b_j = - b_{j-m}$, $W_j = W_{j-m}$, and $\theta_j = \theta_{j - m}$
 for any $j \in \{ m +1, \ldots, 2m\}. $ 
 For any $j\in[m]$, we independently sample  $b_j \sim \textrm{Unif} ( \{ -1, 1\})$, $W_j \sim N(0, I_d)$, and $\theta_{j} \sim \textrm{Unif}( [0, 2\pi])$. Only $W$ is updated during training. 
 
 For such a neural network, the corresponding NTK is given by 
 \#\label{eq:rbf_ntk}
 K_{\textrm{ntk}} (z,z') &  = 2 \EE \bigl [ (z^\top z' )\cdot \cos ( w^\top z + \theta ) \cdot \cos( w^\top z' + \theta ) \bigr ]  \notag \\
 & = (z^\top z' )\cdot \exp( - \| z - z' \|_2 ^2 / 2)  =    (z^\top z' ) \cdot \exp [  (z^\top z') - 1 ]  = k_3 (\la z, z' \ra ),
 \#
 where we define $k_3(u) = u\cdot \exp( u-1)$. Here the second equality follows from \cite{rahimi2008random}. 
 By construction, such an NTK is closely related to the squared quadratic kernel in \eqref{eq:rbf_kernel}. 
To see that it satisfy the $\gamma$-exponential decay condition, let $\{ \varrho_j\}_{j\geq 0}$ and $\{ \tilde \varrho_j\}_{j\geq 0}$ denote the eigenvalues of the NTK in \eqref{eq:rbf_ntk} and the inner product kernel induced by $k_2(u ) = \exp( u-1)$, respectively. 
By Lemma \ref{lemma:funk}, we have 
 \#\label{eq:ntk_rbf_eigen}
 \rho_ j & =  C _1 \cdot \int_{ -1} ^1 k_3 (u) \cdot P_j (u; d)  \cdot (1-u^2) ^{(d-3)/2}~\ud u =  C_1 \cdot \int_{ -1} ^1 k_2  (u) \cdot u \cdot P_j (u; d)  \cdot (1-u^2) ^{(d-3)/2}~\ud u \notag \\
 & =  C _2 \cdot j / ( 2j+d-2) \cdot \tilde \varrho_{j-1} + C_2 \cdot  ( j + d-2) / (2 j+d - 2) \cdot \tilde \varrho_{j+1} \leq C_2 ( \tilde \rho_{j-1} + \tilde \rho_{j+1})  , 
 \#
 where $C_1$ and $C_2$ are constants and in the second equality, we utilize the following recurrence relation of Legendre polynomials:
 $$
 u \cdot P_j(u; d) = j / ( 2j+d-2) \cdot P_{j-1} (u; d) + ( j + d-2) / (2 j+d - 2) \cdot P_{j+1} (u; d).
 $$
 Notice that $\{ \tilde \varrho_j\}_{j\geq 0}$ satisfy \eqref{eq:rbf_eigen_bound}. 
 Thus, combining \eqref{eq:rbf_eigen_bound} and \eqref{eq:ntk_rbf_eigen}, we obtain \eqref{eq:rbf_supnorm_eigen}. 
 Moreover, when ordering all the eigenvalues of $K_{\textrm{ntk}}$ in the descending order and renaming them as $\{ \sigma_j \}_{j\geq 1}$, similar to \eqref{eq:order_eigenvalues_rbf}, 
 we have 
 \# \label{eq:order_eigenvalues_ntk_rbf}
\sigma_j \leq C_2 \cdot (\tilde \rho_{t-1} + \tilde \rho_{t+1} ) \qquad  \textrm{for}~~ \sum_{ i = 1}^{t-1} N(d, i ) \leq j < \sum _{i=1}^t N(d, i).
\#
 Using a similar analysis, we can show that $\{ \sigma_j\}_{j\geq1 }$ satisfy the $\gamma$-exponential eigenvalue decay condition with $\gamma = 1/ d$. Therefore, we have shown that the NTK given in \eqref{eq:rbf_ntk} satisfy Assumption~\ref{assume:decay}. 
 
\vspace{5pt} 
{\noindent \bf Case (iii): $\gamma$-Polynomial Decay.} 
Finally, for the last case, 
it is stated in \cite{srinivas2009gaussian} that the Mat\'ern  kernel on $[0, 1]^d$ with parameter $\nu > 0$ satisfies the 
$\gamma$-polynomial decay condition with $\gamma = 1 + 2\nu /d$ . 
Moreover, the eigenfunctions are sinusoidal functions and thus are bounded. 
Hence, the Mat\'ern kernel satisfies the last case of  Assumption \ref{assume:decay}
with $\gamma = 1 + 2\nu /d$ and $\tau = 0$. 
As a result, in this case,  \eqref{eq:bound_BT_kappa} reduces to  
\$
\kappa_1 = \xi^* = \frac{d(d+1) }{2[ 2\nu + d(d+1) ] }, \qquad \kappa_2 =\frac{d(d+1) + \nu}{(d+1+ 2\nu/d) \nu} , \qquad \kappa_3 = \frac{d}{\nu-d}.
\$
We further assume   $\nu$ is sufficiently large such that $\Gamma_K (T, \lambda ) \cdot \sqrt{T} = o(T)$,
which implies that $\kappa_1 < 1/4$ and that $2 \nu > d(d+1)$.  
In this case, we have 
\$
\kappa_2 = \frac{1 + d(d+1) / \nu}{ d+1+ 2\nu /d} < \frac{3}{d+1}, \qquad \kappa_2 < \frac{2}{d-1}, \qquad \kappa^* = \max \Bigl \{ \xi^*, \frac{ 3}{d+1}, \frac{2}{d-1} \Big \}.
\$
Thus,   \eqref{eq:kucb_regret} reduces to 
an $\tilde \cO(H^2 \cdot T^{\kappa^*+\xi^* + 1/2})$ regret.

Finally, we construct an NTK that satisfies Assumption \ref{assume:decay}. Similar to the NTK  given in \eqref{eq:ntk_sphere_poly}, 
with $\cZ= \SSS^{d-1}$,  for the   neural network in \eqref{eq:two_layer_nn}, we let  the  activation function be  $\act(u) = (u)_{+}^ {s+1}$ and set $W_j \overset{\text{i.i.d.}}{\sim} \textrm{Unif}(\SSS^{d-1})$. 
Here    $s $ is a positive integer and $(u)_{+} = u \cdot \ind\{ u \geq 0\}$ is the ReLU activation function.
By direct computation, the induced NTK is 
\#\label{eq:ntk_relu}
K_{\textrm{ntk}} (z, z' ) = K_s (z, z')\cdot (z^\top z' ), \qquad K_s(z,z') = (s+1)^2 \cdot  \EE _{w\sim \textrm{Unif} (\SSS^{d-1} ) }  \bigl  [ ( w^\top z )_{+}^s  ( w^\top z'  )_{+}^s \bigr ] .
\#
Utilizing the  rotational invariance of $\textrm{Unif} (\SSS^{d-1} ) $, it can be shown that  $K_{\textrm{ntk}}  $  and $K_s$ in \eqref{eq:ntk_relu} are both  inner-product kernels; i.e., there exist  univariate functions $k_4, k_5 \colon [-1, 1] \rightarrow \RR$  such that 
\$
K_{\textrm{ntk}} (z, z' ) = k_4 (\la z, z' \ra), \qquad K_s(z,z') =    k_5 (\la z, z' \ra),  \qquad k_4 (u) = u\cdot k_5(u).
\$
By Lemma \ref{lemma:funk}, for any $j \geq 0$.  $K_{\textrm{ntk}}$ and $K_s$ both have spherical  harmonics $\cY_j(d)$ as their eigenvectors. Let the corresponding eigenvalues be $\varrho_j$ and $\tilde \varrho_j$,  respectively. 
Similar to \eqref{eq:ntk_rbf_eigen}, we have 
$
\varrho_j \leq C_2 ( \tilde \varrho_{j-1} + \tilde \rho_{j+1} ), 
$
where $C_2$ is an absolute constant depending on $d$. 
Furthermore, as shown in Appendix \S D in \cite{bach2017breaking}, when $j$ is sufficiently large, it holds that $\tilde \varrho_j \asymp j^{-( d+2s)  }
$. 
This further implies that $\varrho_j = \cO(j^{-( d+2s) } )$ where $\cO(\cdot )$ hides constants depending on $d$. 
Thus, for  $\tau = (d-2) / ( 2d+4s)$, combining Lemma \ref{lemma:spherical_harm_sup} and \eqref{eq:asymp_ndj}, we have 
\#\label{eq:relu_eigenvec_sup}
\varrho_j ^\tau \cdot \| Y_{j, \ell} \|_{\infty} \leq C \cdot j^{-( d+2s) \cdot \tau   } \cdot j^{(d-2) /2} = C_{\varrho} \cdot j^{  - [ (2d+4s) \cdot \tau - (d-2) ]/ 2 } \leq C_{\varrho}  
\#
when $j$ is sufficiently large, 
where $C_{\varrho}  $ is an absolute  constant depending on $d$. 
Moreover, we have $\tau \in [0, 1/2)$ and we can make $\tau $ be sufficiently small 
by increasing  $s$. 

Finally, let $\{ \sigma_j \}_{j\geq 1}$ be all the eigenvalues of $K_{\textrm{ntk} } $ in descending order. By   \eqref{eq:order_eigenvalues_ntk_rbf}  and \eqref{eq:asymp_ndj}, when $j$ is sufficiently large,  we have 
\$
\sigma_j = \cO\bigl (  ( j^{\frac{1}{d-1} } ) ^{ - (  d+ 2s)}  \bigr ) = \cO \bigl ( j^{ -   ( 1+ \frac{1+2s}{d-1}  ) }   \bigr ),
\$
where $\cO(\cdot)$ hides constants depending on $d$. 
Therefore, $K_{\textrm{ntk}}$   in \eqref{eq:ntk_relu} satisfies the $\gamma$-polynomial decay condition with $\gamma = 1+ (1+2s) / (d-1)$. Combining this with \eqref{eq:relu_eigenvec_sup}, we conclude that $K_{\textrm{ntk}}$ satisfy the last case of Assumption \ref{assume:decay}.

\section{Proofs of the Supporting Lemmas} \label{sec:proof_lemmas}

\subsection{Proof of Lemma \ref{lemma:regret_decomp}} \label{sec:proof_regret_decomp}
\begin{proof}
For ease of presentation, before presenting the proof,  we first define two operators $\mathbb{J}_h^{\star} $ and $\mathbb{J}_{t,h}$ respectively by letting
\#\label{eq:def_j_oper}
(\mathbb{J}_h^{\star} f)(x) = \la f(x,\cdot), \pi^{\star} _h(\cdot\,|\,x) \ra_{\cA} , \quad
(\mathbb{J}_{t,h} f)(x) = \la f(x,\cdot), \pi^t_h(\cdot\,|\,x) \ra_{\cA},
\#
for any $(t, h) \in [T] \times [H]$ and any function $f: \cS\times\cA\rightarrow \RR$. 
Moreover, for any $(t,h)\in[T]\times[H]$ and  any state $x\in\cS$,
 we define
\#\label{eq:def_xi_hat}
\xi^t_h(x) = (\mathbb{J}_{h} Q^{t}_h)(x) - (\mathbb{J}_{t,h} Q^{t}_h)(x) = \la Q^{t}_h(x,\cdot), \pi^\star _h(\cdot\,|\,x) - \pi^t_h(\cdot\,|\,x) \ra_{\cA}.
\#
 After introducing this notation, to prove \eqref{eq:regret_decomp} we  decompose the instantaneous regret at the $t$-th episode into the following two terms,
\#\label{eq:decomp1}
V^{\star}_1(x_1^t) - V^{\pi^t}_1(x_1^t) = \underbrace{V^{\star}_1(x_1^t) - V^{t}_1(x_1^t)}_{\dr (i)} + \underbrace{V^{t}_1(x_1^t) - V^{\pi^t}_1(x_1^t)}_{\dr (ii)}. 
\#
In the sequel, we consider the two terms  in \eqref{eq:decomp1} separately.

\vspace{4pt}
\noindent
{\bf Term (i).} By the definitions of the value function $V^{\star}_h$ in \eqref{eq:opt_bellman} and the operator $\JJ_h^{\star}$ in \eqref{eq:def_j_oper}, we have $ V^{\star}_h = \JJ_h^{\star} Q_h^{\star}$. Similarly, for all the algorithms, we have $V^t_h (x) = \la Q_h^t(x, \cdot )  , \pi_h^t (\cdot  \given x) \ra $ for all $x \in \cS$. Thus, by the definition of $\JJ_{t,h}$ in \eqref{eq:def_j_oper}, we have 
$V_h^t = \JJ_{t,h} Q_h^t$.  Thus, using  $\xi^t_h$ defined  in \eqref{eq:def_xi_hat},  for any $(t,h) \in [T] \times [H]$, we have 
\#\label{eq:decomp2} 
V^{\star}_h - V^{t}_h &=  \mathbb{J}_h^{\star} Q^{\star}_h - \mathbb{J}_{t,h} Q^{t}_h =   \bigl ( \mathbb{J}_h^{\star} Q^{\star}_h - \mathbb{J}_h^{\star} Q^{t}_h\bigr )   + \bigl ( \mathbb{J}_h^{\star} Q^{t}_h - \mathbb{J}_{t,h} Q^{t}_h \bigr )   \notag\\
&=  \mathbb{J}_h^{\star} ( Q^{\star}_h - Q^{t}_h) + \xi^t_h,
\#
 where the last equality follows from the definition of $\xi^t_h$ in  \eqref{eq:def_xi_hat} and the fact that $\mathbb{J}_h^{\star} $ is a linear operator.  Moreover, by the definition of the temporal-difference error $\delta_h^t$ in \eqref{eq:td_error} and the Bellman optimality condition, we have 
 \#\label{eq:decomp3} 
 Q^{\star}_h - Q^{t}_h = \bigl  ( r_h + \mathbb{P}_h V^{\star}_{h+1} \bigr )  - \bigl ( r _h + \mathbb{P}_h V^{t}_{h+1} -  \delta^t_h \bigr )  = \mathbb{P}_h (V^{\star}_{h+1} - V^{t}_{h+1})+  \delta^t_h. 
 \#
Thus, combining \eqref{eq:decomp2} and \eqref{eq:decomp3}, we obtain that 
\#\label{eq:decomp4}
V^{\star}_h - V^{t}_h =
\mathbb{J}_h^{\star}\mathbb{P}_h (V^{\star}_{h+1} - V^{t}_{h+1}) + \mathbb{J}_h^{\star}  \delta^t_h + \xi^t_h, \qquad \forall (t, h ) \in [T] \times [H]. 
\#
Equivalently, for all $x \in \cS  $, and all $(t, h) \in [T] \times [ H]$, we have
\$
V^{\star}_h (x)  - V^{t}_h  (x) = &  \EE_{a\sim \pi_h^\star (\cdot \given x)} \bigl \{   \EE \bigl [ V_{h+1}^\star (x_{h+1} ) - V_{h+1}^t (x_{h+1} )  \biggiven x_h = x, a_h = a \bigr ] \bigr \}  \\
& \qquad + \EE_{a\sim \pi_h^\star (\cdot \given x)} \bigl [ \delta_{h}^t  (x,a)\bigr ] + \xi_h^t (x). 
\$
Then, by recursively applying \eqref{eq:decomp4} for all $h \in [H]$,  we have 
\#\label{eq:decomp5}
&V^{\star}_1 - V^t_1  =
\Bigl(\prod_{h=1}^H \mathbb{J}_h^{\star}\mathbb{P}_h \Bigr) (V^{\star}_{H+1} - V^{k}_{H+1})
+ \sum_{h=1}^H \Bigl(\prod_{i=1}^{h-1} \mathbb{J}_i ^\star \mathbb{P}_i \Bigr)
\mathbb{J}_h^{\star}  \delta^t_h +
\sum_{h=1}^H \Bigl(\prod_{i=1}^{h-1} \mathbb{J}_i ^\star \mathbb{P}_i \Bigr) \xi^t_h.
\#
Furthermore, notice that we have $V^{\star}_{H+1} = V^{k}_{H+1}=\zero$. Thus,   
 \eqref{eq:decomp5}  can  be equivalently written as 
 \$
 V^{\star}_1 (x) - V^{t}_1 (x)= &   \EE_{\pi^\star } \bigg[ \sum_{h=1}^H   \bigl \la Q^{t}_h(x_h,\cdot), \pi^{\star}_h(\cdot\,|\,x_h) - \pi^t_h(\cdot\,|\,x_h)\bigr  \ra_{\cA} + \delta^t_h(x_h,a_h) \bigggiven x_1 = x \biggr ]   , 
 \$
 where we utilize the definition of $\xi _h^t$ given in \eqref{eq:def_xi_hat}. 
 Thus, we can write Term (i)  on the right-hand side of \eqref{eq:decomp1} as 
 \# \label{eq:decomp_first}
 V^{\star}_1(x_t^t) - V^t_1(x_t^t) =& \sum_{h=1}^H \EE_{\pi^{\star}} \bigl[\la Q^{t}_h(x_h,\cdot), \pi^{\star}_h(\cdot\,|\,x_h) - \pi^t_h(\cdot\,|\,x_h) \ra _{\cA}\,\big|\, x_1 = x_t^t\bigr]  \notag \\
 & \qquad   + \sum_{h=1}^H \EE_{\pi^{\star}}[ \delta^t_h(x_h,a_h)\,|\, x_1 = x_t^t], \qquad \forall t\in [T]. 
 \#

\vspace{4pt}
\noindent
{\bf Term (ii).} 
It remains to bound the second term on the right-hand side of \eqref{eq:decomp1}.
By the definition of the temporal-difference error $ \delta^t_h$ in \eqref{eq:td_error}, for any $(t,h) \in [T] \times [H]$,  we have
\#\label{eq:decomp6}
\delta^t_h(x^t_h,a^t_h)&= r_h(x^t_h,a^t_h) + (\mathbb{P}_h V^{t}_{h+1})(x^t_h,a^t_h) - Q^{t}_h(x^t_h,a^t_h)  \notag\\
&=\bigl [  r_h(x^t_h,a^t_h) + (\mathbb{P}_h V^{t}_{h+1})(x^t_h,a^t_h) - Q^{\pi^t} _h(x^t_h,a^t_h) \bigr ]  + \bigl [  Q^{\pi^t} _h(x^t_h,a^t_h) - Q^{t}_h(x^t_h,a^t_h) \bigr ]  \notag\\
&=\bigl(\mathbb{P}_h V^{t}_{h+1}- \PP_h V^{\pi^t}_{h+1}\bigr)(x^t_h,a^t_h) + ( Q^{\pi^t} _h- Q^{t}_h)(x^t_h,a^t_h),
\#
where the last equality follows from the Bellman equation \eqref{eq:bellman}. 
Morerover, recall that we define $\zeta_{t,h}^1$ and $\zeta_{t,h}^2$ in \eqref{eq:define_mtg1} and \eqref{eq:define_mtg2}, respectively. Thus, from \eqref{eq:decomp6} we obtain that 
\#\label{eq:decomp8}
& V^{t}_h(x^t_h) - V^{\pi^t}_h(x^t_h) \\
&\qquad =V^{t}_h(x^t_h) - V^{\pi^t}_h(x^t_h)  + ( Q^{\pi^t} _h- Q^{t}_h)(x^t_h,a^t_h)  + \bigl(\mathbb{P}_h (V^{t}_{h+1}-V^{\pi^t}_{h+1})\bigr)(x^t_h,a^t_h) -   \delta^t_h(x^t_h,a^t_h),\notag \\
& \qquad =  \bigl(V^{t}_h  - V^{\pi^t}_h  \bigr)(x^t_h) - ( Q^{t}_h-Q^{\pi^t} _h)(x^t_h,a^t_h)   \notag \\
& \qquad \qquad   +  \bigl(\mathbb{P}_h (V^{t}_{h+1}-V^{\pi^t}_{h+1})\bigr)(x^t_h,a^t_h) - (V^{t}_{h+1}-V^{\pi^t}_{h+1})(x^t_{h+1})  + (V^{t}_{h+1}-V^{\pi^t}_{h+1})(x^t_{h+1})-   \delta^t_h(x^t_h,a^t_h)\notag \\
& \qquad = \bigl [  V_{h+1}^t (x_{h+1}^t) - V_{h+1}^{\pi^t} (x_{h+1}^t)\bigr ] + \zeta_{t,h}^1 + \zeta_{t, h}^2  - \delta_{h}^t (x_h^t , a_h^t).  \notag 
\#
Thus, recursively applying \eqref{eq:decomp8} for all $h \in [H]$, we obtain that 
\#\label{eq:decomp_second}
V^t_1(x^t_1) - V^{\pi^t}_1(x^t_1)  & = 
V^{t}_{H+1}(x^k_{H+1}) - V^{\pi^k,k}_{H+1}(x^t_{H+1}) + \sum_{h=1}^H ( \zeta_{t,h}^1 +\zeta_{t,h}^2)- \sum_{h=1}^H  \delta^t_h(x^t_h,a^t_h)  \notag  \\
& = \sum_{h=1}^H ( \zeta_{t,h}^1 +\zeta_{t,h}^2)- \sum_{h=1}^H  \delta^t_h(x^t_h,a^t_h) , \qquad \forall t\in [T],
\#
where the last equality follows from the fact that   $V^{t}_{H+1}(x^t_{H+1}) = V^{\pi^t}_{H+1}(x^t_{H+1})=0$.
Thus, we have simplified Term  (ii) defined in \eqref{eq:decomp1}. 

Thus, combining \eqref{eq:decomp1}, \eqref{eq:decomp_first}, and \eqref{eq:decomp_second}, we obtain that
\$
\text{Regret}(T) &= \sum_{t=1}^T \bigl [ V^{\star }_1(x^t_1) - V^{\pi^t}_1(x^t_1)\bigr ]  \notag \\
& =  \sum_{t=1}^T \sum_{h=1}^H \EE_{\pi^{\star}}[ \delta^t_h(x_h,a_h)\,|\, x_1 = x_t^t]  + \sum_{t =1}^T \sum_{h=1}^H ( \zeta_{t,h}^1 +\zeta_{t,h}^2) - \sum_{t=1}^T\sum_{h=1}^H  \delta^t_h(x^t_h,a^t_h)  \notag \\
&\qquad +
\sum_{t=1}^T  \sum_{h=1}^H \EE_{\pi^{\star}} \bigl[ \bigl \la Q^{t}_h(x_h,\cdot), \pi^{\star}_h(\cdot\,|\,x_h) - \pi^t_h(\cdot\,|\,x_h)  \bigr \ra_{\cA} \,\big|\, x_1 = x_t^t\bigr] .
\$
Therefore, we conclude the proof of this lemma. 
\end{proof}

\subsection{Proof of Lemma \ref{lemma:ucb_kernel1} } \label{sec:proof_ucb_kernel1}
\begin{proof} 
For ease of presentation, we utilize the feature  representation induced by the kernel $K$. Let $\phi \colon \cZ \rightarrow \cH$ be the feature mapping such that $K(z, z') = \la \phi (z), \phi(z') \ra_{\cH}$. 
For simplicity, we formally view $\phi (z)$ as a vector and write  $\la \phi (z), \phi(z') \ra_{\cH}= \phi(z)^\top \phi(z')$. 
Then, any function $f \colon \cZ \rightarrow   \RR$ in the RKHS satisfies $f(z) = \la \phi(z) , f\ra_{\cH} = f^\top \phi(z).$
Using the feature representation, we can rewrite the kernel ridge regression in 
\eqref{eq:krr_vi}   as 
 \#\label{eq:KRR_rewrite}
   \minimize_{\theta \in \cH } L(\theta) & =   \sum_{\tau =1}^{t-1}  \bigl [   r_h ( x_h^\tau , a_h^\tau) + V_{h+1}^t ( x_{h+1} ^\tau) -  \la \phi ( x_h^{\tau },  a_h^\tau ) , \theta\ra _{\cH}    \bigr ] ^2 + \lambda \cdot \| \theta \|_{\cH} ^2 .
\#
We define the feature matrix  $\Phi_h^t\colon \cH \rightarrow \RR^{t-1} $ and ``covariance matrix" $\Lambda_h^t \colon \cH \rightarrow \cH$ respectively as 
\#\label{eq:define_feature_mat}
\Phi_h^t = \big [ \phi(z_h^1)^\top , \ldots, \phi (z_h^{t-1}) ^\top   \bigr ] ^\top, \qquad  \Lambda _h^t = \sum_{\tau = 1}^{t-1}  \phi(z_h^\tau) \phi(z_h^\tau)^\top + \lambda \cdot I_{\cH} =  \lambda \cdot I_{\cH} +(\Phi_h^t )^\top \Phi_h^t  ,
\#
where $I_{\cH}$ is the identity mapping on $\cH$. 
Thus, the Gram matrix $K_h^t$ in \eqref{eq:define_gram} is equal to $ \Phi_h^t  (\Phi_h^t )^\top $. 
More specifically, 
 here $\Lambda_h^t$ is a self-adjoint and positive-definite operator. 
For any $f_1 , f_2 \in \cH$, we denote
\$
\Lambda_h^t f_1 = \lambda\cdot f_1 + \sum_{\tau = 1}^{t-1} \phi(z_h^\tau) \cdot f_1 (x_h^\tau) \in \cH, \qquad f_1^\top \Lambda_h^t f_2 = \la f_1, \Lambda_h^t f \ra _{\cH} . 
\$
It is not hard to see that all the eigenvalues of $\Lambda _h^t$ are positive and at least $\lambda$. 
Thus, the inverse operator of $\Lambda_h^t$, denoted by $(\Lambda_h^t)^{-1}$, is well-defined, which is also a self-adjoint and positive-definite operator on $\cH$.
Similarly, for any $f_1, f_2 \in \cH$, we let $f_1^\top (\Lambda_h^t)^{-1} f_2 $ denote $\la  f_1, (\Lambda_h^t)^{-1} f_2 \ra_{\cH}$. 
The eigenvalues of $(\Lambda_h^t)^{-1}$ are all bounded in interval $[0, 1/ \lambda]$. 
 
In addition, using the feature matrix $\Phi_h^t$ defined in \eqref{eq:define_feature_mat} and $y_h^t$ defined in \eqref{eq:krr_response}, we can write \eqref{eq:KRR_rewrite}~as 
\$
 \minimize_{\theta \in \cH } L(\theta) & =      \|  y_h^t - \Phi_h^t \theta   \|_2^2    + \lambda \cdot  \theta ^\top \theta,
\$
whose solution is given by 
$
\hat \theta_h^t = (\Lambda_h^t )^{-1} (\Phi_h^t )^\top y_h^t.
$
and $\hat Q_h^t$ in  \eqref{eq:krr_vi} satisfies $\hat Q_h^t (z) = \phi(z) ^\top \hat \theta_h^t$.

In the sequel, to further simplify the notation, we let $\Phi$ denote $\Phi_h^t$ when its meaning is clear from the context. 
 Since both  $(\Phi  \Phi^\top  + \lambda \cdot  I)$ and $(\Phi ^\top\Phi  + \lambda \cdot I_{\cH} )$ are strictly positive definite and 
 \$
(\Phi^\top \Phi  + \lambda \cdot  I _{\cH}) \Phi^\top = \Phi^\top (\Phi \Phi ^\top + \lambda \cdot I  ) ,
\$
  which implies that 
\#\label{eq:dim_change}
(\Lambda_h^t )^{-1} \Phi^\top  = (\Phi \Phi ^\top + \lambda \cdot I_{\cH} )^{-1} \Phi^\top = \Phi^\top (\Phi \Phi  ^\top + \lambda \cdot  I)^{-1} =\Phi^\top  (K_h^t + \lambda \cdot I)^{-1} .
\#
Here $I$ is the identity matrix in $\RR^{(t-1) \times (t-1)}$.
Thus, by \eqref{eq:dim_change} we have 
\#\label{eq:theta_alpha}
\hat \theta_h^t = (\Lambda_h^t)^{-1} \Phi^\top y _h^t = \Phi^\top  (K_h^t + \lambda \cdot I)^{-1} y _h^t = \Phi^\top \alpha_h^t. 
\#
Moreover, $k_h^t$ defined in  \eqref{eq:define_gram} can be written   as 
$ k_h^t (z)  =  \Phi \phi(z)$, which, combined with \eqref{eq:dim_change}, implies  
   \#\label{eq:decompose_phi}
\phi(z) & = (\Lambda_h^t ) ^{-1} \Lambda_h^t  \phi(z) =  (\Lambda_h^t ) ^{-1}  (\Phi^\top\Phi +\lambda \cdot  I_{\cH} )\phi(z ) \notag \\
& =(\Lambda_h^t ) ^{-1}  (\Phi^\top\Phi )\phi(z ) + \lambda \cdot (\Lambda_h^t ) ^{-1}  \phi(z) \notag \\
& =   \Phi^\top  (K_h^t + \lambda \cdot I)^{-1} k_h^t(z)+\lambda \cdot (\Lambda_h^t ) ^{-1} \phi(z) . 
\#
 Thus, we can write $\| \phi(z) \|_{\cH}^2 = \phi(z)^\top \phi(z)$ as 
 \$
 \| \phi (z) \|_{\cH}^2  & = \phi(z)^\top  \cdot \bigl[  \Phi^\top  (K_h^t + \lambda \cdot I)^{-1}  k_h^t(z)+\lambda \cdot (\Lambda_h^t ) ^{-1} \phi(z) \bigr ] \\
 & =  k_h^t(z) ^\top (K_h^t + \lambda \cdot I)^{-1} k_h^t(z) + \lambda \cdot  \phi(z)(\Lambda_h^t ) ^{-1} \phi(z),
 \$
 which implies that we can equivalently write the bonus $b_h^t$ defined in 
 \eqref{eq:ucb_bonus} as 
 \#\label{eq:new_bonus} 
 b_h^t (x,a)  = \bigl [\phi(x,a)^\top (\Lambda_h^t ) ^{-1} \phi(x,a )\bigr ] ^{1/2} =   \| \phi(x,a) \|_{(\Lambda_h^t)^{-1} } .
 \#
 Combining \eqref{eq:theta_alpha} and \eqref{eq:new_bonus}, 
 we equivalently write $Q_h^t$ in \eqref{eq:update_Q_func} as 
 \#\label{eq:rewrite_Q_func}
 Q_h^t (x,a ) & = \min \bigl  \{\hat Q_h^t (x,a ) + \beta \cdot b_h^t (x,a) , H - h +1    \bigr \}^+ \notag \\
 & = \min \bigl  \{\phi (x,a)^\top \hat \theta_h^t  + \beta \cdot     \| \phi(x,a) \|_{(\Lambda_h^t)^{-1} } , H - h +1   \bigr \}^+ . 
 \#

 Now we are ready to bound the temporal-difference error $\xi_h^t $ defined in \eqref{eq:td_error}. 
 Noticing that 
 $V_{h}^t (x) = \max_{a} Q_{h}^t(x,a)$ for all $(t, h) \in [T]\times [H]$, 
 we have 
 $$
 \delta_h^t   = r_h + \PP_h V_{h+1}^ t - Q_h^t =  \TT_h ^{\star} Q_{h+1} ^t - Q_h^t ,$$
 where $\TT_h^\star $ is the Bellman optimality operator. 
 Under the  Assumption \ref{assume:opt_closure}, for all $(t,h) \in [T] \times [H]$, 
 since $Q_{h+1}^t \in [0, H]$, 
 we have  $ \TT_h ^{\star} Q_{h+1} ^t \in \cQ^\star $.
Using the feature representation of RKHS, there exists  
  $\overline \theta_h^t \in  \cQ^\star$ such that  $ ( \TT_h ^{\star} Q_{h+1} ^t)(z) =   \phi(z) ^\top  \overline  \theta_h^t    $ for all $z \in \cZ$.

 In the sequel, we consider the difference between $\phi(z)^\top \hat \theta_h^t$ and $\phi(z)^\top  \overline \theta_h^t$. 
   To begin with, using \eqref{eq:decompose_phi}, we can write  $\phi(z)^\top  \overline \theta_h^t $ as 
   \#\label{eq:optim11}
 \phi(z)^\top  \overline \theta_h^t =    k_h^t(z)^\top (K_h^t + \lambda \cdot I)^{-1} \Phi \overline \theta_h^t +\lambda \cdot \phi(z)^\top (\Lambda_h^t ) ^{-1} \overline \theta_h^t . 
   \#
   Hence, combining \eqref{eq:theta_alpha} and \eqref{eq:optim11}, we have 
   \#\label{eq:optim12} 
   \phi(z)^\top \hat \theta_h^t -  \phi(z)^\top  \overline \theta_h^t =\underbrace{  k_h^t(z)^\top (K_h^t + \lambda \cdot I)^{-1} \bigl ( y_h^t - \Phi \overline \theta_h^t  \bigr ) }_{\dr (i)} -  \underbrace{\lambda\cdot
\phi (z) ^\top (\Lambda^{t}_h)^{-1} \overline \theta_h^t }_{\dr (ii)}.
   \#

  We bound Term (i) and Term (ii) on  the right-hand side of \eqref{eq:optim12} separately. 
For Term (ii), by the Cauchy-Schwarz inequality, we have 
\#\label{eq:optim13} 
  \bigl | \lambda \cdot   \phi(z)^\top ( \Lambda^{t}_h)^{-1} \overline \theta _h^t \big |  
&\le   \bigl \| \lambda  \cdot( \Lambda^{t}_h)^{-1}\phi(x) \bigr \|_{\cH} \cdot  \| \overline \theta _h^t \| _{\cH} \leq R_Q  H \cdot \bigl \| \lambda  \cdot ( \Lambda^{t}_h)^{-1}\phi(x) \bigr \|_{\cH}   \\
&= R_Q H \cdot   \sqrt{\lambda \cdot \phi(z)^\top  ( \Lambda^{t}_h)^{-1} \cdot \lambda \cdot I_{\cH}  \cdot ( \Lambda^{t}_h)^{-1} \phi(x)} \notag \\
&\le  R_Q H \cdot  \sqrt{\lambda\cdot  \phi(z)( \Lambda^{t}_h)^{-1} \cdot \Lambda_{h}^t \cdot ( \Lambda^{t}_h)^{-1}\phi(z)} =\sqrt{\lambda}  R_Q H \cdot b_h^t (z).  \notag 
\#
Here  the first inequality follows from the Cauchy-Schwarz inequality and the second inequality follows from the fact that 
$\overline \theta_h^t \in \cQ^\star$, which implies that 
$ \| \overline \theta _h^t \| _{\cH} \leq R_Q H $.
Moreover, the last inequality follows from the fact that $ \Lambda_{h}^t - \lambda \cdot  I_{\cH} $ is a self-adjoint and positive-semidefinite operator, which means that 
$f^\top ( \Lambda_{h}^t - \lambda \cdot  I_{\cH}) f \geq 0$ for all $f \in \cH$, and the last equality follows from 
\eqref{eq:new_bonus}.
 
 Furthermore, for Term (i), by the Bellman equation in \eqref{eq:opt_bellman} and the definition of $y_h^t$ in \eqref{eq:krr_response}, for any $\tau \in [t-1]$, the $\tau$-th entry of  $( y_h^t - \Phi \overline \theta_h^t  \bigr )$ can be written as 
 \#\label{eq:optim14}
 [y_h^t]_{\tau} - [  \Phi \overline \theta_h^t ]_{\tau}  &   = r_h (x_h^\tau , a_h^\tau ) + V_{h+1} ^t(x_{h+1}^\tau ) - \phi(x_h^\tau, a_h^\tau)^\top \overline \theta_h^t = r_h (x_h^\tau , a_h^\tau ) + V_{h+1} ^t(x_{h+1}^\tau ) - (\TT_h^\star Q_{h+1}^t ) (x_h^\tau , a_h^\tau ) \notag \\
 & = V_{h+1} ^t(x_{h+1}^\tau ) -  (\PP_h  V_{h+1}^t ) (x_h^\tau , a_h^\tau ).
 \#
 Thus, combining \eqref{eq:dim_change},  \eqref{eq:optim12}, and \eqref{eq:optim14}
 we have 
 \# \label{eq:optim15}
&\bigl|  k_h^t(z)^\top (K_h^t + \lambda \cdot I)^{-1} \bigl ( y_h^t - \Phi \overline \theta_h^t  \bigr )  \bigr |  \notag \\
&\qquad= \biggl | \phi(z)^\top(\Lambda^{t}_h)^{-1} \bigg\{ \sum_{\tau=1}^{t-1} \phi(x_h^\tau,a_h^\tau)
\cdot \bigl[ V^{k}_{h+1}(x_{h+1}^\tau) - (\mathbb{P}_hV^{k}_{h+1})(x_h^\tau,a_h^\tau)\bigr ] \bigg\}   \bigg| \notag \\
&\qquad  \leq \| \phi(z) \|_{(\Lambda_h^t) ^{-1} } \cdot \biggl \| \sum_{\tau = 1}^{t-1} \phi(x_h^\tau,a_h^\tau)
\cdot \bigl[ V^{t}_{h+1}(x_{h+1}^\tau) - (\mathbb{P}_hV^{t}_{h+1})(x_h^\tau,a_h^\tau)\bigr ]\biggr \| _{(\Lambda^t_h)^{-1}},
\# 
where the last inequality follows from the Cauchy-Schwarz inequality. 
In the following, we aim to bound \eqref{eq:optim15}
by the concentration of self-normalized stochastic processes in the RKHS. 
However, here $V_{h+1}^t$ depends on the historical data in the first $(t-1)$ episodes and is  thus not independent of $\{ (x_h^\tau, a_h^\tau, x_{h+1} ^\tau ) \}_{\tau \in [t-1]}$. 
To bypass this challenge, in the sequel, 
we combine the concentration of self-normalized processes and uniform convergence over the function classes
that  contain each  $V_{h+1}^t$.

Specifically, recall that we define function classes $\cQ_{\UCB} (h, R, B)$ 
in \eqref{eq:main_text_func_class} for any $h \in [H]$, and any $R, B > 0$. 
We define $\cV_{\UCB} ( h, R, B)$ as 
\#\label{eq:main_text_v_class}
\cV_{\UCB}
 ( h , R, B) =\bigl  \{V \colon  V (\cdot  )    = \max_{a\in \cA} Q (\cdot,a)  ~~\textrm{for some~~}   Q \in \cQ_{\UCB} (h, R, B) \bigr  \}.
\#
In the following, we find a  parameter  $R_T  $  such that   $V_h^t \in \cV_{\UCB}(h, R_T, B_T)$ holds for all $h \in [H]$ and $t \in [T]$, where $B_T$ is specified in \eqref{eq:equation_B_T}. 
Here both $R_T$ and $B_T$ depend on $T$. 
 By \eqref{eq:main_text_func_class} and  \eqref{eq:rewrite_Q_func},   it suffices to set $R_T$ as an upper bound of $\| \hat \theta_h^t \|_{\cH} $ for all $(t, h ) \in [T] \times [H]$.   
 In the following lemma,  we bound the RKHS norm of each $\hat \theta_h^t$. 
 
\begin{lemma}[RKHS Norm of $\hat \theta_h^t$]\label{lemma:thetahat_estimate}
	When $\lambda \geq 1$, for any $(t, h) \in [T] \times [H]$,  $\hat \theta_h^t$ defined  in 
	\eqref{eq:theta_alpha} satisfies 
	\begin{equation*}
	\bigl \| \hat \theta_h^t \bigr  \| _{\cH}  \le   H \sqrt{2  /\lambda \cdot \logdet ( I+     K_h^t  / \lambda  )   }  \leq   2 H \sqrt{   \Gamma_{K}(T, \lambda)},
	\end{equation*}
	where $K_h^t$ is defined in \eqref{eq:define_gram} and   $\Gamma_K(T, \lambda)$ is defined in   \eqref{eq:max_infogain}. 
	\end{lemma}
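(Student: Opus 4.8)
The plan is to reduce the claim to a finite-dimensional computation using the closed form of the kernel ridge regression estimate recorded in \eqref{eq:theta_alpha}: write $\hat\theta_h^t = (\Phi_h^t)^\top\alpha_h^t$ with $\alpha_h^t = (K_h^t+\lambda\cdot I)^{-1}y_h^t$, so that $\|\hat\theta_h^t\|_{\cH}^2 = (\alpha_h^t)^\top\Phi_h^t(\Phi_h^t)^\top\alpha_h^t = (\alpha_h^t)^\top K_h^t\alpha_h^t$, i.e. everything is governed by the Gram matrix $K_h^t$ from \eqref{eq:define_gram} and the response vector $y_h^t$; substituting $\alpha_h^t$ gives $\|\hat\theta_h^t\|_{\cH}^2 = (y_h^t)^\top(K_h^t+\lambda I)^{-1}K_h^t(K_h^t+\lambda I)^{-1}y_h^t$. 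The first ingredient is a bound on $y_h^t$: by \eqref{eq:krr_response}, the assumption that $r_h\in[0,1]$, and the truncation in \eqref{eq:update_Q_func} which forces $V_{h+1}^t\in[0,H-h]$, each entry satisfies $[y_h^t]_\tau = r_h(x_h^\tau,a_h^\tau)+V_{h+1}^t(x_{h+1}^\tau)\in[0,H]$, so $\|y_h^t\|_\infty\le H$.

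Diagonalizing $K_h^t$ with nonnegative eigenvalues $\mu_1,\dots,\mu_{t-1}$ and writing $\tilde y$ for $y_h^t$ in the corresponding eigenbasis, we obtain $\|\hat\theta_h^t\|_{\cH}^2 = \sum_i \mu_i(\mu_i+\lambda)^{-2}\,\tilde y_i^2$. I would then use the elementary inequalities $\mu(\mu+\lambda)^{-2}\le\lambda^{-1}\cdot\mu(\mu+\lambda)^{-1}$ (valid since $\lambda\le\mu+\lambda$) and $\mu(\mu+\lambda)^{-1} = (\mu/\lambda)/(1+\mu/\lambda)\le\log(1+\mu/\lambda)$ (from $x/(1+x)\le\log(1+x)$ for $x\ge0$) to get $\|\hat\theta_h^t\|_{\cH}^2\le\lambda^{-1}\sum_i\log(1+\mu_i/\lambda)\,\tilde y_i^2$. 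Since $\sum_i\log(1+\mu_i/\lambda) = \logdet(I+K_h^t/\lambda)$, the target bound $\|\hat\theta_h^t\|_{\cH}^2\le 2H^2\lambda^{-1}\logdet(I+K_h^t/\lambda)$ follows once one shows that the $\tilde y_i^2$-weighting contributes only an $H^2$ prefactor rather than $\|y_h^t\|_2^2$; the naive estimate $\tilde y_i^2\le\|y_h^t\|_2^2\le(t-1)H^2$ would instead cost a spurious factor $t-1$.

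Controlling this last step is the main obstacle, and it is where $\|y_h^t\|_\infty\le H$ together with $\lambda\ge1$ and $\sup_z K(z,z)\le1$ must be exploited: a convenient route is to bound $\|\hat\theta_h^t\|_{\cH}^2 = (\alpha_h^t)^\top K_h^t\alpha_h^t\le(\alpha_h^t)^\top(K_h^t+\lambda I)\alpha_h^t = (y_h^t)^\top(K_h^t+\lambda I)^{-1}y_h^t$ and analyze this self-normalized quadratic form directly, which is the technical heart and is in the same spirit as the elliptical-potential estimates underlying Lemma~\ref{lemma:telescope}. Finally, the second inequality in the statement is immediate: by the definition of $\Gamma_K(T,\lambda)$ in \eqref{eq:maintext_infogain} one has $\logdet(I+K_h^t/\lambda)\le 2\Gamma_K(T,\lambda)$, since $K_h^t$ is a Gram matrix of at most $t-1\le T$ points, and $\lambda\ge1$ then gives $2\lambda^{-1}\logdet(I+K_h^t/\lambda)\le 4\Gamma_K(T,\lambda)$, hence $\|\hat\theta_h^t\|_{\cH}\le 2H\sqrt{\Gamma_K(T,\lambda)}$.
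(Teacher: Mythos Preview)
Your computation is correct and more transparent than the paper's, and the obstacle you flag is real --- so real that it cannot be removed: the lemma as stated is false. Take $n=t-1$ even, pick unit vectors $\psi_+,\psi_-\in\cH$ with $\langle\psi_+,\psi_-\rangle_\cH=(n-2)/n$, let half of the features $\phi(z_h^\tau)$ equal $\psi_+$ and half equal $\psi_-$, and set $[y_h^t]_\tau=H$ on the $\psi_+$ half and $[y_h^t]_\tau=0$ on the $\psi_-$ half (a legitimate instance of \eqref{eq:krr_response}). Then $K_h^t$ has nonzero eigenvalues $n-1$ and $1$, so $\logdet(I+K_h^t/\lambda)=O(\log n)$, while a direct two-dimensional computation gives
\[
\|\hat\theta_h^t\|_\cH^2=\frac{H^2}{4}\Bigl(\frac{n(n-1)}{(\lambda+n-1)^2}+\frac{n}{(\lambda+1)^2}\Bigr)\asymp nH^2,
\]
which violates the claimed bound once $n$ is moderately large. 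In your eigenvalue picture this is precisely the scenario in which $\tilde y_i^2$ is of order $nH^2$ on the small eigenvalue $\mu_i=1$, so the factor you call ``spurious'' is in fact necessary.

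The paper's own proof breaks at the same place: after extracting $|[y_h^t]_\tau|\le H$ it applies Cauchy--Schwarz to reach $\sum_\tau|f^\top(\Lambda_h^t)^{-1}\phi(z_h^\tau)|\le\bigl[\sum_\tau f^\top(\Lambda_h^t)^{-1}f\bigr]^{1/2}\bigl[\sum_\tau\phi(z_h^\tau)^\top(\Lambda_h^t)^{-1}\phi(z_h^\tau)\bigr]^{1/2}$ and then bounds the first factor by $\|f\|_\cH/\sqrt\lambda$, silently dropping the $\sqrt{t-1}$ that your analysis makes explicit. What \emph{does} hold, and is all that the downstream covering argument actually needs for $R_T$, is the cruder bound $\|\hat\theta_h^t\|_\cH\le H\sqrt{(t-1)/\lambda}$, obtained immediately from the fact that the ridge objective in \eqref{eq:KRR_rewrite} at $\hat\theta_h^t$ is at most its value at zero, i.e., $\lambda\|\hat\theta_h^t\|_\cH^2\le\|y_h^t\|_2^2\le(t-1)H^2$; the paper's neural-network section uses exactly this estimate when it sets $R_T=H\sqrt{2T/\lambda}$. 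Your final inequality $\logdet(I+K_h^t/\lambda)\le 2\Gamma_K(T,\lambda)$ is correct as written.
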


	\begin{proof}
		See \S\ref{proof:thetahat_estimate} for a detailed proof. 
	\end{proof}

By this lemma, in the sequel, we set $R_T =   2 H \sqrt{  \Gamma_K(T, \lambda)}$. 
To conclude the proof, we show that the sum of the  two terms in  
  \eqref{eq:optim12} is bounded by $\beta \cdot \| \phi(z) \|_{(\Lambda_h^t)^{-1}}$, where we set  $\beta = B_T  $. 
To this end, 
for any two value functions $V, V' \colon \cS\rightarrow \RR$, we define their 
distance as $\textrm{dist}(V, V') = \sup_{x\in \cS} | V(x) - V'(x) | $. 
For any $\epsilon \in (0, 1/e)$, any $B > 0$, and any $h \in [H]$, we let $N_{\textrm{dist} } (  \epsilon; h, B)$ be the $\epsilon$-covering number of $\cV_{\UCB}( h, R_T, B)$ with respect to distance $\textrm{dist}(\cdot, \cdot)$. 
Recall that we define $N_{\infty} (\epsilon;  h, B)$ as the $\epsilon$-covering number of $\cQ_{\UCB}(h, R_T, B)$ with respect to the $\ell_{\infty}$-norm on $\cZ$. 
Note that for any $Q, Q' \colon \cZ \rightarrow\RR$, 
we have 
$$
\sup_{x \in \cS} \Bigl | \max_{a\in \cA} Q(x, a) - \max_{a \in \cA} Q'(x,a ) \Big | \leq  \sup_{(x,a) \in \cS\times \cA} |Q(x,a) - Q'(x, a) |  = \| Q - Q' \|_{\infty}.
$$ 
By \eqref{eq:main_text_v_class} 
we have $N_{\mathrm{dist}} ( \epsilon; h, B) \leq N_{\infty} ( \epsilon ; h, B)  .$   
Then, by applying  Lemma \ref{lem:self_norm_covering} with $\delta = (2T^2H^3  )^{-1}$ and taking a union bound over $h \in [H]$,
we obtain that 
\# \label{eq:apply_uniform}
& \biggl \| \sum_{\tau = 1}^{t-1} \phi(x_h^\tau,a_h^\tau)
\cdot \bigl[ V^{t}_{h+1}(x_{h+1}^\tau) - (\mathbb{P}_hV^{t}_{h+1})(x_h^\tau,a_h^\tau)\bigr ]\biggr \| _{(\Lambda^t_h)^{-1}}^2   \notag \\
& \qquad \leq\sup_{V \in \cV_{\UCB} (h+1, R_T, B_T) }\biggl \| \sum_{\tau = 1}^{t-1} \phi(x_h^\tau,a_h^\tau)
\cdot \bigl[ V(x_{h+1}^\tau) - (\mathbb{P}_h V )(x_h^\tau,a_h^\tau)\bigr ]\biggr \| _{(\Lambda^t_h)^{-1}}^2 
\notag \\
& \qquad \leq 2H^2\cdot  \logdet  ( I+ K_h^t / \lambda)  + 2H^2  t  \cdot (\lambda -1)   + 8  t^2  \epsilon^2 / \lambda  \notag \\
& \qquad \qquad  +4  H^2 \cdot \bigl [ \log N_{\infty }(  \epsilon; h+1, B_T) + \log (2  T^2H ^3   ) \bigr ]
\#
holds uniformly for all   $(t,h)\in [T] \times [H]$   with probability at least $1 -     ( 2 T^2 H^2)^{-2}  $, where we utilize the fact that $V_{h+1}^t \in \cV_{\UCB} ( h+1, R_T, B_T)$. 
Note that we set $\lambda =1 +  1/T$.
Then,  setting $\epsilon$ as  $ \epsilon^*  = H / T$, \eqref{eq:apply_uniform} is further  reduced to 
\#\label{eq:apply_uniform2}
& \biggl \| \sum_{\tau = 1}^{t-1} \phi(x_h^\tau,a_h^\tau)
\cdot \bigl[ V^{t}_{h+1}(x_{h+1}^\tau) - (\mathbb{P}_hV^{t}_{h+1})(x_h^\tau,a_h^\tau)\bigr ]\biggr \| _{(\Lambda^t_h)^{-1}}^2  \notag \\
& \qquad \leq 4 H^2 \cdot \Gamma_K(T, \lambda) + 11 H^2   + 4   H^2 \cdot   \log N_{\infty }(  \epsilon^* ; h+1, B_T) + 8 H^2 \cdot \log (  T H   )   . 
\#
Thus, combining  \eqref{eq:new_bonus}, \eqref{eq:optim12}, \eqref{eq:optim13},  \eqref{eq:optim15}, and \eqref{eq:apply_uniform2}, 
we obtain that 
\#\label{eq:new_optim18}
 & \big| \phi(z)^\top ( \hat \theta_h^t - \overline \theta_h^t ) \big |  \notag \\
 &\qquad  \leq H \cdot  \bigl \{        \bigl [  4   \cdot \Gamma_K(T, \lambda)     + 4    \cdot   \log N_{\infty }(  \epsilon^* ; h+1, B_T) + 8   \cdot \log ( T H   )  + 11  ]^{1/2}  + \sqrt{\lambda}  R_{Q}   \big \}    \cdot b_h^t(z) \notag \\
 & \qquad  \leq H \cdot \bigl [ 8  \cdot \Gamma_K(T, \lambda)     + 8    \cdot   \log N_{\infty }(  \epsilon^* ; h+1, B_T) + 16   \cdot \log (T H  )  + 22  + 2 R_Q^2  \lambda \bigr ] ^{1/2 } \cdot b_h^t(z) \notag \\
 & \qquad \leq B_T \cdot b_h^t(z) = \beta \cdot b_h^t(z) 
\#
holds uniformly for all $(t,h) \in [T] \times [H]$ with probability at least $1 - ( 2T^2H^2)^{-1}$, 
where the second  inequality follows from the elementary inequality 
$\sqrt{a} + \sqrt{b}  \leq \sqrt{2(a^2 + b^2 )} $, and the last inequality follows from the assumption on $B_T$ given in \eqref{eq:equation_B_T}.
  
Finally, by \eqref{eq:new_optim18} and the definition  of the temporal-difference error $ \delta_h^t$ in \eqref{eq:td_error}, we have 
\# \label{eq:new_optim19}
 - \delta_h^t  (z)  =  Q_h^t  (z)  -  \phi(z) ^\top \overline \theta_h^t  \leq \phi(z)^\top ( \hat \theta_h^t - \overline \theta_h^t ) +  \beta \cdot  b_h^t (z) \leq 2 \beta \cdot b_h^t(z).
\# 
In addition, since $Q_{h+1}^t(z)  \leq H - h $ for all $z \in \cZ$, we have $( \TT_h^\star Q_{h+1} ^t )  \leq H- h + 1$.
Hence, we have 
\#\label{eq:optim110}
\delta _h^t  (z)  & =  \phi(z)^\top  \overline \theta_h^t  - \min \bigl  \{  \phi(z)^\top  \hat \theta_h^t + \beta \cdot b_h^t (z), H - h+1 \bigr \}^+  \notag \\
&  \leq \max\bigl  \{ \phi(z)^\top \overline \theta_h^t   -  \phi(z)^\top  \hat \theta_h^t - \beta \cdot b_h^t (z), \phi(z)^\top  \overline \theta_h^t  - (H-h+1)\bigr  \} \leq 0. 
\#
Therefore, combining \eqref{eq:new_optim19} and \eqref{eq:optim110}, we conclude the proof of    Lemma \ref{lemma:ucb_kernel1}. 
\end{proof}

\subsection{Proof of Lemma \ref{lemma:bound_mtg}} \label{proof:lemma_bound_mtg}
\begin{proof}
Following \cite{cai2019provably}, we prove this lemma by showing that   $ \{ \zeta_{t,h}^1,  \zeta_{t,h}^2 \}_{(t,h) \in [T] \times [H]}$ can  be written as a bounded martingale difference sequence with respect to a filtration. 
In particular, we construct the filtration explicitly as follows. For any $(t, h )\in[T]\times[H]$, we define $\sigma$-algebras $\cF_{t,h,1}$ and  $\cF_{t,h,2}$ as follows:
\#\label{eq:define_filtration} 
\begin{aligned}
\cF_{t,h,1} & = \sigma \bigl (	\{(x^\tau_i, a^\tau_i)\}_{(\tau, i)\in [t-1] \times [H]}  \cup \{(x^t_i, a^t_i)\}_{i\in [h]}  \bigr),  \\
\cF_{t, h ,2} & =  \sigma \bigl ( \{ (x^\tau_i, a^\tau_i) \}_{(\tau, i)\in [t-1] \times [H]}  \cup \{(x^t_i, a^t_i)\}_{i\in [h]} \cup \{x^t_{h+1}\} \bigr ) ,
\end{aligned}
\#
where $\sigma( \cdot)$ denotes the $\sigma$-algebra generated by a finite set. 
Moreover, for any $t\in [T]$, $h \in [H]$ and $m\in [2]$, we define the timestep index $\tau(t, h, m) $ as 
	\#\label{eq:time_ordering}
\tau(t,h,m)=(t-1)\cdot 2H+(h-1)\cdot2+m,
\#
which offers an partial ordering over  the triplets $(t, h , m)\in [T]\times[H]\times[2]$.  
Moreover, by the definitions in \eqref{eq:define_filtration}, for any $(t, h, m)$ and $(t', h', m')$ satisfying  $\tau(k,h,m) \le \tau (k',h',m')$, it holds that $\cF_{k,h,m}\subseteq \cF_{k',h',m'}$. Thus, the sequence of $\sigma$-algebras $\{\cF_{t,h,m}\}_{(t,h,m)\in[T]\times[H]\times[2]}$ forms a filtration.

Furthermore, for any $(t, h) \in [T] \times [H]$, 
since both  $Q_h^t$ and $V_h^t$ are  obtained based on the trajectories of the  first $(t-1)$ episodes, 
they are both measurable with respect to $\cF_{t,1, 1}$, which is a subset of $\cF_{t,h,m}$ for all $h \in [H]$ and $m \in [2]$. 
Thus, by \eqref{eq:define_filtration},  $\zeta_{t,h}^1$ defined in \eqref{eq:define_mtg1} and $\zeta_{t,h}^2 $ defined in \eqref{eq:define_mtg2} are measurable with respect to $\cF_{t, h , 1}$ and $\cF_{t,h,2}$, respectively. 
 In addition, note that $a_h^t \sim \pi_{h}^t(\given x_h^t)$ and that $x_{h+1}^t \sim \PP_h (\cdot \given x_h^t, a_h^t)$.  
Thus, we have 
 \#\label{eq:mtg_mean_zero}
 \EE\bigl [  \zeta_{t,h}^1 \biggiven \cF_{t, h-1, 2} \bigr ]  = 0 , \qquad \EE \bigl [ \zeta_{t,h}^2 \biggiven  \cF_{t, h, 1}\bigr ]  = 0,
 \#
 where we identify $\cF_{t, 0, 2}$ with $\cF_{t-1, H, 2}$ for all $t\geq 2$ and let $\cF_{1,0, 2}$ be the empty set. 
 Combining \eqref{eq:time_ordering} and \eqref{eq:mtg_mean_zero}, we can define 
a martingale $\{ M _{t, h, m} \} _{(t, h, m) \in [T] \times [ H] \times [2] }$ indexed 
by $\tau(t, k, m )$, defined in \eqref{eq:time_ordering}, as follows. 
 For any $(t,h, m) \in  [T] \times [ H] \times [2]$, we define  
 \#\label{eq:define_martingale}
 M_{t, h, m} = \biggl \{  \sum_{(s, g, \ell )   } \zeta_{s, g} ^{\ell} \colon  
 \tau( s, g, \ell ) \leq \tau (t, h , m) 
 \biggr \};
 \#
 that is, $M_{t, h,m}$ is the sum of all terms of the form $\zeta_{s,g}^\ell$ defined in \eqref{eq:define_mtg1} or \eqref{eq:define_mtg2} such that its timestep index $\tau (s,g, \ell)$ is no greater than $\tau (t, h,m)$. 
By definition, we have
\#\label{eq:mtg111}
M_{K,H, 2} = \sum_{t=1}^T \sum_{h=1}^{H} ( \zeta_{t,h}^1 +\zeta_{t,h}^2 )  .
\#
 Moreover, since $V_h^t$, $Q_h^t$, $V_{h}^{\pi^t}$, and $Q_h^{\pi^t}$ all takes values in $[0, H]$, we have $| \zeta_{t,h}^1 | \leq 2H$ and $| \zeta_{t,h}^2 | \leq 2 H$ for all $(t,h) \in [T]\times [H]$. 
 This means that the martingale $M_{t, h, m}$ defined in \eqref{eq:define_martingale} has uniformly  bounded differences. 
 Thus,  
applying the  Azuma-Hoeffding inequality \citep{azuma1967weighted} to  $M_{T, H, 2}$ in \eqref{eq:mtg111}, we obtain that 
	\#\label{eq:apply_azuma}
	\PP \biggl( \bigg| \sum_{t=1}^T \sum_{h=1}^{H} ( \zeta_{t,h}^1 +\zeta_{t,h}^2 )   \bigg|>t \biggr) \leq 2\exp\biggl( \frac{-t^2}{16T H^3} \biggr)
	\#
	holds for all $t>0$. 
	Finally, we set  the right-hand side of \eqref{eq:apply_azuma} to $\zeta$ for some $\zeta  \in (0,1)$, which yields 
  $t=\sqrt{16 T H^3 \cdot\log(2 / \zeta )}$. Thus, we obtain  that 
  \$
  \bigg| \sum_{t=1}^T \sum_{h=1}^{H} ( \zeta_{t,h}^1 +\zeta_{t,h}^2 )   \bigg| \leq \sqrt{16T H^3\cdot\log(2 / \zeta )},
  \$
  with probability at least $1- \zeta$, which concludes the proof. 
   \end{proof}

\subsection{Proof of Lemma \ref{lemma:ucb_nn1}} \label{sec:proof_ucb_nn1}

\begin{proof}
The proof of this lemma utilizes the connection between overparameterized neural networks
and NTKs.  Recall that we denote $z = (x,a) $ and $\cZ = \cS \times \cA$. 
Also recall that $(b^{(0)}, W^{(0)})$ is the initial value of the network parameters obtained by the symmetric initialization scheme introduced in \S\ref{sec:dnn}.  
Thus, $f(\cdot; W^{(0)})$ is a  zero function. 
For any $(t, h) \in [T] \times [H]$, since 
$\hat W_h^t$ is the global minimizer of  loss function $L_h^t$ defined in \eqref{eq:nn_loss}, 
we have 
\#\label{eq:nn_opt1}
 L_h^t \bigl ( \hat W_h^t \big)&  = \sum_{\tau =1}^{t-1}  \bigl [   r_h ( x_h^\tau , a_h^\tau) + V _{h+1}^t( x_{h+1} ^\tau  ) -  f   ( x_h^{\tau },  a_h^\tau ; \hat W_h^t    )    \bigr ] ^2 + \lambda \cdot    \bigl \| \hat  W_h^t  - W^{(0)}  \bigr \|_2  ^2 \notag \\
&   \leq L_h^t \bigl (W^{(0)} \bigr ) = \sum_{\tau =1} ^{t-1} \bigl [   r_h ( x_h^\tau , a_h^\tau) + V _{h+1}^t( x_{h+1} ^\tau ) \big ]^2 \leq (H - h + 1) ^2 \cdot (t-1) \leq T H^2 ,
\#
where the second-to-last inequality follows from the facts that $V_{h+1}^t$ is bounded by $H-h$ and that $r_h \in [0,1]$. 
Thus, \eqref{eq:nn_opt1} implies that 
\#\label{eq:nn_param_bound}
 \bigl \| \hat  W_h^t  - W^{(0)}  \bigr  \| _2 ^2  \leq T H^2  / \lambda, \qquad \forall (t,h) \in [T] \times [H]. 
\#
That is, each $\hat W_h^t$ belongs to the Euclidean ball $\cB = \{ W \in \RR^{2md} \colon \| W - W^{(0)} \|_2 \leq H \sqrt{T /\lambda} \}$. 
Here the regularization parameter $\lambda $ is does not depend on $m$ and will be determined later.  
Notice that  the radius of $\cB$ does not depend on $m$. When $m$ is sufficiently large, it can be shown that $f(\cdot , W)$ is close to its linearization, $\hat f(\cdot; W) = \la \varphi(\cdot; W^{(0)}), W - W^{(0) } \ra $, for all $W \in \cB$, where $\varphi(\cdot; W) = \nabla_{W} f(\cdot; W)$.

Furthermore, recall that  the temporal-difference error $\delta_h^t$ is defined as 
$$
\delta_h^t   = r_h + \PP_h V_{h+1}^ t - Q_h^t =  \TT_h ^{\star} Q_{h+1} ^t - Q_h^t .
$$
Under Assumption \ref{assume:opt_closure2}, 
we have 
$ \TT_h ^{\star} Q_{h+1} ^t \in \cQ^\star$ for all $(t, h) \in [T]\times [H]$, 
where $\cQ^\star $ is defined in \eqref{eq:random_feature_class}. 
That is, for all $(t, h) \in [T]\times [H]$,  there exists  a function $\alpha_h^t \colon \RR^d \rightarrow \RR^d $ such that 
\#\label{eq:target_rf_rep}
(\TT_h ^{\star} Q_{h+1} ^t )(z) = \int_{\RR^d } \act'(w^\top z) \cdot z^\top \alpha_h^t (w) ~\ud p_0(w), \qquad \forall (t,h)\in [T]\times [H], \forall z \in \cZ. 
\#
Moreover, it holds that $\| \alpha_h^t \|_{2, \infty} = \sup_{w} \| \alpha_h^t(w) \|_2 \leq R_{Q} H / \sqrt{d}$.

Now we are ready to bound the temporal-difference error $\delta_h^t $ defined in \eqref{eq:td_error}. 
Our proof is decomposed into three steps. 

\vspace{5pt}
{\noindent \textbf{Step I.}} In the first step, we show that, with high probability,  $\TT_h ^{\star} Q_{h+1} ^t $ can be well-approximated by 
the class of linear functions of $\varphi (\cdot; W^{(0)}) $ with respect to the $\ell_{\infty}$-norm. 

Specifically, by Proposition C.1 in \cite{gao2019convergence},
with probability at least $1 - m^{-2}$ over the randomness of initialization,
for any $(t, h) \in [T] \times [H]$, 
there exists  a function 
$\tilde Q_{h}^t \colon \cZ \rightarrow \RR $ that can be written as 
\#\label{eq:approximate_rf}
\tilde Q_{h}^t (z) = \frac{1}{\sqrt{m} } \sum_{j=1}^m \act' \big ( \la W_j ^{(0)},  z \ra \big ) \cdot z^\top \alpha_j,
\#
where $ \| \alpha_j \|_2 \leq R_{Q} /\sqrt{d m}$ for all $j \in [m]$ and $\{W_j^{(0)} \}_{j\in [2m]}$ are the random weights generated in the symmetric  initialization scheme. 
Moreover, $\tilde Q_{h}^t$ satisfies that 
\#\label{eq:rf_approx_error}
 \| \tilde Q_{h}^t - \TT_h ^{\star} Q_{h+1} ^t  \|_{\infty} \leq 10 C_{\act}  R_{Q} H  \cdot    \sqrt{ \log (m TH) / m}   .
\#
Also, 
for any $j \in [2m]$, let $W_j^{(0)}$ and $b_j^{(0)}$ be the $j$-th component of $b^{(0)}$ and $W^{(0)}$, respectively.

Now we show that $\tilde Q_h^t$ in \eqref{eq:approximate_rf} can be written as    $\varphi (\cdot ;  W^{(0)})^\top (\tilde W_h^t - W^{(0)})$ for some $\tilde W_h^t \in \RR^{2md}$. 
To this end, 
we define $\tilde W_h^t = (\tilde W_1, \ldots \tilde W_{2m}) \in \RR^{2md}$ as follows. 
For any $ j \in [m]$, we let $\tilde W_j = W_j^{(0)} + b_j^{(0)} \cdot \alpha_j /\sqrt{2}$, and for any $j \in \{m+1, \ldots, 2m\}$, we let 
$\tilde W_j =W_j^{(0)} + b_j ^{(0) } \cdot \alpha_{j-m} /\sqrt{2} $. 
Then,  
by the symmetric initialization scheme, we have 
\#\label{eq:approximate_rf2}
\tilde Q_{h}^t (z) &  = \frac{1}{\sqrt{2m} }  \sum_{j=1}^m \sqrt{2} \cdot   (b_j^{(0)})^2 \cdot \act'\big(\la W_j^{(0)}, z\ra\big) \cdot z^\top \alpha_j \notag  \\
&  = \frac{1}{\sqrt{2m} }  \sum_{j=1}^{m} 1/ \sqrt{2} \cdot  (b_j^{(0)})^2 \cdot \act'  \big (\la W_j^{(0)}, z\ra \big) \cdot z^\top \alpha_j + \frac{1}{\sqrt{2m} }  \sum_{j=1}^{m} 1/ \sqrt{2} \cdot  (b_j^{(0)})^2 \cdot \act' \big(\la W_j^{(0)}, z\ra  \big)  \cdot z^\top \alpha_{j-m}  \notag \\
& = \frac{1}{\sqrt{2m}} \sum_{j=1}^{2m} b_j^{(0)} \cdot \act' \big ( \la W_j^{(0)}, z\ra \big)\cdot z^\top \big   ( \tilde W_j - W_j^{(0)} \big) = \varphi   (z; W^{(0)}   )^\top  \big ( \tilde W_h^t - W^{(0)}  \big ).
\#
Moreover, since $\| \alpha_j \|_2 \leq R_Q H /\sqrt{dm}$, 
we have $\| \tilde W_h^t - W^{(0)} \|_2 \leq 
R_Q H / \sqrt{d}$.

Therefore, for all $(t, h)\in [T] \times [H]$, we have constructed $\tilde Q_h^t$ 
to be linear in $\varphi(\cdot; W^{(0)})$. 
Moreover, with probability at least $1- m^{-2}$ over the randomness of initialization,  
$\tilde Q_h^t$ is close to $\TT_h^\star Q_{h+1} ^t$ in the sense that \eqref{eq:rf_approx_error}
 holds uniformly for all $(t,h) \in [T] \times [H]$. 
 Thus, we conclude the first step.

\vspace{5pt}
{\noindent \textbf{Step II.}} In the second step, we 
show that $  Q_h^t$ used in Algorithm \ref{algo:neural} 
can be well approximated by functions based on the feature mapping $\varphi(\cdot; W^{(0)})$. 

Recall that the bonus  in  $Q_h^t$ utilizes matrix $\Lambda_h^t$ defined in \eqref{eq:lambda_mat}, 
which involves the neural tangent features $\{ \varphi(\cdot; \hat W_h^\tau) \} _{\tau \in [T]}$.
Similar to $\Lambda_h^t$, we define $\overline \Lambda_h^t$ as 
\#\label{eq:define_barLambda}
\overline \Lambda_h^t = \lambda \cdot I_{2md} + \sum_{\tau = 1}^{t-1}  \varphi   ( x_h^\tau , a_h^\tau ; W^{(0)} ) \varphi   ( x_h^\tau , a_h^\tau ; W^{(0)}  ) ^\top,
\#
 which adopts the same feature mapping $\varphi(\cdot; W^{(0)})$. 
 To simplify the notation, hereafter, we use $\varphi(\cdot )$ to denote $\varphi (\cdot; W^{(0)})$ when its meaning is clear from the text. 
Moreover, for any $(t, h) \in [T] \times [H]$, we define 
the response vector $y_h^t \in \RR^{t-1}$ by letting its   entries be  
 \#\label{eq:ntk_response}
 [ y_h^t ]_{\tau} = r_h (x_h^\tau , a_h^\tau ) + V_{h+1} ^t(x_{h+1}^\tau ) , \qquad \forall \tau \in [t-1]. 
 \#   
 We define the feature matrix $\Phi_h^t \in \RR^{  (t-1) \times 2md} $
by 
\#\label{eq:ntk_featuremat}
\Phi_h^t = \big [ \varphi(x_h^1, a_h^1)^\top , \ldots, \varphi (x_h^{t-1}, a_h^{t-1}) ^\top   \bigr ] ^\top .
\#
Hence, by \eqref{eq:define_barLambda} and \eqref{eq:ntk_featuremat}, we have $\overline \Lambda_h^t = \lambda \cdot I_{2md} + 
(\Phi_h^t )^\top \Phi_h^t $.
Similar to the bonus function $b_h^t$ defined in \eqref{eq:neural_bonus}, we define 
\#\label{eq:ntk_bonus}
\overline b_h^t = \bigl [\varphi(x,a)^\top (\overline \Lambda_h^t ) ^{-1} \varphi(x,a )\bigr ] ^{1/2} =   \| \varphi(x,a) \|_{( \overline \Lambda_h^t)^{-1} }.
\#
Similar to $L_h^t$ defined in   \eqref{eq:nn_loss},  we define another least-squares loss function  
$\overline L_h^t \colon \RR^{2md} \rightarrow \RR$ as 
\#\label{eq:ntk_loss}
\overline L_h^t (W ) & =   \sum_{\tau =1}^{t-1}  \bigl [   r_h ( x_h^\tau , a_h^\tau) + V_{h+1}^t ( x_{h+1} ^\tau) - \bigl   \la \varphi ( x_h^{\tau },  a_h^\tau ) , W - W^{(0)} \big \ra   \bigr ] ^2 + \lambda \cdot    \| W - W^{(0)}  \|_2  ^2 
\#
and let $\overline W_h^t$ be its global minimizer. 
By direct computation,  
$\overline W_h^t$ can be written in closed form as 
\#\label{eq:ntk_solution}
\overline W_h^t = W^{(0)} + (\overline \Lambda_h^t) ^{-1} (\Phi_h^t )^\top y_h^t,
\# 
where $\overline \Lambda_h^t$, $\Phi_h^t$, and $y_h^t$ are defined respectively in  \eqref{eq:define_barLambda}, \eqref{eq:ntk_featuremat}, and 
 \eqref{eq:ntk_response}. 
Similar to \eqref{eq:nn_opt1}, 
utilizing the fact that $\overline L_h^t( \overline W_h^t ) \leq \overline L_h^t(W^{(0)})$, 
 we also have $\| \overline W_h^t - W^{(0)} \|_2 \leq H    \sqrt{T  / \lambda}$. 
Then, in a manner similar to the construction of $Q_h^t$ in Algorithm \ref{algo:neural}, 
we combine $\overline b_h^t$ in \eqref{eq:ntk_bonus} and $\overline W_h^t$ in  
\eqref{eq:ntk_solution} to define $\overline Q_h^t \colon \cZ \rightarrow \RR$ as 
\#\label{eq:ntk_Q}
\overline Q_h^t (x,a)=    
 \min \bigl  \{\varphi (x,a)^\top   ( \overline W_h^t - W^{(0)}   ) + \beta \cdot      \overline b_h^t(x,a)  , H - h +1   \bigr \} ^{+}. 
\#
Note that $\overline Q_h^t$ share the same form as $Q$ in \eqref{eq:define_ntk_q_func}. 
Thus, we have $\overline Q_h^t \in \cQ_{\UCB} (h, H\sqrt{T/\lambda}, B)$ for any $B \geq \beta$.
Moreover, we define $\overline V_h^t(\cdot ) = \max_{a \in \cA} \overline Q_h^t(\cdot, a)$.

In the following, we aim to show that $\overline Q_h^t$ is close to $Q_h^t$ when $m $ is sufficiently large. 
When this is true, $\overline V_h^t$ is also close to $V_h^t$. 
To bound $Q_h^t - \overline Q_h^t$,  
since the  truncation operator is non-expansive,
  by the triangle inequality we have 
\#\label{eq:nn_opt2}
\| Q_h^t - \overline Q_h^t \|_{\infty} \leq  \underbrace{\bigl \| f (\cdot ; \hat W_h^t ) - \varphi(\cdot  )^\top ( \overline W_h^t - W^{(0)}) \bigr \|_{\infty}}_{\dr \textrm{(i)}} + \underbrace{\beta\cdot \| b_h^t  -  \overline b_h^t \|_{\infty} }_{\dr \textrm{(ii)}}. 
\#
Recall that we define $\cB  = \{ W \in \RR^{2md} \colon \| W - W^{(0)} \|_2 \leq H  \sqrt{T/\lambda} \}$. 
To bound the two terms on the right-hand side of \eqref{eq:nn_opt2}, we 
  utilize  the following lemma 
  that quantifies the perturbation of $f(\cdot; W)$ and $\varphi(\cdot; W)$ within  $W\in \cB$.

\begin{lemma} \label{lemma:nn_ntk_approx}
When $TH ^2 = \cO( m \cdot  \log ^{-6} m )$, with probability at least $1 - m^{-2}$ with respect to the randomness of  initialization, for any $W \in \cB$ and any $z\in \cZ$, we have 
\$ 
\bigl | f(z,W ) - \varphi  (z, W^{(0)}   )^\top  ( W - W^{(0)} ) \bigr | & \leq \overline C \cdot T^{2/3} \cdot H^{4/3} \cdot m^{-1/6} \cdot \sqrt{ \log  m}, \qquad \\
 \bigl \| \varphi(z, W) -\varphi  (z, W^{(0)}  )  \bigr  \|_2 \leq  \overline C \cdot (TH^2  &/ m)^{1/6}  \cdot \sqrt{\log m} ,  \qquad 
  \| \varphi(z, W)  \| _2 \leq \overline C.  
 \$
\end{lemma}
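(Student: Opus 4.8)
The plan is to obtain all three estimates by reducing them to per‑neuron Taylor expansions, exploiting the $C_{\act}$‑smoothness of $\act$ together with the fact that the radius $H\sqrt{T/\lambda}$ of $\cB$ is negligible compared with $\sqrt m$ under the hypothesis $TH^2 = \cO(m\cdot\log^{-6}m)$. Throughout, write $W=(W_1,\ldots,W_{2m})$ and $W^{(0)}=(W_1^{(0)},\ldots,W_{2m}^{(0)})$, and recall from \eqref{eq:two_layer_nn} and \eqref{eq:gradient_feature} that the $j$‑th block of $\varphi(z;W)$ equals $(2m)^{-1/2}\,b_j\,\act'(W_j^\top z)\,z$, where $b_j\in\{-1,1\}$ and $\|z\|_2=1$ for $z\in\cZ$.

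First I would prove the feature‑perturbation bound. For each $j\in[2m]$, since $\act'$ is $C_{\act}$‑Lipschitz and $\|z\|_2=1$,
\[
  \bigl\|\nabla_{W_j}f(z;W)-\nabla_{W_j}f\bigl(z;W^{(0)}\bigr)\bigr\|_2 = (2m)^{-1/2}\,\bigl|\act'(W_j^\top z)-\act'\bigl(W_j^{(0)\top}z\bigr)\bigr| \le C_{\act}\,(2m)^{-1/2}\,\bigl\|W_j-W_j^{(0)}\bigr\|_2.
\]
Squaring, summing over $j$, and using $\sum_j\|W_j-W_j^{(0)}\|_2^2=\|W-W^{(0)}\|_2^2\le H^2T/\lambda$ for $W\in\cB$, we get $\|\varphi(z;W)-\varphi(z;W^{(0)})\|_2\le C_{\act}H\sqrt{T/(2m\lambda)}$; since $TH^2/m=\cO(1)$ (indeed $\cO(\log^{-6}m)$) in the stated range, this is dominated, up to an absolute constant, by $(TH^2/m)^{1/6}\sqrt{\log m}$, which is the claimed bound. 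The norm bound $\|\varphi(z;W)\|_2\le\overline C$ then follows by the triangle inequality once I control $\|\varphi(z;W^{(0)})\|_2$: by \eqref{eq:emp_kernel}, $\|\varphi(z;W^{(0)})\|_2^2=K_m(z,z)=(2m)^{-1}\sum_j\act'(W_j^{(0)\top}z)^2$, whose expectation is $K_{\textrm{ntk}}(z,z)=\EE[\act'(w^\top z)^2]=\cO(1)$ because $\act'$ is Lipschitz and $w^\top z\sim N(0,1/d)$. A Bernstein bound over the $2m$ sub‑exponential summands, upgraded to a uniform bound over $z\in\cZ\subseteq\SSS^{d-1}$ via a polynomial‑resolution net (whose discretization error is controlled through $\max_j\|W_j^{(0)}\|_2=\cO(\sqrt{\log m})$ with high probability), shows that $\sup_{z\in\cZ}K_m(z,z)\le 2$ on an event of probability at least $1-m^{-2}$. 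Hence $\|\varphi(z;W)\|_2\le\sqrt2+C_{\act}H\sqrt{T/(2m\lambda)}\le\overline C$ for an absolute constant $\overline C$.

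Finally, the linearization bound. Since the symmetric initialization makes $f(\cdot;W^{(0)})\equiv 0$, we have $f(z;W)-\varphi(z;W^{(0)})^\top(W-W^{(0)})=(2m)^{-1/2}\sum_j b_j R_j(z)$, where $R_j(z)=\act(W_j^\top z)-\act(W_j^{(0)\top}z)-\act'(W_j^{(0)\top}z)(W_j-W_j^{(0)})^\top z$. By the fundamental theorem of calculus and $C_{\act}$‑Lipschitzness of $\act'$, $|R_j(z)|\le(C_{\act}/2)\bigl((W_j-W_j^{(0)})^\top z\bigr)^2\le(C_{\act}/2)\|W_j-W_j^{(0)}\|_2^2$, so summing over $j$ gives $|f(z;W)-\varphi(z;W^{(0)})^\top(W-W^{(0)})|\le(C_{\act}/2)(2m)^{-1/2}\|W-W^{(0)}\|_2^2\le C_{\act}H^2T/(2\sqrt{2m}\,\lambda)$, which is at most $T^{2/3}H^{4/3}m^{-1/6}\sqrt{\log m}$ up to a constant in the stated parameter range. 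All the per‑point estimates above are deterministic and hold for every $W\in\cB$ and $z\in\cZ$; the only probabilistic ingredient is the uniform‑in‑$z$ control of $K_m(z,z)$ (and of $\max_j\|W_j^{(0)}\|_2$), and I expect this uniform concentration — rather than the elementary Taylor bounds — to be the main technical step, and the source of the $\log m$ factors and of the $m^{-2}$ failure probability. (If one only assumes $\act'$ Lipschitz on a bounded interval containing $W_j^{(0)\top}z$ with high probability, the same conclusions hold after partitioning the neurons according to the size of $\|W_j-W_j^{(0)}\|_2$ and using the budget $|\{j:\|W_j-W_j^{(0)}\|_2>t\}|\le H^2T/(\lambda t^2)$ to bound the contribution of the large‑perturbation neurons; optimizing over $t$ is what produces the fractional exponents in the statement.)
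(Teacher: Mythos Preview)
The paper does not prove this lemma in-text; it simply cites Lemmas~F.1 and F.2 of Cai et al.\ (2019), which in turn rest on Allen-Zhu et al.\ (2018) and Gao et al.\ (2019). Those references are written for ReLU-type activations, where $\act'$ is discontinuous, and the fractional exponents in the statement ($m^{-1/6}$ and $(TH^{2}/m)^{1/6}$ rather than $m^{-1/2}$ and $(TH^{2}/m)^{1/2}$) arise precisely from controlling the fraction of neurons whose activation pattern flips as $W$ moves inside $\cB$ --- this is the partitioning-over-$\|W_j-W_j^{(0)}\|_2$ argument you sketch in your closing parenthetical.

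Your main line of argument takes a different and more elementary route: it directly exploits the global $C_{\act}$-Lipschitz continuity of $\act'$ that Theorem~\ref{thm:neural} explicitly assumes. The second-order Taylor remainder bound for the linearization error and the blockwise Lipschitz bound for the feature perturbation are both correct and yield \emph{deterministic} estimates of order $TH^{2}m^{-1/2}$ and $(TH^{2}/m)^{1/2}$, strictly sharper than the stated bounds and dominated by them in the regime $TH^{2}=\cO(m\log^{-6}m)$. The only probabilistic step --- uniform-in-$z$ concentration of $K_m(z,z)$ via Bernstein over the $m$ independent sub-exponential summands plus a net on $\SSS^{d-1}$ with Lipschitz constant controlled through $\max_j\|W_j^{(0)}\|_2=\cO(\sqrt{\log m})$ --- is standard and delivers the $1-m^{-2}$ probability. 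So your approach is a genuine simplification under the paper's stated smoothness hypothesis, while the cited references are what one needs if the lemma is to cover ReLU; either suffices for the downstream use in Lemma~\ref{lemma:ucb_nn1}.
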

\begin{proof}
	
	See \cite{allen2018convergence,gao2019convergence,cai2019neural} for a detailed proof. 
	More specifically, this lemma is obtained from Lemmas F.1 and F.2  in \cite{cai2019neural}, which are  further based on results in \cite{allen2018convergence,gao2019convergence}. 
\end{proof}

By Lemma \ref{lemma:nn_ntk_approx} and triangle inequality, 
Term (i) on the right-hand side of \eqref{eq:nn_opt2} is bounded by 
\#\label{eq:nn_opt3}
\textrm{Term (i)} &  \leq \bigl \| f (\cdot ; \hat W_h^t ) - \varphi(\cdot  )^\top ( \hat W_h^t - W^{(0)}) \bigr \|_{\infty}  + \bigl \| \varphi(\cdot  )^\top ( \hat W_h^t -  \overline W_h^t  ) \bigr \|_{\infty} \notag \\
& \leq \overline C \cdot T^{2/3} \cdot H^{4/3}  \cdot m^{-1/6} \cdot \sqrt{ \log  m} + \overline C \cdot  \bigl \|\hat W_h^t -  \overline{W}_h^t  \bigr \|_2 .
\#
To bound $\bigl \|\hat W_h^t -  \overline{W}_h^t  \bigr \|_2$, notice that $\hat W_h^t$ and $\overline W_h^t$ are the global minimizers of  $L_h^t$ in  \eqref{eq:nn_loss}   and $\overline L_h^t$ in \eqref{eq:ntk_loss}, respectively. 
Thus, by the first-order optimality condition, we have 
\#
\lambda \cdot \bigl (   \hat W_h^t - W^{(0)}  ) & = 
\sum _{\tau =1}^{t-1}  \bigl \{  [  y_h^t ] _{\tau } - f  (z_h^\tau ; \hat W_h^t   ) \bigr \} \cdot \varphi   (z_h^\tau ; \hat W_h^t   ) ,
\label{eq:nn_optimality_cond1} \\
\lambda \cdot   ( \overline  W_h^t - W^{(0)} )  & = 
\sum _{\tau =1}^{t-1} \bigl \{  [  y_h^t ] _{\tau } - \big  \la \varphi(z_h^\tau ;W^{(0)} ), \overline W_h^t - W^{(0)} \big  \ra  \bigr \} \cdot \varphi(z_h^\tau ;W^{(0)} ) , 
\label{eq:nn_optimality_cond2}
\#
where $[  y_h^t ] _{\tau } $ is defined in \eqref{eq:ntk_response} and $z_h^\tau = (x_h^\tau, a_h^\tau )$. 
 In addition, 
by the definition of $\overline \Lambda_h^t$ in \eqref{eq:define_barLambda}, 
\eqref{eq:nn_optimality_cond2}
can be equivalently written as   
\#\label{eq:nn_optimality_cond3}
\overline \Lambda_h^t \bigl (   \overline  W_h^t - W^{(0)}   \bigr ) = \sum _{\tau =1}^{t-1}   [  y_h^t ] _{\tau }   \cdot \varphi(z_h^\tau ;W^{(0)} ) . 
\#
Similarly, for \eqref{eq:nn_optimality_cond1}, by direct computation  we have 
\#\label{eq:nn_optimality_cond4}
\overline \Lambda_h^t  \big  (   \hat  W_h^t - W^{(0)}   \bigr ) & = \sum_{\tau=1}^{t-1}  [ y_h^t]_{\tau} \cdot   \varphi \  (z_h^\tau ; \hat W_h^t   )  \\
& \qquad  + \sum _{\tau = 1}^{t-1} \bigl [ \bigl \la \varphi(z_h^\tau ;W^{(0)} ), \hat W_h^t - W^{(0)} \bigr \ra   \cdot \varphi (z_h^\tau ;W^{(0)} ) - f  (z_h^\tau ; \hat W_h^t   ) \cdot \varphi  (z_h^\tau ;  \hat W_h^t  ) \bigr ] .  \notag
\#
For any $\tau \in [t-1]$, 
we have 
\#\label{eq:nn_optimality_cond5}
& \bigl \la \varphi(z_h^\tau ;W^{(0)} ), \hat W_h^t - W^{(0)} \bigr \ra   \cdot \varphi(z_h^\tau ;W^{(0)} ) - f \big (z_h^\tau ; \hat W_h^t \bigr ) \cdot \varphi  (z_h^\tau ;  \hat W_h^t   ) \notag \\
& \qquad=  \bigl \la \varphi(z_h^\tau ;W^{(0)} ), \hat W_h^t - W^{(0)} \bigr \ra \cdot  \bigl [ \varphi(z_h^\tau ;W^{(0)} ) - \varphi   (z_h^\tau ;  \hat W_h^t  ) \bigr ] \notag \\
& \qquad \qquad + 
\bigl [ \bigl \la \varphi(z_h^\tau ;W^{(0)} ), \hat W_h^t - W^{(0)} \bigr \ra - 
 f  (z_h^\tau ; \hat W_h^t   )   \bigr ] \cdot \varphi  (z_h^\tau ; \hat W_h^t )   .
\#
Thus, applying Lemma \ref{lemma:nn_ntk_approx}
 to \eqref{eq:nn_optimality_cond5}, we have 
 \#\label{eq:nn_optimality_cond6}
 & \bigl \| \bigl \la \varphi(z_h^\tau ;W^{(0)} ), \hat W_h^t - W^{(0)} \bigr \ra   \cdot \varphi(z_h^\tau ;W^{(0)} ) - f   (z_h^\tau ; \hat W_h^t   ) \cdot \varphi   (z_h^\tau ;  \hat W_h^t   )  \bigr \|_2 \notag \\
 & \qquad   \leq  \big \|  \varphi(z_h^\tau ;W^{(0)} ) \big   \| _2 \cdot   \big  \| \hat W_h^t - W^{(0)}   \bigr \|_2 \cdot \bigl \|  \varphi(z_h^\tau ;W^{(0)} ) - \varphi   (z_h^\tau ;  \hat W_h^t  ) \bigr \| \notag \\
 &  \qquad \qquad + \bigl | \bigl \la \varphi(z_h^\tau ;W^{(0)} ), \hat W_h^t - W^{(0)} \bigr \ra - 
 f  (z_h^\tau ; \hat W_h^t   )   \bigr | \cdot  \bigl \| \varphi  (z_h^\tau ; \hat W_h^t  )    \bigr \|_2 \notag \\
 & \qquad  \leq 2 \overline C ^2 \cdot T^{2/3} \cdot H^{4/3} \cdot m^{-1/6} \cdot \sqrt{\log m } \cdot \lambda^{-1/2},
 \#
 where we utilize the fact that $\| \hat W_h^t - W^{(0)} \|_2 \leq H  \sqrt{T    /\lambda }  \leq H \sqrt{T} $.
Then, combining   \eqref{eq:nn_optimality_cond3}, \eqref{eq:nn_optimality_cond4}, and  \eqref{eq:nn_optimality_cond6}, we have 
\#\label{eq:nn_optimality_cond7}
&\bigl \| \overline \Lambda_h^t  \bigl (  \hat W_h^t - \overline W_h^t \bigr ) \bigl \|_2 \notag \\
 &\qquad  \leq \bigg\|  \sum_{\tau=1}^{t-1} [ y_h^t]_{\tau} \cdot \bigl [ \varphi \bigl (z_h^\tau ; \hat W_h^t \bigr )  -   \varphi(z_h^\tau ;W^{(0)} )  \bigr ]  \bigg\|_2  +  2 \overline C ^2 \cdot T^{5/3} \cdot H^{4/3} \cdot m^{-1/6} \cdot \sqrt{\log m } \notag  \\
& \qquad  \leq \overline C \cdot T^{7/6} \cdot H^ {4/3} \cdot m^{-1/6} \cdot \sqrt{\log m } +  2 \overline C ^2 \cdot T^{5/3} \cdot H^{4/3} \cdot m^{-1/6}   \cdot \sqrt{\log m },
\#
where in the last inequality we utilize the fact that $ [ y_h^t]_{\tau }\in [0, H]$.
 When $T$ is sufficiently large, the second term in \eqref{eq:nn_optimality_cond7}
dominates. 
Since the eigenvalues of $(\overline \Lambda_h^t)^{-1}$ are all bounded by $1/\lambda$, we have 
\#\label{eq:bound_w_norms}
\bigl \| \hat W_h^t - \overline W_h^t  \bigr \|_2 \leq \bigl \| (\overline \Lambda_h^t)^{-1} \bigr \| _{\oper} \cdot  \bigl \| \overline \Lambda_h^t  \bigl (  \hat W_h^t - \overline W_h^t \bigr ) \bigl \|_2  \leq 1/ \lambda \cdot  \bigl \| \overline \Lambda_h^t  \bigl (  \hat W_h^t - \overline W_h^t \bigr ) \bigl \|_2 .
\#
In the sequel, we set $\lambda$ as 
\#\label{eq:nn_define_lambda}
\lambda = \overline C^2 \cdot( 1 + 1 / T ) \in \bigl [  \overline C^2 , 2 \overline C^2 \bigr  ]. 
\#

Thus, 
combining \eqref{eq:nn_opt3},  \eqref{eq:nn_optimality_cond7},  
\eqref{eq:bound_w_norms}, and \eqref{eq:nn_define_lambda},  we have 
\#\label{eq:nn_optimality_cond8}
\mathrm{Term~(i)} \leq   4  \cdot T^{5/3}\cdot H^{4/3} \cdot m^{-1/6} \cdot \sqrt{\log m}
\# 
where we use the fact that $\overline C^2 / \lambda \leq 1$.

Furthermore, to bound Term (ii), by the definitions of $b_h^t$ and $\overline b_h^t$, for any $z\in \cZ$, we have 
\#\label{eq:nn_opt4}
\bigl | b_h^t(z) - \overline b_h^t(z) \bigr | &= \Bigl | \sqrt{ \varphi(z; \hat W_h^t ) ^\top (\Lambda_h^t)^{-1}\varphi(z; \hat W_h^t )  }  - \sqrt{ \varphi(z;  W^{(0)} ) ^\top (\overline \Lambda_h^t)^{-1}\varphi(z;   W^{(0)} )  }  \Big | \notag \\
& \leq  \sqrt{  \bigl |\varphi(z; \hat W_h^t ) ^\top (\Lambda_h^t)^{-1}\varphi(z; \hat W_h^t ) -  \varphi(z;  W^{(0)} ) ^\top (\overline \Lambda_h^t)^{-1}\varphi(z;   W^{(0)} )  \bigr |}  ,
\#
where the inequality follows from the elementary inequality $| \sqrt{x} - \sqrt{y} | \leq \sqrt{ | x- y|}$. 
By the triangle inequality
\#\label{eq:nn_opt5}
& \bigl |\varphi(z; \hat W_h^t ) ^\top (\Lambda_h^t)^{-1}\varphi(z; \hat W_h^t ) -  \varphi(z;  W^{(0)} ) ^\top (\overline \Lambda_h^t)^{-1}\varphi(z;   W^{(0)} )  \bigr | \notag \\
 & \qquad \leq \bigl | \bigl [ \varphi(z; \hat W_h^t )  - \varphi(z;  W^{(0)} ) \big ]  ^\top (\Lambda_h^t)^{-1}\varphi(z; \hat W_h^t )   \bigr | + 
\bigl |\varphi(z;  W^{(0)} ) ^\top  \bigl [ (\Lambda_h^t)^{-1} - (\overline \Lambda_h^t)^{-1} \bigr ] \varphi(z; \hat W_h^t )   \bigr |  \notag \\
& \qquad \qquad 
+
\bigl |\varphi(z;  W^{(0)} ) ^\top     (\overline \Lambda_h^t)^{-1}  \bigl [ \varphi(z; \hat W_h^t ) -  \varphi(z;  W^{(0)} ) \bigr ]  \bigr |.
\#
Combining  H\"older's inequality and Lemma \ref{lemma:nn_ntk_approx}, we bound the first term on the right-hand side of \eqref{eq:nn_opt5} by 
\#\label{eq:nn_opt6}
  &\bigl | \bigl [ \varphi(z; \hat W_h^t )  - \varphi(z;  W^{(0)} ) \big ]  ^\top (\Lambda_h^t)^{-1}\varphi(z; \hat W_h^t )   \bigr | \\
  &\qquad \leq \bigl \| \varphi(z; \hat W_h^t )  - \varphi(z;  W^{(0)} ) \bigl\|_2 \cdot \big \|  (\Lambda_h^t) ^{-1} \big \|_{\oper} \cdot \bigl \|\varphi(z; \hat W_h^t )   \bigr \|_2 \leq  \overline C^2 \cdot T^{1/6} \cdot H^{1/3} \cdot m^{-1/6} \cdot \lambda^{-1}\cdot \sqrt{\log m}, \notag
\# 
where $\|  (\Lambda_h^t)^{-1} \|_{\oper} $ is the matrix operator norm of $(\Lambda_h^t)^{-1}$, which is bounded by $1/\lambda$. 
Similarly, for the third term, we also have 
\#\label{eq:nn_opt7}
\bigl |\varphi(z;  W^{(0)} ) ^\top     (\overline \Lambda_h^t)^{-1}  \bigl [ \varphi(z; \hat W_h^t ) -  \varphi(z;  W^{(0)} ) \bigr ]  \bigr | \leq  \overline C ^2 \cdot T^{1/6} \cdot H^{1/3} \cdot m^{-1/6} \cdot \lambda^{-1}\cdot \sqrt{\log m}.
\# 
For the second term,
since both $\Lambda_h^t$ and $\overline \Lambda_h^t$ are invertible, 
we have 
\#\label{eq:nn_opt8}
\bigl \| (\Lambda_h^t)^{-1} - (\overline \Lambda_h^t)^{-1} \bigr \|_{\oper} & = \big \| (\Lambda_h^t)^{-1}   ( \Lambda_h^t - \overline \Lambda_h^t  ) (\overline \Lambda_h^t)^{-1} \big\| _{\oper} \notag \\
& \leq \| (\Lambda_h^t)^{-1}  \|_{\oper} \cdot \| ( \overline \Lambda_h^t)^{-1}  \|_{\oper} \cdot \|\Lambda_h^t - \overline \Lambda_h^t \| _{\oper} \leq \lambda ^{-2}  \cdot \|\Lambda_h^t - \overline \Lambda_h^t \| _{\fro}.  
\#
By direct computation, we have 
\$
& \|\Lambda_h^t - \overline \Lambda_h^t \| _{\fro} \\
& \qquad = \biggl \| \sum_{\tau=1}^{t-1} \big [ \varphi(z_h^\tau; \hat W_h^{\tau+1} ) \varphi(z_h^\tau; \hat W_h^{\tau+1} ) ^\top -  \varphi(z_h^\tau; W^{(0)} )  \varphi(z_h^\tau; W^{(0)} ) ^\top \bigr ] \bigg\|_{\fro} \notag \\
& \qquad \leq \sum_{\tau=1}^{t-1} \bigl \| \varphi(z_h^\tau; \hat W_h^{\tau+1} ) \varphi(z_h^\tau; \hat W_h^{\tau+1} ) ^\top -  \varphi(z_h^\tau; W^{(0)} )  \varphi(z_h^\tau; W^{(0)} ) ^\top  \bigr \|_{\fro} \notag \\
& \qquad \leq \sum_{\tau=1}^{t-1} \bigl \| \bigl [ \varphi(z_h^\tau; \hat W_h^{\tau+1} ) -  \varphi(z_h^\tau; W^{(0)} )  \bigr ]  \varphi(z_h^\tau; \hat W_h^{\tau+1} ) ^\top +   \varphi(z_h^\tau; W^{(0)} ) \big[  \varphi(z_h^\tau; \hat W_h^{\tau+1} )  - \varphi(z_h^\tau; W^{(0)} ) \big] ^\top   \bigr \|_{\fro}.
\$
Hence, by  Lemma \ref{lemma:nn_ntk_approx}   we can bound $ \|\Lambda_h^t - \overline \Lambda_h^t \| _{\fro}$ by 
\#\label{eq:nn_opt9}
\|\Lambda_h^t - \overline \Lambda_h^t \| _{\fro}
& \leq 2 (t-1) \cdot \overline C ^2 \cdot T^{1/6} \cdot H^{1/3} \cdot m^{-1/6} \cdot \sqrt{\log m} \notag \\
& \leq 2  \overline C ^2 \cdot T^{7/6} \cdot H^{1/3} \cdot m^{-1/6} \cdot \sqrt{\log m }.
\#
Hence, combining \eqref{eq:nn_opt8} and \eqref{eq:nn_opt9}, 
the  second term on the right-hand side of \eqref{eq:nn_opt5} can be bounded by 
\#\label{eq:nn_opt10}
& \bigl |\varphi(z;  W^{(0)} ) ^\top  \bigl [ (\Lambda_h^t)^{-1} - (\overline \Lambda_h^t)^{-1} \bigr ] \varphi(z; \hat W_h^t )   \bigr |   \notag \\
&\qquad  \leq \bigl \|\varphi(z;  W^{(0)} ) \bigr \|_2 \cdot \bigl \| \varphi(z; \hat W_h^t )  \bigr \|_2 \cdot \bigl \| (\Lambda_h^t)^{-1} - (\overline \Lambda_h^t)^{-1} \bigr \|_{\oper}   \notag \\
& \qquad  \leq 2  \overline C^4 \cdot T^{7/6} \cdot H^{1/3} \cdot m^{-1/6} \cdot \lambda^{-2} \cdot \sqrt{\log m}.
\#
 Notice that $\lambda$ defined in \eqref{eq:nn_define_lambda}
satisfies that $\lambda \geq \overline C^2$. 
Thus, combining \eqref{eq:nn_opt4}-\eqref{eq:nn_opt7}, and \eqref{eq:nn_opt10}, we have 
\#\label{eq:nn_opt11}
\bigl | b_h^t(z) - \overline b_h^t(z) \bigr | \leq   2   \cdot T^{7/12} \cdot H^{1/6} \cdot m^{-1/ 12} \cdot (\log m)^{1/4}, \qquad \forall (t, h) \in [T] \times [H],
\#
which establishes the second inequality in \eqref{eq:nn_optimism}. 
 Finally, combining \eqref{eq:nn_opt2}, \eqref{eq:nn_optimality_cond8}, and \eqref{eq:nn_opt11}, we conclude that 
\$
\| Q_h^t - \overline Q_h^t \|_{\infty} \leq 4  \cdot T^{5/3}\cdot H^{4/3} \cdot m^{-1/6} \cdot \sqrt{ \log  m} + 2     \beta \cdot T^{7/12} \cdot H^{1/6} \cdot m^{-1/ 12} \cdot (\log m)^{1/4}. 
\$
Note that $\beta > 1$. When $m = \Omega ( \beta^{12} \cdot T^{13} \cdot H^{14} \cdot (\log m )^3 )$, the second term in the above inequality is the dominating  term. 
Thus, we have 
\#\label{eq:approx_sup_norm}
\sup_{x\in \cS} \bigl | V_h^t (x) - \overline V_{h}^t (x) \bigr |     \leq \| Q_h^t - \overline Q_h^t \|_{\infty} & \leq 4  \beta \cdot T^{7/12} \cdot H^{1/6} \cdot m^{-1/ 12} \cdot (\log m)^{1/4}.
\#
This concludes the second step. 

\vspace{5pt}
{\noindent \textbf{Step III.}} 
In the last step, we establish optimism by comparing $\varphi(\cdot)^\top (\overline W_h^t - W^{(0)} )$ and the function 
$\tilde Q_h^t $ defined in \eqref{eq:approximate_rf}, where $\varphi(\cdot)$ denotes $\varphi(\cdot; W^{(0)})$.
By the definition of $\overline \Lambda_h^t $ in \eqref{eq:define_barLambda}, we have 
\$
 \tilde W_h^t - W^{(0)} = (\overline \Lambda _h^t) ^{-1} \cdot \bigl [ \lambda \cdot \bigl ( \tilde W_h^t - W^{(0)}  \big ) + (\Phi_h^t)^\top \Phi_h^t \bigl ( \tilde W_h^t - W^{(0)}  \big ) \bigr ] ,
\$
where $\tilde W_h^t$ is given in \eqref{eq:approximate_rf2}. 
Hence, combining \eqref{eq:ntk_solution}, we have 
\# \label{eq:nn_op12}
\overline W_h^t -  \tilde W_h^t = - \lambda \cdot (\overline \Lambda _h^t) ^{-1}  \bigl ( \tilde W_h^t - W^{(0)}  \big ) + (\overline \Lambda _h^t) ^{-1}   (\Phi_h^t)^\top   \bigl [ y_h^t - \Phi_h^t \bigl ( \tilde W_h^t - W^{(0)} \bigr ) \bigr ].
\#
Thus, for any $z \in \cZ$, 
by \eqref{eq:nn_op12} we have 
\#\label{eq:nn_opt13}
& \varphi(z)^\top \bigl (\overline W_h^t -  \tilde W_h^t \bigr ) \notag \\
& \qquad =\underbrace{  - \lambda \cdot \varphi(z)^\top (\overline \Lambda _h^t) ^{-1} \cdot \bigl ( \tilde W_h^t - W^{(0)}  \big )}_{\dr (iii)} + \underbrace{ \varphi(z)^\top (\overline \Lambda _h^t) ^{-1}   (\Phi_h^t)^\top   \bigl [ y_h^t - \Phi_h^t \bigl ( \tilde W_h^t - W^{(0)} \bigr ) \bigr ] }_{\dr (iv)}.
\#
For Term (iii) on the right-hand side of \eqref{eq:nn_opt13}, by the Cauchy-Schwarz inequality, we have 
\#\label{eq:nn_opt14}
& \bigl |\lambda \cdot \varphi(z)^\top (\overline \Lambda _h^t) ^{-1} \cdot \bigl ( \tilde W_h^t - W^{(0)}  \big ) \bigr |  \leq \lambda \cdot \bigl \| \tilde W_h^t - W^{(0)} \bigr \|_2 \cdot \bigl \| (\overline \Lambda _h^t) ^{-1} \varphi(z) \bigr \|_2 \notag \\
& \qquad \leq \lambda \cdot R_{Q }  H / \sqrt{d} \cdot \sqrt{ \varphi(z)^\top (\overline \Lambda _h^t) ^{-1} (\overline \Lambda _h^t) ^{-1} \varphi(z)  } \leq R_{Q} H \cdot  \sqrt{\lambda / d} \cdot \overline b_h^t (z).
\#
For Term (iv) in \eqref{eq:nn_opt13}, recall that $\tilde Q_h^t (z) = \varphi(z)^\top (\tilde W_h^t - W^{(0)})$. 
To simplify the notation, let $q^\star \in \RR^{t-1}$ denote the vector whose $\tau$-th entry is $(\TT_h^\star Q_{h+1}^t ) (x_h^\tau , a_h^\tau ) $ for any $\tau\in [t-1]$. 
Then, by \eqref{eq:rf_approx_error}, for any $\tau \in [t-1]$,   the $\tau$-th entry of 
$\Phi_h^t  ( \tilde W_h^t - W^{(0)}   ) $ satisfies 
\$
\bigl |  [\Phi_h^t \bigl ( \tilde W_h^t - W^{(0)} \bigr ) ]_{\tau } -  [ q^\star ]_{\tau } \big |  & = 
\bigl | \bigl  [\Phi_h^t \bigl ( \tilde W_h^t - W^{(0)} \bigr ) \bigr  ]_{\tau } - (\TT_h^\star Q_{h+1}^t ) (x_h^\tau , a_h^\tau ) \big | \notag \\
& \leq 10 C_{\act} \cdot R_Q H \cdot \sqrt{\log (mTH)/m}. 
\$
Moreover, for any $\tau \in [t-1]$, the $\tau$-th entry of  $( y_h^t - q^\star )$ can be written as 
\#\label{eq:nn_opt15} 
[y_h^t]_{\tau} - [  q^\star ]_{\tau}  &   = r_h (x_h^\tau , a_h^\tau ) + V_{h+1} ^t(x_{h+1}^\tau ) - \varphi(x_h^\tau, a_h^\tau)^\top \overline \theta_h^t = r_h (x_h^\tau , a_h^\tau ) + V_{h+1} ^t(x_{h+1}^\tau ) - (\TT_h^\star Q_{h+1}^t ) (x_h^\tau , a_h^\tau ) \notag \\
& = V_{h+1} ^t(x_{h+1}^\tau ) -  (\PP_h  V_{h+1}^t ) (x_h^\tau , a_h^\tau ).
\#
Then, by the triangle inequality and \eqref{eq:nn_opt15}, we have  
\#\label{eq:nn_opt16}
& \bigl | \varphi(z)^\top (\overline \Lambda _h^t) ^{-1}   (\Phi_h^t)^\top   \bigl [ y_h^t - \Phi_h^t \bigl ( \tilde W_h^t - W^{(0)} \bigr ) \bigr ] \bigl | \notag \\
& \qquad \leq \bigl | \varphi(z)^\top (\overline \Lambda _h^t) ^{-1}   (\Phi_h^t)^\top   \bigl [ y_h^t - q^\star \bigr ] \bigl | + \bigl | \varphi(z)^\top (\overline \Lambda _h^t) ^{-1}   (\Phi_h^t)^\top \bigl [ q^\star -  \Phi_h^t \bigl ( \tilde W_h^t - W^{(0)} \bigr ) \bigr ] \bigr | \notag \\
& \qquad \leq \| \varphi(z) \|_{(\overline \Lambda_h^t)^{-1} } \cdot    \biggl \| \sum_{\tau = 1}^{t-1} \varphi(x_h^\tau,a_h^\tau)
\cdot \bigl[ V^{t}_{h+1}(x_{h+1}^\tau) - (\mathbb{P}_hV^{t}_{h+1})(x_h^\tau,a_h^\tau)\bigr ]\biggr \| _{(\overline \Lambda^t_h)^{-1}}  \notag \\
&\qquad \qquad + 10 C_{\act} \cdot R_Q H  \cdot \sqrt{\log (mTH)/m}  \cdot \| \varphi(z) \|_{(\overline \Lambda_h^t)^{-1} } . 
\#
Recall that we have shown in \textbf{Step  II } that, with probability at least $1-m^2$ with respect to the randomness of initialization, \eqref{eq:approx_sup_norm} holds for all $(t,h) \in [T] \times [H]$. 
To simplify the notation, we denote  
\#\label{eq:define_err}
\texttt{Err} = 4  \beta \cdot T^{7/12} \cdot H^{1/6} \cdot m^{-1/12} \cdot (\log m)^{1/4} . 
\#
Moreover, we define functions $\Delta V _1= V_{h+1}^t - \overline V_{h+1}^t$ and $\Delta V_2 =  \PP_h (V_{h+1}^t - \overline V_{h+1}^t)$. 
Then  \eqref{eq:approx_sup_norm} implies that 
$\sup_{x \in S } | \Delta V_1(x) | \leq \texttt{Err}$ and $\sup_{z \in \cZ } | \Delta V_2 (z) | \leq \texttt{Err}$. 
By the elementary inequality $(a+b)^2 \leq 2 a^2 + 2b^2$, we have 
\#\label{eq:nn_opt17}
 & \biggl \| \sum_{\tau = 1}^{t-1} \varphi(x_h^\tau,a_h^\tau)
	\cdot \bigl[ V^{t}_{h+1}(x_{h+1}^\tau) - (\mathbb{P}_hV^{t}_{h+1})(x_h^\tau,a_h^\tau)\bigr ]\biggr \| _{(\overline \Lambda^t_h)^{-1}}  ^2 \notag \\
	& \qquad   \leq 2 \underbrace{ \biggl \| \sum_{\tau = 1}^{t-1} \varphi(x_h^\tau,a_h^\tau)
	\cdot \bigl[ \overline V^{t}_{h+1}(x_{h+1}^\tau) - (\mathbb{P}_h \overline V^{t}_{h+1})(x_h^\tau,a_h^\tau)\bigr ]\biggr \| _{(\overline \Lambda^t_h)^{-1}}^2 }_{\dr (v)} \notag \\
& \qquad \qquad  + 2 \biggl \| \sum_{\tau = 1}^{t-1} \varphi(x_h^\tau,a_h^\tau)
	\cdot \bigl[ \Delta V_1  (x_{h+1}^\tau) -  \Delta V_2 (x_h^\tau,a_h^\tau)\bigr ]\biggr \| _{(\overline \Lambda^t_h)^{-1}} ^2  \notag \\
	& \qquad \leq 2 \cdot \mathrm{Term~(v)}  + 8 \cdot  \texttt{Err}^2   \cdot T^2,
\#
where the last inequality follows from  
the fact that 
\$
& \biggl \| \sum_{\tau = 1}^{t-1} \varphi(x_h^\tau,a_h^\tau)
\cdot \bigl[ \Delta V_1  (x_{h+1}^\tau) -  \Delta V_2 (x_h^\tau,a_h^\tau)\bigr ]\biggr \| _{(\overline \Lambda^t_h)^{-1}} ^2  \leq 4 \texttt{Err}^2   \cdot   \biggl \| \sum_{\tau = 1}^{t-1} \varphi(x_h^\tau,a_h^\tau)
 \biggr \| _{(\overline \Lambda^t_h)^{-1}} ^2 \notag \\
 &\qquad \leq 4\cdot  \texttt{Err}^2 \cdot (t-1) \cdot  \lambda ^{-1}  \cdot \sum_{\tau =1}^{t-1} \| \varphi(x_h^\tau,a_h^\tau) \|_2 ^2  \leq 4  \cdot  \texttt{Err}^2 \cdot (t-1)^2 \cdot \overline C^2 \cdot \lambda^{-1} \leq 4 \cdot   \texttt{Err}^2 \cdot T^2. 
\$
  Here the second-to-last inequality follows from Lemma \ref{lemma:nn_ntk_approx} and the definition of $\lambda$.

   Recall that we define $\overline b_h^t(z) = \| \varphi(z) \|_{(\overline \Lambda _h^t)^{-1}}$. 
   Combining \eqref{eq:nn_opt15},   \eqref{eq:nn_opt16}, and \eqref{eq:nn_opt17},  we have 
   \#\label{eq:nn_opt17}
   & \bigl | \varphi(z)^\top (\overline \Lambda _h^t) ^{-1}   (\Phi_h^t)^\top   \bigl [ y_h^t - \Phi_h^t \bigl ( \tilde W_h^t - W^{(0)} \bigr ) \bigr ] \bigl | \notag \\
   & \qquad \leq  \overline b_h^t(z) \cdot\bigl [10 C_{\act} \cdot R_Q H \cdot \sqrt{\log (mTH)/m}  + \sqrt{ 2 \cdot \mathrm{Term~(v)}} + 2\sqrt 2 \cdot \texttt{Err} \cdot T \bigr ]  \notag \\
   & \qquad  \leq  \overline b_h^t(z) \cdot\bigl [ R_Q H  + \sqrt{ 2 \cdot \mathrm{Term~(v)}}   \bigr ],
   \#
   where  we apply the elementary inequality $\sqrt{ a+b} \leq \sqrt{a} + \sqrt{b}$. 
   Here in the last inequality we let $m$ be sufficiently large such that 
   $$
   10 C_{\act} \cdot R_Q H \cdot \sqrt{\log (mTH)/m}   + 2\sqrt 2 \cdot \texttt{Err} \cdot T \leq R_Q H .
   $$
   
In the following, we aim to bound Term (v) in \eqref{eq:nn_opt17} 
by combining the concentration of the self-normalized stochastic process  and uniform concentration. 
To characterize the function class that contains each $\overline V_{h}^t$, 
we define 
$\tilde \varphi \colon \cZ \rightarrow \RR$  by $\tilde \varphi (z) = \varphi(z) / \overline C$.
Then, conditioning  on the event where the statements in Lemma \ref{lemma:nn_ntk_approx} are true,
we have $\| \tilde \varphi(z) \|_2 \leq 1 $ for all $z \in \cZ$. 
Furthermore, we define a kernel function $\tilde K \colon \cZ \times \cZ \rightarrow \RR$ by letting 
 $\tilde K (z, z') = \tilde \varphi(z) ^\top \tilde \varphi(z')$ for all $z, z' \in \cZ$.  
 That is, $\tilde K$ is  the normalized version of the empirical NTK $K_m$.
 By construction, $\tilde K$ is a bounded kernel such that $\sup_{z\in \cZ} \tilde K (z,z) \leq 1$. 
 We can also consider the maximal information gain in \eqref{eq:maintext_infogain} for $\tilde K$ and $K_m$. 
 These two quantities are linked via 
 \#\label{eq:link_infogain_nn}
 \Gamma_{\tilde K} (T, \sigma ) = \Gamma_{ K_m }\bigl (T, \overline C^2 \sigma \bigr ), \qquad \forall  \sigma > 0.
 \#

Furthermore, we define $\tilde \lambda =    \lambda / \overline C^{2} $ and $\tilde \Lambda_h^t =    \overline \Lambda_h^t/ / \overline C^{2}$ for all $(t, h ) \in [T] \times [H]$. 
By the definition of $\lambda $ in \eqref{eq:nn_define_lambda}, we have $\tilde \lambda = 1 + 1/T \in [1,2]$. 
Moreover, by \eqref{eq:define_barLambda} we have 
\$
\tilde \Lambda _{h}^t = \tilde \lambda  \cdot I_{2md}+ \sum_{ \tau = 1}^{t-1} \tilde \varphi(x_h^\tau, a_h^\tau ) \tilde \varphi(x_h^\tau, a_h^\tau ) ^\top. 
\$
Since $\tilde \lambda > 1$, $\tilde \Lambda_h^t$ is  an invertible matrix and the eigenvalues of $( \tilde \Lambda _h^t )^{-1}$ are all bounded above by one. 

Using $\tilde \varphi$ and $\tilde \Lambda_h^t$,
we rewrite each $\overline Q_h^t$ as follows. 
For $\overline W_h^t$ defined in \eqref{eq:ntk_solution},
we have 
\#\label{eq:rewrite_function}
 \varphi (x,a)^\top  \bigl ( \overline W_h^t - W^{(0)} ) = \overline C \cdot  \tilde \varphi (x,a)^\top  \bigl ( \overline W_h^t - W^{(0)} \bigr ),
\#
where $\overline C \cdot \| \overline W_h^t - W^{(0)} \|_2 \leq \overline C \cdot   H \sqrt{T    /\lambda } \leq  H \sqrt{T    }$ since $\lambda \ge (\overline C)^2$.
Meanwhile,  we also  have 
\#\label{eq:rewrite_bonus}
\overline b_h^t (z) = \| \varphi(z) \|_{(\overline \Lambda _h^t)^{-1}} = \bigl [ \tilde \varphi (z) ^\top \bigl (   \tilde \Lambda_h^t \bigr ) ^{-1} \tilde \varphi (z) \bigr ] ^{1/2} .
\#
Thus, combining \eqref{eq:rewrite_function} and \eqref{eq:rewrite_bonus},  $\overline Q_h^t$ defined in \eqref{eq:ntk_Q}
can be written equivalently as 
\$
\overline Q_h^t(z) = \min \bigl \{  \tilde \varphi(z)^\top \overline \vartheta_{h}^t + \beta \cdot  \bigl \| \tilde \varphi(z) \bigr \|_{(\tilde \Lambda_h^t)^{-1}}, H - h +1\bigr \}^{+},
\$
where $\overline \theta_h^t = \overline C \cdot ( \overline W_h^t - W^{(0)})$, which 
satisfies 
$\| \overline \vartheta_h^t \| _2 \leq H \sqrt{T}$. 

Let $\cD $ be a finite subset of $\cZ$ with no more than $T$ elements. 
For any fixed $\cD$, 
we define 
\#\label{eq:tilde_lambda_D}
\tilde \Lambda_{\cD} =\tilde \lambda \cdot I_{2dm} +  \sum_{z \in \cD } \tilde \varphi (z) \tilde \varphi(z)^\top  \in \RR^{2md\times 2md}.
\# 
For any $h \in [H]$, $R , B> 0$, we let $\tilde Q_{\UCB} (h, R, B)$  consists of   functions that take the form of 
\$
Q(\cdot   ) =\min \bigl \{  \tilde \varphi (\cdot) ^\top \vartheta  + \beta \cdot\bigl \| \tilde \varphi (\cdot )\bigl\|_{ (\tilde \Lambda_{\cD} ) ^{-1} } ; H - h +1 \bigr \}^+,
\$
for some $\vartheta \in \RR^{2md} $ with $\| \vartheta \|_2 \leq R$ and some $\cD \subseteq \cZ$. 
Then $\tilde Q_{\UCB} (h, R, B)$ corresponds to the function class in \eqref{eq:main_text_func_class} with the kernel being $\tilde K $. 
Moreover, we define $\tilde \cV_{\UCB}(h, R, B)$ as 
\$
\tilde \cV_{\UCB}(h, R, B) = \bigl \{ V \colon V(\cdot ) = \max_{a} Q(\cdot, a)~~\textrm{for some}~~Q \in \tilde \cQ_ {\UCB}(h, R, B)\bigr \}.
\$

By definition, for all $h \in [H]$ and any $R, B> 0$,  we have that 
$
  \cQ_{\UCB} (h, R, B) = \tilde \cQ_{\UCB}(h, \overline C  R, B) $.
Meanwhile,  since $ (\overline C )^2\leq  \lambda \leq 2 (\overline C )^2$, for all $R > 0$,  
we have
\#\label{eq:contain_classes}
\cQ_{\UCB} (h, R, B) \subseteq \tilde \cQ_{\UCB} (h, R \sqrt{\lambda} , B)
 \subseteq 
 \cQ_{\UCB} ( h, \sqrt{2} R, B).
 \#
 Recall that we define $R_T =  H  \sqrt{2T/\lambda}$ and let $N_{\infty} (\epsilon; h, B)$ denote the $\epsilon$-covering number of 
 $\cQ_{\UCB}(h, R_T, B)$ with respect to the $\ell_{\infty}$-norm on $\cZ$. 
 Moreover, hereafter, we denote $\epsilon^* = H/ T$ and set $B = B_T$ which satisfy \eqref{eq:equation_B_T_nn}.
 Since we set $\beta = B_T$ in Algorithm \ref{algo:neural}, 
 it holds  for  all $(t, h) \in [T]\times [H] $ that 
$$
\overline Q_h^t\in \tilde \cQ_{\UCB}( h, H \sqrt{T} , B) \subseteq  \cQ_{\UCB} ( h,  R_T, B), \qquad 
\overline V_h^t \in \tilde \cV_{\UCB} (h, H\sqrt{T}, B).
$$

Now, to  bound Term (v) in \eqref{eq:nn_opt17}, 
similar to the analysis the proof of Lemma \ref{lemma:ucb_kernel1},
 we apply the concentration of self-normalized stochastic process and uniform concentration over $\tilde \cV_{\UCB} (h, H\sqrt{T} , B_T)$. 
 Specifically, similar to \eqref{eq:apply_uniform} and \eqref{eq:apply_uniform2},
 with probability at least $1- (2T^2H^2)^{-1}$, 
 we have 
 \#\label{eq:nn_opt18}
 \mathrm{Term~(v)} & 
 = \biggl \| \sum_{\tau = 1}^{t-1} \varphi(x_h^\tau,a_h^\tau)
 \cdot \bigl[ \overline V^{t}_{h+1}(x_{h+1}^\tau) - (\mathbb{P}_h \overline V^{t}_{h+1})(x_h^\tau,a_h^\tau)\bigr ]\biggr \| _{(\overline \Lambda^t_h)^{-1}}^2  \notag \\
 & =  \biggl \| \sum_{\tau = 1}^{t-1} \tilde \varphi(x_h^\tau,a_h^\tau)
 \cdot \bigl[ \overline V^{t}_{h+1}(x_{h+1}^\tau) - (\mathbb{P}_h \overline V^{t}_{h+1})(x_h^\tau,a_h^\tau)\bigr ]\biggr \| _{(\tilde \Lambda^t_h)^{-1}}^2  \notag \\
 & \leq 4H^2 \cdot \Gamma _{\tilde K} (T, \tilde \lambda ) + 11 H^2 + 4 H^2 \cdot \log N_{\infty} (\epsilon^*, h+1 , B_T)   + 8H^2 \cdot \log (TH).
 \#
 Thus, combining \eqref{eq:nn_opt13}, \eqref{eq:nn_opt14},  \eqref{eq:nn_opt17}, and  \eqref{eq:nn_opt18}, 
 we obtain that 
 \$
 & \bigl | \varphi(z) ^\top  (   \overline W_h^t - \tilde W_h^t) \big |  \notag \\
 &\qquad 
 \leq \bigl  |\mathrm{Term~(iii)} \bigr | + \bigl | \mathrm{Term~(iv)} \bigr  |  \leq   \bigl [  R_Q H  + \sqrt{ 2 \cdot \mathrm{Term~(v)}}   +  R_{Q}  H  \cdot \sqrt{\lambda /d} \bigr ] \cdot \overline b_h^t (z)  \notag \\
 & \qquad \leq H \cdot  \bigl \{ \bigl [  8    \Gamma _{ K_m } (T,  \lambda ) + 22   + 8   \cdot \log N_{\infty} (\epsilon^*, h+1 , B_T)   + 16  \cdot \log (TH) \bigr ]^{1/2}  + R_Q \cdot (1 +   \sqrt{\lambda / d} ) \bigr \} \cdot \overline b_h^t (z) .
 \$
 Using the elementary inequality $a +b  \leq \sqrt{2(a^2+b^2)}$, we have 
\$
 & \bigl | \varphi(z) ^\top  (   \overline W_h^t - \tilde W_h^t) \big |  \notag \\
 & \qquad \leq H \cdot \bigl [ 16  \Gamma _{ K_m } (T,  \lambda )  +  16   \cdot \log N_{\infty} (\epsilon^*, h+1 , B_T)   + 32  \cdot \log (TH)  + 2 R_{Q} ^2   \cdot  (1 + \sqrt{\lambda / d})^2   \bigr  ]^{1/2}\cdot \overline b_h^t (z) \notag \\
 & \qquad  \leq H \cdot \bigl [ 16  \Gamma _{ K_m } (T,  \lambda ) + 16   \cdot \log N_{\infty} (\epsilon^*, h+1 , B_T)   + 32  \cdot \log (TH)  +  4   R_{Q} ^2   \cdot  (1 +  \lambda / d)     \bigr  ]^{1/2}\cdot \overline b_h^t (z).
\$
By the choice of $B_T$ in \eqref{eq:equation_B_T_nn}, we have that
\$
\bigl | \varphi(z) ^\top  (   \overline W_h^t - \tilde W_h^t) \big | =  \bigl | \varphi(z)^\top ( \overline W_h^t   -  W^{(0)  } ) - \tilde Q_h^t (z) \big|   \leq \beta \cdot \overline b_h^t (z) 
\$
holds 
simultaneously 
for all $(t, h) \in [T] \times [H]$ and $z \in \cZ$ with probability at least $1- (2T^2H^2)^{-1}$.

 Thus, combining this with \eqref{eq:approximate_rf} and  \eqref{eq:rf_approx_error}, 
 we have 
 \# \label{eq:nn_opt210}
 \bigl | \varphi(z)^\top ( \overline W_h^t   -  W^{(0)  } ) -   \TT_h^\star Q_{h+1}^t (z) \big|  \leq  \beta \cdot \overline b_h^t(z) + 10 C_{\act}\cdot R_{Q} H \cdot    \sqrt{ \log (m TH) / m} .
 \# 
 By the definition of $\overline Q_h^t $ in \eqref{eq:ntk_Q}, 
 we have 
 \# \label{eq:nn_opt211}
 \overline Q_h^t  (z)  -  \TT_h^\star Q_{h+1}^t (z)    & \leq \varphi(z)^\top ( \overline W_h^t   -  W^{(0)  } ) -   \TT_h^\star Q_{h+1}^t (z)  +\beta \cdot \overline b_h^t(z) \notag \\
 & \leq 2 \beta \cdot  \overline b_h^t(z) + 10 C_{\act}\cdot R_{Q} \cdot    \sqrt{ \log (m TH) / m} .
 \# 
 Moreover, since $\TT_h^\star Q_{h+1} ^t   \leq H- h + 1$, by \eqref{eq:nn_opt210}
 we have 
 \# \label{eq:nn_opt22}
 \TT_h^\star Q_{h+1}^t (z) - \overline Q_h^t  (z) 
 & = \TT_h^\star Q_{h+1}^t (z) - \min \bigl  \{\varphi (x,a)^\top  \bigl ( \overline W_h^t - W^{(0)} \bigr ) + \beta \cdot      \overline b_h^t(x,a)  , H - h +1   \bigr \}^+ \notag \\
 & = \max \bigl \{ \TT_h^\star Q_{h+1}^t (z)- \varphi (z)^\top  \bigl ( \overline W_h^t - W^{(0)} \bigr ) -  \beta \cdot      \overline b_h^t(z), 0 \bigr \}^+ \notag \\
 & \leq  10 C_{\act}\cdot R_{Q} \cdot    \sqrt{ \log (m TH) / m}. 
 \#
  Let $\logt$ denote $T^{7/12} \cdot H^{1/12} \cdot m^{-1/12} \cdot (\log m)^{1/4}$.
 When $m$ is sufficiently large, it holds that 
 \$
 10 C_{\act}\cdot R_{Q} \cdot    \sqrt{ \log (m TH) / m} \leq \logt \leq \beta \cdot \logt.
 \$
 Meanwhile, combining the definition  of the TD error $ \delta_h^t$ in \eqref{eq:td_error} and \eqref{eq:approx_sup_norm}, we have 
 \#\label{eq:nn_opt23}
  &\big| \delta _h^t  (z) - \bigl [\TT_h^\star Q_{h+1}^t (z) - \overline Q_h^t  (z)] \big |    =  \bigl| Q_h^t(z) - \overline Q_h^t  (z) \big|   \notag \\
  & \qquad \leq  4\beta \cdot   T^{7/12} \cdot H^{1/12} \cdot m^{-1/12} \cdot  (\log  m) ^{1/4} .
  \#
 
Finally, combining \eqref{eq:nn_opt211}, \eqref{eq:nn_opt22}, and \eqref{eq:nn_opt23}, we establish
  that, with probability at least $1- (2T^2H^2)^{-1}$, 
  \$
  \delta _h^t  (z) & \leq [\TT_h^\star Q_{h+1}^t (z) - \overline Q_h^t  (z)] + 4 \beta \cdot \logt \leq 5 \beta \cdot \logt \\
    \delta _h^t  (z) & \geq [\TT_h^\star Q_{h+1}^t (z) - \overline Q_h^t  (z)] -  4 \beta\cdot \logt \geq - 2\beta \cdot \overline b_h^t (z) - 5 \beta \cdot \logt 
  \$
  hold for all $(t, h) \in [T] \times [H]$ simultaneously.
  Finally, 
  combining this with 
   \eqref{eq:nn_opt11}, we have 
\$
 -2 \beta\cdot b_h^t - 9 \beta \cdot \logt \leq -2 \beta\cdot \overline b_h^t - 5 \beta \cdot \logt \leq \delta _h^t  (z)  \leq 5 \beta \cdot \logt,
\$ 
which, together with  \eqref{eq:nn_opt11}, 
concludes the proof of    Lemma \ref{lemma:ucb_nn1}. 
\end{proof}



\section{Covering Number and Effective Dimension} \label{sec:aux_lem}

In this section, we  present results on the covering number of the class 
of value functions that we study and the effective dimension of the 
corresponding RKHS. Both of these results play a key role in establishing 
our regret bounds.
 
\subsection {Covering Number of the Classes  of Value Functions} \label{sec:covering}


 For any $R, B >0$, any $h \in [H]$,  and fixed $\cD$,  we define 
$ \cQ_{\UCB} (h, R, B) $ as the function class that  contains functions on $\cZ$  that  
take  the following form:
 \#\label{eq:some_Q}
Q(z )=  \min  \bigl \{ \theta (z ) + \beta \cdot \lambda^{-1/2}   \bigl [  K(z,z) - k _t(z) ^\top (K_t + \lambda I  )^{-1} k_t(z) \bigr ]  ^{1/2},   H - h  + 1  \bigr \}^{+},
\#
where 
 $\theta  \in \cH$ satisfies $\| \theta  \|_{\cH } \leq R $, $\beta \in [0, B]$, $h \in [H]$, and $\cD =\{ z^\tau = (x^\tau, a^\tau),\}_{\tau \in [t]} $ is a finite subset of $\cZ$ with $t$ elements, where $t \leq T$.
 Here $T$ is the total number of  the episodes. 
  Moreover, $K_t \in \RR^{t\times t}$ and $k_t \colon \cZ \rightarrow \RR^t$ are defined similarly as  in  \eqref{eq:define_gram} based on state-action pairs in $\cD$, that is, 
 \$
  K_t   = [ K(z^\tau , z^{\tau  '}) ] _{\tau , \tau ' \in [t] } \in \RR^{t \times t} , \qquad k_t (z) =  \bigl  [  K(z^1, z) , \ldots K(z^t , z)  \bigr ] ^\top \in \RR^{t}.
 \$
By definition, 
$Q $ in \eqref{eq:some_Q} is determined by $Q_0 \in \cH$ and a bonus term constructed using $\cD$. 
Thus, the function $Q_h^t$  constructed in Algorithm \ref{algo:kucb} belongs to $ \cQ_{\UCB} (h, R, B)$ when $\beta \leq B$ and $\| \alpha _h^t \|_{\cH } \leq R$. 
In the following, for any $\epsilon \in (0, 1)$, we let $\cC( \cQ_{\UCB} (h , R,B), \epsilon)$ be the minimal $\epsilon$-cover of $\cQ_{\UCB}(h, R, B)$ with respect to the $\ell_{\infty}$-norm on $\cZ$. 
That is, for any $
Q \in \cQ_{\UCB}(h, R, B)$, there exists $Q' \in \cC( \cQ_{\UCB} (h , R,B), \epsilon)$ satisfying $\| Q - Q ' \|_{\infty} \leq \epsilon$. Moreover, among all function classes that possess such a property, $\cC(\cQ_{\UCB}(h, R, B), \epsilon)$ has the smallest cardinality. Thus, by definition, $|\cC(\cQ_{\UCB}(h, R, B), \epsilon)| $ is the $\epsilon$-covering number of $\cQ_{\UCB}(h, R, B)$ with respect to  the $\ell_{\infty}$-norm on $\cZ$.

In addition, based on $\cQ_{\UCB} (h, R, B) $, we define the function class 
$\cV_{\UCB}(h, R, B) $ as 
\# \label{eq:func_class_V}
\cV_{\UCB} (h, R, B ) = \big\{ V \colon  V(\cdot ) =    \max_a Q(\cdot,a)  \text{~~for some~~} Q \in \cQ_{\UCB} (h, R, B) \bigr \}.  
\#
For any two value functions $V_1,V_2 \colon \cS \rightarrow \RR$, we denote their supremum norm distance as 
\#\label{eq:distance_V}
\textrm{dist}(V_1, V_2) = \sup_{x\in \cS} \bigl | V_1(x) - V_2(x) \bigr| .
\#
For any $\epsilon \in (0,1)$, we let $\cC(\cV_{\UCB}(h, R, B), \epsilon)$ denote the minimal $\epsilon$-cover of $\cV_{\UCB}(h, R, B)$ with respect to $\textrm{dist}(\cdot, \cdot)$ defined in \eqref{eq:distance_V}.   

The main result of this section is a set of upper bounds on 
the size of $\cC(\cV_{\UCB}(h, R, B), \epsilon)$ under the three eigenvalue decay conditions specified in Assumption \ref{assume:decay}.

\begin{lemma} \label{lemma:covering_number_V}
  Let Assumption \ref{assume:decay} hold and $\lambda $ be bounded in $[c_1, c_2]$, where both $c_1$ and $c_2$ are 
  absolute constants. 
  Then, for any $h \in [H]$, any $R , B > 0$, and any $\epsilon \in (0,1/e)$, 
   there exists a  positive constant  $C_N $   such that
  \#\label{eq:cover_lemma_final}
  & \log \bigl | \cC\bigl (\cV_{\UCB}(h, R, B), \epsilon \bigr ) \bigr |  \leq \log \bigl | \cC\bigl (\cQ_{\UCB}(h, R, B), \epsilon \bigr ) \bigr | \\
  & \qquad  \leq
  \begin{cases} 
  	C _N \cdot \gamma \cdot \bigl [  1+ \log (R/ \epsilon ) \bigr ] + C _N \cdot \gamma^2 \cdot \bigl [  1 + \log ( B / \epsilon)\big]   & \qquad \textrm{case (i)} , \\
    C_N  \cdot   \bigl [ 1+ \log (  R/\epsilon)    \bigr ]^{1+ 1/ \gamma} + C _N   \cdot \bigl [ 1+  \log ( B / \epsilon ) ] ^{1 + 2/ \gamma}& \qquad      \textrm{case (ii)} , 
    \\
    C _N \cdot ( R / \epsilon)^{ 2/ [ \gamma \cdot (1 - 2 \tau) - 1 ] } \cdot [ 1 + \log (R / \epsilon )] + C_N  \cdot ( B / \epsilon)^{ 4/ [ \gamma \cdot (1 - 2 \tau) - 1 ]     } \cdot [ 1 + \log ( B / \epsilon ) ]&  \qquad      \textrm{case (iii)} , \notag
   \end{cases} 
  \#
  where cases (i)--(iii) above correspond  to the three eigenvalue decay conditions specified in Assumption \ref{assume:decay}, respectively. 
Moreover, here $C_N$   in \eqref{eq:cover_lemma_final} is  independent of $T$, $H$, $R$, and $B$, and only depends on $C_{\psi}$, $C_1$, $C_2$, $c_1$, $c_2$,  $\gamma$, and $\tau$. 
\end{lemma}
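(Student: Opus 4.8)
The plan is to build the cover in two layers. First, for the left inequality in \eqref{eq:cover_lemma_final}: if $\{Q^{(i)}\}$ is an $\epsilon$-cover of $\cQ_{\UCB}(h,R,B)$ in $\|\cdot\|_\infty$, then $\{\max_a Q^{(i)}(\cdot,a)\}$ is an $\epsilon$-cover of $\cV_{\UCB}(h,R,B)$ for $\textrm{dist}(\cdot,\cdot)$, because $\sup_{x}\,\bigl|\max_a Q(x,a)-\max_a Q'(x,a)\bigr|\le\|Q-Q'\|_\infty$; this already gives $|\cC(\cV_{\UCB}(h,R,B),\epsilon)|\le|\cC(\cQ_{\UCB}(h,R,B),\epsilon)|$. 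Next, since $u\mapsto\min\{u,H-h+1\}^+$ is $1$-Lipschitz, it suffices to $\epsilon$-cover the class of un-truncated functions $g=g_\theta+g_{\beta,\cD}$, where (using the feature representation and \eqref{eq:new_bonus}) $g_\theta(z)=\langle\phi(z),\theta\rangle_\cH$ with $\|\theta\|_\cH\le R$, and $g_{\beta,\cD}(z)=\beta\cdot[\phi(z)^\top(\Lambda_\cD)^{-1}\phi(z)]^{1/2}$ with $\Lambda_\cD=\lambda I_\cH+\sum_{z'\in\cD}\phi(z')\phi(z')^\top$, $\beta\in[0,B]$, $|\cD|\le T$. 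By the triangle inequality, the product of an $(\epsilon/2)$-cover of $\{g_\theta\}$ and an $(\epsilon/2)$-cover of $\{g_{\beta,\cD}\}$ is an $\epsilon$-cover of $\{g\}$, so the two log-covering numbers simply add; the two summands in each line of \eqref{eq:cover_lemma_final} will come from these two pieces respectively.

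For the linear part I would use the Mercer expansion $\theta=\sum_j b_j\sqrt{\sigma_j}\psi_j$ with $\|b\|_2\le R$, so $g_\theta(z)=\sum_j b_j\sqrt{\sigma_j}\psi_j(z)$, and split at a level $N$. The tail is controlled uniformly by $\sup_z\bigl|\sum_{j>N}b_j\sqrt{\sigma_j}\psi_j(z)\bigr|\le R\,\delta_N$, where $\delta_N^2\defeq\sup_z\sum_{j>N}\sigma_j\psi_j(z)^2\le C_\psi^2\sum_{j>N}\sigma_j^{1-2\tau}$ by Assumption \ref{assume:decay}. The head depends only on $(b_1,\dots,b_N)$ in the Euclidean ball of radius $R$, and since $\sup_z(\sum_{j\le N}\sigma_j\psi_j(z)^2)^{1/2}\le\sup_z K(z,z)^{1/2}\le 1$, an $\eta$-net of that ball (of size $(1+2R/\eta)^N$) transfers to an $\|\cdot\|_\infty$-net of the head functions. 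Choosing $N=N_{\mathrm{lin}}(\epsilon)$ so that $R\,\delta_N\le\epsilon/4$ and taking $\eta=\epsilon/4$ gives $\log|\cC(\{g_\theta\})|\le N_{\mathrm{lin}}(\epsilon)\cdot[1+\log(8R/\epsilon)]$; plugging in the three decay rates yields $N_{\mathrm{lin}}\asymp\gamma$ (case (i), where $\delta_N=0$ at $N=\gamma$), $N_{\mathrm{lin}}\asymp(1+\log(R/\epsilon))^{1/\gamma}$ (case (ii)), and $N_{\mathrm{lin}}\asymp(R/\epsilon)^{2/[\gamma(1-2\tau)-1]}$ (case (iii)), which reproduces the first summand in each line of \eqref{eq:cover_lemma_final}.

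For the bonus part, since $0\preceq(\Lambda_\cD)^{-1}\preceq\lambda^{-1}I_\cH$ we have $\sup_z b_\cD(z)\le\lambda^{-1/2}$, so an $(\epsilon\sqrt\lambda/8)$-net of $\beta\in[0,B]$ handles the $\beta$-variation at cost $O(\log(B/\epsilon))$. For fixed $\beta$ it remains to $\epsilon$-cover $\{z\mapsto[\phi(z)^\top(\Lambda_\cD)^{-1}\phi(z)]^{1/2}\}$ over all admissible $\cD$. Here I would again truncate $\phi(z)$ to its first $N$ Mercer coordinates, compress $(\Lambda_\cD)^{-1}$ to the corresponding $N\times N$ block (or, equivalently, write $(\Lambda_\cD)^{-1}=\lambda^{-1}I_\cH-R_\cD$ with $R_\cD\succeq0$ of controlled trace and keep the data-independent piece $\lambda^{-1}K(z,z)$ outside the cover), and cover the resulting $N\times N$ symmetric PSD matrices with eigenvalues in $[0,\lambda^{-1}]$ in operator norm — this contributes $\lesssim N^2\cdot[1+\log(B/\epsilon)]$, i.e. roughly the \emph{square} of the linear count. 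Choosing $N=N_{\mathrm{bonus}}(\epsilon)$ large enough that the bonus truncation error (which enters through a square root of the operator-level perturbation) is at most $\epsilon/(4B)$, and inserting the decay rates, gives the second summand in each line: $\gamma^2[1+\log(B/\epsilon)]$, $[1+\log(B/\epsilon)]^{1+2/\gamma}$, and $(B/\epsilon)^{4/[\gamma(1-2\tau)-1]}[1+\log(B/\epsilon)]$.

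The hard part will be this last step. The bonus is the square root of a quadratic form of an operator $(\Lambda_\cD)^{-1}$ that lives on an infinite-dimensional space as $\cD$ ranges over all $\le T$-element subsets, and the square root is non-Lipschitz near zero; getting the stated dependence on $\epsilon$ and $B$ (and, crucially, keeping $T$ out) forces one to exploit that $(\Lambda_\cD)^{-1}-\lambda^{-1}I_\cH$ is a PSD perturbation whose trace is controlled independently of the particular $\cD$, reduce to finite-dimensional PSD matrices, and track carefully how $N$ must scale with $\epsilon$, $B$, and the eigenvalue decay. Finally I would check that none of the counts above involve $T$ or $H$ (only $|\cD|\le T$ is used, and only through rank/trace bounds on $\Lambda_\cD$), so that all regime-dependent constants can be absorbed into a single $C_N$ depending only on $C_\psi,C_1,C_2,c_1,c_2,\gamma,\tau$; implicitly case (iii) is only meaningful when $\gamma(1-2\tau)>1$, which is precisely when $\delta_N\to0$ and the exponents are positive.
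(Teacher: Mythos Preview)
Your proposal is correct and follows essentially the same route as the paper: split the $\ell_\infty$-cover into the RKHS-ball part and the bonus part via the triangle inequality, handle each by Mercer truncation at a level $N$ (chosen from the eigenvalue decay so the tail, controlled by $\sum_{j>N}\sigma_j^{1-2\tau}$, is small), and then cover finitely many parameters --- a vector in $\RR^N$ for the linear piece and an $N\times N$ matrix with operator norm $\le\lambda^{-1}$ for the bonus, giving the $N$ vs.\ $N^2$ discrepancy that produces the two summands in \eqref{eq:cover_lemma_final}. The paper makes two small simplifications you might adopt: it abstracts the bonus class to $\cF(\lambda)=\{\|\phi(\cdot)\|_\Upsilon:\|\Upsilon\|_{\mathrm{op}}\le1/\lambda\}$ over \emph{all} self-adjoint $\Upsilon$ (so $T$ and $\cD$ disappear immediately, without any trace/rank bookkeeping on $R_\cD$), and it dispatches the square-root non-Lipschitzness in one line via $|\sqrt a-\sqrt b|\le\sqrt{|a-b|}$ rather than tracking an ``operator-level perturbation.''
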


\begin{proof}
For  any fixed subset $\cD = \{ z^\tau \}_{\tau \in [t]}$ of $\cZ$ with size $t \in [T]$, we define  
$\Phi_{\cD} \colon \cH \rightarrow\RR^{t}$ and $\Lambda_{\cD} \colon \cH \rightarrow \cH  $ respectively as 
\#
\Phi_{\cD} &=  \big [ \phi(z^1)^\top , \phi(z^2)^\top \ldots, \phi (z^{t}) ^\top   \bigr ] ^\top, \notag  \\
 \Lambda _{\cD} &= \sum_{\tau = 1}^{t}  \phi(z^\tau) \phi(z^\tau)^\top + \lambda \cdot I_{\cH} =  \lambda \cdot I_{\cH} +(\Phi _{\cD}  )^\top \Phi_{\cD}    , \label{eq:Lambda_D} 
\#
where $\phi \colon \cZ \rightarrow \cH$ is the feature mapping of $\cH$ and  $\cI_{\cH}$ is the identity mapping on $\cH$. 
Then, we can equivalently write $Q_1 \in \cQ_{\UCB} ( h, R, B ) $   as 
\#\label{eq:some_Q2}
Q_1 (z) = \phi(z)^\top \theta_1 + \beta \cdot \sqrt{   \phi(z)^\top \Lambda_{\cD_1}^{-1} \phi(z) } ,
\#
where $\theta_1  \in \cH$ has an RKHS norm bounded by $R$, $\beta_1 \in [0, B]$, and $\cD_1 $ is a finite subset of $\cZ$ with size  $t_1 \leq T$.  
Let $V_1 (\cdot ) = \max_{a \in \cA} Q_1(\cdot , a)$.
Combining \eqref{eq:func_class_V} and \eqref{eq:some_Q2}, we can write  $V_1 \in \cV_{\UCB} (h, R, B)$ as 
\#\label{eq:two_val_fun}
V_1 (\cdot) &= \min \Bigl \{ \max_{a} \bigl \{ \phi(\cdot, a)^\top \theta_1 + \beta_1 \cdot \sqrt{ \phi(\cdot, a) ^\top\Lambda _{\cD_1} ^{-1} \phi(\cdot, a)  }\bigr \}, H-h +1  \Bigr \}^{+}, 
\#
Similar to $V_1$ in \eqref{eq:two_val_fun}, consider any $V_2 \colon \cS \rightarrow \RR$ that can be written as 
\#\label{eq:val_fun2}
V_2 (\cdot ) =    \min \Bigl \{ \max_{a} \bigl \{ f_1 (\cdot, a) + \beta_2 \cdot f_2(\cdot , a )  \bigr \}, H-h +1  \Bigr \}^{+},
\#
  where $Q_2  = f_1 + \beta_2 \cdot f_2 $ for some  $f_1 , f_2 \colon \cZ \rightarrow \RR$ and $\beta_2 \in [0, B]$. 
Since both $\min\{\cdot, H- h +1 \}^+$ and $\max_a$ are contractive mappings, 
by \eqref{eq:two_val_fun}  and \eqref{eq:val_fun2}
we have 
\$
\mathrm{dist}(V_1, V_2) \leq \sup_{(x, a) \in \cZ} ~\Bigl | \Bigl[  \phi(x, a) ^\top \theta_1 +  \beta_1 \cdot \sqrt{\phi(x, a) ^\top \Lambda_{\cD_1} ^{-1}  \phi(x, a)  } \Bigr ]   -  \Bigl[  f(x,a)  +\beta_2  \cdot f_2 (x,a) \Bigr ] \Bigr | = \| Q_1 - Q_2 \|_{\infty}, 
\$
which implies that 
$$
\log \bigl | \cC\bigl (\cV_{\UCB}(h, R, B), \epsilon \bigr ) \bigr |  \leq \log \bigl | \cC\bigl (\cQ_{\UCB}(h, R, B), \epsilon \bigr ) \bigr |.
$$
Moreover, by the triangle inequality, we have 
\begin{align} \label{eq:cover_upper}
  \| Q_1 - Q_2 \|_{\infty}  &  \leq \sup_{(x,a) \in \cZ }   \bigl |  \phi (x,a) ^\top  \theta_1 - f_2(x,a)  \bigr | + | \beta_1 - \beta_2  | \cdot   \sup_{(x,a ) \in \cZ}   \bigl \| \phi(x,a)  \bigr \|_{\Lambda_{\cD_1} ^{-1} }  \notag \\
& \qquad \qquad +  B\cdot  \sup_{(x,a) \in \cZ } \big | \bigl \| \phi(x,a)  \bigr \|_{\Lambda_{\cD_1} ^{-1} }  -f_2   (x,a)      \bigr | ,
\end{align}
where we denote  $\| \phi(x,a)  \bigr \|_{\Lambda_{\cD_1} ^{-1} } ^2  = \phi(x, a) ^\top \Lambda_{\cD_1} ^{-1}  \phi(x, a) $. 
Notice that by the reproducing property we have 
$ \phi(x,a)^\top \theta  = \la \theta, \phi(x,a) \ra _{\cH} = \theta(x,a)  $ for all $\theta \in \cH$ and $(x,a )\in \cZ$. 
Also note that  
$$\| \phi(x,a)  \bigr \|_{\Lambda_{\cD_1} ^{-1} } ^2 \leq 1/ \lambda \cdot \| \phi (x,a) \|^2 \leq 1/ \lambda \cdot K(z, z) \leq 1/ \lambda. $$ 
Thus,  by \eqref{eq:cover_upper} we have 
\#\label{eq:cover_upper02}
\| Q_1 - Q_2 \|_{\infty} &    \leq \sup_{(x,a) \in \cZ }    \bigl |  \theta_1 (x,a) - f_1(x,a) \bigr |    +   | \beta_1 - \beta_2 |  / \lambda  \notag \\
& \qquad \qquad  + B\cdot  \sup_{(x,a) \in \cZ } \big | \bigl \| \phi(x,a)  \bigr \|_{\Lambda_{\cD_1} ^{-1} }  -f_2   (x,a)      \bigr | .
\#
Thus, by \eqref{eq:cover_upper02}, to get  the covering number of $\cQ_{\UCB} (h, R, B)$ with respect to $\textrm{dist}(\cdot, \cdot)$, it suffices to bound the  covering numbers of  the 
RKHS norm ball $\{ f \in \cH \colon \| f\|_{\cH} \leq R \}$, the interval $[0, B]$, and the set of functions that are of the form of  $ \| \phi(\cdot ) \|_{ \Lambda _{\cD}^{-1}}$, respectively.

Notice that, by the definition in \eqref{eq:Lambda_D}, $\Lambda_{\cD}  \colon \cH \rightarrow \cH$ is a self-adjoint operator on $\cH$ with eigenvalues bounded in $[0, 1/\lambda]$. 
To simplify the notation,    we define  the
 function class $\cF(\lambda)$ as 
\#\label{eq:define_Flambda}
\cF(\lambda) = \bigl \{ \| \phi(\cdot ) \|_{\Upsilon} = \bigl [\phi (\cdot) ^\top \Upsilon  \phi (\cdot)   \bigr ] ^{1/2 } \colon \| \Upsilon \|_{\oper} \leq 1 /\lambda    \bigr \},
\#
where $\Upsilon \colon \cH\rightarrow \cH$ in \eqref{eq:define_Flambda} is a self-adjoint operator on $\cH$ whose eigenvalues are all bounded by $1/\lambda$ in magnitude. Here,  the operator norm of $\Upsilon$  is defined as 
\$
\| \Upsilon \|_{\oper} = \sup \bigl \{ f^\top \Upsilon f \colon f\in \cH, \| f\|_{\cH } = 1  \big \} =   \sup  \big \{ \la f,  \Upsilon f \ra _{\cH } \colon f\in \cH,  \| f\|_{\cH } = 1 \big \} .
\$
Thus, by definition, for any finite subset $\cD$ of $\cZ$, $\| \phi(\cdot) \|_{\Lambda_{\cD}^{-1}} $ belongs to $\cF(\lambda)$, where $\Lambda_{\cD} $ is defined in  \eqref{eq:Lambda_D}. 
 For any $\epsilon \in (0,1)$, we let
 $N_{\infty} (\epsilon, \cF,  \lambda)$ denote the $\epsilon $-covering number of $\cF(\lambda)$   in \eqref{eq:define_Flambda}
 with respect to the $\ell_{\infty}$-norm.
  Moreover, let 
  $N_{\infty} (\epsilon, \cH, R)$ denote the $\epsilon$-covering number of $\{ f \in \cH \colon \| f\|_{\cH} \leq R\}$ with respect to the $\ell_{\infty}$-norm and let $N (\epsilon, B) $ denote the $\epsilon$-covering number of the interval $[ 0, B]$ with respect the Euclidean distance. Then, by \eqref{eq:cover_upper02}
we obtain that 
\#\label{eq:bound_covering_split}
\bigl | \cC\bigl (\cQ_{\UCB}(h, R, B), \epsilon \bigr ) \bigr | \leq N_{\infty} (\epsilon /3 , \cH, R) \cdot N  ( \epsilon \cdot \lambda /3 , B) \cdot N_{\infty} \bigl  ( \epsilon / (3B), \cF, \lambda \bigr ).
\# 
As shown in  
\cite[Corollary 4.2.13]{vershynin2018high}, it holds that 
\#\label{eq:covering_interval}
  N  ( \epsilon \cdot \lambda /3 , B)  \leq 1+  6B / (\epsilon \cdot \lambda) \leq 1 + 6B / \epsilon,
\#
where the last inequality follows from the fact that $\lambda \in [1,2]$. 
 
It remains to 
bound the first and the third terms on the right-hand side of \eqref{eq:bound_covering_split} separately. 
We 
  establish the $\ell_{\infty}$-covering of the  RKHS norm ball and $F(\lambda)$ in the following two lemmas, respectively.

\begin{lemma} [$\ell_\infty$-norm covering number of RKHS ball] \label{lemma:cover_rkhs}
For any $\epsilon \in (0,1)$, we let $N_{\infty} (\epsilon, \cH, R)$ denote the $\epsilon$-covering number of the RKHS norm ball $\{ f \in \cH \colon \| f\|_{\cH} \leq R \}$ with respect to the $\ell_{\infty}$-norm. 
Consider the three eigenvalue decay conditions given in Assumption \ref{assume:decay}. 
Then, under Assumption \ref{assume:decay}, 
there exist absolute constants $C_3$ and $C_4$ such that 
\$
\log N_{\infty} (\epsilon, \cH, R) \leq \begin{cases} 
	C_3 \cdot \gamma \cdot \bigl [ \log (R / \epsilon ) + C_4 \bigr ]  & \qquad \textrm{$\gamma$-finite spectrum}, \\
	 C_3 \cdot \bigl [ \log (R/\epsilon) + C_4\bigr ]^{1+ 1/ \gamma}  &\qquad   \textrm{$\gamma$-exponential decay} , \\
	 C_3 \cdot (R / \epsilon )^{2 / [ \gamma \cdot (1- 2\tau ) - 1 ]}
\cdot  \bigl [   \log (R/\epsilon) + C_4\bigr ]   & \qquad   \gamma\textrm{-polynomial decay} ,  
\end{cases}
\$
where $C_3$ and $C_4$ are independent of $T$, $H$,  $R$, and $\epsilon$, and only depend on absolute constants  $C_{\psi}$, $C_1$, $C_2$, $\gamma$, and $\tau$ specified in Assumption \ref{assume:decay}. 
\end{lemma}
\begin{proof}
  See \S\ref{proof:lemma_cover_rkhs} for a detailed proof. 
\end{proof}

\begin{lemma} 
  \label{lemma:cover_bonus}
  For any $\epsilon \in (0,1/ e )$, let $N_{\infty}(\epsilon, \cF,  \lambda )$ be the $\epsilon$-covering number of function class $\cF (\lambda)$ with respect to the $\ell_{\infty}$-norm, where $\cF(\lambda)$ is defined in \eqref{eq:define_Flambda}. Here we assume that $\lambda$ is bounded in $[c_1, c_2]$, where both $c_1$ and $c_2$ are absolute constants.
  Then, under Assumption \ref{assume:decay}, there exist absolute constants $C_5$ and $C_6$ such that 
  \$
  \log N_{\infty}(\epsilon, \cF,  \lambda ) \leq 
  \begin{cases}
  	C_5 \cdot \gamma^2 \cdot \bigl [ \log (1 /\epsilon ) + C_6\big ]   & \qquad \textrm{$\gamma$-finite spectrum},  \\
   C_5 \cdot \bigl [ \log (1 / \epsilon ) + C_6] ^{1 + 2/ \gamma}  & \qquad   \textrm{$\gamma$-exponential decay} , \\
   C_5 \cdot (1/ \epsilon )^{4/ [ \gamma \cdot (1 - 2\tau )  - 1 ]} \cdot \bigl [ \log (1 /\epsilon ) + C_6\big ] & \qquad      \textrm{$\gamma$-polynomial  decay} ,
   \end{cases} 
  \$
where $C_5$ and $C_6$ only depend on $C_{\psi}$, $C_1$, $C_2$, $\gamma$, $\tau$, $c_1$, and $c_2$, and do not rely on $T$, $H$, or $\epsilon$. 
\end{lemma}

\begin{proof}
  See \S\ref{proof:lemma_cover_bonus} for a detailed proof. 
\end{proof}

 Finally, we conclude the proof by  combining 
 Lemmas \ref{lemma:cover_rkhs} and \ref{lemma:cover_bonus}.
 Specifically, 
 by \eqref{eq:bound_covering_split} and \eqref{eq:covering_interval}, we have 
 \#\label{eq:cover_two_sets}
 \log \bigl | \cC\bigl (\cQ_{\UCB}(h, R, B), \epsilon \bigr ) \bigr | &  \leq \log  N_{\infty} (\epsilon /3 , \cH, R) + \log  N  ( \epsilon \cdot \lambda /3 , B) + \log  N_{\infty} \bigl  ( \epsilon / (3B), \cF, \lambda \bigr )  \\
 & \leq  \log  \big[  1+ 6B / (\epsilon \cdot \lambda) \bigr ]  +  \log  N_{\infty} (\epsilon /3 , \cH, R)  + \log  N_{\infty} \bigl  ( \epsilon / (3B), \cF, \lambda \bigr ). \notag 
 \#
We consider the three eigenvalue decay conditions separately. 
For the $\gamma$-finite spectrum case, 
by Lemmas \ref{lemma:cover_rkhs} and \ref{lemma:cover_bonus} and \eqref{eq:cover_two_sets} we have 
\$
& \log \bigl | \cC\bigl (\cQ_{\UCB}(h, R, B), \epsilon \bigr ) \bigr | \\
& \qquad \leq  \log \big[  1+ 6B / (\epsilon \cdot \lambda) \bigr ] + C_3 \cdot \gamma \cdot \bigl [ \log (3 R / \epsilon )  +C_4 \big ] + C_5\cdot \gamma^2 \cdot \bigl [  \log ( 3B / \epsilon ) + C_6  \bigr ] \\
& \qquad 
\leq C_N \cdot \gamma \cdot \big  [ 1 + \log (  R / \epsilon )    \bigr ] + C_N \cdot \gamma^2 \cdot \big [ 1 + \log (B / \epsilon )     \bigr ],
\$
 where    $C_N$  is an   absolute constant.  
 Similarly, for the case where the eigenvalues satisfy the $\gamma$-exponential decay condition,
 by Lemmas \ref{lemma:cover_rkhs} and \ref{lemma:cover_bonus} 
 we have 
\$
& \log \bigl | \cC\bigl (\cQ_{\UCB}(h, R, B), \epsilon \bigr ) \bigr | \\
& \qquad \leq  \log  \big[  1+ 6B / (\epsilon \cdot \lambda) \bigr ]  
+ C_3 \cdot   \bigl [ \log (3 R/\epsilon) + C_4\bigr ]^{1+ 1/ \gamma} + C_5 \cdot \bigl [ \log (3 B / \epsilon ) + C_6] ^{1 + 2/ \gamma} \notag \\
&\qquad  \leq C_N \cdot   \bigl [ 1 + \log (  R/\epsilon)  \bigr ]^{1+ 1/ \gamma} + C_N  \cdot \bigl [1+  \log (  B / \epsilon ) \big   ] ^{1 + 2/ \gamma}
\$
for some absolute constant  $C_N > 0$. 
Finally, for the case of  $\gamma$-polynomial eigenvalue  decay,
we  have 
\$
& \log \bigl | \cC\bigl (\cQ_{\UCB}(h, R, B), \epsilon \bigr ) \bigr | \notag \\
& \qquad \leq  \log  \big[  1+ 6B / (\epsilon \cdot \lambda) \bigr ]   + C_3 \cdot ( 3 R / \epsilon) ^{2 / [ \gamma \cdot (1 - 2\tau )-1 ]}
 \cdot \big [ \log ( 3R / \epsilon ) + C_4  \big ] \notag \\
 & \qquad \qquad + C_5 \cdot ( 3B / \epsilon ) ^{ 4 / [ \gamma \cdot (1 - 2\tau )-1 ] } \cdot \big [ \log ( 3 B / \epsilon ) + C_6  \big ] \notag \\
 & \qquad \leq  C_N \cdot  (   R / \epsilon) ^{2 / [ \gamma \cdot (1 - 2\tau )-1 ]}
 \cdot \big [ 1 + \log ( R / \epsilon ) \bigr ]  + C _N \cdot (  B / \epsilon ) ^{ 4  / [ \gamma \cdot (1 - 2\tau )-1 ] } \cdot \big [ 1 + \log (   B / \epsilon )  \big ],
 \$
where $C _N > 0$ is an absolute constant. 
Therefore, we conclude the proof.
\end{proof}




 \subsection{Effective Dimension of RKHS} \label{proof:lemma_effective_dim}

\begin{definition}[Maximal information gain] \label{define:max_infogain}
	For any fixed integer $T$ and any $\sigma > 0$, we define the  maximal information gain associated with the RKHS $\cH$ as 
	\#\label{eq:max_infogain}
	\Gamma_K (T, \sigma^2  ) = \sup_{\cD \subseteq  \cZ } \bigl \{ 1/2 \cdot   \logdet (  I + \sigma^{-2} \cdot K_{\cD}     )   \bigr \},
	\#
	where the supremum is taken over all discrete subsets of $\cZ$ with cardinality no more than $T$, and $K_{D} $ is the Gram matrix induced by $\cD \subseteq \cZ$, which is defined similarly as in \eqref{eq:define_gram}.  
	Here the subscript $K$ in  $\Gamma_K(T, \sigma^2)$ denotes the kernel function of $\cH$.
\end{definition} 

The maximal information gain naturally arises in  Gaussian process  regression. Specifically, let $f \sim \textrm{GP}(0, K)$ be   draw from the  Gaussian process with covariance  kernel $K$.
Let $\cD=\{z_1, \ldots , z_{|\cD|}  \}$ be a subset of $ \cZ$ with   $|\cD| \leq T$ elements. 
Suppose that we observe noisy observations of $f$ at points in $\cD$. That is, for any $z_i \in \cD$, we have $y_i = f(z_i) + \epsilon_i$, where $\epsilon_i  \sim N(0, \sigma^2) $ is a    
random Gaussian noise.  We let $y_{\cD}$ denote the vector whose entries are $y_i$. 
Then, the information gain of $y_{\cD}$ is defined as  the mutual information between $f$ and the observations $y_{\cD}$, denoted by $I(f, y_{\cD})$. 
By direct computation, we have 
\$
I(f, y_{\cD}) = 1/2 \cdot   \logdet (  I + \sigma^{-2} \cdot K_{\cD}     ). 
\$
The mutual information $I(f, y_{\cD})$ quantifies the reduction of the uncertainty about $f$ when we observe $y_{\cD}$. Thus, the maximal mutual information $\Gamma_K (T, \sigma^2)$ characterizes the maximal possible reduction of the uncertainty of $f$ when having no more than $T$ observations.

Moreover, we note that, when $\sigma^2$ is a constant,  $\Gamma_K (T, \sigma^2)$ depends on the eigenvalue decay of the RKHS and thus can be viewed as an effective dimension of the RKHS. Specifically, as shown in \cite{srinivas2009gaussian}, 
when the kernel is the $d$-dimensional linear kernel, $\Gamma_K(T, \sigma^2) = \cO( d \log T)$. 
Moreover, for the squared exponential kernel and the Mat\'ern kernel,
which satisfy the  exponential and polynomial eigenvalue decay conditions respectively, the maximal information gains are 
$$ 
\cO  \big ( ( \log T)^{d+1}  \big )   \qquad \text{and}\qquad  \cO\big ( T^{d(d+1) / ( 2\nu + d(d+1))} \cdot \log T \bigr ), $$
respectively, where $\nu$ is the parameter of the Mat\'ern kernel. 
In the following lemma, similar to Theorem 5 in  \cite{srinivas2009gaussian}, we establish    upper bounds on the maximal information gain of the RKHS under the three    eigenvalue decay conditions specified in Assumption \ref{assume:decay}. 
 
\begin{lemma} [Theorem 5 in  \cite{srinivas2009gaussian}] \label{lemma:effective_dim}
Let $\cZ$ be a compact subset of $\RR^d$ and $K \colon \cZ \times \cZ \rightarrow \RR$ be the RKHS kernel of $\cH$. 
We assume that $K$ is a bounded kernel in the sense that $\sup_{z\in \cZ} K(z, z) \leq 1$, and $K$ is continuously differentiable on $\cZ\times \cZ$. 
Moreover, let $T_K$ be the integral operator induced by $K$ and the Lebesgue measure on $\cZ$, whose definition is given in 
\eqref{eq:integral_oper}. Let $\{ \sigma_j\}_{j\geq 1}$ be the eigenvalues of $T_K$ in  the descending order. 
We  assume that $\{ \sigma_j\}_{j\geq 1}$ satisfy either one of the following  three eigenvalue decay conditions:
 \begin{itemize} 
   \item [(i)] $\gamma$-finite spectrum: We have $\sigma_j = 0$ for all $ j \geq  \gamma +1$, where $\gamma $ is a positive integer. 
	 \item [(ii)] $\gamma$-exponential eigenvalue decay: There exist  constants $C_1, C_2>0$ such that $\sigma_j \leq C_1 \exp(-C_2 \cdot j^{\gamma})$ for all $j \geq 1$, where $\gamma > 0$ is positive constant. 
	 \item [(iii)] $\gamma$-polynomial eigenvalue decay: There exists a constant $C_1$ such that $\sigma_j \geq C_1 \cdot j^{-\gamma}$ for all $j \geq 1$, where $\gamma \geq 2 + 1 /d$ is a constant.
 \end{itemize} 
 Let $\sigma $ be bounded in interval $
 [c_1, c_2] $ with $c_1$ and $c_2$ being absolute constants. 
 Then,  
for  conditions (i)--(iii) respectively,
 we have  
\$
 \Gamma _K(T, \sigma^2 )  \leq 
 \begin{cases} 
 C_K \cdot \gamma \cdot \log T   & \qquad \textrm{$\gamma$-finite spectrum,} \\
 C_K \cdot (\log T )^{1 + 1/\gamma}  & \qquad \textrm{$\gamma$-exponential decay},\\
   C_K \cdot T^{(d+1) / (\gamma + d)} \cdot \log T  & \qquad \textrm{$\gamma$-polynomial decay},
   \end{cases}
 \$ 
 where $C_K$ is an absolute constant that depends on $d$, $\gamma$, $C_1$, $C_2$, $C$, $c_1$, and $c_2$. 
\end{lemma}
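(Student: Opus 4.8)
The plan is to follow the information-theoretic analysis of Gaussian process regression from \cite{srinivas2009gaussian}. Since $\logdet(I+\sigma^{-2}K_\cD)$ is nondecreasing when points are added to $\cD$ (the Gram matrix grows in the Loewner order), it suffices to bound $\tfrac12\logdet(I+\sigma^{-2}K_\cD)$ for an arbitrary $\cD\subseteq\cZ$ of size exactly $T$. Writing $\hat\lambda_1\ge\cdots\ge\hat\lambda_T\ge0$ for the eigenvalues of $K_\cD$, the quantity to bound is $\tfrac12\sum_{j=1}^T\log(1+\sigma^{-2}\hat\lambda_j)$; because $\sup_zK(z,z)\le1$ we have $\sum_j\hat\lambda_j=\mathrm{tr}(K_\cD)\le T$, hence $\hat\lambda_j\le T$ for all $j$. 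First I would split the sum at a cutoff $D=D(T)$, bounding each of the first $D$ terms crudely by $\log(1+\sigma^{-2}T)$ and the remaining terms by $\log(1+x)\le x$:
\[
\Gamma_K(T,\sigma^2)\;\le\;\tfrac12\,D\log(1+\sigma^{-2}T)\;+\;\tfrac{\sigma^{-2}}{2}\sum_{j>D}\hat\lambda_j ,\qquad\text{for every }D\in[T].
\]
Since $\sigma\in[c_1,c_2]$, the factor $\log(1+\sigma^{-2}T)$ is $O(\log T)$, so everything reduces to controlling the empirical tail $\sum_{j>D}\hat\lambda_j$.

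The heart of the argument is to bound this empirical tail by the operator spectrum $\{\sigma_i\}$. Using the Mercer feature map $\phi(z)=\sum_i\sqrt{\sigma_i}\,\psi_i(z)\,e_i$ and $M=\sum_{z\in\cD}\phi(z)\phi(z)^\top$ (trace-class, with $\mathrm{tr}(M)=\sum_{z\in\cD}K(z,z)\le T$), the nonzero eigenvalues of $K_\cD=\Phi\Phi^\top$ and of $M$ coincide, so by the Ky Fan variational principle $\sum_{j>D}\hat\lambda_j\le\mathrm{tr}\!\bigl((I-P)M\bigr)$ for \emph{any} rank-$D$ orthogonal projection $P$ on $\cH$. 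Taking $P$ to project onto $\mathrm{span}\{\psi_1,\dots,\psi_D\}$ gives $\mathrm{tr}((I-P)M)=\sum_{z\in\cD}\sum_{i>D}\sigma_i\psi_i(z)^2\le T\sum_{i>D}\sigma_i\|\psi_i\|_\infty^2$, and using the eigenfunction growth control $\sigma_i^{\tau}\|\psi_i\|_\infty\le C_\psi$ (available via Assumption~\ref{assume:decay} for the exponential and polynomial cases; in the finite-spectrum case the tail is empty and no such bound is needed) yields $\sum_{j>D}\hat\lambda_j\le C_\psi^2\,T\sum_{i>D}\sigma_i^{1-2\tau}$, hence
\[
\Gamma_K(T,\sigma^2)\;\le\;\tfrac12\,D\log(1+\sigma^{-2}T)\;+\;\tfrac{\sigma^{-2}C_\psi^2}{2}\,T\sum_{i>D}\sigma_i^{\,1-2\tau}.
\]
The continuous differentiability of $K$ is invoked here to guarantee the validity of the Mercer expansion and that the $\psi_i$ are continuous on the compact set $\cZ$, so that $\|\psi_i\|_\infty<\infty$.

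It then remains to optimize $D$ in each regime. In the finite-spectrum case I would take $D=\gamma$, so the tail vanishes and $\Gamma_K=O(\gamma\log T)$. In the exponential case, integral comparison gives $\sum_{i>D}\sigma_i^{1-2\tau}=O\bigl(e^{-C_2(1-2\tau)D^\gamma}\bigr)$ up to polynomial factors, so choosing $D\asymp(\log T)^{1/\gamma}$ makes the tail term $O(1)$ and the head term $D\log T\asymp(\log T)^{1+1/\gamma}$ dominates. In the polynomial case, with $\tilde\gamma:=\gamma(1-2\tau)>1$ (which is what the hypotheses $\tau<1/2$ and $\gamma$ large, e.g. $\gamma\ge2+1/d$, ensure, with room to spare), the tail is $O(D^{1-\tilde\gamma})$ and balancing $D\log T$ against $TD^{1-\tilde\gamma}$ via $D\asymp(T/\log T)^{1/\tilde\gamma}$ gives a sublinear bound $\Gamma_K=O\bigl(T^{1/\tilde\gamma}\log T\bigr)$; since $1/\tilde\gamma\le(d+1)/(\gamma+d)$ this is in particular the asserted $O\bigl(T^{(d+1)/(\gamma+d)}\log T\bigr)$, and the precise form with its explicit $d$-dependence is exactly the estimate carried out in the proof of Theorem~5 of \cite{srinivas2009gaussian}. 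I expect the polynomial case to be the main obstacle — both in checking summability of the eigenvalue tail under the stated hypotheses and in performing the $D$-balancing carefully enough to match the claimed exponent; the finite-spectrum and exponential cases are routine once the displayed bound is established.
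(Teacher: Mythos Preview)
Your approach is different from the paper's and has a genuine gap in the polynomial case.

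The paper does not split the eigenvalues of $K_\cD$ directly. Instead it quotes, as a black box, Theorem~8 of \cite{srinivas2009gaussian} (stated here as Lemma~\ref{lemma:apply_theorem_eigen}), which for any free parameter $\tau>0$ and cutoff $T_\star$ gives
\[
\Gamma_K(T,\sigma^2)\;\le\;T_\star\log(Tn_T/\sigma^2)\;+\;C_\tau\,\sigma^{-2}\log T\cdot\bigl[T^{\tau+1}B_K(T_\star)+1\bigr]\;+\;O\bigl(T^{1-\tau/d}\bigr),
\]
where $B_K(T_\star)=\sum_{j>T_\star}\sigma_j$ is the \emph{operator} eigenvalue tail. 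The crucial $O(T^{1-\tau/d})$ term comes from a covering/discretization of the compact domain $\cZ\subseteq\RR^d$ (this is where the continuous differentiability of $K$ is actually used), and it is this term that injects $d$ into the exponent. Optimizing $\tau$ and $T_\star$ for each decay regime then gives the stated rates. In particular, no eigenfunction bound is used anywhere: the lemma's hypotheses list only eigenvalue decay, and the proof stays within those hypotheses.

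Your route---Ky Fan plus the projection onto the first $D$ Mercer directions---is correct as a bound on $\sum_{j>D}\hat\lambda_j$, but to evaluate $\mathrm{tr}((I-P)M)$ pointwise you invoke $\sigma_i^{\tau}\|\psi_i\|_\infty\le C_\psi$ from Assumption~\ref{assume:decay}. That assumption is \emph{not} part of the lemma's statement, so you are proving a weaker result. More seriously, in the polynomial case your bound is $O\bigl(T^{1/\tilde\gamma}\log T\bigr)$ with $\tilde\gamma=\gamma(1-2\tau)$, and the claimed inequality $1/\tilde\gamma\le(d+1)/(\gamma+d)$ is not justified: it is equivalent to $\tau\le d(\gamma-1)/\bigl(2(d+1)\gamma\bigr)$, which under $\gamma\ge 2+1/d$ forces roughly $\tau\lesssim 1/4$ and certainly fails for $\tau$ near $1/2$. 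Your exponent has no $d$ in it at all; the $d$ in the stated bound comes from the Srinivas discretization term, which your argument bypasses entirely. So either you must add the eigenfunction hypothesis and accept a different exponent, or---to match the lemma as stated---go through the domain-covering argument behind Theorem~8 of \cite{srinivas2009gaussian}.
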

 
	We note that Lemma \ref{lemma:effective_dim} is a generalization of Theorem 5 in \cite{srinivas2009gaussian},
	which establishes the maximal information gain for the  linear, squared exponential,  and Mat\'ern kernels, respectively. 
	Specifically, the squared exponential kernel satisfies the $\gamma$-exponential eigenvalue decay condition with $\gamma = 1/d$. 
	Lemma \ref{lemma:effective_dim} implies that the $\Gamma_K(T, \sigma^2) = \cO( (\log T)^{d+1} ) $, which matches Theorem 5 in \cite{srinivas2009gaussian}. 
	Furthermore, the Mat\'ern kernel with parameter $\nu$ satisfies the $\gamma$-polynomial eigenvalue decay condition with  $\gamma = (2\nu +d )/ d$. 
	Then, Lemma \ref{lemma:effective_dim} asserts that $\Gamma_K(T, \sigma^2) = \cO( T^{d(d+1)/ (2\nu + d(d+1))}\cdot \log T) $, which also matches  Theorem 5 in \cite{srinivas2009gaussian}.

\begin{proof} 
The proof of this lemma is based on a modification of that of Theorem 5 in \cite{srinivas2009gaussian}. 
To begin with, 
for any $j \in \NN$, we define $B_{K} (j) = \sum _{s > j} \sigma_{s}$, i.e., the sum of eigenvalues with indices larger than $j$. 
Then, we use the following lemma obtained from \cite{srinivas2009gaussian} to bound $\Gamma_K(T, \sigma^2)$ using function $B_K$. 

\begin{lemma} [Theorem 8 in \cite{srinivas2009gaussian}] \label{lemma:apply_theorem_eigen} 
Under the same condition as in Lemma \ref{lemma:effective_dim}, for any fixed $\tau > 0$, we denote 
$C_{\tau} = 2 \mu(\cZ) \cdot (2\tau+1)$ where $\mu(\cZ)$ is the Lebesgue measure of $\cZ$. 
Let $n_T$ denote   $C_{\tau} \cdot T^\tau \cdot \log T$.  Then, 
for  any $T_{\star } \in \{ 1, \ldots, n_T \}$, we have 
\$
\Gamma_K (T, \sigma^2 ) \leq T_{\star } \cdot \log (T \cdot n_T / \sigma^2 ) + C_{\tau} \cdot \sigma^{-2} \cdot \log T \cdot \bigl [  T^{\tau+1} \cdot B_K(T_{\star}) + 1\bigr ] + \cO(T^{1 -\tau /d}).
\$	
\end{lemma}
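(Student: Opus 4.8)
The plan is to derive Lemma~\ref{lemma:effective_dim} from Lemma~\ref{lemma:apply_theorem_eigen}, whose hypotheses (boundedness and continuous differentiability of $K$, compactness of $\cZ$) are inherited from the statement of Lemma~\ref{lemma:effective_dim}. That lemma bounds $\Gamma_K(T,\sigma^2)$ in terms of two free parameters, $\tau>0$ and $T_\star\in\{1,\dots,n_T\}$ with $n_T=C_\tau T^\tau\log T$, and the eigenvalue tail $B_K(T_\star)=\sum_{s>T_\star}\sigma_s$. Since $\sigma\in[c_1,c_2]$ the factor $\sigma^{-2}$ is an absolute constant, absorbed throughout. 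The proof then reduces to two tasks: first, estimating $B_K(T_\star)$ as a function of $T_\star$ under each of the three decay conditions; second, choosing $\tau$ and $T_\star$ so as to balance the three terms $T_\star\log(Tn_T/\sigma^2)$, $\ C_\tau\sigma^{-2}\log T\cdot[T^{\tau+1}B_K(T_\star)+1]$, and $\cO(T^{1-\tau/d})$ appearing in Lemma~\ref{lemma:apply_theorem_eigen}.

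For the tail estimates: in the $\gamma$-finite-spectrum case $B_K(j)=0$ for all $j\ge\gamma$, so I would take $T_\star=\gamma$; in the $\gamma$-exponential case, comparing $\sum_{s>T_\star}C_1\exp(-C_2 s^\gamma)$ with the corresponding integral gives $B_K(T_\star)\le C\exp(-cT_\star^\gamma)$ for $T_\star$ large, the polynomial pre-factor from the integral estimate being absorbed into the exponential; in the $\gamma$-polynomial case, using the polynomial decay of the eigenvalues, $B_K(T_\star)\le C_1\int_{T_\star}^\infty t^{-\gamma}\,\mathrm{d}t=\frac{C_1}{\gamma-1}T_\star^{1-\gamma}$ since $\gamma>1$. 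Then the parameter choices. For the finite-spectrum case, take $\tau=d$ and $T_\star=\gamma$: the second term is $\cO(\log T)$ (as $B_K(\gamma)=0$), the third is $\cO(1)$, and the first is $\gamma\log(C_\tau T^{d+1}\log T/\sigma^2)=\cO(\gamma\log T)$ because $d$ is fixed, giving $\Gamma_K=\cO(\gamma\log T)$. For the exponential case, again take $\tau=d$ and $T_\star\asymp\bigl((\tau+1)\log T\bigr)^{1/\gamma}$ so that $T^{\tau+1}B_K(T_\star)=\cO(1)$; then the second and third terms are $\cO(\log T)$ and $\cO(1)$, while the first is $\cO\bigl((\log T)^{1/\gamma}\bigr)\cdot\cO(\log T)=\cO\bigl((\log T)^{1+1/\gamma}\bigr)$, which dominates. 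For the polynomial case, equate the two main terms: with $B_K(T_\star)\asymp T_\star^{1-\gamma}$, balancing $T_\star$ against $T^{\tau+1}T_\star^{1-\gamma}$ forces $T_\star\asymp T^{(\tau+1)/\gamma}$, and optimizing $\tau$ by equating $(\tau+1)/\gamma$ with $1-\tau/d$ yields $\tau=\tau^\star:=d(\gamma-1)/(d+\gamma)$ and a common exponent $(d+1)/(d+\gamma)$, so $\Gamma_K=\cO\bigl(T^{(d+1)/(d+\gamma)}\log T\bigr)$.

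The main obstacle — really the only subtle point — is checking admissibility of these choices, since Lemma~\ref{lemma:apply_theorem_eigen} requires $T_\star\le n_T$ and $\tau>0$. In the first two cases $T_\star$ is at most poly-logarithmic in $T$, so $T_\star\le n_T$ holds trivially for $T$ large. In the polynomial case one needs $T^{(\tau^\star+1)/\gamma}\le C_\tau T^{\tau^\star}\log T$, i.e. $(\tau^\star+1)/\gamma\le\tau^\star$, equivalently $\tau^\star\ge 1/(\gamma-1)$; a short computation shows this is equivalent to $d(\gamma-1)^2\ge d+\gamma$, which is exactly the hypothesis $\gamma\ge 2+1/d$ of case (iii) (with equality at $\gamma=2+1/d$). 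Finally, collecting all the constants — each depending only on $d$, $\gamma$, $C_1$, $C_2$, $c_1$, $c_2$ — into a single constant $C_K$ gives the stated bounds and completes the proof.
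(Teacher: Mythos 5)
Your proposal does not prove the statement in question. The lemma to be established is the general information--gain decomposition itself, namely
\[
\Gamma_K (T, \sigma^2 ) \leq T_{\star } \cdot \log (T \cdot n_T / \sigma^2 ) + C_{\tau} \cdot \sigma^{-2} \cdot \log T \cdot \bigl [  T^{\tau+1} \cdot B_K(T_{\star}) + 1\bigr ] + \cO(T^{1 -\tau /d}),
\]
valid for every $\tau>0$ and every $T_\star\in\{1,\dots,n_T\}$. You instead take this bound as given and use it to derive Lemma~\ref{lemma:effective_dim} (the three case-specific bounds under finite-spectrum, exponential, and polynomial eigenvalue decay). That derivation is essentially the paper's proof of Lemma~\ref{lemma:effective_dim} --- the same choices $\tau=d$, $T_\star=\gamma$ in case (i), $\tau=d$, $T_\star\asymp(\log T)^{1/\gamma}$ in case (ii), and the balancing $T_\star=T^{(\tau+1)/\gamma}$ with $\tau=(\gamma-1)d/(d+\gamma)$ in case (iii), together with the admissibility check $T_\star\le n_T$ that reduces to $\gamma\ge 2+1/d$ --- but it is a proof of a different lemma.

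A proof of the stated lemma would have to do two things that are entirely absent from your proposal: (i) relate the supremum of $\tfrac12\logdet(I+\sigma^{-2}K_{\cD})$ over arbitrary $T$-point subsets $\cD$ of the continuous domain $\cZ$ to the information gain computed on a discretization of $\cZ$ of cardinality $n_T=C_\tau T^\tau\log T$; this is where the continuous differentiability of $K$ and the compactness of $\cZ$ are used, and it is the source of the $\cO(T^{1-\tau/d})$ discretization-error term. And (ii) bound the discretized information gain by splitting the spectrum of the associated kernel operator at index $T_\star$, so that the leading $T_\star$ eigenvalues contribute at most $T_\star\log(Tn_T/\sigma^2)$ while the tail contributes through $B_K(T_\star)=\sum_{s>T_\star}\sigma_s$, amplified by the factor $T^{\tau+1}$. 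The paper itself does not reproduce this argument either --- it defers entirely to Theorem 8 of Srinivas et al.\ --- so if the intent is a self-contained treatment, these two steps are what must be supplied.
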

\begin{proof}
	See \cite{srinivas2009gaussian} for a detailed proof.
\end{proof}
In the following, we choose proper $\tau$ and $T_{\star}$ in Lemma \ref{lemma:apply_theorem_eigen}   for the three eigenvalue decay conditions separately.

\vspace{5pt}
{\noindent \textbf{Case (i): $\gamma$-Finite Spectrum.}} When $\sigma_j = 0$ for all $j \geq \gamma + 1$, 
we set $\tau = d$ and $T_{\star } = \gamma$ in Lemma \ref{lemma:apply_theorem_eigen}. 
Then we have $B_{K} (T_{\star } ) =0$ and $n_T = C_{d }\cdot T^d \cdot \log T$. 
When $T$ is sufficiently large, it holds that $T_{\star } < n_T$. 
Then 
Lemma \ref{lemma:apply_theorem_eigen}
implies that 
\$
\Gamma_{K}   ( T, \sigma^2 ) \leq \gamma \cdot \log \big ( C_{d} \cdot T^{d+1} \cdot  \log T / \sigma^2   \bigr) 
+ C_{d } \cdot \sigma^{-2}  \cdot \log T  + \cO( 1)   \leq C_K \cdot \gamma \cdot \log T,
\$
for some absolute constant $C_K > 0$. Thus, we conclude the proof for the first case. 

\vspace{5pt}
{\noindent \textbf{Case (ii): $\gamma$-Exponential Decay.}} When  $\{\sigma_j \}_{j\geq 1}$ satisfies the $\gamma$-exponential eigenvalue decay condition, for any $T_{\star} \in \NN$, we have 
\#\label{eq:eigen_decay1}
B_{K}(T_{\star}) = \sum_{j > T_{\star}} \sigma_j \leq C_1 \cdot \sum_{j > T_{\star}} \exp(- C_2 \cdot j^{\gamma}) \leq C_1 \cdot \int_{T_{\star }}^{\infty} \exp( -C_2 \cdot u^{\gamma}) ~\ud u.
\#
In a manner similar to the derivation of \eqref{eq:bound_eigen_final}, 
by direct computation we have 
\#\label{eq:eigen_decay2}
\int_{T_{\star}} ^\infty \exp( - C_{2} \cdot u ^\gamma ) ~\ud u  \leq 
\begin{cases}
 C_{2}^{-1}  \cdot \exp( - C_{2} \cdot T_{\star} ^{\gamma}), & \qquad  \text{if~} \gamma \geq  1,\\
 2 \cdot (\gamma \cdot C_{2})^{-1} \cdot  \exp( - C_{2}\cdot T_{\star } ^{\gamma}) \cdot T_{\star } ^{ 1 -  \gamma } , & \qquad \text{if~} \gamma \in (0,1).
\end{cases}
\#
In the following, we set $\tau = d$. Then we have $n_T = C_d \cdot T^d \cdot \log T$ where $C_d = 2 \mu(\cZ) \cdot (2d+1)$. 
Then we have 
\#\label{eq:eigen_decay3}
\log (T \cdot n_T) = \log (C_d) + \log \cdot (T^{d+1} \cdot \log T \bigr ) \leq \log (C_d) + 2(d+1) \cdot \log T,
\#
when $T$ is sufficiently large. 
Moreover, combining  Lemma \ref{lemma:apply_theorem_eigen}  and \eqref{eq:eigen_decay3}, when $\sigma$ is sandwiched by absolute constants $c_1$ and $c_2$,    
we have 
\#\label{eq:eigen_decay4}
\Gamma_K (T, \sigma^2 ) \leq \tilde C_1\cdot T_{\star } \cdot \log T  + \tilde C_2   \cdot \log T \cdot \bigl [  T^{d+1} \cdot B_K(T_{\star}) + 1\bigr ] + \tilde C_3 ,
\#
where $\tilde C_1,$ $\tilde C_2$, and $\tilde C_3$ are absolute constants that depend  on $d$, $\gamma$, $c_1$,  $c_2$, $C_1$, and $C_2$. 
Now we choose $T_{\star}$ 
such that 
\#\label{eq:eigen_decay5}
\exp(C_2 \cdot T_{\star}^\gamma ) \asymp  T\cdot n_T = C_d \cdot T^{d+1} \cdot \log T,
\#
that is, $T_{\star} = \tilde C_4\cdot (\log T)^{1/\gamma} $ 
where $\tilde C_4$ is an absolute constant.  Notice that $T_{\star} < n_T$ when $T $ is sufficiently large.

Thus, combining   \eqref{eq:eigen_decay1},  \eqref{eq:eigen_decay2}, and \eqref{eq:eigen_decay5}, 
for $\gamma \geq 1$, we have 
\# \label{eq:eigen_decay6}
& \log T \cdot  \bigl [  T^{d+1} \cdot B_K(T_{\star}) + 1\bigr ]  \notag \\
& \qquad  \leq C_1 \cdot C_{2}^{-1} \log T \cdot T^{d+1} \cdot  \exp( - C_{2} \cdot T_{\star} ^{\gamma})  + \log T \leq 2 \log T , 
\#
where the last inequality follows from \eqref{eq:eigen_decay5}. 
Similarly, for $\gamma\in (0, 1)$, by  \eqref{eq:eigen_decay1},  \eqref{eq:eigen_decay2}, and \eqref{eq:eigen_decay5},  we have 
\#  \label{eq:eigen_decay7}
& \log T \cdot  \bigl [  T^{d+1} \cdot B_K(T_{\star}) + 1\bigr ] \notag \\
& \qquad  \leq  
	2C_1  \cdot (\gamma \cdot C_{2})^{-1} \cdot  \exp( - C_{2}\cdot T_{\star } ^{\gamma}) \cdot \log T \cdot T^{d+1} \cdot T_{\star } ^{ 1 -  \gamma } + \log T  \asymp (\log T )^{1 / \gamma - 1} + \log T.
\#
Thus, combining \eqref{eq:eigen_decay4}, \eqref{eq:eigen_decay6}, \eqref{eq:eigen_decay7}, we conclude that 
\$
\Gamma_{K} (T, \sigma^2 ) \leq C_{K} \cdot \log (T)^{1 + 1/\gamma} 
\$
for any  $\gamma \geq 0$, where $C_K$ is an absolute constant that depends on $d$, $\gamma$,  $c_1$, $c_2$, $C_1$, and $C_2$.  
Thus, we conclude the proof for the second case.

\vspace{5pt}
{\noindent \textbf{Case (iii): $\gamma$-Polynomial Decay.}} 
Similar to the former case, when $\{ \sigma_j \}_{j\geq 1}$ satisfies the $\gamma$-polynomial eigenvalue decay condition, for any $T_{\star} \in \NN$, we have 
\#\label{eq:eigen_decay8}
B_{K}(T_{\star}) = \sum_{j > T_{\star}} \sigma_j \leq C \cdot \sum_{j > T_{\star}} j^{ - \gamma}  \leq  C \cdot   \int_{T_{\star }}^{\infty}   u^{ - \gamma}  ~\ud u  = \frac{C} {\gamma - 1} \cdot T_{\star} ^{- (\gamma -1)}.
\#
For a fixed $\tau \in (0, \gamma + d)$ to be determined  later, recall that we denote $n_T = T^{\tau } \cdot \log T$. 
Hence, we have $$\log (T\cdot n_T) = (1+ \tau ) \cdot \log T + \log\log T \leq (\gamma+d+2) \cdot \log T.$$
Combining Lemma \ref{lemma:apply_theorem_eigen} and \eqref{eq:eigen_decay8}, we have 
\#\label{eq:eigen_decay9}
\Gamma_K (T, \sigma^2 ) \leq \tilde C_5 \cdot T_{\star } \cdot \log T  + \tilde C_6 \cdot \log T \cdot  \bigl ( T^{\tau +1} \cdot T_{\star}^{-(\gamma -1)} + 1 \bigr )  + \tilde C_7 \cdot T^{1- \tau /d}, 
\# 
where $T_{\star} \in \{ 1, \ldots, n_T \}$, and $\tilde C_5$, $\tilde C_6$, $\tilde C_7$ are absolute constants that depend on $d$, $\gamma$, $c_1$, $c_2$, and $C$. 

Now we choose a $T_{\star} $  that balances the first two terms on the right-hand side of \eqref{eq:eigen_decay9}. 
Specifically, we let $T_{\star } = T^{ (\tau+1) / \gamma}$. Then we have 
$T_{\star } = T^{\tau +1} \cdot T_{\star} ^{- (\gamma - 1)}$. 
Finally, solving the equation  $T_{\star } = T^{1- \tau / d}$, we have 
$
\tau = (\gamma -1 )\cdot d / ( d+ \gamma). 
$
Hence, by \eqref{eq:eigen_decay9} we conclude that 
\$
\Gamma_K(T, \sigma^2 ) \leq C_K \cdot T_{\star } \cdot \log T = C_K \cdot T ^{(d+1) / (\gamma+d)} \cdot \log T,
\$
where $C_K$ is an absolute constant. It remains to verify that $T_{\star } \leq n_T = T^{\tau }\cdot \log T$. 
This is true as long as $T_{\star} \leq T^{\tau}$, i.e., $ \gamma \geq 2+ 1/d $.  
Therefore, we conclude the proof of Lemma \ref{lemma:effective_dim}. 
\end{proof}

\section{Proofs of Auxiliary Results} \label{sec:proof_of_lemmas}

In this section, we provide the proofs of the auxiliary results.

\subsection{Proof of Lemma \ref{lemma:thetahat_estimate}}\label{proof:thetahat_estimate}
\begin{proof} For any function $f \in \cH$, using the feature representation induced by the kernel $K$, we have 
\#\label{eq:norm_bound1}
 \big | \la f, \hat \theta_h^t  \ra _{\cH}  \big |  & =  \big | f^\top \hat \theta_h^t \big| \leq \big |   f^\top (\Lambda_h^t)^{-1} \Phi^\top y _h^t  \big |  \notag  \\
 & =  \bigg |f ^\top (\Lambda^t_h)^{-1} \sum_{\tau=1}^{t-1}  \phi(x_h^\tau, a_h^\tau) \cdot  [r_h (x^{\tau}_h, a^{\tau}_h) + V_{h+1}^t(x^{\tau}_{h+1})] \biggr |, 
 \#
 where we let $\Phi$ denote $\Phi_h^t$ defined in \eqref{eq:define_feature_mat} for simplicity. 
Since $ | r_h (x^{\tau}_h, a^{\tau}_h)   | \leq 1$ 
and $ | V_{h+1}^t (x^{\tau}_{h+1}) | \leq H - h $, we have 
$|[r_h (x^{\tau}_h, a^{\tau}_h) + V_{h+1}^t(x^{\tau}_{h+1})]| \leq H$ for all $h\in [H]$ and $\tau \in [t-1]$. 
Then, 
by \eqref{eq:norm_bound1} and the Cauchy-Schwarz inequality, we have 
\#\label{eq:norm_bound2} 
 \big | \la f, \hat \theta_h^t  \ra _{\cH}  \big | & \leq   H \cdot \sum_{\tau = 1}^{t-1} \bigl | f ^\top  (\Lambda^t_h)^{-1}  \phi(x_h^\tau, a_h^\tau)  \bigr |  
 \notag \\
 & \leq   H  \cdot \biggl [  \sum _{\tau =1}^{t-1} f^\top (\Lambda^t_h)^{-1}  f \biggr ] ^{1/2} \cdot \bigg[  \sum _{\tau =1}^{t-1}    \phi(x_h^\tau, a_h^\tau) ^\top (\Lambda^t_h)^{-1}     \phi(x_h^\tau, a_h^\tau) \biggr ] ^{1/2}  \notag \\
 & \leq   H /  \sqrt{ \lambda }  \cdot \| f \|_{\cH}  \cdot \bigg[  \sum _{\tau =1}^{t-1}    \phi(x_h^\tau, a_h^\tau) ^\top (\Lambda^t_h)^{-1}     \phi(x_h^\tau, a_h^\tau) \biggr ] ^{1/2},
\#
where the last inequality follows from the fact that $(\Lambda_h^t )^{-1} \colon \cH \rightarrow \cH $ is a self-adjoint and positive-definite operator whose eigenvalues are bounded by $1 / \lambda$. 
Furthermore, by Lemma \ref{lemma:telescope}, we have 
\#\label{eq:norm_bound3}  
 \bigg[  \sum _{\tau =1}^{t-1}    \phi(x_h^\tau, a_h^\tau) ^\top (\Lambda^t_h)^{-1}     \phi(x_h^\tau, a_h^\tau) \biggr ]  \leq  2 \logdet ( I+     K_h^t  / \lambda ). 
 \#
 Thus, combining \eqref{eq:norm_bound2},  \eqref{eq:norm_bound3}, and the fact that $\lambda \geq 1$,  we obtain that 
 \$
 \big | \la f, \hat \theta_h^t  \ra _{\cH}  \big | \leq      H \cdot      \| f \|_{\cH} \cdot \sqrt{ 2 /\lambda \cdot \logdet ( I+     K_h^t  / \lambda  ) } \leq H \cdot \| f \|_{\cH} \cdot  \sqrt{ 2 \cdot \logdet  ( I+     K_h^t  / \lambda  ) } .
 \$
 Finally, utilizing the definition of 
 $\Gamma_K(T, \lambda)$ in \eqref{eq:max_infogain}, we 
conclude  the proof of this lemma. 
\end{proof}

\subsection{Proof of Lemma \ref{lemma:cover_rkhs}} \label{proof:lemma_cover_rkhs}

 \begin{proof} 
 	Recall that we have defined the integral operator 
 	 $T_K \colon \cL^2   (\cZ) \rightarrow \cL^2 (\cZ)$ defined  in  
 \eqref{eq:integral_oper}, which 
  has eigenvalues $\{ \sigma_j \}_{j\geq 0}$ and eigenvectors $\{ \psi_j \}_{j\geq 0}$.
 Moreover, 
   $\{ \psi_j \} $ and $\{ \sqrt{\sigma_j} \cdot \psi_j\}_{j\geq 0}$ are  orthonormal bases of $\cL_2 (\cZ)$ and $\cH$, respectively. 
   Then, any $ \in \cH$ with $\| f \|_{\cH} \leq R$ can be written as 
    \# \label{eq:exxpand_f}
   f = \sum_{j= 1}^\infty w_j \cdot \sqrt{\sigma_j }\cdot \psi_j  ,
   \#
   where $\{ w_j \}_{j \geq 0}$ satisfy $\sum _{j =1}^{\infty} w_j^2 = \| f\|_{\cH}^2 \leq R^2$. 
     Let $m$ be any  positive integer and 
     let $\Pi_m \colon \cH \rightarrow \cH $ denote  the projection onto the subspace spanned by $\{ \psi_j \}_{j \in [m]}$, i.e., 
     $
     \Pi_m (f) =  \sum_{j=1}^m w_j \cdot \sqrt{\sigma_j}\cdot \psi_j 
     $
     for any $f \in \cH$ written as in \eqref{eq:exxpand_f}. 
     Then we have 
     \#\label{eq:proj_rkhs}
     \| f - \Pi_m (f) \|_{\infty } =  \sum _{j =m+1}^\infty  |w_j | \cdot \sqrt{ \sigma_j } \cdot  \sup_{z \in \cZ } | \psi_j (z) | .
     \#
In the following, we consider the three eigenvalue decay conditions specified in  Assumption \ref{assume:decay} separately. 

\vspace{5pt}

{\noindent \bf Case (i): $\gamma$-Finite Spectrum.}
Consider the case where $\sigma_{j} = 0$ for all $ j > \gamma$. 
Then, by the definition of $\Pi_{m}$, we have $f = \Pi_{\gamma} (f) $ for all $f \in \cH$. 
That is, \eqref{eq:exxpand_f} is reduced to 
\$
f = \sum_{j=1}^\gamma w_j \cdot \sqrt {\sigma_j} \cdot \psi_j ,
\$
where $\{ w_j\}_{j \in [\gamma]}$ satisfies $\sum_{j=1}^\gamma w_j^2 \leq R^2$. 
Let $\cC_{
  \gamma} (\epsilon, R)$ be the minimal $\epsilon$-cover 
of the $\gamma$-dimensional Euclidean ball $\{w \in \RR^{\gamma} \colon \| w \| _2 \leq R \}$ with respect to the Euclidean norm.  
Then, by construction, there exists $\tilde w \in \RR^{\gamma}$ 
such that $\sum_{j =1}^\gamma ( w_j - \tilde w_j)^2 \leq \epsilon^2$. 
Then, by the Cauchy-Schwarz inequality, we have 
\#\label{eq:linear_rkhs_approx1}
\biggl \| f - \sum_{j=1}^\gamma \tilde w_j \cdot \sqrt{\sigma_j} \cdot \psi_j \biggr \|_{\infty} & = \sup_{z \in \cZ } \bigg | \sum_{j=1}^\gamma (w_j - \tilde w_j)  \cdot \sqrt{\sigma_j}  \cdot \psi_j (z)\bigg |  \\
& = \biggl [ \sum_{j=1}^\gamma (w_j - \tilde w_j )^2 \bigg ]^{1/2} \cdot \sup_{z \in \cZ } \bigg \{  \biggl [ \sum_{j=1 }^\gamma \sigma_j \cdot | \psi_j (z) |^2  \biggr ]^{1/2} \bigg \} \leq \epsilon \cdot \sup_{z} \sqrt{ K(z,z)} \leq \epsilon, \notag
\#
where the last equality follows from the fact that $K(z,z) = \sum_{j=1}^\gamma \sigma_j \cdot | \psi_j(z)|^2$. 
Thus, the $\epsilon$-covering of $\{f \in \cH \colon \| f\|_{\cH} \leq R \}$ is bounded by the cardinality of $\cC_{\gamma }(\epsilon ,R)$. 
As shown in  
\cite[Corollary 4.2.13]{vershynin2018high}, we have 
\#\label{eq:bound_card0}
\bigl| \cC_{\gamma}(\epsilon, R)\bigr | \leq ( 1+ 2R/ \epsilon) ^{\gamma }.
\#
Thus, combining \eqref{eq:linear_rkhs_approx1} and \eqref{eq:bound_card0}, we have 
\$
\log N_{\infty} (\epsilon, \cH, R) \leq \gamma \cdot \log ( 1 + 2 R / \epsilon) \leq C_3 \cdot \gamma \cdot \bigl [ \log (R /\epsilon ) +C_4 \bigr ], 
\$
where both $C_3$ and $C_4$ are absolute constants. 
Thus, we  conclude the proof for the  first case.

\vspace{5pt}

{\noindent \bf Case (ii): $\gamma$-Exponential Decay.}
 In the following, we assume the eigenvalues $\{ \sigma_j \}_{j \geq 1}$ satisfy the $\gamma$-exponential decay condition and 
     $\| \psi_j \|_{\infty} \leq C_{\psi} \cdot \sigma_j ^{-\tau}$ for all $j \geq 1$. Thus, by \eqref{eq:proj_rkhs} we have 
     \#\label{eq:proj_rkhs2}
     \| f - \Pi_m (f) \|_{\infty }&  \leq   \sum _{j =m+1}^\infty  C_{\psi} \cdot  |w_j | \cdot \sigma_j ^{1/2 - \tau } \notag \\
     &  \leq \sum _{j =m+1}^\infty  C_{\psi} \cdot C_1 ^{1/2 - \tau }  \cdot | w_j | \cdot \exp \bigl [  - C_2 \cdot ( 1/2 - \tau) \cdot  j^\gamma \bigr ] .
     \#
     To simplify the notation, we define $C_{1, \tau} = C_{\psi} \cdot C_1^{1/2 - \tau} $ and $C_{2, \tau} = C_2 \cdot (1- 2 \tau )$.
     Then, applying the Cauchy-Schwarz inequality to \eqref{eq:proj_rkhs2}, we have 
     \#\label{eq:proj_rkhs3}
     \| f - \Pi_m (f) \|_{\infty } & \leq C_{1, \tau} \cdot \biggl (  \sum_{j=m+1} ^\infty |w_j |^2 \bigg) ^{1/2} \cdot \bigg [  \sum_{ j=m+1} ^\infty  \exp( - C_{2, \tau} \cdot j^{\gamma} )  \bigg]^{1/2}  \notag \\
     & \leq C_{1, \tau} \cdot  R \cdot \bigg [  \sum_{ j=m+1} ^\infty  \exp( - C_{2, \tau} \cdot j^{\gamma} )  \bigg]^{1/2}  ,
     \#
     where the second inequality follows from the fact that $ \sum_{j \geq 1} w_j^2 \leq R^2$. 
     Since $\gamma > 0$,   $\exp( - u^\gamma )$ is monotonically decreasing in $u$. Thus, we have 
     \#\label{eq:bound_eigens1}
     \sum_{ j=m+1} ^\infty  \exp( - C_{2, \tau} \cdot j^{\gamma} )   \leq \int_{m} ^\infty \exp( - C_{2, \tau} \cdot u ^\gamma ) ~\ud u.
     \#

     In the following, we bound the integral in \eqref{eq:bound_eigens1} by considering the cases where $\gamma \geq 1$ and $\gamma \in (0,1)$ separately. 
     First, when $\gamma \geq 1$, since $d \geq 1$, we have $u^{\gamma -1} \geq 1$ for all $u\geq d$. Hence, we have 
     \#\label{eq:bound_eigens2}
      &\int_{m} ^\infty \exp( - C_{2, \tau} \cdot u ^\gamma ) ~\ud u   \leq \int_{m} ^\infty u^{\gamma - 1} \cdot \exp( - C_{2, \tau} \cdot u ^\gamma )  ~\ud u \notag \\
      & \qquad  \leq    \int_{m^{\gamma} } ^\infty  \exp( - C_{2, \tau} \cdot  v  )  ~\ud v  =     C_{2, \tau}^{-1}  \cdot \exp( - C_{2, \tau} \cdot m^{\gamma}),
     \#
     where the second inequality follows from  the change of variable $v = u^{\gamma}$ and the fact that $\gamma \geq 1$. 
   Second,  when $\gamma < 1$, by letting $v = u^{\gamma}$, we have 
     \#\label{eq:bound_eigens3}
     & \int_{m} ^\infty \exp( - C_{2, \tau} \cdot u ^\gamma ) ~\ud u   = \frac{1}{\gamma } \cdot \int_{m^{\gamma}}^\infty \exp( - C_{2, \tau} \cdot v) \cdot v^{1/\gamma -1} ~\ud v 
      = \frac{1}{\gamma \cdot C_{2, \tau}} \int_{m^{\gamma}} ^\infty v^{1/ \gamma - 1} ~\ud [ - \exp( - C_{2, \tau}\cdot v)]  \notag \\
      & \qquad = \frac{1} {\gamma \cdot C_{2, \tau}} \cdot \exp ( - C_{2, \tau} \cdot m^{\gamma }) \cdot m ^{    1 - \gamma } + \frac{(1-\gamma)}{\gamma^2 \cdot C_{2, \tau} }\int_{m^{\gamma} }^\infty \exp( - C_{2, \tau} \cdot v ) \cdot v^{1/\gamma -2} ~\ud v ,
      \#
      where the last equality follows from integration by parts. Moreover, by direct calculation, we have 
      \#\label{eq:bound_eigens4}
      \frac{1}{\gamma} \int_{m^{\gamma} }^\infty \exp( - C_{2, \tau} \cdot v ) \cdot v^{1/\gamma -2} ~\ud v  & \leq \frac{1}{m^{\gamma}} \cdot \frac{1}{\gamma} \int_{m^{\gamma} }^\infty \exp( - C_{2, \tau} \cdot v ) \cdot v^{1/\gamma -1}  ~\ud v \notag \\
      & = \frac{1} {m^{\gamma} } \int_{m} ^\infty \exp( - C_{2, \tau} \cdot u ^\gamma ) ~\ud u ,
      \#
      where the first inequality follows from the fact that $v \geq m^{\gamma}$ in the integral and the second equality follows from letting $u = v^{1/\gamma}$. 
    Then, combining \eqref{eq:bound_eigens3} and \eqref{eq:bound_eigens4}, we have 
     \#\label{eq:bound_eigens5}
     &  \int_{m} ^\infty \exp( - C_{2, \tau} \cdot u ^\gamma ) ~\ud u \notag \\
     & \qquad \leq \frac{1} {\gamma \cdot C_{2, \tau}} \cdot \exp ( - C_{2, \tau} \cdot m^{\gamma }) \cdot m ^{  1 - \gamma } + \frac{1/\gamma - 1}{  C_{2, \tau} \cdot m^{\gamma}} \cdot   \int_{m} ^\infty \exp( - C_{2, \tau} \cdot u ^\gamma ) ~\ud u.
     \#
     Thus, when $m$ is sufficiently large such that $m^{\gamma} \cdot C_{2, \tau} > 2/\gamma -2$, 
     by \eqref{eq:bound_eigens5} we have 
     \#\label{eq:bound_eigens6}
      &\int_{m} ^\infty \exp( - C_{2, \tau} \cdot u ^\gamma ) ~\ud u  \leq \biggl ( 1 -  \frac{1/\gamma - 1}{  C_{2, \tau} m^{\gamma}}  \biggr)^{-1} \cdot  \frac{1   }{\gamma \cdot C_{2, \tau}} \exp( - C_{2, \tau}\cdot m^{\gamma}) \cdot m^{ 1 - \gamma   } \notag \\
      & \qquad \leq \frac{2   }{\gamma \cdot C_{2, \tau}} \exp( - C_{2, \tau}\cdot m^{\gamma}) \cdot m^{ 1 - \gamma   }. 
     \#
     Therefore, combining \eqref{eq:bound_eigens1}, \eqref{eq:bound_eigens2}, and \eqref{eq:bound_eigens6}, we obtain that 
     \#\label{eq:bound_eigen_final}
     \int_{m} ^\infty \exp( - C_{2, \tau} \cdot u ^\gamma ) ~\ud u  \leq 
     \begin{cases}
      C_{2, \tau}^{-1}  \cdot \exp( - C_{2, \tau} \cdot m^{\gamma}), & \qquad  \text{if~} \gamma \geq  1,\\
      2 \cdot (\gamma \cdot C_{2, \tau})^{-1} \cdot  \exp( - C_{2, \tau}\cdot m^{\gamma}) \cdot m^{ 1 - \gamma   }, & \qquad \text{if~} \gamma \in (0,1).
     \end{cases}
     \#

In the sequel, we let $m^*$   be the smallest integer such that 
\#\label{eq:define_dstar}
\int_{m} ^\infty \exp( - C_{2, \tau} \cdot u ^\gamma ) ~\ud u  \leq \biggl ( \frac{\epsilon}{ 2 C_{1, \tau} \cdot R}\bigg) ^{2}, \qquad \forall m \geq m^*. 
\#
Hence, combining \eqref{eq:proj_rkhs3}, \eqref{eq:bound_eigens1}, and \eqref{eq:define_dstar}, we have 
  $ \| f - \Pi_{m^*} (f) \|_{\infty} \leq \epsilon/2$ 
     for any $f \in \cH$ with $\| f\|_{\cH} \leq R$.  
 Note, moreover, that $C_{1, \tau}$, $C_{2,\tau}$,  and $\gamma$ are all absolute constants.
By \eqref{eq:bound_eigen_final} and \eqref{eq:define_dstar}, 
there exist absolute constants $C_{1, m}$ and $C_{2,m}$ such that  
\#\label{eq:bound_dstar}
m^* \leq C_{1,m} \cdot \bigl[\log (R / \epsilon ) + C_{2,m} \bigr ] ^{1/ \gamma}. 
\#

Finally, it remains to approximate $\Pi_{m^*} (f)$ up to error $\epsilon /2$ for $m^*$ specified in \eqref{eq:define_dstar}. By the expansion of $f$ in \eqref{eq:exxpand_f}, we have $\Pi_{m^*} (f) = \sum_{j=1}^{m^*} w_j \cdot \sqrt{\sigma_j} \cdot \psi_j $. For any $m^*$ real numbers 
$\{ \tilde w_j\}_{j\in [m^*]}$, by the Cauchy-Schwarz inequality, we have 
 \# \label{eq:approx_pi_f}
&  \biggl | \bigl [  \Pi_{m^*}( f) \bigr ]  (z)  - \sum_{j=1}^{m^*} \tilde w_j \cdot \sqrt{ \sigma_j} \cdot \psi_j (z) \biggr  |  =  \biggl |    \sum_{j=1}^{m^*} ( w _j - \tilde w_j ) \cdot  \sqrt{ \sigma_j} \cdot \psi_j (z) \biggr  |   \notag \\
 & \qquad \leq \biggl [\sum _{j=1}^{m^*} ( w_j - \tilde w_j)^2   \bigg ]  ^{1/2} \cdot   \biggl \{ \sum _{j=1}^{m^*}  \sigma_j \cdot [\psi_j (z) ]^2  \bigg \} ^{1/2}  \leq  \sqrt{  K(z,z)}  \cdot  \biggl [\sum _{j=1}^{m^*} ( w_j - \tilde w_j)^2   \bigg ]  ^{1/2},
 \# 
where the last inequality follows from the fact that $K(z,z) = \sum_{j=1}^\infty \sigma_j \cdot [ \psi_j(z)]^2$. 
Under Assumption~\ref{assume:decay}, we have $\sup_{z\in \cZ} K(z, z) \leq 1$. 
Notice that $\sum_{j=1}^{m^*} \omega_j^2 \leq \| f \|_{\cH} ^2 \leq R^2$.
 Let $\cC_{m^*}(\epsilon / 2, R)$ be the minimal $\epsilon/2$-cover of $\{  w\in \RR^{m^*} \colon \| w \|_2 \leq R \} $ with respect to the Euclidean norm. 
By definition, for any 
$f \in \cH $ with $\|f\|_{\cH}\leq R$,
there exist 
$ \tilde w \in \cC_{m^*}(\epsilon /2 , R)$ such that 
$
 \sum _{j=1}^{m^*} ( w_j - \tilde w_j)^2 \leq \epsilon^2 / 4.
$
Therefore, by \eqref{eq:approx_pi_f}  we have 
\#\label{eq:cover_rkhs_exp11}
\bigg \| f - \sum_{j=1}^{m^*} \tilde w_j \cdot \sqrt{ \sigma_j} \cdot \psi_j  \bigg \| _{\infty} \leq  \| f - \Pi_{m^*} (f) \|_{\infty } + \bigg\|\Pi_{m^*} (f) - \sum_{j=1}^{m^*} \tilde w_j \cdot \sqrt{ \sigma_j} \cdot \psi_j    \bigg\| _{\infty}\leq \epsilon,
\#
which implies that the $\epsilon$-covering number of the RKHS norm ball $\{ f \in \cH \colon \| f\|_{\cH} \leq R\}$ is bounded by the cardinality of $\cC_{m^*}(\epsilon / 2 , R)$, i.e., 
$N_{\infty}(\epsilon, \cH, R) \leq | \cC_{m^*}(\epsilon / 2 , R)\bigr |$.
As shown in  
\cite[Corollary 4.2.13]{vershynin2018high}, we have 
\#\label{eq:bound_card}
\bigl| \cC_{m^*}(\epsilon/ 2 , R)\bigr | \leq ( 1+ 4R/ \epsilon) ^{m^*}.
\#
Therefore, combining \eqref{eq:bound_dstar} and \eqref{eq:bound_card}, 
we have 
\$
\log N_{\infty}(\epsilon, \cH, R) & \leq m^* \cdot \log (1 + 4R / \epsilon) \leq C_{1,m} \cdot \bigl[\log (R / \epsilon ) + C_{2,m} \bigr ] ^{1/ \gamma} \cdot  [ \log (1+ 4R/ \epsilon) ] \notag \\
& \leq C_3 \cdot \bigl [ \log (R/ \epsilon ) + C_4]^{1 + 1/ \gamma}, 
\$
where $C_3$ and $C_4$ are absolute constants that only depend on $C_{\Psi}$, $C_1$, $C_2$, $\gamma$, and $\tau$, which are specified in Assumption \ref{assume:decay}. Thus we conclude the proof for the second case.

\vspace{5pt} 

{\noindent \bf Case (iii): $\gamma$-Polynomial Decay.} Finally, we consider the last case where the eigenvalues $\{ \sigma_j \}_{j\geq 1}$  satisfy the $\gamma$-polynomial decay condition. 
The proof is similar to that of {\textbf{Case (ii)}}. 
Specifically, under Assumption \ref{assume:decay} and by \eqref{eq:proj_rkhs} we have 
\#\label{eq:cover_rkhs_poly1}
\| f - \Pi_m (f) \|_{\infty } &  \leq    \sum _{j =m+1}^\infty  | w_j | \cdot \| \psi_j \|_{\infty}  \leq    \sum _{j =m+1}^\infty  C_{\psi} \cdot  |w_j | \cdot \sigma_j ^{1/2 - \tau } \notag \\
&  \leq \sum _{j =m+1}^\infty  C_{\psi} \cdot C_1 ^{1/2 - \tau }  \cdot | w_j | \cdot    j^{- \gamma\cdot (1/2-\tau)},
\#
for any $m \geq 1$. 
To simplify the notation, we define $C_{1, \tau} = C_{\psi} \cdot C_1 ^{1/2 - \tau}$.  
Applying the Cauchy-Schwarz inequality to \eqref{eq:cover_rkhs_poly1} and using the fact that $\sum_{j\geq 1} | w_j|^2 \leq R^2$, we have 
\#\label{eq:cover_rkhs_poly2}
\| f - \Pi_m (f) \|_{\infty } &   \leq C_{1, \tau } \cdot \biggl (  \sum_{ j= m+1}^\infty   |w_j|^2 \bigg)^{1/2 }  \cdot \biggl ( \sum_{j=m+1}^\infty j^{-\gamma\cdot (1- 2\tau)} \bigg) ^{1/2}  \\
& \leq  C_{1, \tau } \cdot R \cdot \biggl ( \int_{m}^\infty   u ^{- \gamma \cdot (1- 2\tau ) }~\ud u    \bigg )^{1/2} = \frac{C_{1, \tau} \cdot R} { \sqrt{ \gamma \cdot (1- 2\tau ) - 1  } }   \cdot m^{  - [ \gamma \cdot ( 1- 2\tau ) -1 ] /2 }.\notag
\#
We define $m^*$ as the smallest integer such that the right-hand side of 
 \eqref{eq:cover_rkhs_poly2} is bounded by $\epsilon / 2$.
 Notice that $C_{1, \tau}$, $\gamma$, and $\tau$ are absolute constants. 
 Then, there exists an absolute constant $C_{m} > 0$ such that 
 \#\label{eq:cover_rkhs_poly3}
 m^* \leq C_{m } \cdot  (R / \epsilon) ^{2 /[ \gamma \cdot (1 - 2\tau ) -1 ]} . 
 \#
Furthermore, it remains to approximate $\Pi_{m^*} (f)$ up to error $\epsilon / 2$. 
Similar to the previous case, we let $\cC_{m^*}(\epsilon /2 , R)$ be the minimal $\epsilon/2$-cover of the $\{ w \in \RR^{m^*}\colon \| w \|_2 \leq R\}$ with respect to the Euclidean norm. 
Then by definition, for any $f \in \cH$ with $ \| f \|_{\cH} \leq R $, there exists $\tilde w \in \cC_{m^*} (\epsilon /2 , R)$ such that \eqref{eq:cover_rkhs_exp11} holds,
 which implies that 
\$
\log N_{\infty}(\epsilon, \cH, R)&  \leq \log | \cC_{m^*}(\epsilon / 2 , R)\bigr | \leq m^* \cdot \log ( 1+ 4 R / \epsilon )  \notag \\
& \leq C_{3} \cdot  (R / \epsilon) ^{2 /[ \gamma \cdot (1 - 2\tau ) -1 ]} \cdot \bigl [  \log (    R / \epsilon )  + C_4 \bigr ],
\$
where $C_3$ and $C_4$ are absolute constants. Here the second inequality follows from  Corollary 4.2.13 in  \cite{vershynin2018high} and the third inequality follows from \eqref{eq:cover_rkhs_poly3}.  Therefore, we conclude the proof of this lemma.   
	\end{proof}

\subsection{Proof of Lemma \ref{lemma:cover_bonus}} \label{proof:lemma_cover_bonus}

\begin{proof}
  As shown in \S\ref{sec:rkhs}, 
  the feature mapping 
  $\phi \colon \cZ \rightarrow \cH$ satisfies 
  \#\label{eq:expand_feature}
  \phi(z) = \sum_{j=1}^\infty \sigma_{j} \cdot \psi_j (z) \cdot \psi_j = \sum_{ j = 1}^\infty \sqrt{\sigma_j} \cdot \psi_j (z) \cdot ( \sqrt{\sigma_j } \cdot \psi_j ). 
  \#
  That is, when expanding $\phi(z) \in \cH$ in the basis $\{\sqrt{\sigma_j}\cdot \psi_j \}_{j\geq 0}$ as in \eqref{eq:exxpand_f}, the $j$-th coefficient is equal to 
  $\sqrt{\sigma_j} \cdot \psi_j(z)$ for all $j \geq 1$. 
  Similar to the proof of Lemma \ref{lemma:cover_rkhs},
  in the following, we consider the three eigenvalue decay conditions separately.

  \vspace{5pt} 
  
  {\noindent \bf Case (i): $\gamma$-Finite Spectrum.} 
  When $\cH$ has only $\gamma$ nonzero eigenvalues, 
  for any $z \in \cZ$, we define a vector  
  $w_z \in \RR^{\gamma}$ by letting its $j$-th entry be $\sqrt{ \sigma_j}\cdot \psi_j(z)$ for all $j \in [\gamma]$.
  Moreover, 
 for any self-adjoint operator $\Upsilon \colon \cH \rightarrow \cH$ satisfying $\| \Upsilon \|_{\oper} \leq 1 / \lambda$,
  we define a matrix $A_{\Upsilon} \in \RR^{\gamma \times \gamma} $ as follows. 
  For any $j, k \in [\gamma]$, we define the $(j, k )$-th entry of $A_{\Upsilon} $ as 
  \$
  [ A_{\Upsilon }]_{j, k } = \bigl \la \sqrt{ \sigma_j} \cdot \psi_j , \sqrt{\sigma_k}\cdot \Upsilon \psi_k \bigr \ra _{\cH }. 
  \$
  By \eqref{eq:expand_feature} and the  definition of $A_{\Upsilon}$, 
  we have 
  \#\label{eq:cover_bonus_linear1}
  \| \phi(z) \|_{\Upsilon }^2  = \sum_{j, k = 1}^{\gamma } \sqrt{ \sigma_j }\cdot \psi_j(z) \cdot \sqrt{ \sigma_k} \cdot \psi_k (z)  \cdot [ A_{\Upsilon }]  _{j,k} = w_z ^\top A_{\Upsilon } w_z. 
  \#
With a slight abuse of notation, we define  $\cC_{\gamma }(\epsilon, \lambda )$ denote the minimal $\epsilon^2  $-cover 
  of $$ \big \{ A \in \RR^{\gamma \times \gamma  } \colon \| A \|_{\fro } \leq \sqrt{\gamma  } / \lambda  \big \}$$ with respect to the   Frobenius norm. 
  Then by definition, there exists 
  $\tilde A_{\Upsilon} \in \cC_{\gamma }(\epsilon, \lambda)$ such that 
  $\| A_{\Upsilon} - \tilde A_{\Upsilon} \|_{\fro} \leq \epsilon^2 $,
  which implies that  
 \#\label{eq:cover_bonus_linear2}
  \bigl | w_z ^\top A_\Upsilon w_z - w_z ^\top \tilde A_{\Upsilon} w_z \bigr | \leq \| w_z \|_2^2 \cdot \| A _{\Upsilon} - \tilde A_{\Upsilon } \|_{\oper} \leq  \| A _{\Upsilon} - \tilde A_{\Upsilon } \|_{\fro} \leq \epsilon ^2  , 
\# 
where we use the fact that 
$$
\| w _z \|_2 ^2 = \sum _{j = 1}^\gamma |  w_j |^2 = \sum _{j = 1}^\gamma \sigma_j \cdot |  \psi_j (z) |^2 = K(z,z ) \leq 1. 
$$
Thus, combining  \eqref{eq:cover_bonus_linear1} and \eqref{eq:cover_bonus_linear2}, and utilizing  Corollary 4.2.13 in \cite{vershynin2018high}, we have 
\$
\log N_{\infty} (\epsilon, \cF, \lambda) \leq \log \bigl | \cC_{\gamma } (\epsilon, \lambda)\bigr | \leq \gamma ^2 \cdot \log \bigl [ 1 + 8 \sqrt{ \gamma } / (\lambda \cdot \epsilon^2 )\bigr ] \leq C_5 \cdot \gamma ^2 \cdot \bigl [  \log (1 / \epsilon) + C_6\bigr ],
\$
where $C_5$ and $C_6$ are absolute constants that depend solely on $\lambda$ and $\gamma$. 
Thus, we conclude the proof for the first case.

 \vspace{5pt}
 
 {\noindent \bf Case (ii): $\gamma$-Exponential Decay.}
 In the following, we focus on the second case where the eigenvalues satisfy the $\gamma$-exponential decay condition. 
 For any $m \in \NN$, 
  we define $\Pi_{m} \colon \cH \rightarrow \cH $ as the projection operator onto the subspace spanned by $\{\psi_j  \}_{j \in [m]}$.
  Then,  by the Cauchy-Schwarz inequality 
  and Assumption \ref{assume:decay},
  for any $z \in \cZ$,  
  by \eqref{eq:expand_feature} 
  we have 
 \#\label{eq:proj_bonus0}
 \big \| \phi(z)  - \Pi_m \bigl [  \phi(z)  \bigr ]  \big \|_{\cH }
 &  =   
  \bigg \| \sum _{j =m+1}^\infty     \sqrt{\sigma_j} \cdot \psi_j (z)     \cdot \sqrt{\sigma_j } \cdot  \psi_j \bigg\|_{\cH }  = \biggl \{   \sum _{j =m+1}^\infty  \sigma_j \cdot  [ \psi_j (z) ] ^2  \bigg \} ^{1/2} \notag \\
 & \leq   \biggl (   \sum _{j =m+1}^\infty  \sigma_j \cdot   \| \psi_j \|_{\infty}^2     \bigg )^{1/2}   \leq     C_{\psi } \cdot   \bigg ( \sum_{j = m+1} ^{\infty}\sigma_j ^{1 - 2\tau }\bigg)^{1/2} , 
 \# 
 where  the second equality follows from the fact that $\{ \sqrt{\sigma_j} \cdot \psi_j\}_{j \geq 0}$ form an  orthonormal basis of $\cH$, the first inequality follows from taking a supremum over $z \in \cZ$, and the last inequality follows from the assumption that $\| \psi_j \|_{\infty} \leq C_{\psi} \cdot \sigma_j^{-\tau}$. 
Then, for any self-adjoint operator $\Upsilon \colon \cH \rightarrow \cH$ satisfying $\| \Upsilon \|_{\oper} \leq 1/ \lambda $ and any $z \in \cZ$,  by \eqref{eq:proj_bonus0} and the triangle inequality  we have 
\#\label{eq:proj_bonus1}
 & \Bigl | \| \phi (z)  \|_{\Upsilon } - \bigl \| \Pi_{m} \big [ \phi (z) \big ]  \bigr \|_{\Upsilon} \Bigr |   \leq \bigl  \| \phi (z) -  \Pi_{m} \big [ \phi (z) \big ]  \big  \|_{\Upsilon }  \leq 
  C_{\psi } / \sqrt{\lambda}  \cdot   \bigg ( \sum_{j = m+1} ^{\infty}\sigma_j ^{1 - 2\tau }\bigg) ^{1/2} .
\#
Note that the eigenvalues $\{ \sigma_j \}_{j\geq 0}$ 
admit $\gamma$-exponential decay 
under 
 Assumption \ref{assume:decay}.
 We now upper bound the right-hand side of \eqref{eq:proj_bonus1} by 
\#\label{eq:proj_bonus4}
\sup_{z \in \cZ} \Bigl | \| \phi (z)  \|_{\Upsilon } - \bigl \| \Pi_{m} \big [ \phi (z) \big ]  \bigr \|_{\Upsilon} \Bigr |  \leq   C_{\psi}  / \sqrt{\lambda}   \cdot \biggl \{  \sum_{ j=m+1} ^\infty 
C_1 ^{1 - 2\tau } \cdot \exp\bigl [     - C_2 \cdot (1 - 2\tau)  \cdot j^\gamma \big] 
\bigg \} ^{1/2}. 
\#
To simplify the notation, we define 
$C_{3, \tau} =   C_{\psi}  \cdot C_1^{1/2 -\tau}  / \sqrt{\lambda}    $ and 
$C_{4 , \tau } = C_2 \cdot (1 - 2\tau)$,
which are both absolute constants. 
Then, by \eqref{eq:proj_bonus4} and the monotonicity of $\exp(-u^\gamma)$, we further obtain 
\#\label{eq:proj_bonus5}
\sup_{z\in \cZ} \Bigl | \| \phi (z)  \|_{\Upsilon } - \bigl \| \Pi_{m} \big [ \phi (z) \big ]  \bigr \|_{\Upsilon} \Bigr |  \leq C_{3, \tau} \cdot \bigg [ \int_{m}^\infty \exp( - C_{4, \tau} \cdot u^\gamma )~\ud u \biggr ]^{1/2}.
\#
Here we can take the supremum over $\cZ$ because the right-hand side of \eqref{eq:proj_bonus4} does not depend on $z$. 
Note that we have shown in \eqref{eq:bound_eigen_final} that 
\# \label{eq:apply_integral_bound}
\int_{m}^\infty \exp( - C_{4, \tau} \cdot u^\gamma )~\ud u  \leq 
\begin{cases}
  C_{4, \tau}^{-1}  \cdot \exp( - C_{4, \tau} \cdot m^{\gamma}), & \qquad  \text{if~} \gamma \geq  1,\\
  2 \cdot (\gamma \cdot C_{4, \tau})^{-1} \cdot  \exp( - C_{4, \tau}\cdot m^{\gamma}) \cdot m^{ 1/ \gamma  - 1 }, & \qquad \text{if~} \gamma \in (0,1),
 \end{cases}
 \#
 where for the case of $\gamma \in (0,1)$,  \eqref{eq:apply_integral_bound} holds for sufficient large $m$ such that  
 $m^{\gamma} \cdot C_{4, \tau} > 2/\gamma -2$. 

 We now define $m^*$ as the smallest integer such that 
\#\label{eq:define_mstar}
\int_{m^*}^\infty \exp( - C_{4, \tau} \cdot u^\gamma )~\ud u   \leq \bigl [ \epsilon / ( 2 C_{3, \tau} )  \bigr ] ^{2}.
\#
By \eqref{eq:apply_integral_bound}, since both $C_{3, \tau}$, $C_{4, \tau}$ and $\gamma$ are absolute constants, 
there exist absolute constants $C_{3, m}$ and $C_{4,m}$ such that 
\#\label{eq:upper_bound_trunc}
m^*  \leq C_{3,m} \cdot \bigl [  \log ( 1/ \epsilon ) + C_{4,m}\big] ^{1/ \gamma}. 
\#
It is worth noting that the choice of $m^*$ in \eqref{eq:upper_bound_trunc} is uniform over all  $z \in \cZ$. 
Moreover, by \eqref{eq:proj_bonus5}, for such an $m^*$, it holds that   
\#\label{eq:proj_bonus6}
\sup_{z \in \cZ }\Bigl | \| \phi (z)  \|_{\Upsilon } - \bigl \| \Pi_{m^*} \big [ \phi (z) \big ]  \bigr \|_{\Upsilon} \Bigr | \leq \epsilon /2. 
\#

Thus, it remains to approximate $ \| \Pi_{m^*}  [ \phi (z)  ] \|_{\Upsilon}$ up to accuracy $\epsilon / 2$. 
Note that the subspace spanned by 
$\{ \psi_{j}\}_{j \in [m^*]}$ 
is $m^*$-dimensional. 
When restricted to such a subspace, $\Upsilon$ 
can be expressed using a matrix $A_{\Upsilon}  \in \RR^{ m^*\times m^*}$.   
Specifically,   for any $j ,k \in [m^*]$, we define the 
$(j, k)$-th entry of $A_{\Upsilon} $~as 
\#\label{eq:define_AUp}
[ A_{\Upsilon}]_{j,k} = \bigl \la \sqrt{ \sigma_j} \cdot \psi_j , \sqrt{\sigma_k}\cdot \Upsilon \psi_k \bigr \ra _{\cH }. 
\#
Moreover, let $w_z \in \RR^{m^*}$ be a vector whose $j$-th entry is given by 
$\sqrt{\sigma_j} \cdot \psi_j(z)$, $\forall j \in [m^*]$. 
Then, by \eqref{eq:define_AUp} it holds that 
\#\label{eq:vec_prod}
\bigl \| \Pi_{m^*}  \bigl [ \phi (z)  \bigr ] \bigr \|_{\Upsilon}^2  = \big \la  \Pi_{m^*}  \bigl [ \phi (z)  \bigr ], \Upsilon  \Pi_{m^*}  \bigl [ \phi (z)  \bigr ] \bigr \ra_{\cH}
 = w_z ^\top A_{\Upsilon} w_z. 
\#
Also, since $\| \Upsilon \|_{\oper} \leq 1/ \lambda$, the matrix operator norm of $A_{\Upsilon}$ is bounded by $1 / \lambda$; i.e.,  $\| A_{\Upsilon} \|_{\oper} \leq 1 / \lambda$. This means that the Frobenius norm of $A_{\Upsilon}$ is bounded by $\sqrt{m^*} / \lambda $. 
Let $\cC_{m^*}(\epsilon/ 2, \lambda )$ denote the minimal $\epsilon^2 /4$-cover 
of $\{ A \in \RR^{m^* \times m^* } \colon \| A \|_{\fro } \leq \sqrt{m^* } / \lambda  \}$ with respect to the   Frobenius norm. 
By definition, there exists $\tilde A_{\Upsilon} \in \cC_{m^*}(\epsilon /2 , \lambda)$ such that 
$\| A_{\Upsilon} - \tilde A_{\Upsilon} \|_{\fro} \leq \epsilon^2 / 4$. 
Hence, we have 
\#\label{eq:proj_bonus7}
\bigl | w_z ^\top A_\Upsilon w_z - w_z ^\top \tilde A_{\Upsilon} w_z \bigr | \leq \| w_z \|_2^2 \cdot \| A _{\Upsilon} - \tilde A_{\Upsilon } \|_{\oper} \leq  \| A _{\Upsilon} - \tilde A_{\Upsilon } \|_{\fro} \leq \epsilon ^2 /4.  
\#
Finally, for any $z \in \cZ$, 
we define 
\#\label{eq:proj_bonus_final}
 f_{\Upsilon} (z) = w_z^\top \tilde  A_{\Upsilon} w_z = \sum_{j,k=1}^{m^*} \sqrt{ \sigma_j \cdot \sigma_k} \cdot \psi_j(z)\cdot \psi_k(z) \cdot \bigl [\tilde A_{\Upsilon}\bigr ]_{jk} ,
\#
where $[\tilde A_{\Upsilon} ]_{jk} $ 
is the $(j,k)$-th entry of $\tilde A_{\Upsilon}$ and $m^*$ is specified in \eqref{eq:define_mstar}. 
We remark that $ f_{\Upsilon} \colon \cZ \rightarrow \RR$ is well defined 
since $m^*$ does not depend on $z$.

Finally, 
combining \eqref{eq:proj_bonus6}, \eqref{eq:vec_prod}, 
\eqref{eq:proj_bonus7}, 
and \eqref{eq:proj_bonus_final}, we obtain 
\$
 \bigl \| \| \phi(z) \|_{\Upsilon} - f_{\Upsilon} \bigr \|_{\infty} 
 & = \sup_{z \in \cZ} \bigl | \| \phi(z) \|_{\Upsilon} -  f_{\Upsilon} (z) \bigr |  \notag \\
&   \leq \sup_{z \in \cZ} \Bigl | \| \phi (z)  \|_{\Upsilon } - \bigl \| \Pi_{m^*} \big [ \phi (z) \big ]  \bigr \|_{\Upsilon} \Bigr |  + 
\sup_{z \in \cZ}  \Bigl |  \bigl \| \Pi_{m^*} \big [ \phi (z) \big ]  \bigr \|_{\Upsilon} - f_{\Upsilon } (z)  \Bigr | \notag \\
& \leq \epsilon /2 + \sup_{z \in \cZ } \Big | \sqrt{ w_z^\top A_{\Upsilon} w_z  }
 - \sqrt{w_z ^\top \tilde A_{\Upsilon } w_z } \Bigr | 
 \leq \epsilon /2 + \sup_{z \in \cZ} \sqrt{ \bigl |  w_z^\top A_{\Upsilon} w_z  - w_z ^\top \tilde A_{\Upsilon } w_z \bigr| }  \leq \epsilon.
 \$
This implies that $\{ f_{\Upsilon} \colon  \Upsilon \in \cC_{m^*} (\epsilon, \lambda)\}$ forms an $\epsilon$-cover of $\cF(\lambda)$ in \eqref{eq:define_Flambda}.
Hence, we have that 
\#\label{eq:cover_bonus_final1}
N_{\infty} (\epsilon, \cF, \lambda) \leq \bigl | \cC_{m^*} (\epsilon/ 2 , \lambda)\bigr |. 
\#
Furthermore, using 
Corollary 4.2.13 in \cite{vershynin2018high},
we have 
\#\label{eq:bound_card2}
\bigl| \cC_{m^*}(\epsilon /2 , \lambda) \bigr | \leq \bigl [  1+ 8 \sqrt{m^*} / (\lambda  \cdot \epsilon^2 ) \bigl ]  ^{{m^*}^2} .
\#
Combining \eqref{eq:upper_bound_trunc},
\eqref{eq:cover_bonus_final1}, and  \eqref{eq:bound_card2}, 
we finally have 
\$
\log N_{\infty} (\epsilon, \cF, \lambda)  & \leq {m^*}^2 \cdot \log  \bigl [  1+ 8 \sqrt{m^*} / (\lambda \cdot  \epsilon^2 ) \bigl ]  \\  
& \leq C_{3,m}^2 \cdot \bigl [ \log (1/ \epsilon) + C_{4,m} \bigr ]^{2/\gamma} \cdot \log   \Big\{   1+ 8 C_{3,m}^{1/2}  \cdot \bigl [ \log (1/ \epsilon) + C_{4,m} \bigr ]^{1/(2\gamma)}  / (\lambda \cdot  \epsilon^2 )   \Bigr \}  \\
& \leq C_5 \cdot  [ \log (1/\epsilon) + C_6]^{1+ 2/ \gamma},  
\$
where $C_5$ and $C_6$ are absolute constants that depend on $C_\psi$, $C_1$, $C_2$, $\tau$, $\gamma$,  and $\lambda$, but are independent of $T$, $H$,  and $\epsilon$. 
Here in the last inequality we use the fact that $\log (1/ \epsilon) \leq 1/ \epsilon$, which holds when $\epsilon \leq 1/ e$. 
Therefore, we conclude the proof for the second case.

\vspace{5pt}
{\noindent \bf Case (iii): $\gamma$-Polynomial Decay.} 
Finally, we consider the last case where the eigenvalues satisfy the $\gamma$-polynomial eigenvalue condition. Our proof is similar to that for the second case.
Specifically,  for any $m \geq 1$, by the assumption that $\sigma_j \leq C_1 \cdot j^{-\gamma}$ for all $j \geq 1$, we have 
\#\label{eq:cover_bonus_poly1}
& \sup_{z \in \cZ} \Bigl |    
\| \phi(z) \| _{\Upsilon } - \bigl \| \Pi_{m} \big [ \phi (z) \big ]  \bigr \|_{\Upsilon} 
\Bigr |  \notag \\
& \qquad  \leq  C_{\psi}/ \sqrt{\lambda} \cdot   \bigg (   \sum_{j = m+1} ^{\infty}\sigma_j ^{1 - 2\tau }\bigg) ^{1/2}   \leq C_{\psi} \cdot C_{1}^{1/2 - \tau } / \sqrt{\lambda}\cdot \biggl ( \sum_{j =m+1} ^\infty j^{ - \gamma \cdot ( 1 - 2\tau  )}\bigg)^{1/2} \notag \\
& \qquad  \leq C_{\psi}   \cdot C_{1}^{1/2 - \tau }/ \sqrt{\lambda}  \cdot \biggl ( \int_{m}^\infty   u ^{- \gamma \cdot (1- 2\tau ) }~\ud u     \biggr )  ^{1/2} = \frac{C_{\psi }  \cdot C_{1}^{1/2 - \tau } / \sqrt{\lambda} } { \sqrt{ \gamma \cdot (1- 2\tau ) - 1  } }   \cdot m^{ -  [ \gamma \cdot ( 1- 2\tau ) -1 ] /2 }. 
\#
Notice that $C_{\psi}$, $C_1$, $\gamma$, and $\tau$ are all absolute constants. 
Similar to the previous case, we let $m^*$ be the smallest integer $m$ such that the  
\$
\sup_{z \in \cZ} \Bigl |    
\| \phi(z) \| _{\Upsilon } - \bigl \| \Pi_{m} \big [ \phi (z) \big ]  \bigr \|_{\Upsilon} 
\Bigr | \leq \epsilon /2 .
\$
 Then by \eqref{eq:cover_bonus_poly1}, there exists an absolute constant $C_{ 3, m}  > 0$ such that 
 \# \label{eq:cover_bonus_poly3}
 m ^* \leq C_{3, m} \cdot (1/ \epsilon )^{2/ [ \gamma \cdot (1 - 2\tau )  - 1 ]} .
 \#
 Recall that we let $C_{m^*} (\epsilon /2 , \lambda)$ be the minimal $\epsilon^2 / 4$-cover of   
  $\{ A \in \RR^{m^* \times m^* } \colon \| A \|_{\fro } \leq \sqrt{m^* } / \lambda  \}$ with respect to the   Frobenius norm. 
  As shown in \eqref{eq:define_AUp} -- \eqref{eq:bound_card2}, we similarly have 
  \$
  \log N_{\infty} (\epsilon, \cF, \lambda ) & \leq \log \bigl | \cC_{m^*} (\epsilon/ 2 , \lambda)\bigr | \leq {m^*}^2 \cdot \log \bigl [ 1 + 8 \sqrt{m^*} / (\lambda \cdot \epsilon^2 )    \bigr ] \notag \\
  & \leq C_{3, m} ^2  \cdot (1/ \epsilon )^{4/ [ \gamma \cdot (1 - 2\tau )  - 1 ]} \cdot \log \bigl [    1 + 8 \sqrt{m^*} / (\lambda \cdot \epsilon^2  \bigr ] \notag \\
  & \leq C_5 \cdot (1/ \epsilon )^{4/ [ \gamma \cdot (1 - 2\tau )  - 1 ]} \cdot \bigl [ \log (1 /\epsilon ) + C_6\big ],
  \$
  where $C_5$ and $C_6$ are absolute constants that only depend on $C_1$, $C_{\psi}$, $C_1$, $\tau$, $\gamma$, and $\lambda$. Here in the last inequality we utilize \eqref{eq:cover_bonus_poly3}, which implies that 
 $$
 \sqrt{ m ^*} / (\lambda \cdot \epsilon^2 ) \leq \sqrt{C_{3, m} } / \lambda \cdot (1/ \epsilon )^{   \frac{2 \gamma \cdot (1 - 2\tau ) - 1} {\gamma \cdot (1 - 2\tau )  - 1 }}.
 $$
 Thus, we conclude the proof of the last case and therefore conclude the proof of the lemma. 
\end{proof}

\subsection{Technical Lemmas} 

Next, we present a few concentration inequalities. The first one provides 
concentration for standard self-normalized processes.

\begin{lemma} [Concentration of Self-Normalized Processes in RKHS \citep{chowdhury2017kernelized}] \label{thm:self_norm}
Let 
$\cH$ be an RKHS defined over $\cX\subseteq \RR^d$ with kernel function $ K(\cdot, \cdot) \colon \cX\times \cX \rightarrow \RR$. 
 Let $\{  x_\tau \}_{\tau=1}^{\infty} \subseteq \cX $ be a  discrete time stochastic process that is adapted to the filtration $\lbrace \cF_t \rbrace_{t=0}^{\infty}$.
 That is, $x_\tau $ is $\cF_{\tau-1}$ measurable for all $\tau \geq 1$. 
  Let $\lbrace\epsilon_t\rbrace_{\tau =1}^{\infty}$ be a real-valued stochastic process such that (i) 
  $\epsilon_\tau \in \cF_\tau$ and (ii) $\epsilon_\tau $ is   zero-mean and $\sigma$-sub-Gaussian    conditioning on $\cF_{\tau-1}$: 
  \$
  \EE [\epsilon_\tau | \cF_{\tau -1}] = 0 ,  
 \qquad \EE [e^{\lambda \epsilon_\tau  } |\cF_{\tau -1}] \le e^{\lambda^2 \sigma^2/2}, \qquad \forall \lambda \in \RR. 
  \$
  Moreover, for any $t \geq 2$, let $E _{t} = (\epsilon_1, \ldots, \epsilon_{t-1}) ^\top \in \RR^{t-1}$ and  
   $K_t \in \RR^{(t-1) \times (t-1)}$ be the Gram matrix of $\{ x_{\tau} \} _{\tau \in [t-1]} $. 
   Then, for any $\eta > 0$  and  any $ \delta \in (0,1)$, with probability at least $1- \delta$,   simultaneously for all $t\geq 1$, we have 
   \#\label{eq:self_norm_ineq}
 E_t^\top \bigl [ (K_t+ \eta \cdot  I)^{-1} + I \bigr ] ^{-1}E_t  \le   \sigma ^2 \cdot \logdet \bigl [ (1+\eta) \cdot I+K_t \big] +  2 \sigma^2 \cdot \log( 1 /   \delta )   .
   \#  
Moreover, if $K_t$ is positive definite for all $t\geq 2$  with probability one, then the inequality in \eqref{eq:self_norm_ineq} also  holds with $\eta = 0$. 
\end{lemma}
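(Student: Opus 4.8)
The plan is to prove this by the \emph{method of mixtures} (pseudo-maximization of a Laplace-averaged supermartingale), lifting the linear-bandit argument of Abbasi-Yadkori--P\'al--Szepesv\'ari to the RKHS setting. First I would pass to the feature representation: let $\phi\colon\cX\to\cH$ be the canonical feature map with $K(x,y)=\langle\phi(x),\phi(y)\rangle_{\cH}$, and define $\Phi_t\colon\cH\to\RR^{t-1}$ by $(\Phi_t f)_\tau=\langle f,\phi(x_\tau)\rangle_{\cH}$, so that $A_t\defeq\Phi_t^\ast\Phi_t=\sum_{\tau<t}\phi(x_\tau)\phi(x_\tau)^\top$ is a finite-rank self-adjoint PSD operator on $\cH$, $K_t=\Phi_t\Phi_t^\ast$ is the Gram matrix, and $S_t\defeq\Phi_t^\ast E_t=\sum_{\tau<t}\epsilon_\tau\phi(x_\tau)$. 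The quantities in the statement are then purely linear-algebraic shadows of $(S_t,A_t)$: the push-through identity $\Phi_t(\Phi_t^\ast\Phi_t+\eta I)^{-1}=(\Phi_t\Phi_t^\ast+\eta I)^{-1}\Phi_t$ rewrites quadratic forms $\langle S_t,(\eta I+A_t)^{-1}S_t\rangle_{\cH}$ in terms of $E_t$ and $K_t$, and the Weinstein--Aronszajn (Sylvester) identity $\det(I+\eta^{-1}\Phi_t^\ast\Phi_t)=\det(I+\eta^{-1}\Phi_t\Phi_t^\ast)$ turns the infinite-dimensional determinant into a determinant of a $(t-1)\times(t-1)$ matrix built from $K_t$.

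Next I would construct the supermartingale. For fixed $f\in\cH$ set $M_t^f=\exp\!\big(\tfrac1\sigma\langle f,S_t\rangle_{\cH}-\tfrac12\langle f,A_t f\rangle_{\cH}\big)$. Since $\phi(x_\tau)$ is $\cF_{\tau-1}$-measurable and $\epsilon_\tau$ is conditionally zero-mean $\sigma$-sub-Gaussian, the one-step conditional expectation of the increment is bounded by $1$, so $(M_t^f)$ is a nonnegative supermartingale with $M_1^f=1$, hence $\EE[M_t^f]\le 1$. I would then average over $f$ against a centered Gaussian reference measure $\nu$ on $\cH$ with covariance proportional to the identity (the proportionality constant is exactly the free parameter that yields $\eta$ in the conclusion), defining $\overline M_t=\int_{\cH}M_t^f\,d\nu(f)$; by Tonelli $\overline M_t$ is again a nonnegative supermartingale with $\EE[\overline M_1]\le1$. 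Carrying out the Gaussian integral (a routine Laplace computation, finite-dimensional on the range of $A_t$) gives $\overline M_t$ equal to a determinant ratio to the power $-\tfrac12$ times $\exp\!\big(\tfrac1{2\sigma^2}\,E_t^\top[(K_t+\eta I)^{-1}+I]^{-1}E_t\big)$, where by Step~1 the determinant ratio is $\det\big((1+\eta)I+K_t\big)$.

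Finally I would apply Ville's maximal inequality for nonnegative supermartingales: $\PP\big(\exists t\ge1:\ \overline M_t\ge 1/\delta\big)\le\delta\,\EE[\overline M_1]\le\delta$. On the complementary event, taking logarithms of $\overline M_t<1/\delta$ and rearranging gives, simultaneously for all $t\ge1$, the stated bound $E_t^\top[(K_t+\eta I)^{-1}+I]^{-1}E_t\le\sigma^2\logdet((1+\eta)I+K_t)+2\sigma^2\log(1/\delta)$. For the last sentence, when $K_t$ is a.s.\ positive definite one recovers the $\eta=0$ case by letting $\eta\downarrow0$ in the inequality just proved and invoking continuity of the (finitely many) nonzero eigenvalue expressions, the infinite-rank directions contributing nothing since $\operatorname{rank}A_t\le t-1$.

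The main obstacle is making Step~3 rigorous: a genuine $N(0,cI)$ does not exist on an infinite-dimensional Hilbert space, and the interchange of integration with the supermartingale property, together with the determinant/quadratic-form reduction, needs justification. The clean way around this is to observe that $M_t^f$ depends on $f$ only through its orthogonal projection onto the finite-dimensional span of $\{\phi(x_\tau)\}_{\tau<t}$, so the mixture may be taken over that finite-dimensional subspace (equivalently, one can work with the Gram matrix $K_t$ from the outset and never leave finite dimensions); after this reduction everything collapses to the Abbasi-Yadkori-type computation combined with the determinant identities of Step~1, and the constants match because the prior covariance was tuned to $\eta$.
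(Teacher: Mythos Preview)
The paper does not give a proof of this lemma; it simply refers the reader to Theorem~1 of Chowdhury and Gopalan (2017). Your method-of-mixtures plan is the right framework and is essentially what that reference does, but there is one concrete gap in your Step~3. With a prior $\nu=N(0,cI_{\cH})$ on $\cH$, the Gaussian integral yields
\[
\overline{M}_t=\det(I+cK_t)^{-1/2}\exp\Bigl(\tfrac{1}{2\sigma^2}\,E_t^\top K_t(K_t+c^{-1}I)^{-1}E_t\Bigr),
\]
i.e.\ the Abbasi-Yadkori quadratic form $E_t^\top K_t(K_t+\lambda I)^{-1}E_t$, not the form $E_t^\top\bigl[(K_t+\eta I)^{-1}+I\bigr]^{-1}E_t=E_t^\top(K_t+\eta I)\bigl((1+\eta)I+K_t\bigr)^{-1}E_t$ appearing in the statement. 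No choice of $c$ reconciles these two matrix functions, and neither bound implies the other, so ``tuning the prior covariance to $\eta$'' alone does not produce the claimed exponent or determinant.

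The missing idea is to augment each feature by a fresh orthonormal direction: set $\tilde{\phi}_\tau=(\phi(x_\tau),\sqrt{\eta}\,e_\tau)\in\cH\oplus\ell^2$, so that the augmented Gram matrix is $\tilde{K}_t=K_t+\eta I$. Running your mixture argument with prior $N(0,I)$ on (a finite-dimensional subspace of) the augmented space then gives exactly $E_t^\top\tilde{K}_t(\tilde{K}_t+I)^{-1}E_t=E_t^\top\bigl[(K_t+\eta I)^{-1}+I\bigr]^{-1}E_t$ on the left and $\logdet(I+\tilde{K}_t)=\logdet\bigl((1+\eta)I+K_t\bigr)$ on the right, matching the statement. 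With this correction the rest of your plan (Ville's inequality for the mixed supermartingale, the finite-rank reduction to justify the Gaussian mixture, and the $\eta\downarrow 0$ limit when $K_t$ is a.s.\ positive definite) goes through as written.
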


\begin{proof} 
See Theorem 1 in \cite{chowdhury2017kernelized} for a detailed proof. 
\end{proof} 

\begin{lemma} [Lemma D.4 of \cite{jin2018q}]  \label{lem:self_norm_covering}
Let $\{x_\tau\}_{\tau = 1}^\infty   $ and $\{\phi_\tau\}_{\tau = 1}^\infty$ be $\cS$-valued and $\cH$-valued stochastic processes  adapted to filtration $\{\cF_\tau\}_{\tau = 0}^\infty$, respectively,  where we assume that $\| \phi_{\tau} \|_{\cH} \leq 1$ for all $\tau \geq 1$. 
Moreover, for any $t\geq 1$, we  let $K_t \in \RR^{t\times t} $ be the Gram matrix of $\{\phi_\tau \}_{\tau \in [t]}$ and   define 
an operator $\Lambda_t \colon \cH \rightarrow \cH$ as  $ \Lambda_t =  \lambda \cdot  I_{\cH} + \sum_{\tau=1}^t \phi_\tau \phi_\tau^\top$ with $\lambda > 1$.
Let $\cV \subseteq \{ V \colon \cS\rightarrow [0, H]\}$ be a class of bounded functions on $\cS$. 
Then for  any $\delta \in (0, 1)$, with probability at least 
$1 -\delta$, 
we have simultaneously for all $t\geq 1$ that 
 \#\label{eq:unif_cov_upper}
& \sup_{V\in \cV}  \biggl \| \sum_{\tau = 1}^t \phi_\tau  \bigl \{ V(x_\tau) - \EE [V(x_\tau)|\cF_{\tau-1}] \bigr \} \biggr \| ^2_{\Lambda_t^{-1}}  \\
 & \qquad \leq 2H^2 \cdot  \logdet  ( I+ K_t / \lambda)  + 2H^2 t (\lambda -1)  + 4 H^2  \log ( \cN_{\epsilon} / \delta) + 8  t^2  \epsilon^2 / \lambda    \notag,
\#
where $\mathcal{N}_{\epsilon}$ is the $\epsilon$-covering  number of $\mathcal{V}$ with respect to the distance $\mathrm{dist}(\cdot, \cdot ) $.
\end{lemma}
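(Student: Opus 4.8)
The plan is to prove Lemma~\ref{lem:self_norm_covering} by combining the self-normalized concentration inequality of Lemma~\ref{thm:self_norm} with a union bound over an $\epsilon$-net of the function class $\cV$, and then controlling the discretization error incurred by the net. First I would fix a minimal $\epsilon$-cover $\cC(\cV,\epsilon)$ of $\cV$ with respect to the distance $\mathrm{dist}(V,V') = \sup_{x\in\cS}|V(x)-V'(x)|$, whose cardinality is $\cN_\epsilon$ by definition. For each fixed $V\in\cC(\cV,\epsilon)$, the sequence $\epsilon_\tau(V) = V(x_\tau) - \EE[V(x_\tau)\mid\cF_{\tau-1}]$ is, conditionally on $\cF_{\tau-1}$, zero-mean and bounded in $[-H,H]$, hence $H$-sub-Gaussian (by Hoeffding's lemma, a bounded centered random variable in an interval of length $2H$ is $H$-sub-Gaussian). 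Applying Lemma~\ref{thm:self_norm} with $\sigma = H$ and $\eta = \lambda - 1 > 0$, and noting the operator identity $\Lambda_t = (\lambda-1)\cdot I_\cH + I_\cH + \sum_\tau \phi_\tau\phi_\tau^\top$ so that $(K_t + (\lambda-1)I)^{-1} + I$ corresponds to $\Lambda_t$ up to the standard kernel/feature-space duality used in the proof of Lemma~\ref{lemma:ucb_kernel1} (equation~\eqref{eq:dim_change}), I would obtain that with probability at least $1 - \delta/\cN_\epsilon$, simultaneously for all $t\geq 1$,
\[
\Bigl\| \sum_{\tau=1}^t \phi_\tau\,\epsilon_\tau(V) \Bigr\|_{\Lambda_t^{-1}}^2 \leq H^2\cdot\logdet\bigl[(1+(\lambda-1))I + K_t\bigr] + 2H^2\log(\cN_\epsilon/\delta).
\]
Taking a union bound over the $\cN_\epsilon$ elements of $\cC(\cV,\epsilon)$ gives this bound uniformly over the net with probability at least $1-\delta$; I would then absorb $\logdet(\lambda I + K_t)$ into $\logdet(I + K_t/\lambda) + t\log\lambda$ and use $\log\lambda \leq \lambda - 1$ to match the first two terms on the right-hand side of \eqref{eq:unif_cov_upper} up to the factor of $2$ (the slack being deliberate to also accommodate the discretization term below).

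The second ingredient is to transfer the bound from the net to all of $\cV$. Given an arbitrary $V\in\cV$, pick $V'\in\cC(\cV,\epsilon)$ with $\mathrm{dist}(V,V')\leq\epsilon$. Writing $\Delta_\tau = [V(x_\tau) - \EE[V(x_\tau)\mid\cF_{\tau-1}]] - [V'(x_\tau) - \EE[V'(x_\tau)\mid\cF_{\tau-1}]]$, one has $|\Delta_\tau|\leq 2\epsilon$ deterministically. Then by the triangle inequality for the $\Lambda_t^{-1}$-norm and $(a+b)^2\leq 2a^2+2b^2$,
\[
\Bigl\| \sum_{\tau=1}^t \phi_\tau\bigl\{V(x_\tau)-\EE[V(x_\tau)\mid\cF_{\tau-1}]\bigr\}\Bigr\|_{\Lambda_t^{-1}}^2 \leq 2\Bigl\|\sum_{\tau=1}^t\phi_\tau\,\epsilon_\tau(V')\Bigr\|_{\Lambda_t^{-1}}^2 + 2\Bigl\|\sum_{\tau=1}^t\phi_\tau\Delta_\tau\Bigr\|_{\Lambda_t^{-1}}^2.
\]
For the second term, since $\|\phi_\tau\|_\cH\leq 1$ and the eigenvalues of $\Lambda_t^{-1}$ are at most $1/\lambda$, I would bound $\|\sum_\tau\phi_\tau\Delta_\tau\|_{\Lambda_t^{-1}}^2 \leq \lambda^{-1}\bigl(\sum_\tau|\Delta_\tau|\,\|\phi_\tau\|\bigr)^2 \leq \lambda^{-1}(2\epsilon t)^2 = 4t^2\epsilon^2/\lambda$ (a slightly wasteful but clean estimate via Cauchy–Schwarz or just the triangle inequality in the norm), yielding the $8t^2\epsilon^2/\lambda$ term in \eqref{eq:unif_cov_upper}. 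Collecting the factor-$2$ versions of the net bound gives the $2H^2\logdet(I+K_t/\lambda)$, the $2H^2t(\lambda-1)$, and the $4H^2\log(\cN_\epsilon/\delta)$ terms, completing the argument.

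The step I expect to be the main obstacle — or at least the most delicate bookkeeping — is the passage between the kernel/Gram-matrix formulation of Lemma~\ref{thm:self_norm} (stated in terms of $K_t$ and $(K_t+\eta I)^{-1}+I$) and the feature-space operator $\Lambda_t = \lambda I_\cH + \sum_\tau\phi_\tau\phi_\tau^\top$ appearing in the statement to be proved: one must carefully verify, using the duality identity $(\Lambda_t)^{-1}\Phi_t^\top = \Phi_t^\top(K_t+\lambda I)^{-1}$ as in \eqref{eq:dim_change}, that $\|\sum_\tau\phi_\tau\epsilon_\tau\|_{\Lambda_t^{-1}}^2$ equals (or is bounded by) the quadratic form $E_t^\top[(K_t+\eta I)^{-1}+I]^{-1}E_t$ with the right choice of $\eta$ relative to $\lambda$, and that the resulting $\logdet$ over the $(t\times t)$ Gram matrix is the one quoted. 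A secondary subtlety is ensuring all bounds hold simultaneously for all $t\geq 1$, which is already guaranteed by the "for all $t$" form of Lemma~\ref{thm:self_norm} and survives the finite union bound over the net, so no additional stopping-time argument is needed.
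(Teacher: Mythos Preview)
Your proposal is correct and follows essentially the same route as the paper's proof: decompose via $(a+b)^2\le 2a^2+2b^2$ into a net term and a discretization term, bound the discretization term by $4t^2\epsilon^2/\lambda$ using $\|\phi_\tau\|_\cH\le 1$ and $\Lambda_t^{-1}\preceq\lambda^{-1}I_\cH$, and for the net term apply Lemma~\ref{thm:self_norm} with $\eta=\lambda-1$ after rewriting $\|\Phi_t^\top\varepsilon_t\|_{\Lambda_t^{-1}}^2=\varepsilon_t^\top K_t(K_t+\lambda I)^{-1}\varepsilon_t\le\varepsilon_t^\top[(K_t+\eta I)^{-1}+I]^{-1}\varepsilon_t$ via the duality identity~\eqref{eq:dim_change}. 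The bookkeeping you flagged as delicate is exactly what the paper works through, and your choice of $\eta$, the $\log\lambda\le\lambda-1$ step, and the factor-$2$ accounting all match.
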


\begin{proof}
	Let $\cV_{\epsilon} \subseteq \{ V\colon \cS\rightarrow [0, H] \}$ be the minimal $\epsilon$-cover of $\cV$ such that $N_{\epsilon} = | \cV_{\epsilon}| $. 
Then for any 
 $V \in \mathcal{V}$,  there  exists a value function $V' \colon \cS \rightarrow \RR$ in  $\cN_{\epsilon}$ such that $\mathrm{dist}(V,  V') \leq \epsilon$. Let $\Delta_V = V - V'$.  By the inequality $(a+b)^2 \leq 2a^2 + 2b^2$, we have 
\#\label{eq:unif1}
&\bigg\| \sum_{\tau = 1}^t \phi_\tau \bigl \{ V(x_\tau) - \EE[V(x_\tau)|\cF_{\tau-1}]\bigr \}  \bigg \|^2_{\Lambda_t^{-1}}\\
&\qquad \le  2  \cdot   \bigg\| \sum_{\tau = 1}^t \phi_\tau \big\{ V' (x_\tau) - \EE[V' (x_\tau)|\cF_{\tau-1}] \bigr \} \bigg\|^2_{\Lambda_t^{-1}}
+ 2 \cdot \bigg\| \sum_{\tau = 1}^t \phi_\tau \bigl \{ \Delta_V(x_\tau) - \EE[\Delta_V(x_\tau)|\cF_{\tau-1}]\bigr \}  \bigg\|^2_{\Lambda_t^{-1}}.  \notag 
\#
To bound the first term on the right-hand side of \eqref{eq:unif1}, 
we apply Lemma  \ref{thm:self_norm} to $V'$ and take a union bound over  $V' \in \cV_{\epsilon}$.  
While for the second term, since $\sup_{x\in \cS} |  \Delta_V (x) | \leq \epsilon$, we have 
\#\label{eq:unif2}
\bigg\| \sum_{\tau = 1}^t \phi_\tau \bigl \{ \Delta_V(x_\tau) - \EE[\Delta_V(x_\tau)|\cF_{\tau-1}]\bigr \}  \bigg\|^2_{\Lambda_t^{-1}} \leq t^2 \cdot (2\epsilon)^2 / \lambda = 4 t^2  \epsilon^2   / \lambda.
\#
Thus, combining \eqref{eq:unif1} and \eqref{eq:unif2}, we have 
\#\label{eq:unif3}
& \sup_{V \in \cV} \bigg\| \sum_{\tau = 1}^t \phi_\tau \bigl \{ V(x_\tau) - \EE[V(x_\tau)|\cF_{\tau-1}]\bigr \}  \bigg \|^2_{\Lambda_t^{-1}} \notag \\
& \qquad \leq \sup_{V' \in \cV_{\epsilon} }  2  \cdot  \bigg\| \sum_{\tau = 1}^t \phi_\tau \big\{ V' (x_\tau) - \EE[V' (x_\tau)|\cF_{\tau-1}] \bigr \} \bigg\|^2_{\Lambda_t^{-1}} + 8  t^2  \epsilon^2 / \lambda. 
\#

Now we fix $V' \in \cV_{\epsilon}$ and define  $  \varepsilon_t  \in \RR^t$ by letting $[ \varepsilon_t ]_{\tau} = V' (x_\tau) - \EE[V' (x_\tau)|\cF_{\tau-1}] $ for any $\tau \geq 1$. We define an operator $\Phi \colon \cH\rightarrow \RR^t$ as  $\Phi = \big [ \phi_1^\top , \ldots, \phi_t ^\top   \bigr ] ^\top$ and let $K_t = \Phi_t \Phi_t^\top \in \RR^{t\times t}$. 
Using this notation, we have $\Lambda_t = \lambda \cdot I_{\cH} + \Phi_t^\top \Phi_t$ and 
\#\label{eq:unif4}
& \bigg\| \sum_{\tau = 1}^t \phi_\tau \big\{ V' (x_\tau) - \EE[V' (x_\tau)|\cF_{\tau-1}] \bigr \} \bigg\|^2_{\Lambda_t^{-1}}     =  \| \Phi_t ^\top \varepsilon_t \| _{  \Lambda_t^{-1} } ^2      =   \varepsilon_t ^\top \Phi_t  \Lambda_t^{-1} \Phi_t ^\top \varepsilon_t  \notag \\
& \qquad =  \varepsilon_t ^\top\Phi_t  \Phi_t^\top ( K_t + \lambda \cdot I)^{-1} \varepsilon_t = \varepsilon_t ^\top K_t  ( K_t + \lambda \cdot I)^{-1} \varepsilon_t ,
\#
where the third inequality follows from  \eqref{eq:dim_change}. 
Setting $\lambda =  1+ \eta$ for some $\eta > 0$, 
we have 
\$
(K_t+\eta \cdot  I) \big [ K_t+(1+\eta) \cdot I \bigr ] ^{-1}  = (K_t+\eta \cdot  I)  \bigl [ I + (K_t+\eta \cdot  I)  \bigr ]^{-1} = \bigl [ (K_t+\eta \cdot I)^{-1}+I \bigr ] ^{-1}  ,
\$
which implies that 
\#\label{eq:unif5}
\varepsilon_t ^\top K_t  ( K_t + \lambda \cdot I)^{-1} \varepsilon_t &  \le  \varepsilon_t^\top (K_t+\eta \cdot  I)\bigl [ I + (K_t+\eta \cdot  I)  \bigr ]^{-1}  \varepsilon_t \notag \\
& = \varepsilon_t ^\top  \bigl [ (K_t+\eta \cdot I)^{-1}+I \bigr ] ^{-1}  \varepsilon_t. 
\#
Notice that each entry of $\varepsilon_t$ is bounded by $H$ in absolute value since $V'$ is bounded in $[0, H]$. 
By combining \eqref{eq:unif3}, \eqref{eq:unif4}, \eqref{eq:unif5},    Lemma  \ref{thm:self_norm},  and taking a union bound over $\cV_{\epsilon}$,  for any $\delta \in (0,1)$,  we obtain that, with probability at least $1-\delta$,
\#\label{eq:unif6}
 & \sup_{V' \in \cV_{\epsilon} } \bigg\| \sum_{\tau = 1}^t \phi_\tau \big\{ V' (x_\tau) - \EE[V' (x_\tau)|\cF_{\tau-1}] \bigr \} \bigg\|^2_{\Lambda_t^{-1}}  \notag \\
 & \qquad \leq   H^2 \cdot \logdet [ (1+\eta) \cdot I+K_t] +  2H^2 \cdot \log (   \cN_{\epsilon}  / \delta)      
 \#
 holds simultaneously 
 for all $t\geq 1$. 
 Moreover, notice that $(1+\eta) \cdot I+K_t = [ I+(1+\eta)^{-1} \cdot K_t  ] \cdot [ (1+\eta) \cdot I ]$, which implies that 
\#\label{eq:unif7}
\logdet \bigl [  (1+\eta) \cdot I+K_t \bigr ] & =\logdet \bigl [ I+(1+\eta)^{-1} \cdot K_t \bigr ] +t\ln(1+\eta)  \notag \\
& \le  \logdet \bigl [ I+(1+\eta)^{-1} \cdot K_t \bigr ]  +   \eta t. 
\#
Finally, combining \eqref{eq:unif3},  \eqref{eq:unif6}, and \eqref{eq:unif7}, we conclude that, simultaneously for all $t\geq 1$, 
 \eqref{eq:unif_cov_upper} holds with probability at least $1 -\delta$, which concludes the proof. 
\end{proof}

\begin{lemma} [\cite{abbasi2011improved}]\label{lemma:telescope}
	Let $\{\phi_t \}_{t\geq 1}$ be a   sequence in the RKHS $\cH$. Let $\Lambda_0 \colon \cH \rightarrow \cH $ 
	be defined as  $\lambda \cdot \cI_{\cH}$ where $\lambda \geq 1$ and $\cI_{\cH}$ is the identity mapping on $\cH$.  
	For any $t\geq 1$, we define a  self-adjoint  and positive-definite operator $\Lambda_t $ by letting  $
	\Lambda_t = \Lambda_0 + \sum_{j = 1}^t \phi_ j   \phi_j ^\top $.  
Then, for any $t\geq 1$, we have 
\$
	\sum_{j = 1}^t  \min \bigl \{ 1,  \phi_j^\top \Lambda_{j-1} ^{-1} \phi_j \bigr  \} \leq    2 \logdet( I + K_t / \lambda ), 
\$
	where $K_t \in \RR^{t\times t}$ is the Gram matrix obtained from $\{ \phi_j \}_{j\in [t]}$, i.e.,  for any $j, j' \in [t]$, the $(j, j')$-th entry of $K_t$ is $\la \phi_j, \phi_{j'} \ra_{\cH}$. 
	Moreover, if we further have   $\sup_{t\geq 0} \{ \| \phi_t \|_{\cH}  \}  \leq 1 $, then  it holds that 
\$
	\logdet( I + K_t / \lambda )   \leq \sum_{j=1}^{t} \phi_j^\top \Lambda_{j-1} ^{-1} \phi_j    \leq 2 \logdet( I + K_t / \lambda ) .
\$
	
\end{lemma}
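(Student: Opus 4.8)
The plan is to follow the classical elliptical-potential (self-normalized determinant) argument of \cite{abbasi2011improved}, carried out in the RKHS $\cH$. First I would reduce all the determinants to finite-dimensional ones. Although $\Lambda_t\colon\cH\to\cH$ acts on a possibly infinite-dimensional space, the perturbation $\Lambda_t-\Lambda_0=\sum_{j=1}^t\phi_j\phi_j^\top$ has rank at most $t$, so $\det(\Lambda_t/\lambda)=\det(\cI_\cH+\lambda^{-1}\sum_{j\le t}\phi_j\phi_j^\top)$ is well-defined as the product of the finitely many eigenvalues that differ from $1$, and by the Weinstein--Aronszajn (Sylvester) identity it equals $\det(I+K_t/\lambda)$, where $K_t$ is the Gram matrix of $\{\phi_j\}_{j\le t}$. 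Thus $\logdet(\cI_\cH+\Lambda_0^{-1}(\Lambda_t-\Lambda_0))=\logdet(I+K_t/\lambda)$. Since $\Lambda_0=\lambda\cI_\cH$ with $\lambda\ge1$, each $\Lambda_j$ is self-adjoint with spectrum bounded below by $\lambda>0$, so $\Lambda_j^{-1}$ is a well-defined bounded positive operator and the scalar $\phi_j^\top\Lambda_{j-1}^{-1}\phi_j=\langle\phi_j,\Lambda_{j-1}^{-1}\phi_j\rangle_\cH\ge0$ makes sense in the paper's notation.

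Second, I would establish the telescoping identity. Writing $\Lambda_j=\Lambda_{j-1}+\phi_j\phi_j^\top=\Lambda_{j-1}^{1/2}\big(\cI_\cH+\Lambda_{j-1}^{-1/2}\phi_j\phi_j^\top\Lambda_{j-1}^{-1/2}\big)\Lambda_{j-1}^{1/2}$ and invoking the rank-one matrix-determinant lemma (valid because the perturbation has rank one), one obtains $\det(\Lambda_j)=\det(\Lambda_{j-1})\,(1+\phi_j^\top\Lambda_{j-1}^{-1}\phi_j)$, hence
\[
\logdet(\Lambda_j/\lambda)-\logdet(\Lambda_{j-1}/\lambda)=\log\!\big(1+\phi_j^\top\Lambda_{j-1}^{-1}\phi_j\big).
\]
Summing over $j=1,\dots,t$, using $\Lambda_0=\lambda\cI_\cH$, and applying the identification from the first step gives
\[
\sum_{j=1}^t\log\!\big(1+\phi_j^\top\Lambda_{j-1}^{-1}\phi_j\big)=\logdet(\Lambda_t/\lambda)=\logdet(I+K_t/\lambda).
\]

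Third, I would convert this identity into the two stated inequalities using elementary scalar bounds. For the first bound, the inequality $\min\{1,x\}\le 2\log(1+x)$ holds for every $x\ge0$ (on $[0,1]$ because $2\log(1+x)-x$ vanishes at $0$ and has nonnegative derivative $(1-x)/(1+x)$ there, and on $[1,\infty)$ because $\min\{1,x\}=1\le2\log2\le2\log(1+x)$); applying it with $x=\phi_j^\top\Lambda_{j-1}^{-1}\phi_j$ and summing yields $\sum_{j=1}^t\min\{1,\phi_j^\top\Lambda_{j-1}^{-1}\phi_j\}\le2\logdet(I+K_t/\lambda)$. Under the extra assumption $\sup_t\|\phi_t\|_\cH\le1$ we have $\phi_j^\top\Lambda_{j-1}^{-1}\phi_j\le\|\phi_j\|_\cH^2/\lambda\le1$ because $\lambda\ge1$, so the truncation is inactive and the previous bound reads $\sum_{j=1}^t\phi_j^\top\Lambda_{j-1}^{-1}\phi_j\le2\logdet(I+K_t/\lambda)$; the matching lower bound $\logdet(I+K_t/\lambda)\le\sum_{j=1}^t\phi_j^\top\Lambda_{j-1}^{-1}\phi_j$ follows from $\log(1+x)\le x$ applied termwise in the telescoping identity.

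The main obstacle is the infinite-dimensional bookkeeping in the first step: one must verify that $\det$, $\logdet$, the Sylvester identity, and the rank-one determinant update are all meaningful for operators of the form ``identity plus finite rank'' on $\cH$, and that restricting everything to the span of $\{\phi_j\}_{j\le t}$ changes none of the quantities involved. Once this is set up, the remainder is the standard scalar computation and presents no further difficulty.
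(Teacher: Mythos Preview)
Your proposal is correct and follows essentially the same approach as the paper: both arguments hinge on the telescoping identity $\sum_{j=1}^t\log(1+\phi_j^\top\Lambda_{j-1}^{-1}\phi_j)=\logdet(\Lambda_t\Lambda_0^{-1})=\logdet(I+K_t/\lambda)$ obtained via the rank-one determinant update, and then convert it to the stated inequalities using the scalar bounds $\log(1+x)\le x\le 2\log(1+x)$ on $[0,1]$ together with $\min\{1,x\}\le 2\log(1+x)$ for $x\ge0$. Your write-up is in fact slightly more careful than the paper's on the infinite-dimensional bookkeeping (invoking the Sylvester identity explicitly, where the paper simply asserts $\det(\Lambda_t\Lambda_0^{-1})=\det(I+K_t/\lambda)$ ``by direct computation'').
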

\begin{proof}
Note that we have $\log (1+x) \le x \le 2 \log (1+x)$ for all $x \in [0,1]$.
Since $\Lambda_t^{-1}$ is a self-adjoint and  positive-definite operator, this  implies that 
\#\label{eq:proof_for_epl1}
 \sum_{j=1}^t \min \bigl \{1 ,  \phi_j^\top \Lambda_{j-1} ^{-1} \phi_j  \bigr \} \leq \sum_{j=1}^t 2 \log \bigl ( \min \bigl \{2 ,  1+  \phi_j^\top \Lambda_{j-1} ^{-1} \phi_j  \bigr \} \bigr ) \leq 2 \sum_{j=1}^t \log \bigl ( 1 + \phi_j^\top \Lambda_{j-1} ^{-1} \phi_j  \bigr ).
\# 
Moreover, when   additionally  it is the case that 
   $\sup_{j\geq 1}  \| \phi_j  \| _{\cH}\leq 1$ for all $j\ge 0$, 
 we  have 
\#\label{eq:final011}
\phi_j ^\top  \Lambda_{j-1}^{-1}  \phi_j  = \big \la \phi_j , \Lambda_{j-1} ^{-1} \phi_j \ra_{\cH} \leq \| \phi_j \|_{\cH } \cdot \bigl \| \Lambda_{j-1} ^{-1} \phi_j \|_{\cH}  \leq [\lambda_{\min} (\Lambda_0) ]^{-1}  \cdot  \| \phi_j  \| _{\cH} ^2   \leq 1 . 
\#
Hence, applying the basic inequality $\log (1+x) \le x \le 2 \log (1+x)$  to \eqref{eq:final011}, we have 
	\#\label{eq:final012}
	\sum_{j=1}^{t} \log  \bigl (  1 + \phi_j ^\top  \Lambda_{j-1}^{-1}   \phi_j  \bigr ) \le \sum_{j=1}^{t} \phi_j ^\top  \Lambda_{j-1} ^{-1}  \phi_j    \leq  2\sum_{j=1}^{t}  \log  \bigl (  1 + \phi_j ^\top  \Lambda_{j-1}^{-1}   \phi_j  \bigr ). 
	\# 
	For any $j \geq 1$, let $\Lambda_{j-1}^{1/2} \colon \cH \rightarrow \cH$ be the self-adjoint and positive-definite operator that is the square-root operator of $\Lambda_{j-1}$. 
	Specifically, let $\{ \sigma_\ell\}_{\ell \geq 1}$  be the eigenvalues of $\Lambda_{j-1}$ and let $\{ v_{\ell}\} _{\ell \geq 1}$ be the corresponding eigenfunctions. 
	Then $\Lambda_{j-1}^{1/2} = \sum_{\ell\geq 1} \sigma_{\ell}^{1/2} \cdot v_{\ell } v_{\ell}^\top$. 
	Using this notation,  for any $j \geq 1 $, by the definition of $\Lambda_j $, we have 
	$$
	 \Lambda_j  =  \Lambda_{j -1}   +  \phi_{j}  \phi_j ^\top    =   \Lambda_{j -1}^{1/2}  \bigl  ( \cI_{\cH} +  \Lambda_{j -1} ^{-1/2}  \phi_{j }  \phi_j ^\top \Lambda_{j-1}^{-1/2} \bigr  ) \Lambda_{j -1}^{1/2}  ,
	$$
	which implies that 
	\#
	\label{eq:final_013}
	\logdet ( \Lambda_j) & = \logdet ( \Lambda_{j-1}) + \logdet  \bigl  ( \cI_{\cH} +  \Lambda_{j -1} ^{-1/2}  \phi_{j }  \phi_j ^\top \Lambda_{j-1}^{-1/2} \bigr  ) \notag \\
	& = \logdet \big  ( \Lambda_{j-1})  + \logdet\bigl (1 + \phi_j ^\top \Lambda_{j-1}^{-1} \phi_j \bigr )
	\#
	Moreover, by direct computation, for any $t \geq 1$, we have 
	\#\label{eq:final_014}
	\det (\Lambda_t \Lambda _0^{-1} ) =  \det  ( I + K_t  / \lambda ). 
	\# 
	Hence, combining  \eqref{eq:final_013}, and \eqref{eq:final_014}, we obtain  that 
	\#\label{eq:final015}
	\sum_{j=1}^{t}  \log  \bigl (  1 + \phi_j ^\top  \Lambda_{j-1}^{-1}   \phi_j  \bigr ) = \logdet (\Lambda_t \Lambda_0^{-1} ) = \logdet (I + K_t /\lambda ).
	\#
	Finally, combining \eqref{eq:proof_for_epl1}, 
	\eqref{eq:final012} and \eqref{eq:final015}, we conclude the proof of this lemma. 
\end{proof}

\end{document}